\documentclass[12pt]{article}
\usepackage{amsmath, amssymb}
\usepackage{graphicx}
\usepackage{enumerate}
\usepackage{natbib}
\usepackage{url} % not crucial - just used below for the URL 

\usepackage{xcolor}
\usepackage{bm}
\usepackage{subcaption}

%\pdfminorversion=4
% NOTE: To produce blinded version, replace "0" with "1" below.
\newcommand{\blind}{0}

% DON'T change margins - should be 1 inch all around.
\addtolength{\oddsidemargin}{-.5in}%
\addtolength{\evensidemargin}{-1in}%
\addtolength{\textwidth}{1in}%
\addtolength{\textheight}{1.7in}%
\addtolength{\topmargin}{-1in}%

\newcommand{\MUp}{\bm{T}^{(\bm{U}')}}
\newcommand{\bdcoeff}{b}
\newcommand{\Uset}{\mathbb G}
\newcommand{\Uele}{\mathcal G}
\newcommand{\MUb}{\bm{T}^{(\breve{\bm{U}})}}

%%%%%%%%%%%%%%%%%%%%%%%%%%%%%%%%%%%%%%%%%%%%%%%%%%
% Notations
%%%%%%%%%%%%%%%%%%%%%%%%%%%%%%%%%%%%%%%%%%%%%%%%%%

\newcommand{\SigB}{{\bm{\Sigma}_\B}}

\newcommand{\cst}{c_1}
\newcommand{\cstt}{c_2}
\newcommand{\csttt}{c_3}
\newcommand{\cstttt}{c_4}

% constant for the finite sample convergence result
\newcommand{\Cst}{ C }
% function to quantify the finite sample convergence result
\newcommand{\bd}{B}
% value to quantify the bound of M(\beta)
\newcommand{\Mb}{\delta}

\newcommand{\limp}{\rightarrow_p}
\newcommand{\limd}{\rightarrow_d}
\def\T{{ \mathrm{\scriptscriptstyle T} }}

\def\E{\mathbb{E}}
\def\P{\mathbb{P}}
\def\R{\mathbb{R}}

\newcommand{\noise}{\gamma}
\newcommand{\noiseMat}{\Gamma}

% different learner names
\def\A{\textsc{a} }
\def\B{\textsc{b} }
\def\C{\textsc{c} }

\def\a{\textsc{a}} % no space version
\def\b{\textsc{b}} % no space version

\def\T{{ \mathrm{\scriptscriptstyle T} }}
\def\de{\overset{\Delta}{=}}

% convergence rate
\def\cRate{\alpha_n }

\usepackage{amsmath}

\DeclareMathOperator*{\argmin}{arg\,min}

\def\bSig\mathbf{\Sigma}

% My usual definitions
\newcommand{\id}{t}

\def\R{\mathbb{R}}

% negative log-likelihood
\newcommand{\nlog}{M}
\newcommand{\mest}{m}

% gradient of SGD
\newcommand{\gradient}{\mathcal G}
% a redefined loss function for SGD
\newcommand{\lossW}{\mathcal W}

% loss function

% the  in glm
\newcommand{\lp}{\nu}

\newcommand{\minibatch}{\mathcal S}

% number of columns in Y

% upper bound of the rate

% lower bound of the rate

% penalty
\newcommand{\penala}{\textbf{P}}
% abbrevation
\newcommand{\penalb}{\mathcal P}
% weight of the penalty
\newcommand{\lam}{\lambda}
% uppder bound of the weight
\newcommand{\Lam}{\Lambda}
% variables in differential privacy
%%%%%%%%%%%%%%%differential privacy%%%%%%%%%%%%%%
%see AL_GLM-v14
%%%%%%%%%%%%%%%%%%%%%%%%%%%%%%%%%%%%%%%%%%%%%%%%%

\newcommand{\floor}[1]{\left\lfloor #1 \right\rfloor}

% mathcal for lowercase letters
\DeclareMathAlphabet{\mathlcal}{U}{dutchcal}{m}{n}
% shorter minus sign
\newcommand{\mi}{\text{-}}

\newcommand{\AC}{\mathcal A}

\newcommand{\BDset}{\mathbb B(\ThetaSet)}
\newcommand{\InitSet}{\mathcal O}
\newcommand{\InitSetN}{\mathcal O'}

\newcommand{\MU}{\bm{T}^{(\bm{U})}}
\newcommand{\MA}{\bm{T}^{(\mathcal A)}}
\newcommand{\bdd}{S}

\newcommand{\ThetaSet}{\Omega_0}
\newcommand{\ThetaSett}{\widetilde{\Omega}_0}
\newcommand{\ThetaSetN}{\Omega_0'}
\newcommand{\TransformMat}{\bm{T}}

\newcommand{\LSG}{\mathcal L_n}

% Theories and conditions
\newtheorem{definition}{Definition}

\newtheorem{theorem}{Theorem}%\newtheorem{theorem}{Theorem}[section]

\newtheorem{lemma}{Lemma} %\newtheorem{lemma}[theorem]{Lemma}
\newtheorem{corollary}{Corollary}
\newtheorem{proof}{Proof}

\newtheorem{proposition}{Proposition}
\newtheorem{condition}{Condition}

\newcommand{\InitSetNn}{\mathcal O'_n}
\newcommand{\emptySet}{\phi}
\newcommand{\ThetaSetU}{\Omega_0^{(\bm{U})}}

\newcommand{\csthes}{ c }
\graphicspath{{figures_supplement/}{figures/}}

\usepackage[title,toc,titletoc]{appendix}
\begin{document}

\def\spacingset#1{\renewcommand{\baselinestretch}%
{#1}\small\normalsize} \spacingset{1}

%%%%%%%%%%%%%%%%%%%%%%%%%%%%%%%%%%%%%%%%%%%%%%%%%%%%%%%%%%%%%%%%%%%%%%%%%%%%%%

\if0\blind
{
  \title{\bf Additive-Effect Assisted Learning}
  % \author{Jiawei Zhang, Jie Ding, and Yuhong Yang\\
  %    School of Statistics, University of Minnesota\\
  %    zhan4362@umn.edu, dingj@umn.edu, and yangx374@umn.edu}
  
%   \author{Jiawei Zhang, Jie Ding, and Yuhong Yang\\
%     School of Statistics, University of Minnesota\\
%     zhan4362@umn.edu, dingj@umn.edu, and yangx374@umn.edu}

\author{Jiawei Zhang\thanks{Dr. Bing Zhang Department of Statistics, University of Kentucky. Email: zhan4362@umn.edu}, Yuhong Yang\thanks{School of Statistics, University of Minnesota. Email: yangx374@umn.edu}, and Jie Ding\thanks{School of Statistics, University of Minnesota. Email: dingj@umn.edu}}

  \maketitle
} \fi

\if1\blind
{
  \bigskip
  \bigskip
  \bigskip
  \begin{center}
    {\LARGE\bf Additive-Effect Assisted Learning}
\end{center}
  \medskip
} \fi

\bigskip

\tableofcontents

\newpage
%\author[1,$\ast$]{Jiawei Zhang}
%\author[2]{Jie Ding}
%\author[3]{Yuhong Yang}
%% \author[3]{Fourth Author}
%% \author[4]{Fifth Author\ORCID{0000-0000-0000-0000}}
%
%% \authormark{Author Name et al.}
%
%\address[1]{\orgdiv{Dr. Bing Zhang
%Department of Statistics}, \orgname{University of Kentucky}, \orgaddress{\street{725 Rose Street}, \postcode{40536-0082}, \state{Kentucky}, \country{United States}}}
%\address[2]{\orgdiv{School of Statistics}, \orgname{University of Minnesota}, \orgaddress{\street{224 Church St SE }, \postcode{55455}, \state{Minnesota}, \country{United States}}}
%\address[3]{\orgdiv{School of Statistics}, \orgname{University of Minnesota}, \orgaddress{\street{224 Church St SE }, \postcode{55455}, \state{Minnesota}, \country{United States}}}

% \address[4]{\orgdiv{Department}, \orgname{Organization}, \orgaddress{\street{Street}, \postcode{Postcode}, \state{State}, \country{Country}}}

% \corresp[$\ast$]{Corresponding author. \href{email:email-id.com}{email-id.com}}

% \received{Date}{0}{Year}
% \revised{Date}{0}{Year}
% \accepted{Date}{0}{Year}

%\editor{Associate Editor: Name}

%\abstract{
%\textbf{Motivation:} .\\
%\textbf{Results:} .\\
%\textbf{Availability:} .\\
%\textbf{Contact:} \href{name@email.com}{name@email.com}\\
%\textbf{Supplementary information:} Supplementary data are available at \textit{Journal Name}
%online.}

\abstract{
It is quite popular nowadays for researchers and data analysts holding different datasets to seek assistance from each other to enhance their modeling performance. We consider a scenario where different learners hold datasets with potentially distinct variables, and their observations can be aligned by a nonprivate identifier. Their collaboration faces the following difficulties: First, learners may need to keep data values or even variable names undisclosed due to, e.g., commercial interest or privacy regulations; second, there are restrictions on the number of transmission rounds between them due to e.g., communication costs. To address these challenges, we develop a two-stage assisted learning architecture for an agent, Alice, to seek assistance from another agent, Bob. In the first stage, we propose a privacy-aware hypothesis testing-based screening method for Alice to decide on the usefulness of the data from Bob, in a way that only requires Bob to transmit sketchy data. Once Alice recognizes Bob's usefulness, Alice and Bob move to the second stage, where they jointly apply a synergistic iterative model training procedure. With limited transmissions of summary statistics, we show that Alice can achieve the oracle performance as if the training were from centralized data, both theoretically and numerically.
}

\textbf{keywords:}{
assisted learning,
additive effects,
generalized linear model,
hypothesis testing,
decentralized learning
}

% In an increasing number of machine learning applications, multiple learning agents hold datasets with most likely unidentified features that can be collated by a particular identifier. These agents are often decentralized in nature and may desire to obtain assistance from each other to improve modeling performance. In this work, we develop a two-stage assisted learning architecture for an agent, Alice, to seek assistance from another agent, Bob, without {\color{red}pooling their data}. In the first stage, we propose a privacy-aware hypothesis testing-based screening method for Alice to decide on the usefulness of the data from Bob, in a way that only requires Bob to transmit sketchy data. Once Alice recognizes Bob's usefulness, Alice and Bob move to the second stage, where they jointly apply a synergistic iterative model training procedure. With limited transmissions of summary statistics, we show that Alice can asymptotically achieve the oracle performance as if the training were from centralized data under appropriate conditions that allow overlapping features (unknown to the learners). Simulation studies and real data demonstrations, including health condition prediction, image classification, and internet attack detection, show encouraging performance of our proposed approach.

% \boxedtext{
% \begin{itemize}
% \item Key boxed text here.
% \item Key boxed text here.
% \item Key boxed text here.
% \end{itemize}}

\maketitle
\section{Introduction}
\label{sec_intro}

With the rapid development of data-generating/collection devices, there are various learning scenarios where data from diverse sources need to be integrated to address personalized tasks. Problems of interest include distributed learning from low-energy-cost devices~\citep{konevcny2016federated,diao2020heterofl}, modularized learning~\citep{jacob2017better,DingCollabParam}, 
%robust distributed learning~\citep{yin2018byzantine,cao2019distributed}, 
fusion learning~\citep{tang2016fused,tang2019fusion, shen2020fusion, tang2021poststratification, luo2021leveraging}, and multimodal data integration~\citep{cai2016structured, yu2020optimal, xue2020integrating}.

This paper considers an important practical scenario where a learner Alice (denoted by ``\a'') seeks assistance from other learners, such as Bob (denoted by ``\b''), to fit a model jointly.
\A and \B have access to datasets that contain the same cohort of individuals. As illustrated in Figure~\ref{datsplit}, the two datasets can be collated according to a certain identifier, e.g., timestamps in website traffic data and patient IDs in medical data. 
  \begin{figure}[ht]
	\centering
	\vspace{-0.1cm}
	\includegraphics[width=0.7\linewidth]{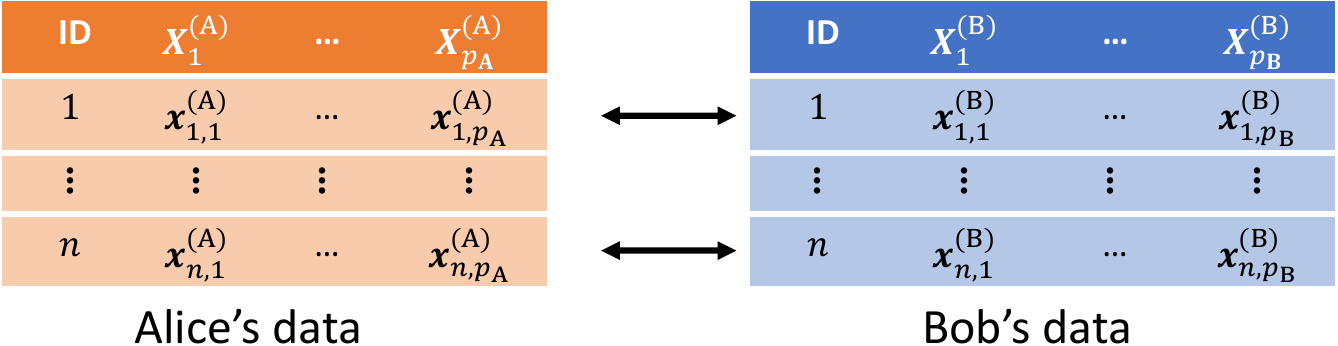}
	\vspace{-0.1cm}
	\caption{Illustration of data held by two learners after removing unmatched cases, where the rows(cases) are collated according to a common identifier, e.g., timestamp, username, or case ID.}
	\label{datsplit}
\end{figure}
\A hopes that \B may help improve her modeling performance with his side information. This modeling problem has two major challenges. The first is \textbf{learner selection}.  It is possible that \b's data cannot improve \a's model performance. In this case, switching from \B to other learners quickly is clearly desirable. Thus, \A prefers a screening procedure to evaluate whether a particular learner is helpful before formal joint learning starts. %The difficulty lies in the fact that \B will only assist the prescreening with limited effort compared with the training step.  Otherwise, he may undesirably let \A obtain the fully-trained model. Thus, methods based on the original data from \B or a fully-trained model are not applicable.
The second is \textbf{joint training}. Suppose \B cannot directly share his original data with \A for the following reasons: \textbf{i)} \B only provides query services that return summary statistics while keeping the original dataset local due to regulations or business concerns; \textbf{ii)} the sample size is so large that \A cannot afford the transmission cost. The above reasons motivate the problem of how to efficiently obtain a model, if possible, with its performance close to the one trained on the centralized data pooled from \A and \b.
% Some real-world examples are given below, which will be revisited in the numerical study.
Some real-world examples are given below.

% {\color{red}
% \textit{Example 1:} 
% A consulting firm \B with a market performance dataset provides future revenue prediction service to its client \a. 
% For \b, both the values and the nature of the variables must be safeguarded to prevent its competitors from stealing that proprietary information.
% For \a,  
% the concern lies in the possibility that this ``secret'' dataset from \B may not be helpful.
% Therefore, it would be desirable if \B can provide something like a free trial to demonstrate
% the effectiveness of the service.
% }

\textit{Example 1:} 
Two medical institutions \A and \B possess data on a set of common patients. Researchers in \A want to predict the outcomes of the patients (e.g., length of ICU stay), and incorporating the data from \B may potentially enhance the prediction accuracy. However, due to data privacy concerns, \B is unable to directly share the data with \a.

\textit{Example 2:}  
A sensor \A in a smart transportation system is used to identify cars exceeding the speed limit. It has images of cars captured from the front angle. Additionally, it can establish a remote connection to a monitor \B that captures side images of cars. However, due to the significant costs associated with transmitting images, pooling the images together is not feasible for \a. 

\textit{Example 3:} \label{exp_internet_introduction}
An internet service provider \A aims to protect against internet attacks. Some information of internet traffic from a user \B can potentially be crucial. However, \A is unable to measure or access this information since they are protected by \B via encryptions. Moreover, due to privacy policies, \A cannot directly access the data collected by~\b.

There has been a large body of studies on distributed learning \citep{bekkerman2011scaling, bertsekas2015parallel} that address challenges in model fitting where data are held separately by different learners. For instance, the works by \citet{du2004privacy}, \citet{dean2012large}, \citet{gong2015privacy}, \citet{gratton2018distributed}, \citet{hu2019fdml}, \citet{hu2019learning}, and \citet{diao2022gal}.
A popular distributed learning method is federated learning \citep{konevcny2016federated, yang2019federated}. It was originally designed for learners with the same set of covariates from different subjects \citep{konevcny2016federated} and was later extended to vertical federated learning \citep{chen2020vafl, cheng2021secureboost, khan2022communication} for the scenario where learners hold different covariates (as in Figure~\ref{datsplit}).
%To enhance data security and privacy, methods such as Homomorphic Encryption \citep{rivest1978data, gong2015privacy, yang2019parallel},  Secure Multi-party Computation \citep{yao1982protocols, du2004privacy}, and differential privacy \citep{dwork2006calibrating, dwork2016calibrating, shokri2015privacy} are typically incorporated in the learning process. 
Another solution to tackling the distributed learning problem is assisted learning \citep{xian2020assisted, wang2022parallel, diao2022gal}. 
It targets scenarios where learners have sufficient computational resources but aim to reduce the number of communication rounds. This differs from 
federated learning, which typically relies on frequent transmissions between the learners to reduce the overall computation loads. 
Apart from predictive modeling, there has been increasing attention in recent years to statistical inference techniques, such as hypothesis testing and confidence interval estimation, as well as understanding the statistical properties of the trained models \citep{el2015slepian, rosenblatt2016optimality, jordan2019communication, dobriban2021distributed}. 

In our learning problem, there are two key factors.
\textbf{1. Undisclosed sets of variables held by the learners:} The learners do not reveal what variable they possess nor their values due to, e.g.,  privacy or commercial considerations.
\textbf{2. Assistance cost associated with transmission: }
The transmissions between the learners may incur significant expenses, including communication costs due to bandwidth limitation, as well as time and other resource costs (e.g., when \B is a consulting company, it may charge \A a service fee based on the number of interactions).
%While local updates in model training also involve costs,
%we assume that \A  has sufficient computation resources, with the primary expenses of model training incurred through transmissions with \b. 
To our best knowledge, existing works do not directly address the following challenges arising from the above factors:
\begin{enumerate}
\item\ The cost associated with the collaboration will make \A carefully weigh the value of the assistance from \b. 
This assessment is, however, challenging due to the undisclosed set of variables and their values held by \b. 
Therefore, \A needs a new mechanism to decide the merit of \b's assistance.
In contrast, the existing studies typically consider settings where different parties have already reached an agreement on collaboration, with a focus on training efficiency and privacy protection in the subsequent modeling procedures.

\item\  
Even when \A  decides to collaborate with \b,
it is still challenging to obtain a good model training algorithm and establish its efficacy in both statistical and computational aspects.
First, to control the overall cost of training, it is desirable for the algorithm to
converge relatively fast with respect to the number of transmission rounds. 
%the performance of the trained model needs to be reasonably good after a few assistance rounds due to the transmission cost. 
%This constraint may limit training algorithms with hyperparameter tuning that require transmission between learners. 
%For instance,  joint training of several models with different hyperparameter values may be prohibited. 
Second, it is important to understand the statistical properties of the trained model within a limited number of transmission rounds, given the randomness of data. 
In particular, we may need to quantify model uncertainty and compute prediction intervals. 
%Moreover, we may also need to study the statistical property of the trained model for subsequent statistical analysis, such as computing a prediction interval.  
However, the focus in the literature is on the algorithmic convergence on a fixed dataset only, ignoring the large sample convergence from a statistical perspective.

\end{enumerate}

To tackle the challenges, we propose an assisted learning architecture that consists of the following two components. 
\begin{itemize}
\item[(S1)] \textbf{Initializing connection.} \A assesses \b's capability in assisting her learning task by a statistical testing based on a small set of sketchy data from \b. If \B passes the test, \A continues to \textbf{S2}.
This test properly addresses the learner selection problem in the following three aspects: \textbf{1)} Unlike traditional variable selection or model comparison tools~\citep{DingOverview} that require pooling the two datasets together or rely on fully trained models from \b, our method only needs \B to send a sketchy dataset to \a. This feature makes our approach easily implementable in decentralized settings. \textbf{2)} The sketchy data can be conveniently adjusted to satisfy local differential privacy \citep{kasiviswanathan2011can, duchi2013local, duchi2018minimax} according to the privacy budget, thereby safeguarding the data of  \b. 
\textbf{3)}
 The modeling goal of \A is undisclosed during the testing procedure, which is appealing for \A to seek assistance from many learners in the open internet world, some of whom may be scammers with fake data.

\item[(S2)]	\textbf{Assisted training}.  \B assists \A in virtually fitting a joint model by communicating task-relevant statistics. After finishing the model training, \A and \B can separately operate on their local parts to jointly obtain prediction results. 
The proposed training method has the following two key properties that help reduce the number of transmission rounds, which is crucial in the assisted learning setting: 
\textbf{1)} It only requires \A and \B to iteratively exchange some linear combinations of their data and minimize their local losses, without the need for extensive hyperparameter (e.g., learning rates) tuning. 
    In contrast, other applicable decentralized algorithms including gradient descent-based methods \citep{yang2021model, das2021multi,liu2022fedbcd}, coordinate descent Newton methods \citep{bradley2011parallel, bian2019parallel}, and CoCoA \citep{smith2018cocoa} typically require learners to jointly train models with different hyperparameter values, which may greatly increase the number of transmissions between \A and \b. 
    \textbf{2)} Our estimator has an exponential convergence rate with respect to the number of transmission rounds, converging to a loss minimizer that achieves the same prediction performance as the oracle estimator trained on the directly combined data from \A and \B in hindsight. This result holds true even in scenarios where the covariates from \A and \B are linearly dependent. It can occur when \A and \B share some common covariates, and they are unable to remove duplicated covariates due to a lack of knowledge regarding the covariates held by each other. In such cases, other algorithms often only guarantee a polynomial convergence rate (e.g., \citet{richtarik2016distributed}, \citet{peng2016arock}, \citet{mahajan2017distributed}). 
Additionally, we present theoretical results on the statistical properties of our proposed method, which lead to valid prediction~intervals. 
    % It also shows that the number of transmission rounds only needs to increase slowly compared to the sample size.

% \item[(S3)]	\textbf{Assisted prediction}. For a future case observed by both \A and \B, but with different features, the assisted prediction would need \B to calculate a statistic and send it to \a, who will then construct the final predictor. 
\end{itemize}
%We will focus on the scenario where \B (or more learners) assists \A in learning a generalized linear model~\citep{nelder1972generalized} abbreviated as GLM.

The outline of the paper is given below.  
Section~\ref{sec_procedure} describes the proposed assisted learning architecture and methods and presents their theoretical properties.
Section~\ref{sec_GLM} focuses on the application of the proposed methods to generalized linear models. 
Sections~\ref{sec_exp} and \ref{Sec_real_data} provide simulation results and real data examples, respectively.
%Section~\ref{Sec_real_data} provides real data examples.
Section~\ref{sec_con} summarizes the paper. Technical proofs and additional simulation studies are included in the supplementary document.

% \textbf{Challenge}.

\section{Additive-Effect Assisted Learning}\label{sec_procedure}
Suppose there are two learners, \A and \b, and \A seeks assistance from \b. 
% Changed the bullet format into texts
% \begin{itemize}
%     \item $Y$: the response variable shared by both  \A and \B. Denote its sample data by 
%     $$y = (y_1,\dots,y_n)^\T.$$
%     \item $X^{(\A)}$: the vector of variables from \A with length $p_\A$. 
%     Denote its sample data by $\bm{x}^{(\A)}$ with its  $i$th row $\bm{x^{(\A)}_i} = (x^{(\A)}_{i,1},\dots,x^{(\A)}_{i,p_\A})$, where $i=1,\dots,n$.  Let the vector $\bm{x}^{(\A)}$ denote a generic observation  of $X^{(\A)}$.
%     \item $X^{(\B)}$: the vector of unique variables from \B with length $p_\B$. Other related notations are defined similarly to $X^{(\A)}$.
% \end{itemize}  
Let $Y\in\mathbb Y\subseteq \mathbb R$  denote the response of interest to \a, and let $\bm{y} = (y_1,\dots,y_n)^\T$ denote an i.i.d.\ sample. 
Let $X^{(\A)} \in \R^{p_\A}$ and $X^{(\B)} \in \R^{p_\B}$ denote the associated covariates of \A and \b, respectively. 
For any random vector $W \in \R^q$ of covariates, let $\bm W \in \R^{n \times q}$ denote the design matrix whose $i$th row $\bm w_i$ is an independent observation of $W$. Let $\bm w$ denote a generic observation of $W$. For example, for \a's covariates $X^{(\A)}$, we use the notations $\bm{X}^{(\A)}$, $\bm{x}^{(\A)}_i$, and $\bm{x}^{(\A)}$.
Assume that the covariates in $X^{(\A)}$ are linearly independent in the sense that there does not exist a nonzero deterministic vector $\bm{z}$ such that $\bm{z}^\T X^{(\A)} = 0$ holds almost surely, and the same holds for $X^{(\B)}$.  Assume that $X^{(\A)} = ({X^{(a)}}^\T, {X^{(c)}}^\T)^\T$ and $X^{(\B)} = ({X^{(b)}}^\T, {X^{(c)}}^\T)^\T$, and the covariates in $X = ({X^{(a)}}^\T, {X^{(b)}}^\T, {X^{(c)}}^\T)^\T$ are linearly independent. 
Note that $X^{(c)}$ is the set of shared covariates, which is allowed to be empty. It is also important to point out that the above partition of the covariates is not known by \A and \b, and the covariates they hold may be arbitrarily ordered.
%The above assumption is for notational simplicity and our methods will be invariant to representations of the covariates.
Denote the dimensions of $X$, $X^{(\A)}$, $X^{(\B)}$, $X^{(a)}$, $X^{(b)}$, and $X^{(c)}$ by $p$, $p_\A$, $p_\B$, $p_a$, $p_b$, and $p_c$, respectively. Each categorical covariate is represented by $d-1$ indicator variables, where $d$ is the number of categories of the covariate.
\begin{figure}[!ht]
	\centering
	\includegraphics[width=0.3\linewidth]{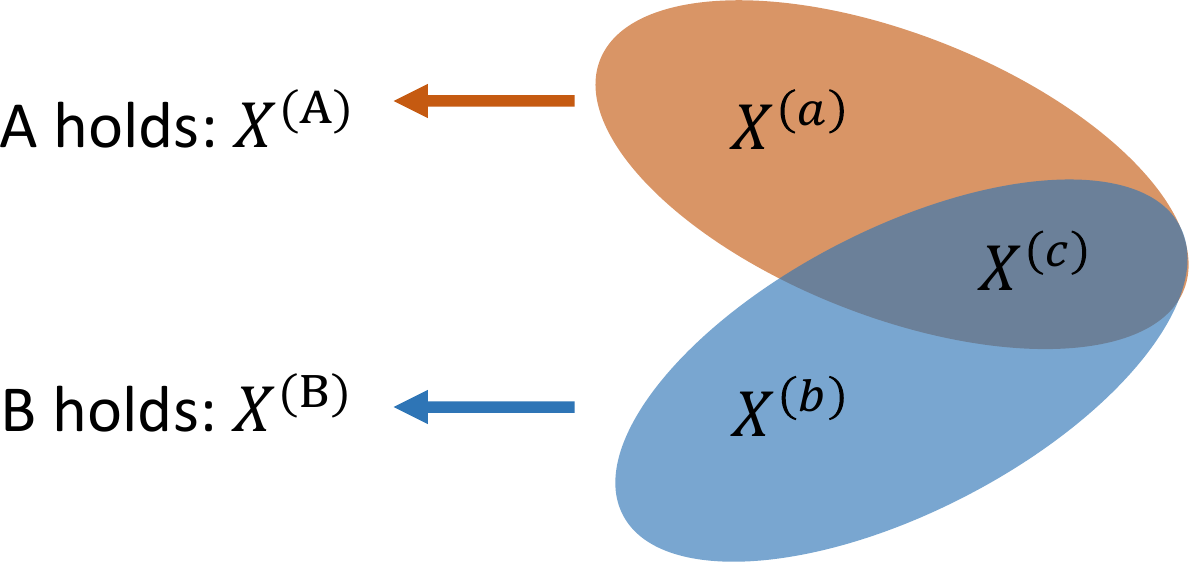}
	\caption{An illustration of the data held by \A and \b. }
	\label{fig_dataSimplified}
\end{figure}

The goal of \A is to predict future $Y$ based on the associated covariates information. To that end,  \A  targets the performance of an M-estimator 
$
  \check{\bm{\beta}}_n \de \argmin_{\bm{\beta}\in \R^p}\ \nlog_n(\bm{\beta})
$, where
\begin{equation}\label{eq_mestimator}
   \nlog_n(\bm{\beta}) \de \frac{1}{n}\sum_{i=1}^n \mest(y_i,{\bm{x}_i}^\T \bm{\beta} ), 
\end{equation}
and $\mest:\mathbb Y\times\R\rightarrow\R$ is a loss function. The loss function has an additive effect nature in the sense that the total effect of the infeasible linear combination $\bm{x}^\T_i\bm{\beta}$ can be expressed through a sum of the effects of the covariates held by \A and \B separately. This additivity is crucial for our assisted learning methodology. 
Our additive-effect assisted learning (AE-AL) procedure is summarized as follows. In step \textbf{S1} ``initializing connection,'' 
\B first transmits $\bm{X}^{(\B)}\bm{U}$ to \a, where $\bm{U}$ is a randomly generated matrix. 
Then, \A performs a statistical test that compares the model fitted on $(\bm{y},\bm{X}^{(\A)})$ with the one on $(\bm{y},\bm{X}^{(\A)},\bm{X}^{(\B)}\bm{U})$. Once the test shows that there is a significant difference between the two models, \A confirms \b's usefulness and proceeds to the next step.
 In step \textbf{S2} ``assisted training,'' \A and \B iteratively exchange and update their locally estimated parameters. They terminate the process according to a stop criterion.
 %or out-sample validation performance.
The obtained outputs $\hat {\bm{\beta}}^{(\A)}_n$ and $\hat {\bm{\beta}}^{(\B)}_n$ are held by \A and \b, respectively. 
For ``future prediction,'' \A and \B hold new observations $\widetilde{\bm{x}}^{(\A)}$ and $\widetilde{\bm{x}}^{(\B)}$, respectively. \B sends the output $(\widetilde{\bm{x}}^{(\B)})^\T\hat {\bm{\beta}}^{(\B)}_n$  to \a, who then produces a combined prediction result.
%based on  $(\widetilde{\bm{x}}^{(\A)})^\T\hat {\bm{\beta}}^{(\A)}_n + (\widetilde{\bm{x}}^{(\B)})^\T\hat {\bm{\beta}}^{(\B)}_n$.
The above procedure is illustrated in Figure~\ref{fig_concept}. 
\begin{figure}[!ht]
	\centering
	\includegraphics[width=0.5\linewidth]{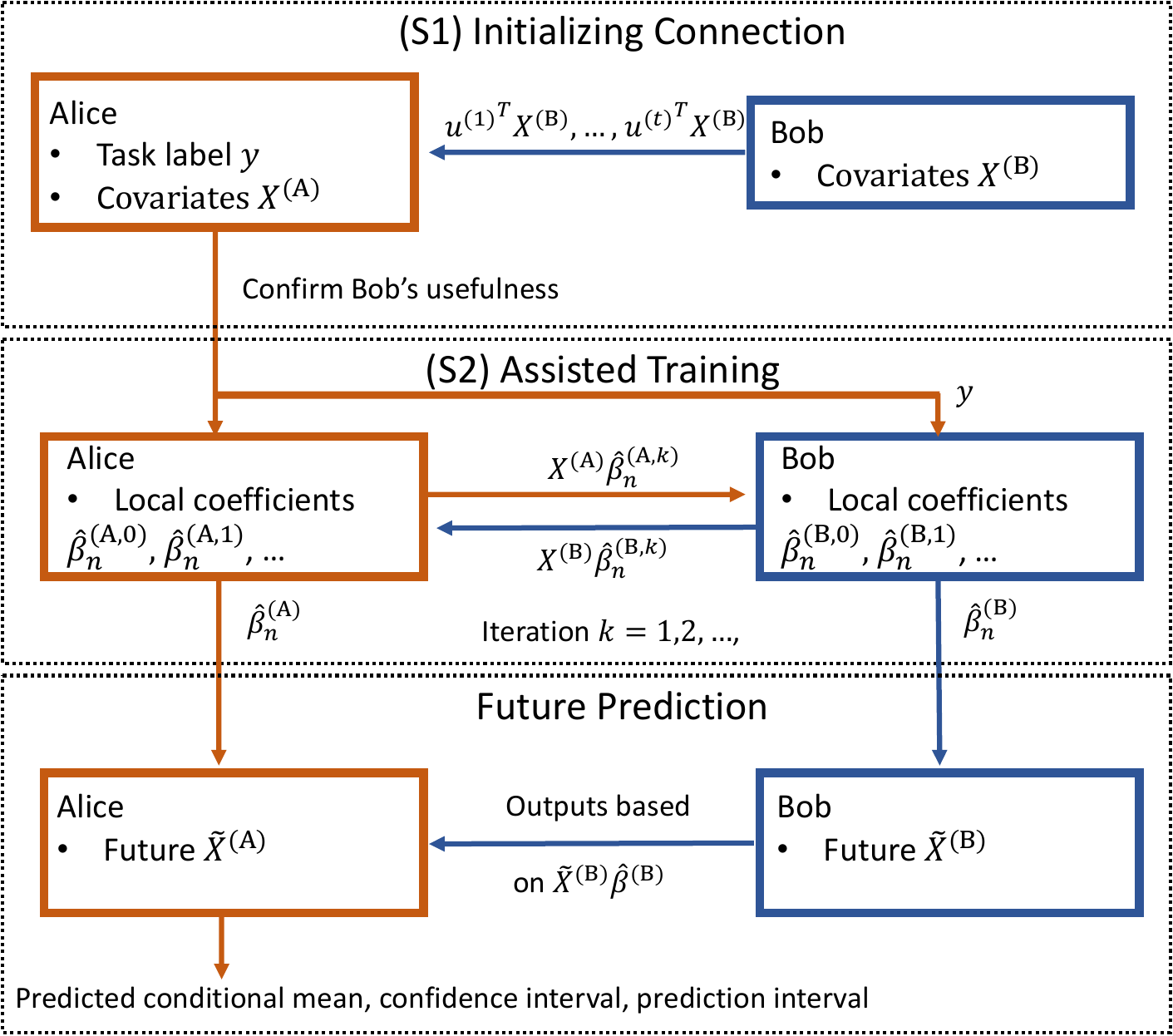}
	\caption{A graphical summary of the additive-effect assisted learning (AE-AL).}
	\label{fig_concept}
\end{figure}
We next introduce the detailed procedures and their theoretical results in the following sections. 

\subsection{Initializing Connection}\label{sec_test} 
In this subsection, we first introduce the procedure for conducting statistical testing during the ``initializing connection'' stage in Section~\ref{subsub_testProc}, and then discuss its privacy protection in Section~\ref{sec_priv}. 
% The simulation studies are provided in {\color{red}Sections~\ref{subsec_type1} and  \ref{subsubsec_power}.}
% \ref{subsec_QQlogistic}.
% , and \ref{subsec_PoNoH0} of the supplement.

\subsubsection{Testing Procedure}\label{subsub_testProc}
In 
step \textbf{S1}, \A assesses the usefulness of \B from a hypothesis testing perspective, with 
\begin{align}
    &\textrm{H}_0:\ \bm{\beta}^{(b)}_* = \bm{0},\quad
\textrm{H}_1:\  \bm{\beta}^{(b)}_* \neq \bm{0},\label{eq_H0H1}
 \end{align}
 where
$\bm{\beta}_*=
 ({\bm{\beta}^{(a)}_*}^\T,
 {\bm{\beta}^{(b)}_*}^\T,{\bm{\beta}^{(c)}_*}^\T)^\T
 \de \underset{\bm{\beta}\in\mathbb R^{p}}{\argmin}\  \nlog_*(\bm{\beta} )$, $ \nlog_*(\bm{\beta})\de \E\bigl(m(Y,X^\T\bm{\beta})\bigr)$,
% \begin{equation}\label{eq_H0H1}
%     \textrm{H}_0:\ \bm{\beta}^{(b)}_* = \bm{0},\quad
% \textrm{H}_1:\  \bm{\beta}^{(b)}_* \neq \bm{0}, 
% \end{equation}
% where 
% \begin{align*}
%     \bm{\beta}_*=({\bm{\beta}^{(a)}_*}^\T,{\bm{\beta}^{(b)}_*}^\T,{\bm{\beta}^{(c)}_*}^\T)^\T\de \underset{\bm{\beta}\in\mathbb R^{p}}{\argmin}\  \nlog_*(\bm{\beta}),\quad
%     \nlog_*(\bm{\beta})\de \E\bigl(m(Y,X^\T\bm{\beta})\bigr),
% \end{align*}
 $\E(\cdot)$ is the expectation with respect to the joint distribution of $(Y,X)$, and $\bm{\beta}^{(a)}_*$, $\bm{\beta}^{(b)}_*$, $\bm{\beta}^{(c)}_*$ correspond to $X^{(a)}$, $X^{(b)}$, $X^{(c)}$, respectively. 
% In this section, we provide details about the testing procedure in \textbf{S1}. {\color{green}Test usefulness of B from a hypothesis testing perspective. Naturally, null and alternative are under this setting.  Already mentioned statistical test in the previous section? Notations will be too long if $\bm{\beta}_*$ is not defined.} Let $$
% \bm{\beta}_*=({\bm{\beta}^{(a)}_*}^\T,{\bm{\beta}^{(b)}_*}^\T,{\bm{\beta}^{(c)}_*}^\T)^\T\de\underset{\bm{\beta}\in\mathbb R^{p}}{\arg\min}\ \E\ \nlog_n(\bm{\beta}),
% $$
% {\color{green}$\arg\min\mest_*(\bm{\beta})$, where $\mest_*(\bm{\beta})\de \E\bigl(m(y,\bm{x}^\T\bm{\beta})\bigr)$ and}
% where $\E(\cdot)$ is the expectation with respect to the joint distribution of $(Y,X)$, and $\bm{\beta}^{(a)}_*$, $\bm{\beta}^{(b)}_*$, $\bm{\beta}^{(c)}_*$ correspond to $\bm{x}^{(a)}$, $\bm{x}^{(b)}$, $\bm{x}^{(c)}$, respectively. The goal of \textbf{S1} can be formulated as testing
% \begin{equation}\label{eq_H0H1}
%     \textrm{H}_0:\ \bm{\beta}^{(b)}_* = \bm{0},\quad
% \textrm{H}_1:\  \bm{\beta}^{(b)}_* \neq \bm{0}. 
% \end{equation}
Clearly, $\textrm{H}_0$ indicates that \B is not helpful. 
For a  function $\bm{\alpha}\mapsto G(\bm{\alpha})$, let $\nabla G(\bm{\alpha})$ and $\nabla^2 G(\bm{\alpha})$ denote its gradient and Hessian matrix, respectively. 
For a function $(\bm{\alpha}_1,\bm{\alpha}_2)\mapsto G(\bm{\alpha}_1,\bm{\alpha}_2)$, let $\nabla_{\bm{\alpha}_1}G(\bm{\alpha}_1,\bm{\alpha}_2)$ and $\nabla_{\bm{\alpha}_1}^2G(\bm{\alpha}_1,\bm{\alpha}_2)$ denote its gradient and Hessian matrix with respect to $\bm{\alpha}_1$, respectively. Let $\lp \de \bm{x}^\T\bm{\beta}$ be the combined linear effect.

To perform the test, \B first sends \A a small set of sketchy data. Then, \A tests the hypotheses in \eqref{eq_H0H1} by calculating a Wald-type test statistic that compares the model fitted on $(\bm{y},\bm{X}^{(\A)})$ with the one that includes the data transmitted from \b.
 The detailed testing procedure is as follows.
\\
\textbf{Step 1: }\B generates a matrix 
$
    \bm{U} = (\bm{u}^{(1)},\dots,\bm{u}^{(\id)}) \in \R^{p_\B\times\id},
$
where each $\bm{u}^{(j)}$ ($j=1,\dots,t$) is a unit vector transformed from an i.i.d.\ sample from the standard normal distribution with $\id\leq p_\B$. Then, \B sends $ \bm{X}^{(\B)}\bm{U}$ to \a. \\
% \textbf{Step 2: }\A projects $\bm{X}^{(\B)}\bm{U}$ to the orthogonal complement of the column space of $\bm{X}^{(\A)}$ and drops unnecessary columns so that the obtained matrix has full column rank. If no column vector is left, which occurs when \B actually does not have additional information
% (namely, $X^{(\B)} = X^{(c)}$ without  $X^{(b)}$),
% \A immediately concludes that \b's data are not useful. Otherwise, \A proceeds to the next step. Denote the obtained matrix and its number of columns by $\bm{X}^{(\bm{U})}$ and $\id^{(\bm{U})}$, respectively. 
\textbf{Step 2: }\A first fits the model based on $(\bm{y},\bm{X}^{(\bm{U})})$, where \label{page_betaU_def}
$\bm{X}^{(\bm{U})}\de (\bm{X}^{(\A)},\bm{X}^{(\B)}\bm{U})$.
Next, \A calculates a Wald test statistic $W_{n,\id} = n\bigl(\hat{\bm{\beta}}^{(\bm{U},\id)}_n\bigr)^\T \hat{\bm{V}}_{\id}^{-1}\hat{\bm{\beta}}^{(\bm{U},\id)}_n$, 
where  $\hat{\bm{\beta}}^{(\bm{U},\id)}_n$ denotes the $\id$ elements of  $\hat{\bm{\beta}}^{(\bm{U})}_n$ corresponding to $\bm{X}^{(\B)}\bm{U}$, 
$\hat{\bm{\beta}}^{(\bm{U})}_n\de \arg\min_{\bm{\beta}^{(\bm{U})}\in \mathbb R^{p_\A+\id}}\nlog_n^{(\bm{U})}(\bm{\beta}^{(\bm{U})})$,   
$\nlog_n^{(\bm{U})}(\bm{\beta}^{(\bm{U})})\de n^{-1}\sum_{i=1}^n\mest(y_i, ({\bm{x}^{(\bm{U})}_i})^\T\bm{\beta}^{(\bm{U})})$, $\hat{\bm{V}}_{\id}$ denotes the lower right
 $\id\times \id$ block of $\hat{\bm{V}}\de \hat{\bm{V}}_1^{-1}\hat{\bm{V}}_2\hat{\bm{V}}_1^{-1}$ with
 $\hat{\bm{V}}_1\de \nabla^2\nlog^{(\bm{U})}_n(\hat{\bm{\beta}}^{(\bm{U})}_n)$,
 and 
$\hat{\bm{V}}_2\de n^{-1}\sum_{i=1}^n\nabla\mest_i^{(\bm{U})}(\hat{\bm{\beta}}^{(\bm{U})}_n)\bigl(\nabla\mest_i^{(\bm{U})}(\hat{\bm{\beta}}^{(\bm{U})}_n)\bigr)^\T$ 
 with  $\mest_i^{(\bm{U})}(\bm{\beta}^{(\bm{U})})\allowbreak\de \mest(y_i, ({\bm{x}^{(\bm{U})}_i})^\T\bm{\beta}^{(\bm{U})})$.
 If $W_{n,\id}$ is larger than the upper $\alpha$-quantile of chi-squared distribution with degrees of freedom $\id$, where $\alpha$ is the significance level,  \A concludes that the data from \B is useful. Otherwise, \A declines the assistance from \b.

In practice, if the number of observations is large, \A and \B may construct the test statistic based on a relatively small subset dataset.
In other words, instead of  $\bm{X}^{(\B)}\bm{U}$, \B may send $\widetilde{\bm{X}}^{(\B)}\bm{U}$ to \a, where $\widetilde{\bm{X}}^{(\B)}$ consists of a subset of the rows of $\bm{X}^{(\B)}$.

Next, we present the theoretical properties of $W_{n,\id}$.
Let 
\begin{align*}
\nlog_*^{(\bm{U})}(\bm{\beta}^{(\bm{U})})
\de&\ \E\bigl(\mest\bigl(Y,{X^{(\bm{U})}}^\T \bm{\beta}^{(\bm{U})} \bigr)\mid\bm{U}\bigr),\\
\bm{\beta}^{(\bm{U})}_*\de &\ \argmin_{\bm{\beta}^{(\bm{U})}\in\mathbb R^{p_\A + \id}}  \nlog_*^{(\bm{U})}(\bm{\beta}^{(\bm{U})}),\\
\bm{V}_1\de &\ \E \bigl(\nabla^2_{\bm{\beta}^{(\bm{U})}} \mest(Y,{X^{(\bm{U})}}^\T\bm{\beta}^{(\bm{U})}_*)\mid\bm{U}\bigr),\\
\bm{V}_2\de&\ \E \bigl(\nabla_{\bm{\beta}^{(\bm{U})}} \mest(Y,{X^{(\bm{U})}}^\T\bm{\beta}^{(\bm{U})}_*)\bigl(\nabla_{\bm{\beta}^{(\bm{U})}} \mest(Y,{X^{(\bm{U})}}^\T\bm{\beta}^{(\bm{U})}_*)\bigr)^\T\mid\bm{U}\bigr).
\end{align*}

% $\nlog_*^{(\bm{U})}(\bm{\beta}^{(\bm{U})})
% \de \E\bigl(\mest\bigl(Y,{X^{(\bm{U})}}^\T \bm{\beta}^{(\bm{U})} \bigr)\mid\bm{U}\bigr)$, $\bm{\beta}^{(\bm{U})}_*\de \argmin_{\bm{\beta}^{(\bm{U})}\in\mathbb R^{p_\A + \id}}  \nlog_*^{(\bm{U})}(\bm{\beta}^{(\bm{U})})$, $\bm{V}_1\de \E \bigl(\nabla^2_{\bm{\beta}^{(\bm{U})}} \mest(Y,{X^{(\bm{U})}}^\T\bm{\beta}^{(\bm{U})}_*)\mid\bm{U}\bigr)$, and 
% $$\bm{V}_2\de \E \bigl(\nabla_{\bm{\beta}^{(\bm{U})}} \mest(Y,{X^{(\bm{U})}}^\T\bm{\beta}^{(\bm{U})}_*)\bigl(\nabla_{\bm{\beta}^{(\bm{U})}} \mest(Y,{X^{(\bm{U})}}^\T\bm{\beta}^{(\bm{U})}_*)\bigr)^\T\mid\bm{U}\bigr).$$
% \begin{align*}
% &\nlog_*^{(\bm{U})}(\bm{\beta}^{(\bm{U})})
% \de \E\bigl(\mest\bigl(Y,{X^{(\bm{U})}}^\T \bm{\beta}^{(\bm{U})} \bigr)\mid\bm{U}\bigr),\ \bm{\beta}^{(\bm{U})}_*\de \argmin_{\bm{\beta}^{(\bm{U})}\in\mathbb R^{p_\A + \id}}  \nlog_*^{(\bm{U})}(\bm{\beta}^{(\bm{U})}),\\
%     &\bm{V}_1\de \E \bigl(\nabla^2_{\bm{\beta}^{(\bm{U})}} \mest(Y,{X^{(\bm{U})}}^\T\bm{\beta}^{(\bm{U})}_*)\mid\bm{U}\bigr),\\
%      & \bm{V}_2\de \E \bigl(\nabla_{\bm{\beta}^{(\bm{U})}} \mest(Y,{X^{(\bm{U})}}^\T\bm{\beta}^{(\bm{U})}_*)\bigl(\nabla_{\bm{\beta}^{(\bm{U})}} \mest(Y,{X^{(\bm{U})}}^\T\bm{\beta}^{(\bm{U})}_*)\bigr)^\T\mid\bm{U}\bigr).
% \end{align*}
Let $\|\cdot\|_{\infty}$ denote the sup norm of a vector or matrix, and let $\|\cdot\|_2$ denote the $l_2$ norm of a vector.
\begin{condition}\label{cond_designMatFullrank}
% The design matrix $\bm{X}$ has full column rank with probability going to one as $n\rightarrow\infty$. 
$\P(\text{the design matrix $\bm{X}$ has full column rank})\rightarrow 1\text{ as }n\rightarrow\infty.$
\end{condition}

\begin{condition}\label{A1} 
The function $\bm{\beta}\mapsto M_*(\bm{\beta})$ is differentiable.
The population-based minimizer $\bm{\beta}_*$ exists and is the unique solution to the equation $\nabla\nlog_*(\bm{\beta})=\bm{0}$.
\end{condition}

    %&\tilde{\bm{V}}_1\de \nabla^2\nlog^{(\tilde{\bm{U}})}_n(\hat{\bm{\beta}}^{(\tilde{\bm{U}})}_*),\quad {\color{red}
   % \tilde{\bm{V}}_2\de n^{-1}\sum_{i=1}^n\nabla \nlog^{(\tilde{\bm{U}})}_i(\hat{\bm{\beta}}^{(\tilde{\bm{U}})}_*)\bigl(\nabla\nlog^{(\tilde{\bm{U}})}_i(\hat{\bm{\beta}}^{(\tilde{\bm{U}})}_*)\bigr)^\T.}

\begin{condition}\label{A2}
The minimizer
 $\bm{\beta}^{(\bm{U})}_*$ of  $\bm{\beta}^{(\bm{U})}\mapsto\nlog_*^{(\bm{U})}(\bm{\beta}^{(\bm{U})})$ over $\R^{p_\A + \id}$ exits and is the unique solution to the equation $\nabla\nlog_*^{(\bm{U})}(\bm{\beta}^{(\bm{U})}_*) = \bm{0}$ almost surely.  
 Additionally,  $\|\hat{\bm{\beta}}^{(\bm{U})}_n - \bm{\beta}^{(\bm{U})}_*\|_2\limp 0$ as $n\rightarrow\infty$.  
\end{condition}
\begin{condition}\label{A3} 
% The function $\bm{\beta} \mapsto \mest(y, {\bm{x}}^\T\bm{\beta})$ is {\color{red}two} times differentiable for each  $(y,\bm{x})$ in the essential support of $Y$ and $X$. 
% %Also, the expectations of the first and second order derivatives of this function exist. 
The matrices $\bm{V}_1$ and $\bm{V}_2$ are positive definite almost surely,  and $\|\hat{\bm{V}}_1 - \bm{V}_1\|_{\infty}\limp 0$ and  $\|\hat{\bm{V}}_2 - \bm{V}_2\|_{\infty}\limp 0$ as $n\rightarrow\infty$. 
Additionally, $\sup_{\bm{\beta}^{(\bm{U})}\in\LSG}\|\nabla^2\nlog^{(\bm{U})}_n(\bm{\beta}^{(\bm{U})}) - \bm{V}_1\|_{\infty}\limp 0$ as $n\rightarrow \infty$ under $\textup{H}_0$, where $\LSG$ is the line segment connecting $\hat{\bm{\beta}}^{(\bm{U})}_n$ and $\bm{\beta}^{(\bm{U})}_*$. 
\end{condition}

\begin{condition}\label{cond_exchInteDiff}
$
\nabla\nlog_*^{(\bm{U})}(\bm{\beta}^{(\bm{U})}_*)=\E\bigl(\nabla_{\bm{\beta}^{(\bm{U})}}  \mest(Y,{X^{(\bm{U})}}^\T\bm{\beta}^{(\bm{U})}_*)\mid \bm{U}\bigr)
$ 
almost surely.
\end{condition}
 \label{page_condset1}
 Condition~\ref{cond_designMatFullrank} is imposed on the pooled dataset with 
 $X = ({X^{(a)}}^\T, {X^{(b)}}^\T, {X^{(c)}}^\T)^\T$ where redundant variables are dropped. We emphasize that to guarantee this condition, it only requires \A and \B to make their own covariates $X^{(\A)} = ({X^{(a)}}^\T, {X^{(c)}}^\T)^\T$ and $X^{(\B)} = ({X^{(b)}}^\T, {X^{(c)}}^\T)^\T$ linearly independent, respectively.
 For instance, this can be achieved by \A and \B removing their linearly dependent columns of their local datasets.
 % This can be guaranteed in practice by \A and \B by checking the linear dependence of their local covariates.
 The other conditions can be satisfied for GLMs and robust regression with some commonly used loss functions. The details can be found in Section~\ref{eq_a_discussion_about_conditions} of the supplement.  
 
% The theoretical properties of the test statistic $W_{n,\id}$ defined in \textbf{step 3} are as follows.  

 % Let $\|\cdot\|_{\infty}$ denote the sup norm of a vector or matrix.

% Recall that we generate the entries of $\bm{U}$ by i.i.d.\ sampling from a continuous distribution, and $W_{n,\id}$ is defined in \textbf{step 3} of our AE-AL procedure. 
\begin{theorem}\label{thm_Wn}
Assume that Conditions~\ref{cond_designMatFullrank}-\ref{cond_exchInteDiff} hold. As
 $n\rightarrow\infty$, we have
\begin{itemize}
\item[(1)] under $\textup{H}_0$, $W_{n,\id}$ converges weakly to a chi-squared distribution with degrees of freedom~$\id$;
\item[(2)]under $\textup{H}_1$, $W_{n,\id}$ goes to infinity in probability.
\end{itemize}
\end{theorem}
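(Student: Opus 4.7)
The plan is to derive the asymptotic distribution of $\sqrt{n}(\hat{\bm{\beta}}^{(\bm{U})}_n - \bm{\beta}^{(\bm{U})}_*)$ by standard M-estimator theory and then read $W_{n,\id}$ off through its block structure. A first-order Taylor expansion of $\nabla \nlog^{(\bm{U})}_n(\hat{\bm{\beta}}^{(\bm{U})}_n) = \bm{0}$ around $\bm{\beta}^{(\bm{U})}_*$ gives
\[ \nabla \nlog^{(\bm{U})}_n(\bm{\beta}^{(\bm{U})}_*) + \nabla^2 \nlog^{(\bm{U})}_n(\bar{\bm{\beta}})(\hat{\bm{\beta}}^{(\bm{U})}_n - \bm{\beta}^{(\bm{U})}_*) = \bm{0} \]
for some $\bar{\bm{\beta}}\in\LSG$. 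Condition~\ref{A3} supplies $\nabla^2 \nlog^{(\bm{U})}_n(\bar{\bm{\beta}}) \limp \bm{V}_1$ and $\hat{\bm{V}}\limp \bm{V}_1^{-1}\bm{V}_2\bm{V}_1^{-1}$, while Condition~\ref{cond_exchInteDiff} makes $\nabla \nlog_*^{(\bm{U})}(\bm{\beta}^{(\bm{U})}_*) = \bm{0}$ the population mean of the per-observation score, so the Lindeberg-L\'evy CLT conditional on $\bm{U}$ gives $\sqrt{n}\nabla \nlog^{(\bm{U})}_n(\bm{\beta}^{(\bm{U})}_*) \limd N(\bm{0}, \bm{V}_2)$. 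Combining yields $\sqrt{n}(\hat{\bm{\beta}}^{(\bm{U})}_n - \bm{\beta}^{(\bm{U})}_*) \limd N(\bm{0}, \bm{V}_1^{-1}\bm{V}_2\bm{V}_1^{-1})$.

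For part (1), the central identification step is to show $\bm{\beta}^{(\bm{U},\id)}_* = \bm{0}$ under $\textup{H}_0$. Since $\bm{\beta}^{(b)}_* = \bm{0}$, Condition~\ref{cond_exchInteDiff} lets me write $\nabla\nlog_*(\bm{\beta}_*) = \bm{0}$ block by block as $\E(X^{(j)} \mest'(Y, (X^{(\A)})^\T \bm{\beta}^{(\A)}_*)) = \bm{0}$ for $j\in\{a,b,c\}$, where $\mest'$ denotes the scalar derivative of $\mest$ in its second argument. Substituting $(\bm{\gamma}_1, \bm{\gamma}_2) = (\bm{\beta}^{(\A)}_*, \bm{0})$ into $\nabla \nlog_*^{(\bm{U})}$ makes the $\bm{\gamma}_1$ block vanish and reduces the $\bm{\gamma}_2$ block to $\bm{U}^\T \E(X^{(\B)} \mest'(\cdot)) = \bm{0}$; the uniqueness clause in Condition~\ref{A2} then forces $\bm{\beta}^{(\bm{U})}_* = ((\bm{\beta}^{(\A)}_*)^\T, \bm{0}^\T)^\T$. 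Hence $\sqrt{n}\hat{\bm{\beta}}^{(\bm{U},\id)}_n \limd N(\bm{0}, \bm{V}_\id)$ with $\bm{V}_\id$ the lower-right $\id\times\id$ block, and Slutsky's theorem yields $W_{n,\id} \limd \chi^2_\id$.

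For part (2), it suffices to show $\bm{\beta}^{(\bm{U},\id)}_* \neq \bm{0}$ with probability one under the law of $\bm{U}$; then $\hat{\bm{\beta}}^{(\bm{U},\id)}_n \limp \bm{\beta}^{(\bm{U},\id)}_* \neq \bm{0}$ and $\hat{\bm{V}}_\id$ stays positive definite in the limit, so $W_{n,\id}/n$ is bounded away from zero in probability and $W_{n,\id}\to\infty$. Suppose for contradiction that $\bm{\beta}^{(\bm{U},\id)}_* = \bm{0}$ on a set of positive $\bm{U}$-probability. On that event the $\bm{\gamma}_1$-part of $\bm{\beta}^{(\bm{U})}_*$ must coincide with the $\bm{U}$-free minimizer $\bm{\gamma}_1^{\circ} \de \argmin_{\bm{\gamma}_1} \E(\mest(Y, (X^{(\A)})^\T \bm{\gamma}_1))$, so the $\bm{\gamma}_2$-block first-order condition collapses to $\bm{U}^\T \bm{z} = \bm{0}$ with the fixed vector $\bm{z} \de \E(X^{(\B)} \mest'(Y, (X^{(\A)})^\T \bm{\gamma}_1^{\circ}))$. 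Since each column of $\bm{U}$ has a rotationally invariant density on the unit sphere in $\R^{p_\B}$, this forces $\bm{z} = \bm{0}$. Combined with the $\bm{\gamma}_1^\circ$-score equation $\E(X^{(\A)}\mest'(\cdot))=\bm{0}$, the padded vector $\tilde{\bm{\beta}} \in \R^p$ obtained from $\bm{\gamma}_1^\circ$ by inserting zeros on the $X^{(b)}$ block satisfies $\nabla \nlog_*(\tilde{\bm{\beta}}) = \bm{0}$, so Condition~\ref{A1} forces $\tilde{\bm{\beta}} = \bm{\beta}_*$, contradicting $\bm{\beta}^{(b)}_* \neq \bm{0}$ under $\textup{H}_1$.

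The main obstacle is the contradiction in part (2): the step that lets us treat the $\bm{\gamma}_1$-part of $\bm{\beta}^{(\bm{U})}_*$ as $\bm{U}$-free before invoking the measure-zero argument is delicate, because $\bm{U}^\T\bm{z}(\bm{U})=\bm{0}$ is not constraining when $\bm{z}$ itself varies with $\bm{U}$. Once $\bm{z}$ is pinned down independently of $\bm{U}$, the remaining translation from a score equation on the joint span of $X^{(\A)}$ and $X^{(\B)}$ (which, after removing the duplicated $X^{(c)}$ block, equals the full $X$ space) back to a full-parameter stationary point uses the linear independence of $X^{(a)}, X^{(b)}, X^{(c)}$ built into the paper's setup.
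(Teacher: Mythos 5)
Your proposal follows essentially the same route as the paper's proof: identify $\bm{\beta}^{(\bm{U},\id)}_*=\bm{0}$ under $\textup{H}_0$ (you via stationarity plus the uniqueness clause of Condition~\ref{A2}, the paper via a loss-value comparison), obtain asymptotic normality of $\sqrt{n}\hat{\bm{\beta}}^{(\bm{U},\id)}_n$ from a Taylor expansion of the estimating equation together with the CLT and Condition~\ref{A3}, and under $\textup{H}_1$ rule out $\bm{\beta}^{(\bm{U},\id)}_*=\bm{0}$ by a contradiction that hinges on the relevant score vector being $\bm{U}$-free so that the continuous distribution of $\bm{U}$ makes $\bm{U}^\T\bm{z}=\bm{0}$ a null event. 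The delicate step you flag (treating the $X^{(\A)}$-part of $\bm{\beta}^{(\bm{U})}_*$ as $\bm{U}$-free) is precisely the constancy claim the paper itself invokes at the corresponding point of its argument, so your treatment matches the paper's proof in both structure and level of rigor.
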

% The proof can be found in Section~\ref{sec_proofOfTh1} of the supplement. 
When $\id = p_\B$, the test is equivalent to the Wald test \citep{white1982maximum} that compares the nested models with and without $X^{(b)}$. 
\label{page_Wald_alternatives}A detailed discussion about the potential alternatives to the Wald test can be found in Section~\ref{sec_proofOfTh3} of the supplement.
% We provide theoretical results for the likelihood ratio test, which may serve as an alternative in some cases, in Section~\ref{sec_proofOfTh3} of the supplement. 
Due to the randomness of $\bm{U}$, the testing result may vary across different realizations of this matrix. To assess the robustness of the testing results with respect to different realizations of $\bm{U}$, we present experimental results in Section~\ref{subsec_robustness_U} of the supplement. They indicate that the value of $\bm{U}$ does not significantly affect the testing result. Nevertheless, it can affect testing results when the improvement of modeling performance from incorporating the data of \B is moderate.

When the covariates are highly correlated and the sample size is relatively small, \A may want to include a penalty term when fitting the model in Step~2, in order to stabilize the results and mitigate numerical issues. Namely, \A obtains $\hat{\bm{\beta}}^{(\bm{U})}_n$ by minimizing 
$\nlog_n^{(\bm{U})}(\bm{\beta}) + \lam\cdot\penalb(\bm{\beta}^{(\bm{U})})$, where $\penalb: \R^{p_\A + \id} \rightarrow \R^+$ and $\lam$ is a positive tuning parameter. 
% Nevertheless, the introduction of the penalty may result in a bias in the test statistic. 
In Section~\ref{sec_penal_th_sim} of the supplement, 
we provide theoretical analysis and numerical studies to illustrate that the penalty term may have a moderate impact on the testing when $\lam$ is not excessively large.

%  {\color{red}A technical difficulty in deriving the above theorem is from the reduced (column) dimension of $\bm{X}^{(\tilde{\bm{U}})}$ compared with $\bm{X}^{(\B)}$. Consider the extreme case where the rows of $\bm{U}$ corresponding to the nonzero components of $\bm{\beta}^{(b)}_*$ have relatively small values. Then,  $W_{n,\id}$ may fail to identify the usefulness of \b.} In the proof, we show that this event has a negligible probability due to the randomness of $\bm{U}$. 

\subsubsection{Privacy Protection}\label{sec_priv}
Recall that the ``initializing connection'' step requires \B to send sketchy data $\bm{X}^{(\B)}\bm{U}$ to \a. Although $\bm{U}$ is randomly generated, its protection over $\bm{X}^{(\B)}$ can be limited in some scenarios. For example, if a row of $\bm{X}^{(\B)}$ consists entirely of zeros, its linear combination will always be zero,  potentially revealing information about specific individuals. Furthermore, when $\id = p_\B$, and \A possesses knowledge of the covariance matrix $\SigB$ of $X^{(\B)}$, \A may 
 estimate $\bm{X}^{(\B)}$ based on the sample covariance matrix of $\bm{X}^{(\B)}\bm{U}$, which approximately equals $\bm{U}^\T\SigB\bm{U}$. Also, it is hard to quantify and adjust the amount of the privacy protection.  
 
 To alleviate the aforementioned issues where \A or other potential attackers may gain access to the raw data of \b,
we consider local differential privacy \citep{kasiviswanathan2011can, duchi2013local, duchi2018minimax}.
 For the released outputs from each row of a dataset, this approach applies a 
randomizing mechanism to prevent potential attackers from accurately determining the original data values. Details can be found in Section~\ref{sec_localdiff_def} of the supplement. 
The local differential privacy has the following two key properties that differentiate it from  the standard (central) differential privacy \citep{dwork2006calibrating, dwork2016calibrating, shokri2015privacy, dong2022gaussian}, making it suitable for AE-AL:
 \begin{itemize}
 \item \textbf{Decentralized protection mechanism.} The standard differential privacy needs a 
trusted central server to hold the pooled dataset and implement a randomized mechanism to the aggregated outputs. Conversely, local differential privacy provides protection to each individual observation without imposing such a requirement. It aligns well with the AE-AL's decentralized setting.
\item \textbf{Observation value protection.} The standard differential privacy ensures that the presence or absence of a single individual's observation is hard to infer from the generated output. 
 However, the datasets in AE-AL are known to be from the same set of individuals since they are aligned by a nonprivate ID, and our focus is on protecting the individual data values.
Local differential privacy can achieve this goal by preventing accurate recovery of the data values.
 
 \end{itemize}

 For AE-AL, we assume
 that $\|X^{(\B)}\|_2$ is almost surely upper bounded by a positive constant $\cstt$.
 To enhance the privacy protection of $\bm{x}^{(\B)}_i$ from each individual $i=1,\dots,n$, \B transmits  $\bm{X}^{(\B)}\bm{U} + \bm{\noiseMat}$ instead of $\bm{X}^{(\B)}\bm{U}$, where $\bm{\noiseMat}$ is an $n\times\id$ random matrix with its entries i.i.d.\ sampled from the Laplace distribution with mean zero and scale parameter $2\id\cdot\cstt/\epsilon$.  
Let $\bm{\noiseMat}_i$, $i=1,\dots,n$,  be the $i$th row of $\bm{\noiseMat}$. Let $W_{n,\id}'$ be the test statistic calculated from the testing procedure in Section~\ref{subsub_testProc} with $\bm{X}^{(\B)}\bm{U}$ replaced by $\bm{X}^{(\B)}\bm{U} + \bm{\noiseMat}$. 
This modified testing procedure enjoys the following properties:
\begin{proposition} \label{prop_local_diff_privacy}
%We treat ${\bm{x}_i^{(\B)}}$ as the raw data and the $i$th row of the sketchy data, which is $(\bm{x}^{(\B)}_i)^\T\bm{U}+ \bm{\noiseMat}_i$, as the privatized data.
%Then, we have:
% The testing procedure in Section~\ref{subsub_testProc} with $\bm{X}^{(\B)}\bm{U}$ replaced by $\bm{X}^{(\B)}\bm{U} + \bm{\noiseMat}$ enjoys the following properties:
\textbf{\textup{(1)}}
%(\bm{x}^{(\B)}_i)^\T\bm{U}+ \bm{\noiseMat}_i$ 
The procedure preserves the $\epsilon$-local differential privacy of  ${\bm{x}_i^{(\B)}}$ for all $i=1,\dots,n$.
\textbf{\textup{(2)}}
 Given Conditions~\ref{cond_designMatFullrank}-\ref{cond_exchInteDiff} and that the function $\lp \mapsto \mest(y, \lp)$ is strictly convex for each given $y$ in the support of $Y$, the results in Theorem~\ref{thm_Wn} hold for $W_{n,\id}'$ with any $\epsilon>0$.
\end{proposition}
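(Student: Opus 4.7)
The plan is to address the two claims separately. For claim \textbf{(1)}, I identify the perturbation $\bm{X}^{(\B)}\bm{U}+\bm{\noiseMat}$ as a Laplace mechanism applied independently to each row, so it suffices to bound the $\ell_1$-sensitivity of the deterministic sketch $\bm{x}\mapsto\bm{U}^{\T}\bm{x}$ on the ball $\{\bm{x}:\|\bm{x}\|_2\leq\cstt\}$. Since each column $\bm{u}^{(j)}$ of $\bm{U}$ is a unit vector, the Cauchy--Schwarz inequality yields
\begin{equation*}
\|\bm{U}^{\T}(\bm{x}-\bm{x}')\|_1 \;=\; \sum_{j=1}^{\id}\bigl|{\bm{u}^{(j)}}^{\T}(\bm{x}-\bm{x}')\bigr| \;\leq\; \id\cdot\|\bm{x}-\bm{x}'\|_2 \;\leq\; 2\id\cstt,
\end{equation*}
so adding i.i.d.\ Laplace noise with scale $2\id\cstt/\epsilon$ to each of the $\id$ coordinates gives $\epsilon$-differential privacy via the standard Laplace mechanism theorem, and the independence of noise across rows promotes this to $\epsilon$-local differential privacy for every $\bm{x}^{(\B)}_i$.

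For claim \textbf{(2)}, the strategy is to apply Theorem~\ref{thm_Wn} directly to the augmented covariate vector $\widetilde{X}^{(\bm{U})}\de(X^{(\A)\T},\,X^{(\B)\T}\bm{U}+\noise^{\T})^{\T}$, where $\noise$ has the Laplace distribution of a generic row of $\bm{\noiseMat}$ and is independent of $(Y,X,\bm{U})$. I would first verify that Conditions~\ref{cond_designMatFullrank}--\ref{cond_exchInteDiff} transfer to this enlarged design: the full-rank property is actually strengthened by the independent continuous noise, uniqueness of the population minimizer is preserved because $\noise$ is an independent mean-zero additive component on one block, and the Hessian and gradient consistency in Condition~\ref{A3} continue to hold by a truncation argument exploiting the sub-exponential tails of $\noise$, so standard uniform laws of large numbers apply on a compact neighborhood of the minimizer.

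The crux is to pin down the population minimizer $\widetilde{\bm{\beta}}^{(\bm{U})}_*=\argmin_{\bm{\beta}^{(\bm{U})}}\E\bigl(\mest(Y,\widetilde{X}^{(\bm{U})\T}\bm{\beta}^{(\bm{U})})\mid\bm{U}\bigr)$ under each hypothesis. Writing $\bm{\beta}^{(\bm{U})}=({\bm{\beta}^{(\A)}}^{\T},\bm{\eta}^{\T})^{\T}$, under $\textup{H}_0$ I argue the last $\id$ coordinates must vanish: strict convexity of $\lp\mapsto\mest(y,\lp)$ combined with Jensen's inequality, applied conditionally on $(Y,X,\bm{U})$, shows that any nonzero $\bm{\eta}$ inserts a non-degenerate mean-zero perturbation $\noise^{\T}\bm{\eta}$ into the linear predictor and strictly increases the expected loss relative to $\bm{\eta}=\bm{0}$. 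Thus $\widetilde{\bm{\beta}}^{(\bm{U})}_*$ coincides with the noise-free minimizer of Theorem~\ref{thm_Wn}, and the null asymptotic distribution of $W'_{n,\id}$ follows verbatim. Under $\textup{H}_1$, the gradient of the expected loss in $\bm{\eta}$ at $\bm{\eta}=\bm{0}$ equals $\bm{U}^{\T}\E(\partial_\lp\mest(Y,X^{(\A)\T}\bm{\beta}^{(\A)})X^{(\B)})+\E(\partial_\lp\mest(Y,X^{(\A)\T}\bm{\beta}^{(\A)}))\E(\noise)$; the second summand vanishes because $\E(\noise)=\bm{0}$, leaving exactly the noise-free gradient, which is nonzero by the proof of Theorem~\ref{thm_Wn}(2). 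Hence the noisy minimizer's last $\id$ coordinates are also nonzero, and $W'_{n,\id}$ diverges in probability.

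The step I expect to be most delicate is the propagation of the uniform-convergence claim in Condition~\ref{A3} to the perturbed design, because the augmented covariate is no longer almost-surely bounded; a truncation-and-tail-bound argument using the sub-exponential Laplace tails will handle this, but care is needed to keep the cutoff rate compatible with the $\sqrt{n}$-consistency underlying the Wald limit.
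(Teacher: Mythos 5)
Your proposal is correct and takes essentially the same route as the paper: part (1) is the standard Laplace-mechanism argument with $\ell_1$-sensitivity $2\id\cstt$ (the paper merely cites it), and part (2) treats the Laplace noise as an independent mean-zero component appended to Bob's data, using Jensen's inequality plus strict convexity to force its population coefficient to zero under $\textup{H}_0$ and the independence/mean-zero property to show the sketch-block gradient—hence the coefficient—remains nonzero almost surely under $\textup{H}_1$, after which Theorem~\ref{thm_Wn}'s machinery applies. The only difference is presentational: the paper formalizes this by augmenting Bob's covariates with the noise and the mixing matrix with an identity block (the matrix $\MUp$), whereas you keep the noisy sketch as a single block and re-verify the regularity conditions for the augmented design directly.
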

% The above proposition shows that transmitting $\bm{X}^{(\B)}\bm{U} + \bm{\noiseMat}$ simultaneously satisfies $\epsilon$-local differential privacy  for each ${\bm{x}^{(\B)}_i}$ ($i=1,\dots,n$), and the corresponding test statistic maintains the asymptotic properties of the original one under both $\textup{H}_0$ and $\textup{H}_1$.
The result \textbf{(1)} follows from the properties of the Laplace mechanism and the sequential composition property of local differential privacy (see, e.g., \citet{dwork2010differential} and \citet{xiong2020comprehensive}).
The result \textbf{(2)} is proved by treating $\bm{\noiseMat}$ as the observations of an additional set of covariates from \b, which do not contribute to the improvement of the model performance. 
The detailed proof is presented in Section~\ref{sec_prop1Proof} of the supplement. If $X^{(\B)}$ contains covariates with unbounded ranges, which leads to an unbounded $l_2$ norm, \B may set a threshold and exclude observations with an $l_2$ norm exceeding the threshold when providing assistance.

\subsection{Assisted Training}\label{almeth}
We first present the AE-AL training algorithm along with its theoretical properties, then highlight its difference compared with other training methods. The simulation studies are presented in Sections~\ref{subsec_assisted_train}. 
% and \ref{subsec_PoNoH0} of the supplement.

\subsubsection{Training Algorithm}\label{subsubsec_trainAlgori}

In the ``assisted training'' stage, both learners iteratively transmit linear combinations of their covariates columns to build a joint model virtually. Each learner locally holds their part of the model parameters. We assume that  \B has acquired $Y$ from \A at the beginning of this stage. When privacy concerns arise,  \A may send the response after applying a random transformation. More details about this approach can be found in Section~\ref{sec_masking the response} of the supplement. \label{page_ref_masking}

Denote the estimated coefficients held by \A and \B in the $k$th assistance round by $ \hat{\bm{\beta}}^{(\A,k)}_n$ and $ \hat{\bm{\beta}}^{(\B,k)}_n$, respectively.
For $k\geq 1$, denote the loss functions for \A and \B locally in the $k$th assistance round by
\begin{align}
	\nlog_n^{(\B,k)}(\bm{\beta}^{(\B)}) &\de \frac{1}{n}\sum_{i=1}^n \mest(y_i, {\bm{x}^{(\B)}_i}^\T\bm{\beta}^{(\B)} + \hat{\bm{\lp}}^{(\A,k-1)}_{n,i}),\label{eq_assistedLossB}\\
		\nlog_n^{(\A,k)}(\bm{\beta}^{(\A)}) &\de \frac{1}{n}\sum_{i=1}^n \mest(y_i, {\bm{x}^{(\A)}_i}^\T\bm{\beta}^{(\A)} + \hat{\bm{\lp}}^{(\B,k)}_{n,i})\label{eq_assistedLossA},
\end{align}
 where $\hat{\bm{\lp}}^{(\A,k)}_{n,i}$ is the $i$th element of $\bm{X}^{(\A)} \hat{\bm{\beta}}^{(\A,k)}_n$ and $\hat{\bm{\lp}}^{(\B,k)}_{n,i}$ is the $i$th element of $\bm{X}^{(\B)} \hat{\bm{\beta}}^{(\B,k)}_n$.
The training procedure is as follows.
\begin{enumerate}
\item \A estimates the initial value $\hat{\bm{\beta}}^{(\A,0)}_n$ by fitting the model based on $(\bm{y}, \bm{X}^{(\A)})$, namely, by minimizing the loss in Equation~\eqref{eq_assistedLossA} with $\hat{\bm{\beta}}^{(\B,0)}_n = 0$, and sends  $\hat{\bm{\lp}}^{(\A,0)}_n$ to \b.
\item 
% After receiving $\hat{\bm{\lp}}^{(\A,0)}_n$,
\B obtains $\hat{\bm{\beta}}^{(\B,1)}_n = \argmin_{\bm{\beta}^{(\B)}\in \R^{p_\B}}\nlog_n^{(\B,1)}(\bm{\beta}^{(\B)})$ and sends $\hat{\bm{\lp}}^{(\B,1)}_n$ to \a.
\item \A obtains $\hat{\bm{\beta}}^{(\A,1)}_n=	\argmin_{\bm{\beta}^{(\A)}\in \R^{p_\A}}\allowbreak \nlog_n^{(\A,1)}(\bm{\beta}^{(\A)})$ and sends $\hat{\bm{\lp}}^{(\A,1)}_n$ to \b.
\item This process continues until a stop criterion is met. \A and \B obtain the fitted model coefficients denoted by $\hat{\bm{\beta}}^{(\A)}_n$ and $\hat{\bm{\beta}}^{(\B)}_n$, respectively.
\end{enumerate}
Possible stop criteria include stopping when $\|\hat{\bm{\lp}}^{(\A,k)}_n + \hat{\bm{\lp}}^{(\B,k)}_n - \hat{\bm{\lp}}^{(\A,k-1)}_n - \hat{\bm{\lp}}^{(\B,k-1)}_n\|_2$ or $\|\hat{\bm{\beta}}^{(\A,k)}_n - \hat{\bm{\beta}}^{(\A,k-1)}_n\|_2$ is less than a tolerance level, or when $k$ is larger than a preset limit that is based on the computation budget.

For future prediction at $\lp$ level,  \A and \B first receive new covariate values $\tilde{\bm{x}}^{(\A)}$ and $\tilde{\bm{x}}^{(\B)}$, respectively.
Then, \B sends $(\tilde{\bm{x}}^{(\B)})^\T\hat{\bm{\beta}}^{(\B)}_n$ to \a, and \A obtains the prediction 
$(\tilde{\bm{x}}^{(\A)})^\T \hat{\bm{\beta}}^{(\A)}_n + (\tilde{\bm{x}}^{(\B)})^\T \hat{\bm{\beta}}^{(\B)}_n$. 
% {\color{red}This prediction could be in the form of a charged service, where \B charges \A for each query of $(\tilde{\bm{x}}^{(\B)})^\T\hat{\bm{\beta}}^{(\B)}_n$.} 
To construct a confidence interval,  \A and \B first obtain the estimated variances $\hat\sigma_\A^2$ and $\hat\sigma_\B^2$, respectively, where $\hat\sigma_\A^2\de(\tilde{\bm{x}}^{(\A)})^\T\big(\hat{\bm{V}}_1^{(\A)}\big)^{-1}\hat{\bm{V}}_2^{(\A)}\big(\hat{\bm{V}}_1^{(\A)}\big)^{-1}\tilde{\bm{x}}^{(\A)}$, $\hat{\bm{V}}_1^{(\A)} \de n^{-1}\sum_{i=1}^n\allowbreak \nabla^2_{\bm{\beta}^{(\A)}}\mest(y_i,  {\bm{x}^{(\A)}_i}^\T\hat{\bm{\beta}}^{(\A)}_n + {\bm{x}^{(\B)}_i}^\T\hat{\bm{\beta}}^{(\B)}_n)$, 
$\hat{\bm{V}}_2^{(\A)} \de n^{-1}\sum_{i=1}^n \nabla_{\bm{\beta}^{(\A)}}\mest(y_i, {\bm{x}^{(\A)}_i}^\T\hat{\bm{\beta}}^{(\A)}_n + {\bm{x}^{(\B)}_i}^\T\hat{\bm{\beta}}^{(\B)}_n)\bigl(\nabla_{\bm{\beta}^{(\A)}}\mest( \allowbreak  y_i,{\bm{x}^{(\A)}_i}^\T\hat{\bm{\beta}}^{(\A)}_n + {\bm{x}^{(\B)}_i}^\T\hat{\bm{\beta}}^{(\B)}_n)\bigr)^\T$, and $\hat\sigma_\B^2$ is defined in a similar way. 
Let $z_{\alpha}$ denote the upper $\alpha$-quantile of the standard normal distribution. Next, \B sends $\hat\sigma_\B^2$ to \a, and  \A obtains a conservative $(1-\alpha)$-confidence interval  by
\begin{equation}\label{eq_conf}
    (\tilde{\bm{x}}^{(\A)}_i)^\T \hat{\bm{\beta}}^{(\A)}_n + (\tilde{\bm{x}}^{(\B)}_i)^\T \hat{\bm{\beta}}^{(\B)}_n \pm z_{1-\alpha/4}(\hat\sigma_\A + \hat\sigma_B),\end{equation}
which is based on the Bonferroni correction and asymptotic normality of the estimated parameter (as will be shown). If the prediction at the original $Y$ level is of interest, the specific relationship between $Y$ and $\lp$ is needed. This will be studied in case of generalized linear models (GLMs) later. 

% {\color{blue}This paragraph needs to be revised. Overlapped with the comments after the theorem. Need to check whether addresses the reviewer 2's comment.}
% From an optimization point of view, the convergence of $\hat{\bm{\beta}}^{(\A,k)}_n$ and $\hat{\bm{\beta}}^{(\B,k)}_n$ on a fixed dataset as $k\rightarrow\infty$ has been well studied (see, e.g.,
% \cite{luo1992convergence,luo1993error, tseng1993dual,tseng2001convergence, tseng2009coordinate,
% saha2013nonasymptotic, 
% zhao2018pathwise,
% zhao2014accelerated}). However, for the statistical properties to be established, we have to characterize the behavior of the objective function in both the number of iterations $k$ and the increasing sample size $n$,  
% which are not revealed by the aforementioned optimization-centered results.

Next, we present the statistical properties of the AE-AL estimator as $n\rightarrow\infty$.
Let
$$
\TransformMat \de 
\begin{pmatrix}
\bm{I}_{p_a\times p_a} & & & \\
& & \bm{I}_{p_b\times p_b} &\\
&\bm{I}_{p_c\times p_c}& &\bm{I}_{p_c\times p_c}
\end{pmatrix},
$$
where each $\bm{I}$ is the identity matrix with its dimension indicated by the subscript, and  $\hat{\bm{\beta}}^{(k)}_n\de \TransformMat \cdot \bigl((\hat{\bm{\beta}}^{(\a,k)}_n)^\T, (\hat{\bm{\beta}}^{(\b,k)}_n)^\T\bigr)^\T$. 
The matrix $\TransformMat$  is used solely for theoretical analysis and is not required to be known by \A and \B 
 when they conduct local updates in practice. 
Define the oracle estimator $\check{\bm{\beta}}_n  \de \arg\min_{\bm{\beta}\in\mathbb R^p}\nlog_n(\bm{\beta})$, which is based on the practically infeasible combined data from \A and \b. 
\begin{condition}\label{cond_strictConvexloss}
The function $\lp \mapsto \mest(y, \lp)$ is strictly convex and twice differentiable for each  $y$ in the essential support of $Y$.
\end{condition}
\begin{condition}\label{cond_pdHessian}
The matrix $\nabla^2\nlog_*(\bm{\beta})$ is positive definite for each $\bm{\beta}\in\R^p$.
\end{condition}
\begin{condition}\label{cond_oraExist}
The oracle estimator $\check{\bm{\beta}}_n$ uniquely exists with probability going to one as $n\rightarrow\infty$. Also, $\|\check{\bm{\beta}}_n-\bm{\beta}_*\|_2\limp 0$
as $n\rightarrow\infty$.
\end{condition}
\begin{condition}\label{cond_sUconv} 
There exists an open ball $\ThetaSet\subset\R^p$  centered at $\bm{\beta}_*$, such that as $n\rightarrow\infty$, $ \sup_{\bm{\beta}\in \ThetaSet}\left|\nlog_n(\bm{\beta})-\nlog_*(\bm{\beta})\right|  \limp 0$, $ \sup_{\bm{\beta}\in \ThetaSet}\left\|\nabla\nlog_n(\bm{\beta})-\nabla\nlog_*(\bm{\beta})\right\|_{\infty}  \limp 0$, and $ \sup_{\bm{\beta}\in \ThetaSet}\allowbreak  \left\|\nabla^2 \nlog_n(\bm{\beta})-\nabla^2\nlog_*(\bm{\beta})\right\|_{\infty}  \limp 0$.
 % \begin{align*}
 %  \sup_{\bm{\beta}\in \ThetaSet}\left|\nlog_n(\bm{\beta})-\nlog_*(\bm{\beta})\right|  \limp& 0,\\
 %  \sup_{\bm{\beta}\in \ThetaSet}\left\|\nabla\nlog_n(\bm{\beta})-\nabla\nlog_*(\bm{\beta})\right\|_{\infty}  \limp& 0,\\
 %     \sup_{\bm{\beta}\in \ThetaSet}\left\|\nabla^2 \nlog_n(\bm{\beta})-\nabla^2\nlog_*(\bm{\beta})\right\|_{\infty}  \limp& 0,
 % \end{align*}
 % {\color{red}where recall that $\|\cdot\|_{\infty}$ denote the sup norm of a vector or matrix.}
 \end{condition}
\begin{condition}\label{cond_AsymNorm}
The oracle estimator 
$\sqrt{n}(\check{\bm{\beta}}_n - \bm{\beta}_*)\rightarrow_d\mathcal N(\bm{0},\Sigma)$
 as $n\rightarrow\infty$, where $\Sigma$ is a positive semi-definite matrix.
\end{condition}
\label{page_condset2} 
We will discuss the scenarios where the above conditions are satisfied in Section~\ref{eq_a_discussion_about_conditions} of the supplement.
In Section~\ref{sec_GLM}, we will provide more concrete conditions that fulfill the above requirements for GLMs.

For constants $0 < q_1 < 1$, $0<q_2<\infty$, 
define $A_n(q_1,q_2)$ to be the event that
 $\check{\bm{\beta}}_n$ and $\hat{\bm{\beta}}^{(k)}_n$ uniquely exist, and 
 $\|\hat{\bm{\beta}}^{(k)}_n - \check{\bm{\beta}}_n\|_2\leq
q_1^k\cdot q_2$ for all $k\in\mathbb N$.
\begin{theorem}\label{probcon}
Assume that Conditions~\ref{cond_designMatFullrank}, \ref{A1}, and \ref{cond_strictConvexloss}-\ref{cond_sUconv} hold.

\begin{itemize}
    \item[(1)] There exist a neighborhood $\InitSet$ of $\bm{\beta}_*$ and constants $0 < q_1 < 1$ and $0<q_2<\infty$ that depend on $\InitSet$,  such that when the initial value
	$\hat{\bm{\beta}}^{(0)}_n \in \InitSet$,  $\P(A_n(q_1,q_2))\rightarrow 1$ as $n\rightarrow\infty$. 
\item[(2)] If Condition~\ref{cond_AsymNorm} also holds, there exists a constant $q_3>0$ that depends on $q_1$, such that when  $k \geq q_3\cdot \log(n)$, we have $\sqrt{n}(\hat{\bm{\beta}}^{(k)}_n - \bm{\beta}_*)\rightarrow_d\mathcal N(\bm{0},\Sigma)$ as $n\rightarrow\infty$.
\end{itemize}
 \end{theorem}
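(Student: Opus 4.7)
The plan is to recognize the AE-AL training iteration as exact block-coordinate descent (BCD) on the overparametrized loss $\widetilde{\nlog}_n(\bm{\beta}^{(\A)},\bm{\beta}^{(\B)}) \de \nlog_n\bigl(\bm{T}((\bm{\beta}^{(\A)})^\T,(\bm{\beta}^{(\B)})^\T)^\T\bigr)$, with the two non-overlapping blocks $\bm{\beta}^{(\A)}$ and $\bm{\beta}^{(\B)}$; each block minimization is uniquely well-defined because $\bm{X}^{(\A)}$ and $\bm{X}^{(\B)}$ have full column rank (inherited from Condition~\ref{cond_designMatFullrank}). Although $\widetilde{\nlog}_n$ has the $p_c$-dimensional null space $\ker\bm{T}$ from the shared covariates, the combined iterate $\hat{\bm{\beta}}^{(k)}_n=\bm{T}\bigl((\hat{\bm{\beta}}^{(\A,k)}_n)^\T,(\hat{\bm{\beta}}^{(\B,k)}_n)^\T\bigr)^\T$ lives in $\R^p$, where $\nlog_n$ is locally strictly convex by Conditions~\ref{cond_strictConvexloss} and~\ref{cond_pdHessian}. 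A key structural observation is that the pair of block first-order conditions $\nabla_{\bm{\beta}^{(\A)}}\widetilde{\nlog}_n=\bm{0}$ and $\nabla_{\bm{\beta}^{(\B)}}\widetilde{\nlog}_n=\bm{0}$ jointly read $\bm{T}^\T\nabla\nlog_n(\bm{\beta})=\bm{0}$; since $\bm{T}$ is surjective onto $\R^p$, the map $\bm{T}^\T$ is injective, so any fixed point must satisfy $\nabla\nlog_n(\bm{\beta})=\bm{0}$ and hence agree with $\check{\bm{\beta}}_n$.

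For part~(1), I would first use Conditions~\ref{cond_strictConvexloss}--\ref{cond_sUconv} to extract an open neighborhood $\InitSet$ of $\bm{\beta}_*$ on which, with probability tending to one, $\nlog_n$ is $\mu$-strongly convex and $L$-smooth with constants $\mu, L>0$ independent of $n$, and which contains $\check{\bm{\beta}}_n$. Starting from $\hat{\bm{\beta}}^{(0)}_n\in\InitSet$, the monotone descent of $\nlog_n$ along BCD inductively confines all subsequent iterates to a sublevel set contained in $\InitSet$. Within this neighborhood, each exact block minimization decreases $f(\bm{\beta})\de\nlog_n(\bm{\beta})-\nlog_n(\check{\bm{\beta}}_n)$ by at least a fixed multiple of $\|\nabla_{\mathrm{block}}\widetilde{\nlog}_n\|_2^2$. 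Summing the two steps of one full cycle recovers $\bm{T}^\T\nabla\nlog_n(\bm{\beta}^{(k)})$, and a direct computation gives $\bm{T}\bm{T}^\T=\mathrm{diag}(\bm{I}_{p_a},\bm{I}_{p_b},2\bm{I}_{p_c})\succ\bm{0}$, hence $\|\bm{T}^\T\nabla\nlog_n\|_2^2\geq\|\nabla\nlog_n\|_2^2$. Combining with the strong-convexity bound $f\leq(2\mu)^{-1}\|\nabla\nlog_n\|_2^2$ yields $f(\bm{\beta}^{(k+1)})\leq(1-c)f(\bm{\beta}^{(k)})$ for some $c\in(0,1)$, and a further application of strong convexity delivers the geometric estimate $\|\hat{\bm{\beta}}^{(k)}_n-\check{\bm{\beta}}_n\|_2\leq q_1^k q_2$ with $q_1\de\sqrt{1-c}\in(0,1)$, establishing the event $A_n(q_1,q_2)$ with probability tending to one.

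For part~(2), given the geometric rate from part~(1), I would pick $q_3>1/(2\log(1/q_1))$ so that whenever $k\geq q_3\log n$ we have $q_1^k q_2=o(n^{-1/2})$, and therefore $\sqrt{n}(\hat{\bm{\beta}}^{(k)}_n-\check{\bm{\beta}}_n)\limp\bm{0}$. The decomposition $\sqrt{n}(\hat{\bm{\beta}}^{(k)}_n-\bm{\beta}_*)=\sqrt{n}(\hat{\bm{\beta}}^{(k)}_n-\check{\bm{\beta}}_n)+\sqrt{n}(\check{\bm{\beta}}_n-\bm{\beta}_*)$, Condition~\ref{cond_AsymNorm}, and Slutsky's theorem then yield $\sqrt{n}(\hat{\bm{\beta}}^{(k)}_n-\bm{\beta}_*)\limd\N(\bm{0},\Sigma)$.

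The delicate step is the contraction argument in part~(1). Standard BCD guarantees presume joint strong convexity of the loss in the iteration variable, whereas $\widetilde{\nlog}_n$ is only degenerately convex in $\bm{\theta}=((\bm{\beta}^{(\A)})^\T,(\bm{\beta}^{(\B)})^\T)^\T$ due to $\ker\bm{T}$, so individual $\bm{\theta}$-level iterates need not converge and only their pushforward $\bm{T}\bm{\theta}^{(k)}$ can be controlled. I plan to handle this by stating the descent and Polyak--Łojasiewicz-type inequalities entirely in $\bm{\beta}$-space and using the explicit norm equivalence $\|\bm{T}^\T\nabla\nlog_n(\bm{\beta})\|_2\asymp\|\nabla\nlog_n(\bm{\beta})\|_2$ supplied by the block-diagonal form of $\bm{T}\bm{T}^\T$. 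A secondary obstacle is keeping every iterate inside the local strong-convexity neighborhood $\InitSet$ in the presence of the rough initialization $\hat{\bm{\beta}}^{(\B,0)}_n=\bm{0}$; the monotone decrease of $\nlog_n(\hat{\bm{\beta}}^{(k)}_n)$ across BCD updates, combined with a sufficiently small $\InitSet$, supplies a sublevel-set trap that prevents escape.
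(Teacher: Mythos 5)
Your proposal is correct and follows essentially the same route as the paper's proof: viewing the updates as exact block-coordinate descent on the overparametrized loss $\nlog_n(\bm{T}\,\cdot)$, using a sublevel-set trap built from Condition~\ref{cond_sUconv} to confine all iterates to a neighborhood of $\bm{\beta}_*$ where $\nlog_n$ is strongly convex and smooth with $n$-free constants, deriving a per-cycle geometric contraction of $\nlog_n(\hat{\bm{\beta}}^{(k)}_n)-\nlog_n(\check{\bm{\beta}}_n)$ (hence of $\|\hat{\bm{\beta}}^{(k)}_n-\check{\bm{\beta}}_n\|_2$) from the block first-order conditions together with $\bm{T}\bm{T}^\T\succeq \bm{I}$, and finally taking $k\geq q_3\log n$ with $q_3>-1/(2\log q_1)$ and Slutsky for part (2). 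The one loosely worded step, that ``summing the two steps of one full cycle recovers $\bm{T}^\T\nabla\nlog_n(\hat{\bm{\beta}}^{(k)}_n)$,'' is harmless precisely because alternation forces one block gradient to vanish at every iterate (the paper's zero-partial-derivative lemma), so $\bm{T}^\T\nabla\nlog_n(\hat{\bm{\beta}}^{(k)}_n)$ reduces to the single remaining block, whose next exact minimization supplies the required decrease—this is the same mechanism the paper formalizes via its Lipschitz-transfer lemma between consecutive iterates.
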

The theorem above gives a uniform characterization of the behavior of $\|\hat{\bm{\beta}}^{(k)}_n - \check{\bm{\beta}}_n\|_2$ for all $k$. 
\label{page_challenges_algorithm}Next, we discuss some unique challenges in deriving the above results.
First, 
although the convergence of $\hat{\bm{\beta}}^{(\A,k)}_n$ and $\hat{\bm{\beta}}^{(\B,k)}_n$ on a fixed dataset as $k\rightarrow\infty$ has been well studied from an optimization point of view \citep[see, e.g.,][]{luo1992convergence,luo1993error, tseng1993dual,tseng2001convergence, tseng2009coordinate,
saha2013nonasymptotic, 
zhao2018pathwise,
zhao2014accelerated}, their convergence results typically involve some unknown data-dependent constants.
Consequently, the conclusions and proofs from these previous studies do not apply to our context where we need to study the required number of iterations $k$ as the sample size $n$ increases.
Second, standard techniques regarding M-estimators (Theorems 5.7 and 5.23 in \citet{van2000asymptotic}) focus on the estimators that minimize the loss function. 
 In contrast, our setting requires the investigation of modeling performance with a finite
 number of iterations. To address these challenges, 
 we will derive the convergence result based on population-level constants and quantify the relationship between the iteration number $k$ and the sample size $n$.

Furthermore, one can quantify the relationship between $\P(A_n(q_1,q_2))$ and $n$ by assuming the following condition.
%, which is stronger than Conditions~\ref{cond_designMatFullrank} and \ref{cond_sUconv}.
% The following condition and corollary show that we can further quantify the relationship between $\P(A_n(q_1,q_2))$ and the sample size $n$, given more information for the results in  Conditions~\ref{cond_designMatFullrank} and \ref{cond_sUconv}.
\begin{condition}\label{cond_finiteSample}
There exists a constant $\Cst >0$, a function $\bd: \R^{+}\rightarrow\R^{+}$,  and a positive sequence $\cRate\rightarrow 0$ as $n\rightarrow\infty$,
such that 
for each $\Mb>0$, $\P\bigl(\sup_{\bm{\beta}\in \ThetaSet}\left|\nlog_n(\bm{\beta})-\nlog_*(\bm{\beta})\right|\geq\Mb\bigr) \leq\bd(\Mb)\cdot \cRate$, $\P\bigl(\sup_{\bm{\beta}\in \ThetaSet}\left\|\nabla^2 \nlog_n(\bm{\beta})-\nabla^2\nlog_*(\bm{\beta})\right\|_{\infty}  \geq\Mb\bigr) \leq\bd(\Mb)\cdot \cRate$, and $\P(\bm{X}$ has full column rank $)\geq 1 -\cRate\cdot\Cst$.
% \begin{align*}
% \P\bigl(\sup_{\bm{\beta}\in \ThetaSet}\left|\nlog_n(\bm{\beta})-\nlog_*(\bm{\beta})\right|>\Mb\bigr) <\bd(\Mb)\cdot \cRate,\\
% \P\bigl(\sup_{\bm{\beta}\in \ThetaSet}\left\|\nabla^2 \nlog_n(\bm{\beta})-\nabla^2\nlog_*(\bm{\beta})\right\|_{\infty}  >\Mb\bigr) <\bd(\Mb)\cdot \cRate,\\
% \P(\bm{X}\text{ has full column rank})\geq 1 -\cRate\cdot\Cst.
% \end{align*}

\end{condition}

Let $\BDset$ denote the boundary set of the ball $\ThetaSet$. We define $\lambda^{(\textrm{min})}_* \de \inf_{\bm{\beta}\in \ThetaSet}\lambda^{(\textrm{min})}\bigl(\nabla^2\nlog_*(\allowbreak\bm{\beta}) \bigr)$ where  $\lambda^{(\textrm{min})}(\cdot)$ returns the smallest eigenvalue of a matrix.
\begin{corollary}\label{coro_finite_sample_convergence}
 Assume that Conditions~\ref{A1},  \ref{cond_strictConvexloss}-\ref{cond_sUconv}, and \ref{cond_finiteSample} hold.
We have
 $
 \P\bigl(A_n(q_1,q_2)\bigr)\geq 1-\bigl(2\bd(\Mb) + \Cst\bigr)\cRate\text{ as $n\rightarrow\infty$},$
 where $\delta$ is the minimum of the following three terms: $\bigl( \min_{\bm{\beta}\in\BDset}\nlog_*(\bm{\beta}) - \nlog_*(\bm{\beta}_*)\bigr)/4$, $\lambda^{(\textrm{min})}_*/8p$,  and 
 $$\sup_{\bm{\beta}^{(1)}, \bm{\beta}^{(2)} \in\ThetaSet}|\nlog_*(\allowbreak\bm{\beta}^{(1)})- \nlog_*(\bm{\beta}^{(2)})|/4.$$
\end{corollary}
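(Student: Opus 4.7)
The plan is to combine the quantitative concentration bounds of Condition~\ref{cond_finiteSample} with the structural contraction argument underlying Theorem~\ref{probcon}(1). That theorem shows that once $\nlog_n$ is sufficiently close to $\nlog_*$ on $\ThetaSet$ (in both function value and Hessian) and the design matrix has full column rank, the oracle $\check{\bm{\beta}}_n$ lies uniquely in $\ThetaSet$, the loss is strongly convex there, and the iterates $\hat{\bm{\beta}}^{(k)}_n$ contract geometrically toward $\check{\bm{\beta}}_n$ with rate $q_1$ and leading constant $q_2$. My first step is therefore to isolate a deterministic ``good event'' $E_n$ on which all three structural conclusions hold with the particular $q_1,q_2$ from Theorem~\ref{probcon}(1), and then use Condition~\ref{cond_finiteSample} together with a union bound to lower bound $\P(E_n)$.

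Concretely, I would take $E_n$ to be the intersection of (i) $\sup_{\bm{\beta}\in \ThetaSet}|\nlog_n(\bm{\beta})-\nlog_*(\bm{\beta})|<\Mb$, (ii) $\sup_{\bm{\beta}\in \ThetaSet}\|\nabla^2\nlog_n(\bm{\beta})-\nabla^2\nlog_*(\bm{\beta})\|_{\infty}<\Mb$, and (iii) the event that $\bm{X}$ has full column rank, where $\Mb$ is the minimum from the corollary statement. By Condition~\ref{cond_finiteSample} and a union bound, $\P(E_n^c)\le 2\bd(\Mb)\cRate + \Cst\cRate$, which already matches the right-hand side of the desired inequality. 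The remaining work is to verify that this specific choice of $\Mb$ is small enough that $E_n\subseteq A_n(q_1,q_2)$.

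Each of the three components in the definition of $\Mb$ plays a distinct role. The term $\bigl(\min_{\bm{\beta}\in\BDset}\nlog_*(\bm{\beta}) - \nlog_*(\bm{\beta}_*)\bigr)/4$, combined with the third term $\sup_{\bm{\beta}^{(1)},\bm{\beta}^{(2)}\in\ThetaSet}|\nlog_*(\bm{\beta}^{(1)})-\nlog_*(\bm{\beta}^{(2)})|/4$, ensures that on $E_n$ the empirical loss $\nlog_n$ is strictly larger on the boundary $\BDset$ than at all interior points close to $\bm{\beta}_*$; by a standard sublevel-set barrier argument this confines both $\check{\bm{\beta}}_n$ and every iterate $\hat{\bm{\beta}}^{(k)}_n$ to the interior of $\ThetaSet$, which is precisely the ``trajectory stays inside'' property invoked in the proof of Theorem~\ref{probcon}(1). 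The second term $\lambda^{(\textrm{min})}_*/(8p)$ converts the $\|\cdot\|_\infty$ Hessian perturbation into a spectral one via $\|A\|_2\le p\|A\|_\infty$, yielding $\nabla^2\nlog_n(\bm{\beta})\succeq \tfrac{1}{2}\lambda^{(\textrm{min})}_*\bm{I}$ uniformly on $\ThetaSet$, which supplies the strong convexity needed to inherit the contraction constants $q_1$ and $q_2$ from Theorem~\ref{probcon}(1).

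The hardest part is the bookkeeping required to check that these three numerical thresholds are exactly what the deterministic argument hidden in the proof of Theorem~\ref{probcon}(1) requires. In particular, one must track how the confinement radius and the curvature lower bound propagate through the alternating local-minimization contraction so that the constants $q_1$ and $q_2$ emerging from the deterministic analysis coincide with those used to define the event $A_n(q_1,q_2)$; the factors of $1/4$ and $1/(8p)$ in the definition of $\Mb$ are designed for exactly this matching. Once that verification is done, the conclusion $\P\bigl(A_n(q_1,q_2)\bigr)\ge \P(E_n)\ge 1-\bigl(2\bd(\Mb)+\Cst\bigr)\cRate$ follows immediately.
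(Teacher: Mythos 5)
Your overall architecture is the same as the paper's: the paper also takes the good event to be the intersection of the two level-$\Mb$ sup-norm concentration events (loss values and Hessians) with the full-column-rank event, applies a union bound to get $1-\bigl(2\bd(\Mb)+\Cst\bigr)\cRate$, and then verifies that this particular $\Mb$ makes the deterministic steps behind Theorem~\ref{probcon}(1) go through with the same $q_1,q_2$.

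The gap is in your account of why the three thresholds suffice, i.e., exactly the ``bookkeeping'' step you flag as hardest. You assign the third term, $\csthes_*/4$ with $\csthes_*=\sup_{\bm{\beta}^{(1)},\bm{\beta}^{(2)}\in\ThetaSet}|\nlog_*(\bm{\beta}^{(1)})-\nlog_*(\bm{\beta}^{(2)})|$, to the boundary-confinement argument, but confinement needs only the first term: $\Mb_1=\bigl(\min_{\bm{\beta}\in\BDset}\nlog_*(\bm{\beta})-\nlog_*(\bm{\beta}_*)\bigr)/4$ already gives $\bm{\beta}_*\in\InitSet_n$ and $\InitSet\subseteq\InitSet_n$, and $\InitSet_n\subseteq\ThetaSet$ then follows from strict convexity (supplied by the full-rank event) together with that same level-$\Mb_1$ deviation bound. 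The actual job of the third term in the paper is to force $|\csthes_n-\csthes_*|<\csthes_*/2$, where $\csthes_n$ is the empirical oscillation of $\nlog_n$ over $\ThetaSet$; this is indispensable because the leading constant of the contraction is $q_n^{(2)}=\sqrt{2\csthes_n(1+\eta_n)/\lambda_n^{(\textrm{min})}}\sum_{h\ge 0}\eta_n^{h/2}$, so without a bound on $\csthes_n$ you cannot conclude $q_n^{(2)}\le q_2$ for the fixed $q_2$ that defines $A_n(q_1,q_2)$, and the inclusion $E_n\subseteq A_n(q_1,q_2)$ fails as you have set it up. Likewise, the second term $\lambda^{(\textrm{min})}_*/(8p)$ does more than give a uniform curvature floor: via Lemma~\ref{lem_ctsEigen} it bounds $|\lambda_n^{(\textrm{min})}-\lambda_*^{(\textrm{min})}|$ and $|\lambda_n^{(\textrm{max})}-\lambda_*^{(\textrm{max})}|$ by $p\Mb_2$, which is what keeps the contraction factor $\eta_n$ within $\min\{(1-\eta_*)/2,\ \eta_*/2\}$ of $\eta_*$ and hence yields $q_n^{(1)}\le q_1<1$. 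Once you reassign the thresholds this way --- first term for confinement, second term for $\eta_n$ and $\lambda_n^{(\textrm{min})}$ (controlling $q_1$ and part of $q_2$), third term for $\csthes_n$ (the remaining ingredient of $q_2$) --- your plan reproduces the paper's proof.
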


 In the following Section~\ref{sec_GLM} for GLMs, we will provide more concrete forms of $\cRate$, $\bd(\cdot)$, and $\Cst$.
 This result can be extended to the case with a changing $p$ under stronger assumptions. A discussion about this extension is provided in Section~\ref{subsec_changingp} of the supplement. 
 In the scenario where $p$ is relatively large compared with $n$, it may be more preferable to employ penalized loss functions for the model training. Along this line, we provide both theoretical and numerical analyses for AE-AL that incorporate penalty terms in Section~\ref{sec_penal_th_sim} of the supplementary document. 
They indicate that the AE-AL estimator may achieve a performance comparable to that of the oracle estimator when smooth penalty functions, such as the $l_2$ penalty, are employed.
The numerical studies in the supplement also include AE-AL for LASSO regression and the elastic net in scenarios where the number of covariates $p$ exceeds the sample size $n$. The results demonstrate that AE-AL can still yield improvements in model performance in these cases.

\subsubsection{Comparison with Other Training Methods}\label{subsub_comparison_learning}
The AE-AL training algorithm addresses the learning scenario where there is no third-party coordinator and the learners possess possibly different sets of covariates. Therefore, its application scope differs from methods that assume learners have different observations of the same covariates \citep[e.g.,][]{gong2015privacy, shokri2015privacy, jordan2019communication, dobriban2021distributed}) 
and methods that require a trusted central server for encryption \citep[e.g.,][]{yang2019quasi,  xu2021fedv, wenjie2021vertical}). 
% Other methods designed for a specific learning technique include \citet{du2004privacy, xian2020assisted, wang2022parallel} for linear modeling, \citet{liu2020federated} for random forest, and \citet{cheng2021secureboost, diao2022gal} for boosting. 

In vertical federated learning,  the decentralized learning problem with 
the loss function in \eqref{eq_mestimator} can be addressed by gradient descent-based methods \citep[e.g.,][]{yang2021model, das2021multi}).
Specifically, \A and \B may simultaneously update their coefficients by stochastic gradient descent as follows.
For each iteration, let $\minibatch_k \subset \{1,\cdots,n\}$ denote the index set of the mini-batch used in stochastic gradient descent, with its corresponding observations denoted by $\bigl(\bm{Y}^{(k)}, \bm{X}^{(\A,k)}, \bm{X}^{(\B,k)}\bigr)$. 
Let $\lossW^{(k)}(\bm{\lp})\de n^{-1}\sum_{i\in\minibatch_k} \mest\bigl(y_i,\lp_i\bigr)$, where $\lp_i$ $(i\in\minibatch_k)$ are the elements of the vector $\bm{\lp}$.
First, \B sends $\bm{X}^{(\B,k)}\hat{\bm{\beta}}^{(\B, k-1)}_n$ to \a. Based on this vector, \A computes the gradient 
$\gradient_\A^{(k)} \de \nabla_{\bm{\beta}^{(\A)}} n^{-1}\sum_{i\in\minibatch_k} \mest\bigl(y_i, {\bm{x}^{(\A,k-1)}_i}^\T\allowbreak\bm{\beta}^{(\A)} + {\bm{x}^{(\B,k-1)}_i}^\T\hat{\bm{\beta}}^{(\B, k-1)}_n\bigr)$. 
Second, \A sends $\nabla \lossW^{(k)}(\lp)$ evaluated at $\bm{\lp}^{(k-1)} = {\bm{x}^{(\A,k-1)}_i}^\T\hat{\bm{\beta}}^{(\A, k-1)}_n + {\bm{x}^{(\B,k-1)}_i}^\T\hat{\bm{\beta}}^{(\B, k-1)}_n$ to \b.
Third, \B calculates the gradient $\gradient_\B^{(k)} \de \bigl(\bm{X}^{(\B,k)}\bigr)^\T\nabla \lossW^{(k)}(\bm{\lp}^{(k-1)})$. Finally, \A and \B update $\hat{\bm{\beta}}^{(\A, k-1)}_n$ and $\hat{\bm{\beta}}^{(\B, k-1)}_n$ by $\gradient_\A$ and $\gradient_\B$, respectively.
The above training procedure is commonly known as FedSGD. 
Several modified FedSGD methods have been developed to address data encryption, communication efficiency, and other requirements. One state-of-the-art variant is FedBCD \citep{liu2022fedbcd} which expedites convergence and reduces the number of communication rounds by
allowing \A and \B to perform $Q>1$ local updates before synchronization.  Additionally, FedBCD has a hyperparameter that controls an additional proximal term in the local losses of \A and \b. This term helps prevent potential divergence when the number of local updates $Q$ is large. 
% Compared with the AE-AL training algorithm, the convergence conditions for FedSGD and FedBCD are less restrictive. 
% However, 
FedSGD and FedBCD can only guarantee convergence rates of $1/\sqrt{k}$ and $k^{-2/3}$ \citep{liu2022fedbcd}, respectively, which are slower than the exponential rate of our AE-AL training algorithm. Moreover, they require parameter tuning that entails communication between the learners, e.g., the selection of the step size.  This results in an increase in the total number of transmission rounds by a factor that is approximately equal to the number of candidate values for the tuning parameter.
Other methods that consider a similar setting include 
\citet{zhang2021asysqn, gu2021privacy, zhang2021secure}. Like AE-AL,  their learners perform local updates and transmit linear combinations of their data. 
However, these methods require the loss function to be strongly convex with respect to the coefficient of $X^{(\A)}$ and $X^{(\B)}$ for 
their theoretical analysis, which is 
violated in our setting when there are overlapping covariates. We will provide more comparisons between AE-AL and related methods in numerical studies.

% We will highlight the difference 
% between AE-AL and these methods in numerical studies.

\section{Generalized Linear Models}\label{sec_GLM}

In this section, we consider 
%the special case of 
GLMs with the conditional probability density or mass function of the form
\begin{equation}\label{eq_expFamilyDf}
f(y \mid \bm x^\T \bm{\beta}) = c(y)\exp\left(y\cdot u( \bm{x}^\T\bm{\beta}) - b(u(\bm{x}^\T\bm{\beta}))  \right),
\end{equation}
 where the unknown parameter $\bm{\beta} = ({\bm{\beta}^{(a)}}^\T, {\bm{\beta}^{(b)}}^\T, {\bm{\beta}^{(c)}}^\T)^\T$ with $\bm{\beta}^{(a)}\in \mathbb R^{p_a}$, $\bm{\beta}^{(b)}\in \mathbb R^{p_b}$, and $\bm{\beta}^{(c)}\in\mathbb R^{p_c}$. The intercept can be included in $\bm{\beta}^{(c)}$ with the corresponding covariate in $\bm{x}^{(c)}$ being constant one. The overdispersion parameter, which is omitted in the above expression, is assumed to be known. For instance, for logistic regression, $c(y) = 1$, $u(\bm{x}^\T\bm{\beta}) = \bm{x}^\T\bm{\beta}$, and $b(u(\bm{x}^\T\bm{\beta}))=\log(1 + \exp{(\bm{x}^\T\bm{\beta}}))$. When the model is correctly specified, we have $b'(u(\bm{x}^\T\bm{\beta})) = \E(Y\mid\bm{x}^\T\bm{\beta})$. The link function $g\de (b'\circ u)^{-1}$ is assumed to be monotone and differentiable.
    %We require $f$ to be identifiable in the sense that for $\bm{\beta}',\bm{\beta}''\in\R^p$, $f(y\mid \bm x^\T  \bm{\beta}') = f(y\mid \bm x^\T \bm{\beta}'')$ almost surely indicates that $\bm{\beta}'=\bm{\beta}''$, and the set with positive $f(y\mid \bm x^\T \bm{\beta})$ is  independent of $\bm{\beta}.$
To fit the GLM, we obtain the maximum likelihood estimator by taking $\mest(y,\bm{x}^\T \bm{\beta} ) = -\log\ f(y \mid \bm x^\T \bm{\beta})$. 
Next, we present the theoretical properties of the test statistic in \textbf{S1} and AE-AL estimator in \textbf{S2}. 

\begin{condition}\label{cond_diffu}
The function $\lp \mapsto u(\lp)$ from \eqref{eq_expFamilyDf} is three times continuously differentiable.
\end{condition}

\begin{condition}\label{dc} There exists a function $(y,\bm{x})\mapsto\bdd(y,\bm{x})$ with $\E(\bdd(Y, X)) < \infty$ and an open ball $\ThetaSett\subset\R^p$  centered at $\bm{\beta}_*$, such that when $\bm{\beta}\in\ThetaSett$, $\bigl| \frac{\partial^3}{\partial \bm{\beta}_j \partial \bm{\beta}_k \partial \bm{\beta}_l}  \log f(y\mid \bm{x}^\T\bm{\beta}) \bigr|\leq \bdd(y,\bm{x})$ holds almost surely for each $j,k,l\in\{1,\dots,p\}$, where $\bm{\beta}_j$ with $j\in\{1,\dots,p\}$ denote the $j$th element of $\bm{\beta}$. Also,  
$
\frac{\partial^3}{\partial \bm{\beta}_j \partial \bm{\beta}_k \partial \bm{\beta}_l} \E\bigl( \log f(Y\mid X^\T\bm{\beta})\bigr) = \E\biggl(\frac{\partial^3}{\partial \bm{\beta}_j \partial \bm{\beta}_k \partial \bm{\beta}_l}  \log f(Y\mid X^\T\bm{\beta})\biggr), 
$
for each $\bm{\beta}\in\ThetaSett$.

\end{condition}

\begin{condition}\label{cond_pdI}
The Fisher information matrix 
$\E \bigl(\nabla_{\bm{\beta}} \log f(Y\mid X^\T\bm{\beta})(\nabla_{\bm{\beta}} \log f(Y\mid X^\T\bm{\beta}))^\T\bigr)$ is positive definite for each $\bm{\beta}\in\R^p$.
\end{condition}

\begin{condition}\label{cond_Fbounded}
$\E\bigl\|\nabla_{\bm{\beta}}^2  \log f(Y\mid X^\T\bm{\beta}_*)\bigr\|_{\infty}<\infty,\quad\E\bigl\|\log f(Y\mid X^\T\bm{\beta}_*)\bigr\|_{\infty}<\infty,$
\begin{align*} 
\E \biggl(\sup_{\bm{\beta}\in \ThetaSett}\biggl\|\nabla_{\bm{\beta}} \log f(Y\mid X^\T\bm{\beta})(\nabla_{\bm{\beta}} \log f(Y\mid X^\T\bm{\beta}))^\T\biggr\|_{\infty}\biggr)<\infty.
\end{align*}
\end{condition}

% \begin{condition}\label{Finite_exp_deri}
% The expectation of $|\log f(Y\mid X^\T\bm{\beta}_*)|$ is finite. Moreover, the absolute values of the elements of $\nabla_{\bm{\beta}} \log f(Y\mid X^\T\bm{\beta}_*)$ and  $\nabla^2_{\bm{\beta}} \log f(Y\mid X^\T\bm{\beta}_*)$ have finite expectations.
% \end{condition}

\begin{condition}\label{lem_sbb_inm}
There exists a positive constant $\cst>0$ such that  $\{\bm{\beta}\in\R^p : \|\bm{\beta} - \MU\bm{\beta}^{(\bm{U})}_*\|_2\leq \cst\}\subseteq \ThetaSett$ holds almost surely, where $$
\MU \de 
\begin{pmatrix}
\bm{I}_{p_a\times p_a} & \bm{0}& \bm{0}\\
\bm{0}& \bm{0} &\bm{U}\\
\bm{0}&\bm{I}_{p_c\times p_c}&\bm{0}
\end{pmatrix}.
$$
(Similar to  $\TransformMat$,  $\MU$ is  not required to be known by \A and \b.)
%and the subscripts represent the dimensions of the matrices.
\end{condition}

%%%%%%%%%%%%%%%%%%%%%%%%%%%%%%
%{\color{green} Remove before submission: we do not assume that the model is correctly specified}
%%%%%%%%%%%%%%%%%%%%%%%%%%%%%%%
Conditions~\ref{cond_diffu}, \ref{dc} and \ref{cond_pdI} are required in the classical theory for the maximum likelihood estimators (e.g., in Theorem~7.5.2 of \cite{lehmann2004elements}). 
Condition~\ref{cond_Fbounded} is used to guarantee the convergence of the sandwich covariance estimator 
$\hat{\bm{V}}_1^{-1}\hat{\bm{V}}_2\hat{\bm{V}}_1^{-1}$ \citep{white1982maximum}.
% Condition~\ref{Finite_exp_deri} is required to derive the convergence result of the estimator $\hat{\bm{\beta}}^{(k)}_n$. 
For Condition~\ref{lem_sbb_inm}, we present an illustrative example.
Suppose that $Y$ follows $\mathcal N(X^\T\bm{\beta}_*, \sigma^2)$, where  $\sigma$ is a positive constant and $\E(X) = \bm{0}$. 
Let $\bm{\Sigma}\de \E(XX^\T)$.
Then,  it can be shown that $\MU\bm{\beta}^{(\bm{U})}_* = \MU({\MU}^\T\bm{\Sigma}\MU)^{-1}{\MU}^\T\bm{\Sigma}\bm{\beta}_*$.  Therefore, $\bm{\Sigma}^{1/2}\MU\bm{\beta}^{(\bm{U})}_*$ is the projection of $\bm{\Sigma}^{1/2}\bm{\beta_*}$, where $\bm{\Sigma}^{1/2}\cdot\bm{\Sigma}^{1/2} = \bm{\Sigma}$. Together with the fact that $\bm{\Sigma}$ is a non-degenerate deterministic matrix, $\|\MU\bm{\beta}^{(\bm{U})}_*\|_2$ is bounded almost surely, and Condition~\ref{lem_sbb_inm} is satisfied when the set $\ThetaSett$ is large enough. 

% For Condition~\ref{lem_sbb_inm}, we present an illustrative example.
% Suppose that $Y$ follows $\mathcal N(X^\T\bm{\beta}_*, \sigma^2)$, where  $\sigma$ is a positive constant and $\E(X) = \bm{0}$. 
% Let $\bm{\Sigma}\de \E(XX^\T)$.
% Then,  it can be shown that $\MU\bm{\beta}^{(\bm{U})}_* = \MU({\MU}^\T\bm{\Sigma}\MU)^{-1}{\MU}^\T\bm{\Sigma}\bm{\beta}_*$.  Therefore, $\bm{\Sigma}^{1/2}\MU\bm{\beta}^{(\bm{U})}_*$ is the projection of $\bm{\Sigma}^{1/2}\bm{\beta_*}$, where $\bm{\Sigma}^{1/2}\cdot\bm{\Sigma}^{1/2} = \bm{\Sigma}$. Together with the fact that $\bm{\Sigma}$ is a non-degenerate deterministic matrix, $\|\MU\bm{\beta}^{(\bm{U})}_*\|_2$ is bounded almost surely, and Condition~\ref{lem_sbb_inm} is satisfied when the set $\ThetaSett$ is large enough. 

\begin{corollary}\label{prop_suffGLM}
For GLMs, Conditions~\ref{A2}-\ref{cond_exchInteDiff}, and \ref{cond_oraExist}-\ref{cond_AsymNorm} hold given Conditions~\ref{cond_designMatFullrank}, \ref{A1},  \ref{cond_strictConvexloss}, \ref{cond_pdHessian}, and \ref{cond_diffu}-\ref{lem_sbb_inm}, where $\ThetaSet=\ThetaSett$. 
Consequently, the results in Theorems~\ref{thm_Wn} and \ref{probcon} hold given this set of conditions. 
Also, when $\E\bigl(\bigl(\sup_{\bm{\beta}\in \ThetaSet}|\mest(Y,X^\T \bm{\beta} )|\bigr)^2\bigr)$, $\E\bigl(\bigl(\sup_{\bm{\beta}\in \ThetaSet}\|\nabla^2\mest(Y,X^\T \bm{\beta} )\|_{\infty}\bigr)^2\bigr)$, and $\E\bigl(\|XX^\T\|_{\infty}^2\bigr)$ exist, Condition~\ref{cond_finiteSample}, which is required for Corollary~\ref{coro_finite_sample_convergence},  holds with: \\$\bd(\Mb) =  4\Mb^{-2}\cdot \max\biggl(\E\biggl(\bigl(\sup_{\bm{\beta}\in \ThetaSet}|\mest(Y,X^\T \bm{\beta} )|\bigr)^2\biggr)$, \\ $\E\biggl(\bigl(\sup_{\bm{\beta}\in \ThetaSet}\|\nabla^2\mest(Y,X^\T \bm{\beta} )\|_{\infty}\bigr)^2\biggr)\biggr)$, 
$\Cst = \frac{16p^2\cdot \E\bigl(\|XX^\T\|_{\infty}^2\bigr)}{\bigl(\lambda^{(\textrm{min})}(\E(XX^\T))\bigr)^2}$, and $\cRate = 1/n$.
% \begin{align*}
% \bd(\Mb) =&\  4\Mb^{-2}\cdot \max\biggl(\E\biggl(\bigl(\sup_{\bm{\beta}\in \ThetaSet}|\mest(Y,X^\T \bm{\beta} )|\bigr)^2\biggr), \E\biggl(\bigl(\sup_{\bm{\beta}\in \ThetaSet}\|\nabla^2\mest(Y,X^\T \bm{\beta} )\|_{\infty}\bigr)^2\biggr)\biggr),\\
% \Cst =&\ \frac{16p^2\cdot \E\bigl(\|XX^\T\|_{\infty}^2\bigr)}{\bigl(\lambda^{(\textrm{min})}(\E(XX^\T))\bigr)^2}, \quad \text{and $\cRate = 1/n$.}
% \end{align*}}
\end{corollary}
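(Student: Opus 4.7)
My plan is to verify, one at a time, each of Conditions~\ref{A2}, \ref{A3}, \ref{cond_exchInteDiff}, \ref{cond_oraExist}, \ref{cond_sUconv}, and \ref{cond_AsymNorm} from the hypotheses of the corollary, and then handle the quantitative Condition~\ref{cond_finiteSample} separately. Because $\mest(y,\bm{x}^\T\bm{\beta}) = -\log f(y\mid\bm{x}^\T\bm{\beta})$ is strictly convex in $\bm{x}^\T\bm{\beta}$ (Condition~\ref{cond_strictConvexloss}), and $\bm{X}$ eventually has full column rank (Condition~\ref{cond_designMatFullrank}), $\nlog_n$ is eventually strictly convex on $\R^p$, which gives the existence/uniqueness claims for $\check{\bm{\beta}}_n$ and $\hat{\bm{\beta}}^{(\bm{U})}_n$. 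Uniqueness of the population minimizers $\bm{\beta}_*$ and $\bm{\beta}^{(\bm{U})}_*$ follows from positive-definiteness of the Fisher information (Condition~\ref{cond_pdI}) together with Condition~\ref{A1}. Combining these with the standard argmin consistency result (Theorem~5.7 of \citet{van2000asymptotic}), applied on the ball $\ThetaSett$ furnished by Condition~\ref{lem_sbb_inm}, yields Conditions~\ref{A2} and \ref{cond_oraExist}.

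For Condition~\ref{cond_sUconv} (and, after a continuous-mapping step that plugs in $\hat{\bm{\beta}}^{(\bm{U})}_n$, Condition~\ref{A3}), I would invoke a uniform law of large numbers on $\ThetaSett$; the integrable envelopes required for $\nlog_n$, $\nabla\nlog_n$, and $\nabla^2\nlog_n$ are exactly those controlled by Condition~\ref{cond_Fbounded}, while the third-derivative envelope of Condition~\ref{dc} supplies the Lipschitz control needed to upgrade pointwise to uniform convergence. Condition~\ref{cond_exchInteDiff} then follows by dominated convergence with the envelope from Condition~\ref{dc}. The sandwich building blocks $\hat{\bm{V}}_1,\hat{\bm{V}}_2$ converge to $\bm{V}_1,\bm{V}_2$ by plugging the consistent $\hat{\bm{\beta}}^{(\bm{U})}_n$ into the uniformly convergent empirical averages, and positive-definiteness of $\bm{V}_1,\bm{V}_2$ is a direct consequence of Condition~\ref{cond_pdI}. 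Condition~\ref{cond_AsymNorm} is the classical MLE CLT, obtained by Taylor-expanding $\nabla\nlog_n(\check{\bm{\beta}}_n)=\bm{0}$ around $\bm{\beta}_*$ with remainder controlled by Conditions~\ref{cond_diffu} and \ref{dc}---this is exactly the setting of Theorem~7.5.2 in \citet{lehmann2004elements}.

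The quantitative Condition~\ref{cond_finiteSample} with rate $\cRate=1/n$ is where the extra second-moment hypotheses of the corollary come in. The envelopes $F(y,\bm{x})\de\sup_{\bm{\beta}\in\ThetaSet}|\mest(y,\bm{x}^\T\bm{\beta})|$ and $G(y,\bm{x})\de\sup_{\bm{\beta}\in\ThetaSet}\|\nabla^2\mest(y,\bm{x}^\T\bm{\beta})\|_{\infty}$ are assumed to have finite second moments, so the suprema can be pulled outside the sample average and Chebyshev's inequality applied to the centered empirical mean $n^{-1}\sum_{i=1}^n F(y_i,\bm{x}_i)-\E(F)$ (and analogously for $G$) yields a $4\Mb^{-2}\cdot n^{-1}\E(F^2)$ tail---producing the stated form of $\bd(\Mb)$, with the factor $4$ tracking the centering and triangle-inequality splits. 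For the full-rank event I would combine Weyl's inequality
\[
\bigl|\lambda^{(\min)}(n^{-1}\bm{X}^\T\bm{X})-\lambda^{(\min)}(\E(XX^\T))\bigr|\leq \bigl\|n^{-1}\bm{X}^\T\bm{X}-\E(XX^\T)\bigr\|_2\leq p\cdot\bigl\|n^{-1}\bm{X}^\T\bm{X}-\E(XX^\T)\bigr\|_{\infty}
\]
with an entrywise variance bound $\mathrm{Var}((XX^\T)_{jk})/n\leq n^{-1}\E\|XX^\T\|_{\infty}^2$, and the stated $\Cst=16p^2\E\|XX^\T\|_{\infty}^2/\bigl(\lambda^{(\min)}(\E(XX^\T))\bigr)^2$ drops out.

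The main obstacle is less conceptual than technical: achieving the sharp $n^{-1}$ Chebyshev rate in Condition~\ref{cond_finiteSample}, rather than a slower bracketing-entropy bound, requires that the suprema over $\bm{\beta}\in\ThetaSet$ be dominated inside the empirical mean by envelopes with finite second moment. Fortunately the added integrability hypotheses furnish exactly those envelopes, so the estimation collapses to a scalar $L^2$-Markov argument. A secondary subtlety is Condition~\ref{A3}'s uniform convergence on the line segment $\LSG$, which is handled by noting that $\LSG$ eventually lies in $\ThetaSett$ because $\hat{\bm{\beta}}^{(\bm{U})}_n\limp\bm{\beta}^{(\bm{U})}_*$ and $\MU\bm{\beta}^{(\bm{U})}_*$ sits in the interior of $\ThetaSett$ by Condition~\ref{lem_sbb_inm}.
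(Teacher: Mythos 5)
Your plan follows essentially the same route as the paper's proof: envelope bounds from Conditions~\ref{dc} and \ref{cond_Fbounded} feeding a uniform law of large numbers for Conditions~\ref{cond_sUconv} and \ref{A3}, dominated convergence for Condition~\ref{cond_exchInteDiff}, strict convexity of the sample losses (Conditions~\ref{cond_strictConvexloss} and \ref{cond_designMatFullrank}) for existence, uniqueness and consistency of $\hat{\bm{\beta}}^{(\bm{U})}_n$ and $\check{\bm{\beta}}_n$, Theorem~7.5.2 of \citet{lehmann2004elements} for Condition~\ref{cond_AsymNorm}, and Chebyshev's inequality on the envelopes together with the eigenvalue-perturbation bound at threshold $(2p)^{-1}\lambda^{(\textrm{min})}(\E(XX^\T))$ yielding exactly the stated $\bd(\Mb)$, $\Cst$, and $\cRate=1/n$. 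The only repair needed is a citation slip: uniqueness of the population minimizers and positive definiteness of $\bm{V}_1$ come from Condition~\ref{cond_pdHessian} (positive definiteness of $\nabla^2\nlog_*$, transferred to $\nabla^2\nlog_*^{(\bm{U})}={\MU}^\T\nabla^2\nlog_*\MU$ via the almost-sure full column rank of $\MU$), not from Condition~\ref{cond_pdI}, which in the paper's sandwich framework is used only to make $\bm{V}_2$ positive definite.
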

Suppose for a new subject, \A and \B observe $\tilde{\bm{x}}^{(\A)}$ and $\tilde{\bm{x}}^{(\B)}$, respectively. With $(\tilde{\bm{x}}^{(\B)})^\T \hat{\bm{\beta}}^{(\B)}_n$ transmitted from \b, \A obtains  $\hat{y} \de {g}^{-1}((\tilde{\bm{x}}^{(\A)})^\T \hat{\bm{\beta}}^{(\A)}_n + (\tilde{\bm{x}}^{(\B)})^\T \hat{\bm{\beta}}^{(\B)}_n)$ (as a point prediction) and a confidence interval of $\E(y\mid \tilde{\bm{x}}^{(\A)}, \tilde{\bm{x}}^{(\B)})$ based on the formula in \eqref{eq_conf} and the monotonicity of the link function $g$.
\clearpage
\section{Simulation Results}\label{sec_exp}
In this section, we evaluate the finite sample performance of the AE-AL procedure through simulation studies, specifically focusing on the logistic regression model. Additional simulation studies are provided in Section~\ref{sec_addexp} of the supplement, which include other GLM models and robust regression.

% \subsection{Simulation Results of the logistic regression}

% Additional simulation studies are provided in Section~\ref{sec_addexp} of the supplement, which include other GLM models and robust regression. 
% exemplify results using the logistic regression model in the main paper. Further simulation studies that include normal regression %, namely, the linear regression with normal errors 
% and Poisson regression models are provided in the supplement.

%\subsection{Stage of Initializing Connection}\label{subset_s1}

\subsection{Stage of Initializing Connection: Null distribution} \label{subsec_type1}
To investigate the convergence in distribution of $W_{n,\id}$ under $\textrm{H}_0$, we generate  simulated data from logistic regression models in the following settings.
\begin{itemize}
\item \textbf{Setting 1: }    $\lp =  0.5x_1 +\cdots+  0.5x_6$, where $\A$ holds $x_1,\dots,x_6$ and $\B$ holds $x_7,\dots, x_{12}$,
\item \textbf{Setting 2: }    $\lp =  0.5x_1 +\cdots+  0.5x_8$, where $\A$ holds $x_1,\dots,x_8$ and $\B$ holds $x_5,\dots, x_{12}$,
\item \textbf{Setting 3: }    $\lp =  0.5x_1 + \cdots+  0.5x_{10}$, where $\A$ holds $x_1,\dots,x_{10}$ and $\B$ holds $x_3,\dots, x_{12}$.
\end{itemize}
Consequently, the number of shared covariates $p_c=0$, $4$, and $8$ for the above three settings, respectively. 
Let $V$ be a $p\times p$ matrix with $V_{i,j}=\rho^{|i-j|}$, namely, it represents the AR(1) covariance matrix.
The covariate observations 
$\bm{X} = \tilde{\bm{X}}\sqrt{V}$, where $\tilde{\bm{X}}$ is a $n\times p$ matrix with elements
 i.i.d.\ sampled from $\textrm{Uniform}(0, 1)$.
 %, and $\sqrt{V}$ is a $p\times p$ matrix such that $\sqrt{V}^\T\sqrt{V} = V$. 
We consider $\rho = 0$ (corresponds to independent covariates), $0.1$ or $0.25$ and the sample sizes $n = 300$ or $2000$.  Additionally, the sketchy data $\bm{X}^{(\text{B})}\bm{U}$ can either be noise-free or have Laplace noise added with a scale of 0.1 or 0.5.
The simulation is independently replicated 100 times.
Figure~\ref{logisticQQ} presents the Q-Q plot of the p-values of $W_{n,\id}$ under Setting~2 with AR(1)-correlated covariates, $\rho = 0.1$,  Laplace noise with a scale of 0.5, and $\id = 1,\dots,5$. It can be seen that in general, $W_{n,\id}$ approximates its asymptotic distribution well, except that it tends to be conservative when $n=300$ (the sample quantiles of the p-values tend to be slightly larger than the theoretical quantiles).
The results from the other settings are similar and are provided in Section~\ref{subsec_QQlogistic} of the supplement. 

\begin{figure}[tb]
   \centering
\includegraphics[width=1\linewidth]{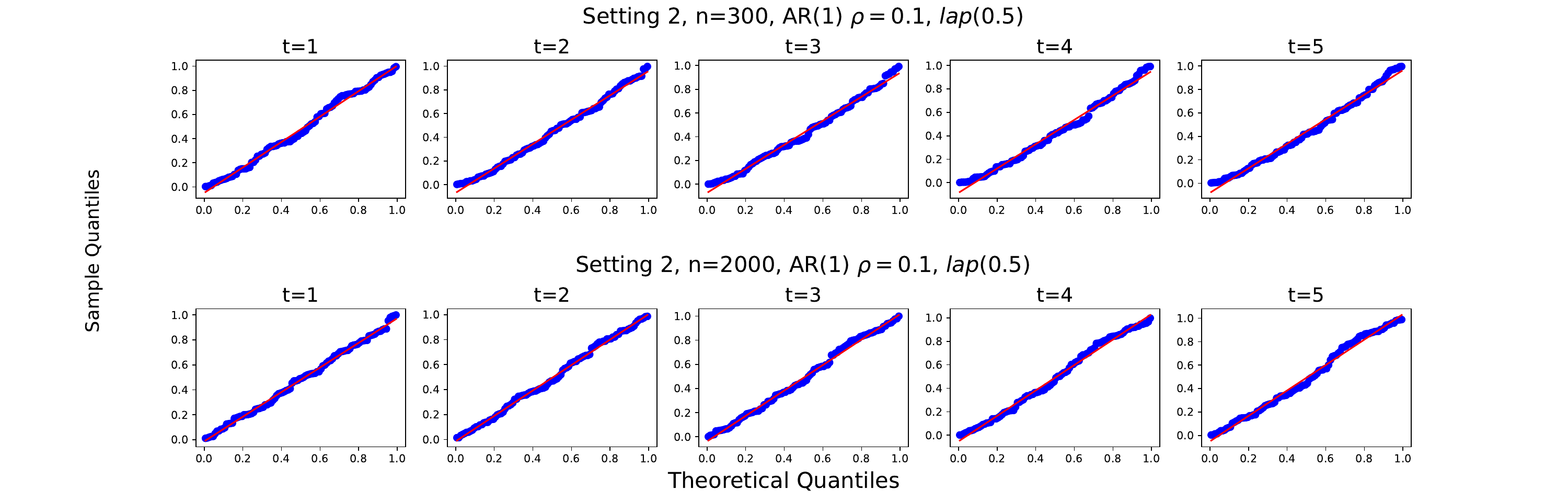}
	\caption{The Q-Q plot of $W_{n,\id}$ from Setting 2 with AR(1) covariates, $\rho = 0.1$,  and the scale of the Laplace noise is 0.5. The red straight line stands for the perfect match between the sample quantiles of the p-values and the theoretical quantiles of $\textrm{Uniform}(0,1)$.
	}
	\label{logisticQQ}
\end{figure}

\subsection{Stage of Initializing Connection: Power}\label{subsubsec_power}
Now, we explore the statistical power of the test under $\textrm{H}_1$. We adopt the settings from Subsection~\ref{subsec_type1} except that the data are simulated from a logistic regression model with  $\lp =\beta_1 x_1 +\cdots+\beta_{12} x_{12}$ and $\beta_1$ to $\beta_{12}$ are i.i.d.\ generated from $\mathcal N(0,0.25)$ in each replication to avoid cherry-picking.

The rejection rates of $W_{n,\id}$  versus $\id$ (the number of columns in the sketchy data) from 100 replications are presented in Figure~\ref{logisticPower}. 
It shows that tests with larger sample sizes ($n$), the number of columns in the sketchy data ($\id$), and a smaller scale of the Laplace noise tend to have a higher power. Moreover, when the total number of covariates is held constant, the power decreases with more shared covariates, which is expected since \B has fewer uniquely useful covariates to assist \a.
\begin{figure}
\centering
   \includegraphics[width=0.85\linewidth]{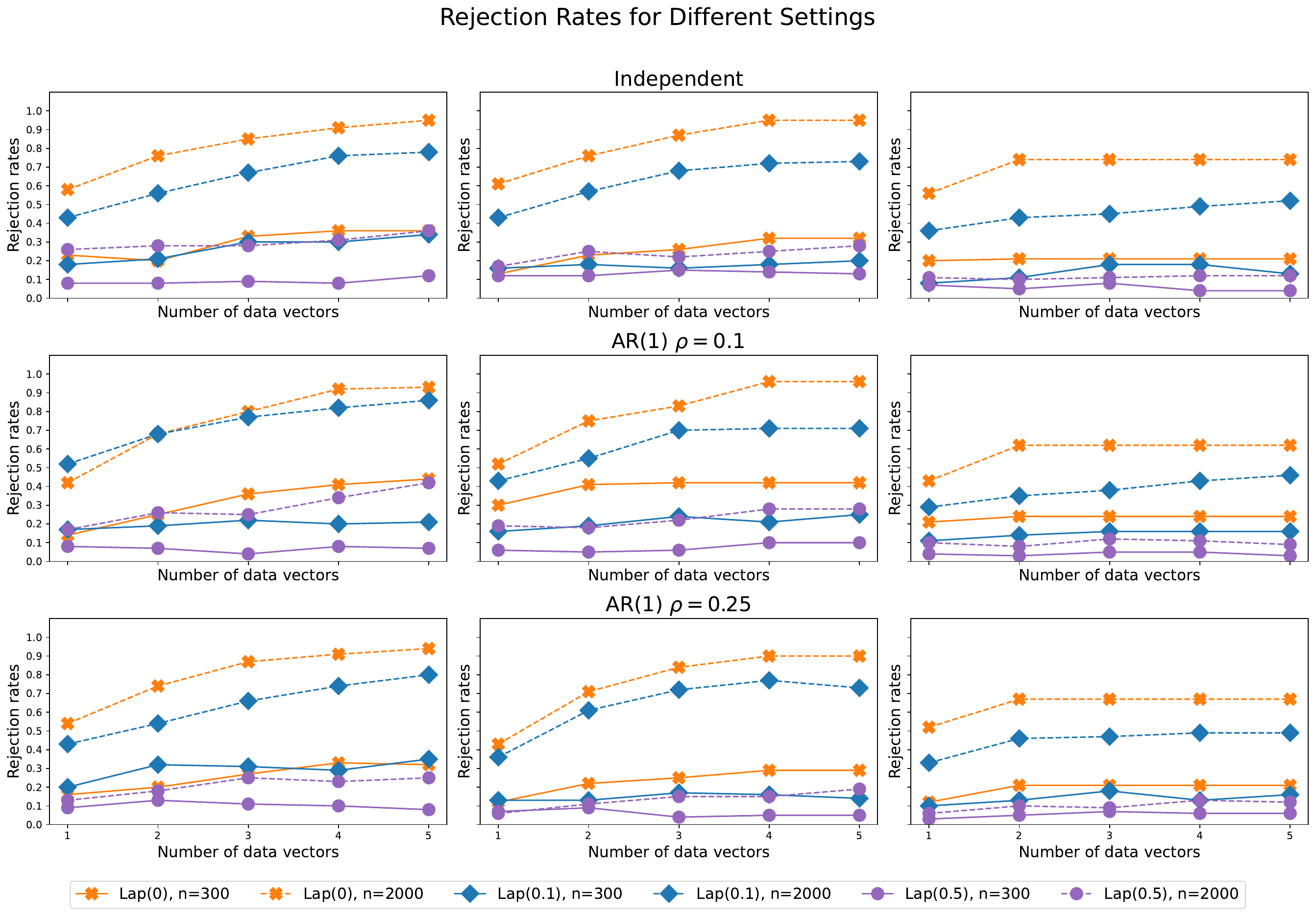}
\caption{The rejection rates of $W_{n,\id}$ under $\textrm{H}_1$ versus $\id$ (the number of columns in the sketchy data) from 100 replications, where $\textrm{Lap}(0)$, $\textrm{Lap}(0.1)$, and $\textrm{Lap}(0.5)$ stand for the sketchy data without noise, with Laplace noise of scale 0.1, and with Laplace noise of scale 0.5, respectively.}\label{logisticPower}
\end{figure}

%% Plot_logistic.ipynb

\subsection{Assisted Training}\label{subsec_assisted_train}
%simulation8_logistic
This experiment evaluates the learning performance of the AE-AL method compared to FedSGD and FedBCD.  
We simulate data based on the settings in Subsection~\ref{subsubsec_power} with $\rho = 0.1$ (AR(1) covariates), $n=2000$, and 100 replications. In each replication, we also generate an evaluation set with size $10^6$ from the same distribution. The performance of AE-AL, FedSGD, and FedBCD estimators together with the oracle estimator are evaluated by the averaged area under the ROC curve (AUC) on the evaluation set. For training FedSGD and FedBCD, we employ the decay learning rate strategy in \citet{liu2019communication}. The initial learning rate is determined from a set of 20 candidate values between 0.01 and 5, equally spaced on a logarithmic scale. 
The number of local updates $Q$ for FedBCD is set to 5, 10, or 25. We also apply FedBCD with a proximal term with the tuning parameter setting in \citet{liu2019communication}. The mini-batch size is 32. We present the performance of the FedSGD and FedBCD configurations that achieve the highest AUC after 50 transmission~rounds. 
The AUCs are presented in Figure~\ref{fig_logistic_loglikeli}. It can be seen that the AUC from the AE-AL converges to that of the oracle estimator at a significantly faster rate compared with FedSGD and FedBCD. Notably, it also illustrates the potential of AE-AL to achieve a reasonably good performance without requiring extensive hyperparameter tuning that involves communication between the learners.

\begin{figure}
\centering
   \includegraphics[width=0.9\linewidth]{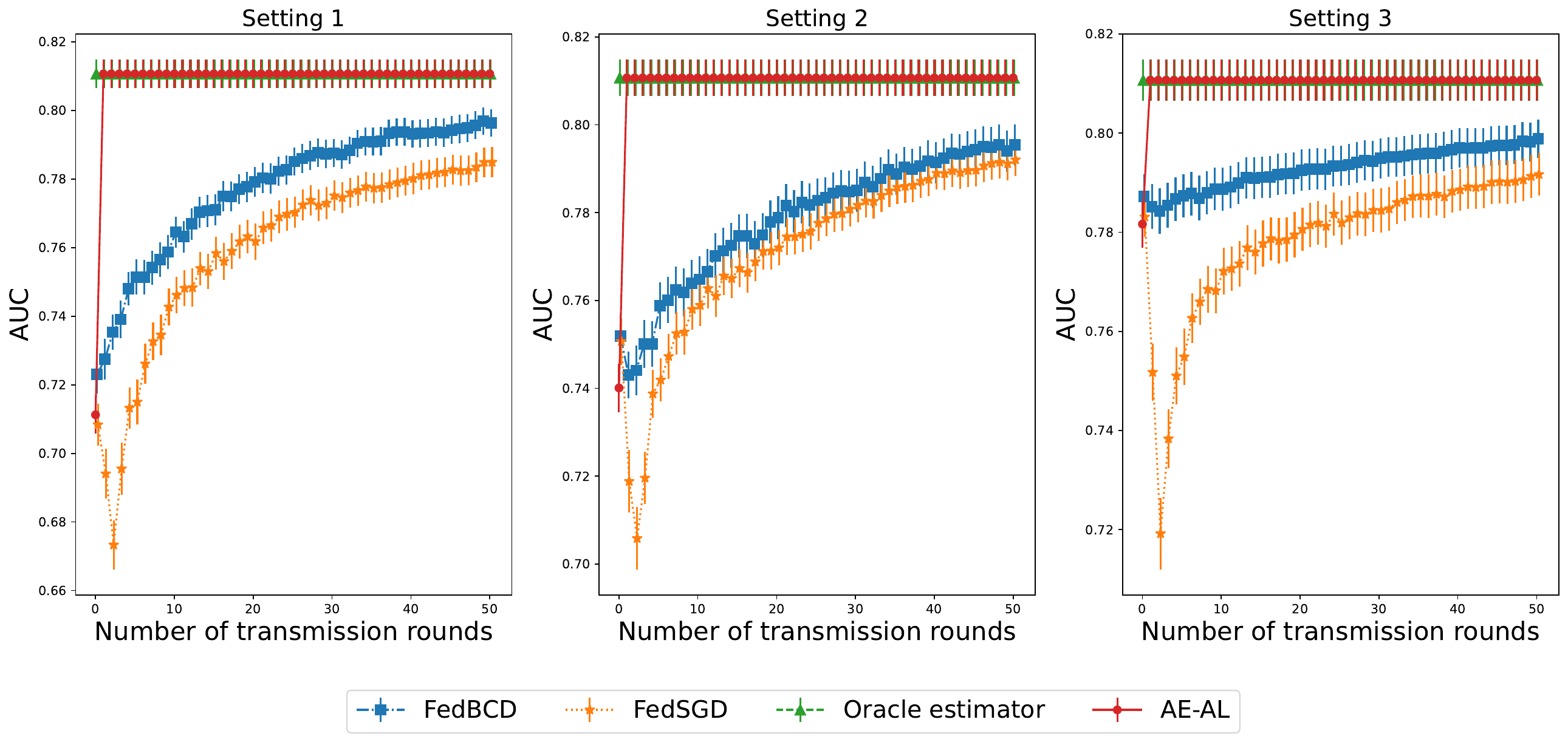}
\caption{
Classification performance of AE-AL, FedBCD, FedSGD, and the oracle estimator. The vertical bars represent the standard errors based on 100 replications.}\label{fig_logistic_loglikeli}
\end{figure}

% {\color{blue}
% It is common to grid search learning rates on a log scale from 0.1 to $10^-5 or 10^-6.$

% Typically, a grid search involves picking values approximately on a logarithmic scale, e.g., a learning rate taken within the set $$.1, .01, 10−3, 10−4, 10−5$$
% }

\section{Data Examples}\label{Sec_real_data}
In the following subsections, we demonstrate the potential applications of AE-AL based on three data examples for classification via logistic regression. We also assess the performance of FedSGD and FedBCD for comparison.
%%%%%%%%%%%%%%%%%%%%%%%%%%%%
% COVID19ML/test1_testing.py test2_fitting.py
%%%%%%%%%%%%%%%%%%%%%%%%%%%%
\subsection{Hospital Length of Stay}\label{subsec_hospital_length_of_stay}
Predicting the length of stay of patients may greatly facilitate the resource management and scheduling of the hospital. 
We consider the benchmark dataset MIMIC3 \citep{johnson2016mimic} that consists of patients data collected from different divisions of a hospital. Similar modeling problems to ours have been explored in prior works such as \citet{purushotham2018benchmarking}, 
\citet{harutyunyan2019multitask}, 
and \citet{wang2022parallel}. We follow the data pre-processing procedure of \citet{wang2022parallel}. The details about the obtained dataset can be found in Section~\ref{sec_hosp_data_detail} of the supplement.

Suppose researchers aim to predict whether a patient will have a length of stay exceeding three days based on 10 covariates from the ICU and the remaining covariates from the laboratory. Employing a decentralized model learning approach eliminates the need to merge the two datasets, thereby enhancing data confidentiality protection against unauthorized leakage
%. Consequently, researchers employ AE-AL to better conform to confidentiality and privacy protection regulations 
\citep[see, e.g.,][]{lohr1994health, edemekong2018health}.

Let \A and \B be the ICU and laboratory, respectively. For \textbf{S1} (initializing a connection), we consider $\id=1$, $2$, or $3$. \B  either sends the original sketchy data or the ones with Laplace noise of scales 0.1 or 0.5. 
%To ensure a comparable privacy protection with Section~\ref{subset_s1}, 
To ensure comparable privacy protection with the simulation settings in Sections~\ref{subsec_type1} and \ref{subsubsec_power} of the supplement,
the covariate observations of $X^{(\B)}$ are normalized to have values between zero and one. With the exception of the case where $\id=1$ and the noise has a scale of 0.5, the testing results from all other cases reject the null hypothesis ($\textrm{H}_0$) at the significance level of 0.05. These findings in general support the conclusion that the data from source \B carries significant value and contributes meaningfully to the analysis. In \textbf{S2} (assisted training), to evaluate the performance of the fitted model, we randomly sample half of the observations as the training set and the remaining part as the evaluation set. 
The training of FedSGD and FedBCD follows the parameter settings in Section~\ref{subsec_assisted_train} of the supplement where the range of the initial step sizes have been adjusted to be in the range of $10^{-5}$ to better suit this dataset. We choose the configurations with the best prediction performance on the evaluation set. We also include an additional setting for FedSGD and FedBCD, where instead of sampling a mini-batch for updates, they utilize the entire datasets to calculate the gradients. 
The AUCs of different estimators evaluated on the evaluation set are presented in 
% Figure~\ref{fig_MIMIC}. 
Figure~\ref{fig_Real_data_results}.
It can be seen that the AE-AL estimator has a significantly faster convergence speed than its competitors.

% \begin{figure}[!ht]
% 	\centering
% 	\vspace{-0.1cm}
% \includegraphics[width=1\linewidth]{figures/Result_Figure_MIMIC.pdf}
% 	\vspace{-0.0cm}
% 	\caption{Classification performance of AE-AL, FedBCD, and FedSGD on length of stay of patients in ICU. FedBCD and FedSGD with batch size 32 and their counter parts that utilize the entire dataset for gradient calculation are denoted by ``FedBCD 32'', ``FedSGD 32'', ``FedBCD batch'', and ``FedSGD batch'', respectively.}
% 	\label{fig_MIMIC}
% \end{figure}

%% Plot_real_data
\begin{figure}
    \centering\includegraphics[width=0.98\linewidth]{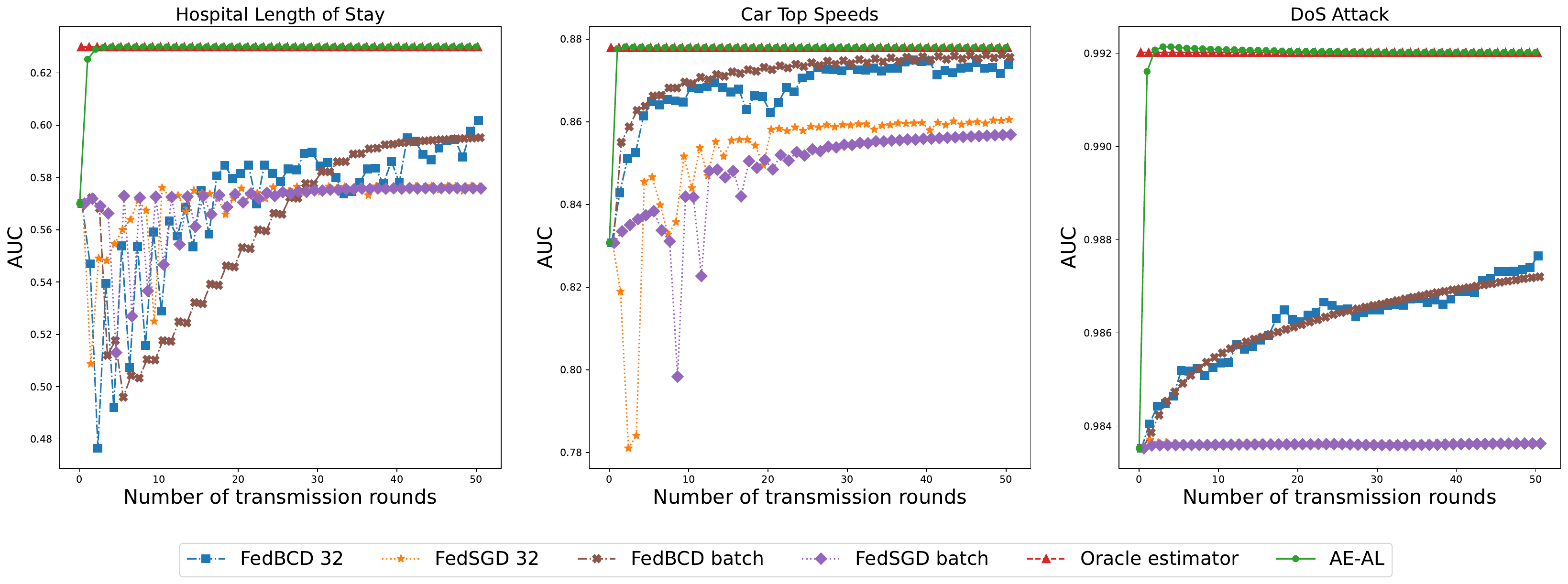}
\caption{
Classification performance of AE-AL, FedBCD, and FedSGD. FedBCD and FedSGD with batch size 32 and their counter parts that utilize the entire dataset for gradient calculation are denoted by ``FedBCD 32'', ``FedSGD 32'', ``FedBCD batch'', and ``FedSGD batch'', respectively.}\label{fig_Real_data_results}
\end{figure}

\subsection{Car Top Speeds}\label{subsec_image}

% The recent development of deep learning has led to great practical successes in transferring knowledge from one domain of study to another. For example, a pre-trained neural network model using massive images from a domain could be quickly adapted to perform learning in another domain, with only a small number of new images but much better accuracy than a separate new model using that new data. 
% This motivates our experiment to demonstrate how \A can learn better with \b's domain knowledge. 

We consider the data from \cite{yang2015large}, which contains 2965 paired car images taken from side and front views together with the top speed of the cars.  A snapshot of the images is in Figure~\ref{carImage_samplePlots}. 
Suppose a sensor \A in a smart transportation system (e.g., as described in \citet{villanueva2013architecture} and \citet{liu2021new}) wants to pay extra attention to cars with a top speed exceeding 200km/h to prevent accidents. 
It captures the side image of cars and can connect remotely to 
another monitoring camera \B that captures front images of the cars. Due to constraints in data transmission, \A implements the AE-AL to train the classifier without the need for pooling the image data together.

To handle the image data under the framework of AE-AL, we extract covariates by processing the data through neural network models.  This is motivated by the recent advancements in deep transfer learning \citep{weiss2016survey,zhuang2020comprehensive}, e.g., quickly adapting a pre-trained neural network model trained on massive images from a domain to another one.
Specifically, \A and \B pre-process their image data through the ResNet18 neural network \citep{he2016deep} from the ImageNet benchmark dataset~\citep{imagenet_cvpr09} with its last layer replaced by two fully connected layers with dimensions $512 \times 16$ and $16 \times 2$, respectively. 
Then, \A and \B independently retrain this neural network on their respective data for one epoch and extract the outputs of 16 neurons from the second-last layer as their covariates.

We follow the same testing and training procedure in Subsection~\ref{subsec_hospital_length_of_stay}. 
When \B sends the sketchy data without noise, all three tests with $\id=1$, $2$, and $3$ reject $\textrm{H}_0$. Nevertheless, the test with Laplace noise of scale $0.5$ and the ones with noise scale $0.1$ and $\id=1$ fail to reject. This demonstrates a trade-off between the power to detect a helpful assistance provider and privacy protection. 
% Figure~\ref{fig_car} 
Figure~\ref{fig_Real_data_results}
illustrates the performances of the trained classifiers, where the initial step sizes of FedBCD and FedSGE are set within the ranges $[10^{-5}, 0.1]$ and $[0.01,0.2]$, respectively. It can be seen that the estimator from AE-AL converges faster than the other methods.

\begin{figure}[!ht]
	\centering
	\vspace{-0.1cm}
\includegraphics[width=0.5\linewidth]{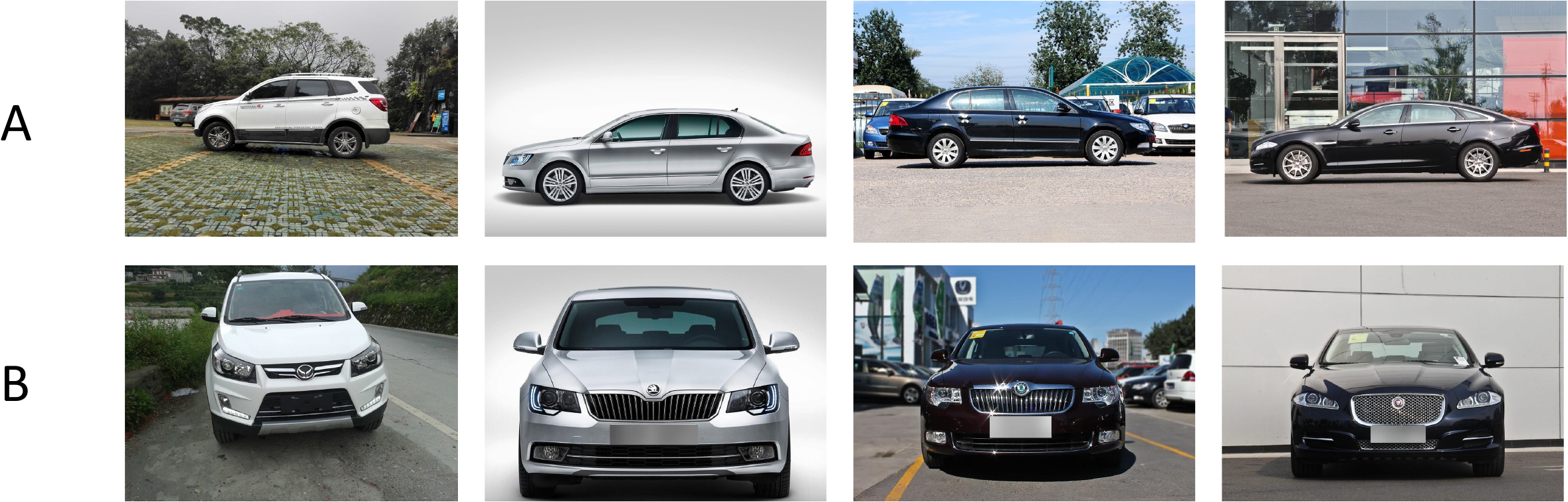}
	\vspace{-0.0cm}
	\caption{An example of car images held by \A and \b.}
	\label{carImage_samplePlots}
\end{figure}
% \begin{figure}[!ht]
% 	\centering
% 	\vspace{-0.1cm}
% \includegraphics[width=1\linewidth]{figures/Result_Figure_car.pdf}
% 	\vspace{-0.0cm}
% 	\caption{
%  Classification performance of AE-AL, FedBCD, and FedSGD on car top speeds. Notations are the same as Figure~\ref{fig_MIMIC}.}
% 	\label{fig_car}
% \end{figure}

\subsection{DoS Attack}\label{subsec_DOSattack}
The denial-of-service (DoS) attack aims to render a server or its resources inaccessible to legitimate requests by overwhelming it with a large volume of traffic. Suppose an internet service provider \A seeks to predict such attacks based on some traffic information from its user \B to enhance protection. 
 However, \A is unable to measure or access this information since they are protected by \B under e.g., encryption from browsers, HTTP proxies, or intermediary servers. Moreover, \A cannot directly access these data due to privacy policies.

%However, due to privacy policies, \A is unable to directly collect these observations.

We consider the dataset from \citet{tavallaee2009detailed}, which consists of nine traffic covariates and ten destination IP address-related covariates. The dataset comprises 45,927 observations with DoS attacks and 67,343 observations without attacks.  We randomly sample a subset of 10,000 observations where half of them are from the first group and the remaining ones from the other group. We let \A and \B hold the destination IP address-related covariates and the traffic covariates, respectively. 
The testing and training procedure are the same as the one in Subsection~\ref{subsec_hospital_length_of_stay}, except for FedBCD and FedSGD where their initial step sizes are set within the ranges $[10^{-3}, 1]$ and $[0.01,2]$. The testing results from all settings reject the $\textrm{H}_0$ at a significance level of 0.05, indicating that the additional covariates from \B may improve the modeling performance of \a. The model training performance is presented in 
% Figure~\ref{fig_internet}, 
Figure~\ref{fig_Real_data_results},
where AE-AL achieves the fastest convergence speed in terms of the number of transmission rounds. Among the other methods, FedBCD which utilizes the entire datasets to calculate the gradients has the best performance.
% \begin{figure}[!ht]
% 	\centering
% 	\vspace{-0.1cm}
% \includegraphics[width=1\linewidth]{figures/Result_Figure_Internet_attack}
% 	\vspace{-0.0cm}
% 	\caption{
% Classification performance of AE-AL, FedBCD, and FedSGD on DoS attack. Notations are the same as Figure~\ref{fig_MIMIC}.}
% 	\label{fig_internet}
% \end{figure}

\section{Concluding Remarks}\label{sec_con}
We have proposed a new methodology for assisting a learner 
(e.g., a device, a research lab, or a company) to significantly improve its local learning in the AE-AL framework. %by transmitting statistics other than the raw data.
The developed solution consists of two stages, namely, the stage for \A to test \b's usefulness, and a stage of assisted training by iterative transmissions of summary statistics between \A and \b. After that, \A receives \b's assistance for predictions for future subjects. 
%is able to establish a model close in performance to the oracle estimator as if data were centralized. 
%Experiment results have shown its advantage in privacy-aware statistical tasks. %The R package ``...'' is available from \url{https://github.com/...} and is currently under the inspection of CRAN.

There are some open problems left for future study.
%There are some interesting open problems left for future works. 
One is extending the methodology to scenarios where the learners' local data can only be partially collated by the data ID. We will need to develop techniques to utilize the missing data, possibly leveraging some recent developments in data imputation~\citep{yu2020optimal,xue2020integrating, diao2021semifl}. 
 Another remaining challenge is to extend the method to the scenarios with more than one assisting learner. Interesting problems include testing the usefulness of another learner \C given \b's data and finding an efficient algorithm for the joint model training with three or more learners.

\section{Funding}
Jiawei Zhang is supported by the National Science Foundation under grant number under CNS-2220286. Jie Ding is supported by the Army Research Laboratory and the Army Research Office under grant number  W911NF2310315.

\clearpage
\begin{appendices}

 \section{Organization of the Supplement}
This supplementary document is organized as follows. We prove Theorem~\ref{thm_Wn}, Theorem~\ref{probcon}, Corollary~\ref{coro_finite_sample_convergence}, 
Proposition~\ref{prop_local_diff_privacy},  and Corollary~\ref{prop_suffGLM} in Sections~\ref{sec_proofOfTh1},~\ref{sec_thm2Proof}, \ref{sec_prof_coro_finite_sample}, \ref{sec_AEAL_local_priv},  and \ref{sec_ProofOfCo1}, respectively. A discussion about Conditions~\ref{cond_designMatFullrank}-\ref{cond_AsymNorm} are given in Section~\ref{eq_a_discussion_about_conditions}. We present  a discussion on using the likelihood ratio test as an alternative to the Wald test for the AE-AL in Section~\ref{sec_proofOfTh3}. We provide theoretical and numerical analyses for the AE-AL that incorporates a penalty term in Section~\ref{sec_penal_th_sim}. A discussion on protecting sensitive information of $Y$ for AE-AL training by a random transformation is included in Section~\ref{sec_masking the response}.
We include 
details of the hospital length of stay dataset in Section~\ref{sec_hosp_data_detail}.
Additional simulation results are presented in Section~\ref{sec_addexp}.

\section{Proof of Theorem~\ref{thm_Wn}}\label{sec_proofOfTh1}
% The sketch of our proof is as follows. First, in Section~\ref{sec_replaceStat}, we show that the design matrix $(\bm{X}^{(\A)}, \bm{X}^{(\bm{U})})$ has full column rank with probability going to one as $n\rightarrow\infty$. 
% Second, in Section~\ref{sec_H0}, we show the convergence in distribution of the test statistic under $\textrm{H}_0$. Third, in Section~\ref{sec_H1}, we derive the result that the test statistic goes to infinity in probability under $\textrm{H}_1$. {\color{red} In the proof, we focus on the case where $W_{n,\id}$ is well defined, which holds with probability going to one as $n\rightarrow\infty$ according to Condition~\ref{A3}}.

The sketch of our proof is as follows. First, in Section~\ref{sec_H0}, we show the convergence in distribution of the test statistic under $\textrm{H}_0$. Second, in Section~\ref{sec_H1}, we derive the result that the test statistic goes to infinity in probability under $\textrm{H}_1$. 

\subsection{Convergence in distribution under $\textrm{H}_0$}\label{sec_H0}

% {\color{red}
% Let $\TMM_n = \bigl(\sum_{i=1}^nx^{(\A)}_i {x^{(\A)}_i}^\T\bigr)^{-1}\sum_{i=1}^n\bigl(x^{(\A)}_i {x^{(b)}_i}^\T\bigr)$, which according to Condition~\ref{cond_designMatFullrank} exists with probability going to one as $n\rightarrow\infty$.
% Define $p\times p$ matrices
% $$
% \tilde{\TMM}\de\begin{pmatrix}
% \bm{I}& -\TMM\\
% \bm{0}&\bm{I}
% \end{pmatrix}\quad \tilde{\TMM}_n=\begin{pmatrix}
% \bm{I}& -\TMM_n\\
% \bm{0}&\bm{I}
% \end{pmatrix}.
% $$}
% Under $\textrm{H}_0$, $\bm{\beta}_* = \bigl({\bm{\beta}_*^{(\A)}}^\T,\bm{0}_{p_b}^\T\bigr)^\T$, where $\bm{\beta}_*^{(\A)}$ correspond to $\bm{X}^{(\A)}$. Therefore, $\bm{\beta}_*^{(\bm{U})} = \bigl({\bm{\beta}_*^{(\A)}}^\T,\bm{0}_{\id}^\T\bigr)^\T$, which does not depend on $\bm{U}$. Recall that $\lp \de \bm{x}^\T\bm{\beta}$.
% We also have $\nabla\nlog^{(\bm{U})}_n(\bm{\beta}^{(\bm{U})}_*) =\frac{1}{n}\sum_{i=1}^n\nabla_{\lp} \mest(y_i,{\bm{x}_i^{(\A)}}^\T\bm{\beta}_*^{(\A)})\cdot \bm{x}_i^{(\bm{U})}$. 
% Under $\textrm{H}_0$, $\bm{\beta}_* = \bigl({\bm{\beta}_*^{(\A)}}^\T,\bm{0}_{p_b}^\T\bigr)^\T$, where $\bm{\beta}_*^{(\A)}$ correspond to $\bm{X}^{(\A)}$. Therefore, $\bm{\beta}_*^{(\bm{U})} = \bigl({\bm{\beta}_*^{(\A)}}^\T,\bm{0}_{\id}^\T\bigr)^\T$, which does not depend on $\bm{U}$, and $\nabla\nlog^{(\bm{U})}_n(\bm{\beta}^{(\bm{U})}_*)$ is the averaged value of $n$ i.i.d.\ random variables.

The proof follows a similar procedure to that of the classical Wald test. However, there is a key difference in that we need to account for the additional randomness introduced by $\bm{U}$.
Let $\bm{\beta}^{(\bm{U},\id)}_*$ be the $\id$ elements of  $\bm{\beta}^{(\bm{U})}_*$ corresponding to $\bm{X}^{(\B)}\bm{U}$.
We first show 
\begin{equation}\label{eq_beta_star_zero}
\bm{\beta}^{(\bm{U},\id)}_* = \bm{0} \text{ holds almost surely}.
\end{equation}
Then, we show
\begin{equation}\label{eq_betaU_normal}
\sqrt{n}\hat{\bm{\beta}}^{(\bm{U},\id)}_n\limd Z^{(\bm{V}_\id)},\text{ as $n\rightarrow\infty$,}
\end{equation}
where $Z^{(\bm{V}_\id)}$ is a random variable that follows $\mathcal N(\bm{0}, \bm{V}_{\id})$ and $\bm{V}_{\id}$ denotes the lower right
 $\id\times \id$ block of $\bm{V}\de \bm{V}_1^{-1}\bm{V}_2\bm{V}_1^{-1}$.

For Equality~\eqref{eq_beta_star_zero},
recall that 

\begin{align*}
 &\bm{\beta}_*=({\bm{\beta}^{(a)}_*}^\T,{\bm{\beta}^{(b)}_*}^\T,{\bm{\beta}^{(c)}_*}^\T)^\T\de  \underset{\bm{\beta}\in\mathbb R^{p}}{\argmin}\  \nlog_*(\bm{\beta}),\quad
    \nlog_*(\bm{\beta})\de \E\bigl(m(Y,X^\T\bm{\beta})\bigr),\\  
    &\bm{\beta}^{(\bm{U})}_*\de \argmin_{\bm{\beta}^{(\bm{U})}\in\mathbb R^{p_\A + \id}}  \nlog_*^{(\bm{U})}(\bm{\beta}^{(\bm{U})}),\quad  \nlog_*^{(\bm{U})}(\bm{\beta}^{(\bm{U})})
\de\E\bigl(\mest\bigl(Y,{X^{(\bm{U})}}^\T \bm{\beta}^{(\bm{U})} \bigr)\mid\bm{U}\bigr).
\end{align*}
Since by definition, $X^{(\bm{U})}=(X^{(\A)},X^{(\B)}\bm{U})$, $X^{(\bm{U})}$ can be obtained from a linear transformation of $X$.  Therefore, 
$$
\nlog_*^{(\bm{U})}(\bm{\beta}^{(\bm{U})}_*) \geq 
 \nlog_*(\bm{\beta}_*).$$
Under $\textrm{H}_0$, $\bm{\beta}^{(b)}_* = \bm{0}$, and we have 
$
\nlog_*^{(\bm{U})}(\bm{\beta}^{(\bm{U})}) = 
 \nlog_*(\bm{\beta}_*)
$
by taking $\bm{\beta}^{(\bm{U})} = \bigl({\bm{\beta}^{(a)}_*}^\T,{\bm{\beta}^{(c)}_*}^\T, \bm{0}\bigr)^\T$. 
By the definition of 
$\bm{\beta}^{(\bm{U})}_*$ and its uniqueness in Condition~\ref{A2}, we obtain Equality~\eqref{eq_beta_star_zero}.

Next, we show the result in \eqref{eq_betaU_normal}. 
We have
 $$
 \nabla \nlog^{(\bm{U})}_n( \bm{\beta}^{(\bm{U})}_*  ) 
= {\MU}^\T\nabla \nlog_n(\MU \bm{\beta}^{(\bm{U})}_*) = 
{\MU}^\T\nabla \nlog_n(\bm{\beta}_*),
 $$
 where the second equality is by Equality~\eqref{eq_beta_star_zero}.
 By the central limit theorem we have
 \begin{equation}\label{eq_convMnbetaStar}
\sqrt{n}\nabla\nlog_n(\bm{\beta}_*)\limd \mathcal N(\bm{0}, \bm{V}_2')
 \end{equation}
where 
$\bm{V}_2'\de \E \bigl(\nabla_{\bm{\beta}} \mest(Y,{X}^\T\bm{\beta}_*)\bigl(\nabla_{\bm{\beta}} \mest(Y,{X}^\T\bm{\beta}_*)\bigr)^\T\bigr).$
% Conditional on $\bm{U}$, by the central limit theorem, 
% \begin{equation}\label{eq_CLT}
% \sqrt{n}\nabla\nlog^{(\bm{U})}_n(\bm{\beta}^{(\bm{U})}_*)\limd \mathcal N(\bm{0}, \bm{V}_2),\text{ as $n\rightarrow\infty$,} 
% \end{equation}
%  where recall that 
% $\bm{V}_2\de \E \bigl(\nabla_{\bm{\beta}^{(\bm{U})}} \mest(Y,{X^{(\bm{U})}}^\T\bm{\beta}^{(\bm{U})}_*)\bigl(\nabla_{\bm{\beta}^{(\bm{U})}} \mest(Y,{X^{(\bm{U})}}^\T\bm{\beta}^{(\bm{U})}_*)\bigr)^\T\mid\bm{U}\bigr).$ 
By the mean value theorem, we have
$$
\sqrt{n}\nabla\nlog^{(\bm{U})}_n(\hat{\bm{\beta}}^{(\bm{U})}_n) = \sqrt{n}\nabla\nlog^{(\bm{U})}_n(\bm{\beta}^{(\bm{U})}_*) +\nabla^2\nlog^{(\bm{U})}_n(\ddot{\bm{\beta}}_n)\sqrt{n}(\bm{\beta}^{(\bm{U})}_* - \hat{\bm{\beta}}^{(\bm{U})}_n), 
$$
where $\ddot{\bm{\beta}}_n$ is on the line segment connecting $\hat{\bm{\beta}}^{(\bm{U})}_n$ and $\bm{\beta}^{(\bm{U})}_*$. 
Conditional on the event that 
$\nabla^2\nlog^{(\bm{U})}_n(\ddot{\bm{\beta}}_n)$ is positive definite, which by Condition~\ref{A3} holds with probability going to one as $n\rightarrow\infty$. 
By $\nabla\nlog^{(\bm{U})}_n(\hat{\bm{\beta}}^{(\bm{U})}_n) = \bm{0}$, We have
\begin{align*}
 \sqrt{n}(\hat{\bm{\beta}}^{(\bm{U})}_n - \bm{\beta}^{(\bm{U})}_*) 
 =&\
\bigl(\nabla^2\nlog^{(\bm{U})}_n(\ddot{\bm{\beta}}_n)\bigr)^{-1}
 \sqrt{n}\nabla\nlog^{(\bm{U})}_n(\bm{\beta}^{(\bm{U})}_*)\\
=&\  \bigl(\nabla^2\nlog^{(\bm{U})}_n(\ddot{\bm{\beta}}_n)\bigr)^{-1} {\MU}^\T\sqrt{n}\nabla \nlog_n(\bm{\beta}_*).
\end{align*}
By the result in \eqref{eq_convMnbetaStar}, for each $\mathcal A\subseteq \R^{p_\A + \id}$,
 \begin{equation}\label{eq_MNconv1}
 \biggl|\P\biggl(\bigl(\nabla^2\nlog^{(\bm{U})}_n(\ddot{\bm{\beta}}_n)\bigr)^{-1} {\MU}^\T\sqrt{n}\nabla \nlog_n(\bm{\beta}_*)\in\mathcal A\biggr) - \P\biggl(\bigl(\nabla^2\nlog^{(\bm{U})}_n(\ddot{\bm{\beta}}_n)\bigr)^{-1} {\MU}^\T Z^{(\bm{V}_2')}\in \mathcal A\biggr)\biggr|\rightarrow 0,
\end{equation}
 as $n\rightarrow\infty$,
 where $Z^{(\bm{V}_2')}$ is a random variable that follows $\mathcal N(\bm{0}, \bm{V}_2')$.
 By Condition~\ref{A3} and the continuous mapping theorem, 
% $\|\nabla^2\nlog^{(\bm{U})}_n(\ddot{\bm{\beta}}_n)\bigr)^{-1}- \bm{V}_1^{-1}\|_{\infty}$. 
  %Therefore, 
\begin{equation}\label{eq_MnConv2}
\bigl(\nabla^2\nlog^{(\bm{U})}_n(\ddot{\bm{\beta}}_n)\bigr)^{-1} {\MU}^\T Z^{(\bm{V}_2')}\limd\bm{V}_1^{-1} {\MU}^\T Z^{(\bm{V}_2')}.
\end{equation}
 Also, recall that
 \begin{align}
 \bm{V}_2\de&\ \E \bigl(\nabla_{\bm{\beta}^{(\bm{U})}} \mest(Y,{X^{(\bm{U})}}^\T\bm{\beta}^{(\bm{U})}_*)\bigl(\nabla_{\bm{\beta}^{(\bm{U})}} \mest(Y,{X^{(\bm{U})}}^\T\bm{\beta}^{(\bm{U})}_*)\bigr)^\T\mid\bm{U}\bigr) \nonumber\\
 =&\ \E \bigl(\nabla_{\bm{\beta}^{(\bm{U})}} \mest(Y,{X}^\T\MU\bm{\beta}^{(\bm{U})}_*)\bigl(\nabla_{\bm{\beta}^{(\bm{U})}} \mest(Y,{X}^\T\MU\bm{\beta}^{(\bm{U})}_*)\bigr)^\T\mid\bm{U}\bigr)\nonumber\\
  =&\ \E \bigl({\MU}^\T\nabla_{\bm{\beta}} \mest(Y,{X}^\T\bm{\beta}_*)\bigl(\nabla_{\bm{\beta}} \mest(Y,{X}^\T\bm{\beta}_*)\bigr)^\T\MU\mid\bm{U}\bigr)\nonumber\\
 =&\ {\MU}^\T\bm{V}_2'\MU,\nonumber
 \end{align}
 which indicates 
 \begin{equation}\label{eq_MNconv3}
 \bm{V}_1^{-1} {\MU}^\T  \bm{V}_2'\MU\bm{V}_1^{-1} = \bm{V}_1^{-1} \bm{V}_2\bm{V}_1^{-1}.
 \end{equation}
Combining the results in \eqref{eq_MNconv1} \eqref{eq_MnConv2}, and \eqref{eq_MNconv3}, we have
 \begin{equation}\label{eq_beta_normal_asymp}
  \sqrt{n}(\hat{\bm{\beta}}^{(\bm{U})}_n - \bm{\beta}^{(\bm{U})}_*)\limd Z^{(\bm{V})},\text{ as }n\rightarrow\infty,
\end{equation}
where $Z^{(\bm{V})}\sim \mathcal N(\bm{0}, \bm{V})$.

By \eqref{eq_beta_normal_asymp} and Equality~\eqref{eq_beta_star_zero}, we obtain \eqref{eq_betaU_normal}.

Combining the above result with $\|\hat{\bm{V}} - \bm{V}\|_{\infty}\limp 0$ as $n\rightarrow\infty$ from Condition~\ref{A3}, we conclude that
 $W_{n,\id}$ converges weakly to a chi-squared distribution with degrees of freedom~t.

\subsection{Convergence to infinity under $\textrm{H}_1$}\label{sec_H1}
By Condition~\ref{A3}, $\hat{\bm{V}}$ converges to $\bm{V}$ that is positive definite. Moreover, since the values of the elements in $\bm{U}$ are bounded, it can be shown by Corollary~6.3.8 of  \cite{horn2012matrix} that the eigenvalues of  $\bm{V}_{\id}^{-1}$
are bounded away from zero. By Condition~\ref{A2}, $\hat{\bm{\beta}}^{(\bm{U},\id)}_n\limp \bm{\beta}^{(\bm{U},\id)}_*$, where we recall that $\hat{\bm{\beta}}^{(\bm{U},\id)}_n$ and $\bm{\beta}^{(\bm{U},\id)}_*$ are the elements of $\hat{\bm{\beta}}^{(\bm{U})}_n$ and $\bm{\beta}^{(\bm{U})}_*$ corresponding to $\bm{X}^{(\bm{U})}$, respectively. Therefore, it suffices to show that 
for each $\epsilon\in (0,1)$, there exists $M_\epsilon$, such that 
\begin{equation}\label{eq_betaUstar_lowerbounded}
\P(\|\bm{\beta}^{(\bm{U},\id)}_*\|_2> M_\epsilon) \geq \epsilon,
\end{equation} which guarantees $W_{n,\id} = n\bigl(\hat{\bm{\beta}}^{(\bm{U},\id)}_n\bigr)^\T \hat{\bm{V}}_{\id}^{-1}\hat{\bm{\beta}}^{(\bm{U},\id)}_n\limp \infty$ as $n\rightarrow\infty$.

We first show 
\begin{equation}\label{eq_betaUstar_not_equal_zero}
\bm{\beta}^{(\bm{U},\id)}_*\neq \bm{0} \text{ almost surely}.
\end{equation}
Recall that $\nlog^{(\bm{U})}_*( \bm{\beta}^{(\bm{U})})\de \E\bigl(\mest(Y,{X^{(\bm{U})}}^\T\bm{\beta}^{(\bm{U})})\mid\bm{U}\bigr)$ and $\nlog_*( \bm{\beta})\de \E\bigl(\mest(Y,X^\T\bm{\beta})\bigr)$.
We will show that when 
\begin{equation}\label{eq_assump}
\bm{\beta}^{(\bm{U},\id)}_*= \bm{0},
\end{equation}
we have
\begin{equation}\label{eq_derivUzero}
\nabla \nlog^{(\bm{U})}_*( \bm{\beta}^{(\bm{U})}_*  ) \neq \bm{0},
\end{equation} which contradicts with Condition~\ref{A2}.
Let 
\begin{equation}\label{Mdef}
\MU \de 
\begin{pmatrix}
\bm{I}_{p_a\times p_a} & \bm{0}& \bm{0}\\
\bm{0}& \bm{0} &\bm{U}^{(p_b)}\\
\bm{0}&\bm{I}_{p_c\times p_c}&\bm{U}^{(p_c)}
\end{pmatrix},
\end{equation}
where $\bm{U}^{(p_b)}$ and $\bm{U}^{(p_c)}$ are the rows of $\bm{U}$ corresponding to $\bm{X}^{(b)}$ and $\bm{X}^{(c)}$, respectively.
By Condition~\ref{cond_exchInteDiff}, and conditional on $\bm{U}$, we have 
\begin{align*}
\nabla \nlog^{(\bm{U})}_*( \bm{\beta}^{(\bm{U})}_*  ) 
=& {\MU}^\T\nabla \nlog_*(\MU \bm{\beta}^{(\bm{U})}_*) \\
=&
\begin{pmatrix}
	\nabla_{\bm{\beta}^{(a)}}  \nlog_*(\MU \bm{\beta}^{(\bm{U})}_*) \\
		\nabla_{\bm{\beta}^{(c)}}   \nlog_*(\MU \bm{\beta}^{(\bm{U})}_*) \\
{(\bm{U}}^{(p_b)})^\T \nabla_{\bm{\beta}^{(b)}}   \nlog_*(\MU \bm{\beta}^{(\bm{U})}_*) +
{(\bm{U}}^{(p_c)})^\T \nabla_{\bm{\beta}^{(c)}}   \nlog_*(\MU \bm{\beta}^{(\bm{U})}_*)
\end{pmatrix},
\end{align*}
where $\nabla_{\bm{\beta}^{(a)}}$, $\nabla_{\bm{\beta}^{(c)}}$, and $\nabla_{\bm{\beta}^{(b)}}$ denote the gradients with respect to the first $p_a$, the last $p_c$, and the remaining $p_b$ arguments of $\nlog_*$, respectively.
% If 
% $$\frac{\partial}{\partial \bm{\beta}^{(a)} } \E\left( \nlog(\bm{\beta}) \right)\neq 0,$$ or
% $$\frac{\partial}{\partial \bm{\beta}^{(c)} } \E\left( \nlog(\bm{\beta}) \right)\neq 0,$$
% then 
% the derivative of  $\E\left( \nlog^{(\bm{U})}( \bm{\beta}^{(\bm{U},\A,\tilde{b})}  ) |\bm{U}\right)$ evaluated at  $\bm{\beta}^{(\bm{U},\A,\tilde{b})}  = ({\bm{\beta^{(a)}}}^\T,{\bm{\beta^{(c)}}}^\T, {\bm{0}}^\T)^\T$ is not $0$, and it indicates that $\bm{{\beta^{(\bm{U},\tilde{b})}_*}} \neq \bm{0}$. 
% Otherwise,
% It can be verified by Conditions~\ref{cond_designMatFullrank} and \ref{cond_strictConvexloss} that $\nlog_*(\bm{\beta})$ is strictly convex. 
% Therefore,  $\bm{\beta}_*$ is the unique  solution to $\frac{d}{d \bm{\beta} }  \E(\nlog_n(\bm{\beta})) = \bm{0}$. 
% Combining the above result with the fact that 
Next, we show that $\nabla \nlog_*(\MU \bm{\beta}^{(\bm{U})}_*)\neq\bm{0}$.
By Equation~\eqref{eq_assump},
\begin{equation}\label{eq_TBmatrix}
\MU \bm{\beta}^{(\bm{U})}_* =
\MU
\begin{pmatrix}
\bm{\beta}^{(\bm{U},a)}_*\\
\bm{\beta}^{(\bm{U},c)}_*\\
\bm{0}
\end{pmatrix}
=
\begin{pmatrix}
\bm{\beta}^{(\bm{U},a)}_*\\
\bm{0}\\
\bm{\beta}^{(\bm{U},c)}_*
\end{pmatrix},
\end{equation}
where $\bm{\beta}^{(\bm{U},a)}_*$ and  $\bm{\beta}^{(\bm{U},c)}_*$ correspond to $\bm{X}^{(a)}$ and $\bm{X}^{(c)}$, respectively.
Since $\bm{\beta}^{(b)}_*\neq \bm{0}$ under $\textrm{H}_1$, we have $\MU \bm{\beta}^{(\bm{U})}_*\neq \bm{\beta_*}$.
%$\bm{\beta}\in\ThetaSet$ such that $\frac{\partial}{\partial \bm{\beta} }  \nlog_*(\bm{\beta}) = 0$.
Therefore, combining this result with Condition~\ref{A1}, we obtain
$$
 \nabla \nlog_*(\MU \bm{\beta}^{(\bm{U})}_*)=  \begin{pmatrix}
\nabla_{\bm{\beta}^{(a)}} \nlog_*(\MU \bm{\beta}^{(\bm{U})}_*) \\
\nabla_{\bm{\beta}^{(b)}}  \nlog_*(\MU \bm{\beta}^{(\bm{U})}_*) \\
\nabla_{\bm{\beta}^{(c)}} \nlog_*(\MU \bm{\beta}^{(\bm{U})}_*)
\end{pmatrix}\neq \bm{0}.
$$
If 
$
\nabla_{\bm{\beta}^{(a)}} \nlog_*(\MU \bm{\beta}^{(\bm{U})}_*) \neq \bm{0} 
$
or
$
	\nabla_{\bm{\beta}^{(c)}} \nlog_*(\MU \bm{\beta}^{(\bm{U})}_*)  \neq \bm{0}, 
$
we automatically obtain the result~\eqref{eq_derivUzero}. Otherwise, we have
$\nabla_{\bm{\beta}^{(b)}} \nlog_*(\MU \bm{\beta}^{(\bm{U})}_*)  \neq \bm{0}.
$
% By Equality~\ref{eq_assump} and  
% $$
% \bigl({\bm{\beta}^{(\bm{U},a)}_*}^\T, {\bm{\beta}^{(\bm{U},c)}_*}^\T\bigr)^\T= 
% \argmin_{\bm{\beta}^{(\A)}\in \R^{p_a + p_b}} \E\bigl(m(Y,{X^{(\A)}}^\T\bm{\beta}^{(\A)})\bigr),
% $$
% $\nabla_{\bm{\beta}^{(b)}} \nlog_*(\MU \bm{\beta}^{(\bm{U})}_*) = \nabla_{\bm{\beta}^{(b)}} \nlog_*(\MU \bm{\beta}^{(\bm{U})}_*)$ is a constant that does not dependent on $\bm{U}$.
Next, we show
\begin{equation}\label{eq_UnablaMstar_neq_zero}
({\bm{U}}^{(p_b)})^\T \nabla_{\bm{\beta}^{(b)}}  \nlog_*(\MU \bm{\beta}^{(\bm{U})}_*) \neq \bm{0}.
\end{equation}
Since the elements of $\bm{U}$ are generated from an i.i.d.\ sample of a continuous distribution, they are almost surely linearly independent, we have
\begin{equation}\label{eq_cont_rand_linear}
\P(\exists \bm{z}\in\mathbb R^{p_b}, \text{ s.t. }\bm{z} \neq \bm{0}\text{ and }{(\bm{U}}^{(p_b)})^\T\bm{z} = \bm{0}) = 0.
\end{equation}
Additionally, by Equation~\eqref{eq_TBmatrix} we have:
\begin{equation}\label{eq_constant_grad}
    \text{$\nabla_{\bm{\beta}^{(b)}} \nlog_*(\MU \bm{\beta}^{(\bm{U})}_*)$ is a constant that does not dependent on $\bm{U}$.}
\end{equation}
Therefore, by combining the results in \eqref{eq_cont_rand_linear} and \eqref{eq_constant_grad}, we conclude that the condition stated in \eqref{eq_UnablaMstar_neq_zero} holds almost surely. Consequently, we obtain Equality~\eqref{eq_betaUstar_not_equal_zero}.

Next, we show the result in \eqref{eq_betaUstar_lowerbounded}. Since $\|\bm{\beta}^{(\bm{U},\id)}_*\|_2\geq 0$ almost surely, 
$$
\P(\|\bm{\beta}^{(\bm{U},\id)}_*\|_2<0)=0.
$$
If there exists $\epsilon'>0$ such that $\P(\|\bm{\beta}^{(\bm{U},\id)}_*\|_2\leq M)\geq \epsilon'$ for all $M>0$,
by the right continuity of the CDF, 
we have $\P(\|\bm{\beta}^{(\bm{U},\id)}_*\|_2\leq 0)\geq \epsilon'$. Thus, $\P(\|\bm{\beta}^{(\bm{U},\id)}_*\|_2=0) = \P(\|\bm{\beta}^{(\bm{U},\id)}_*\|_2\leq 0) - \P(\|\bm{\beta}^{(\bm{U},\id)}_*\|_2< 0) \geq \epsilon'$, and it contradicts with Equality~\eqref{eq_betaUstar_not_equal_zero}. Accordingly, we obtain the result in \eqref{eq_betaUstar_lowerbounded} and finish the proof.

% {\color{red}Since the elements of $\bm{U}$ are almost surely  linearly independent, we have
% $$\E_{\bm{U}}\bigl((\nabla_{\bm{\beta}^{(b)}}   \nlog_*(\MU \bm{\beta}^{(\bm{U})}_*))^\T{\bm{U}}^{(p_b)}{(\bm{U}}^{(p_b)})^\T \nabla_{\bm{\beta}^{(b)}}   \nlog_*(\MU \bm{\beta}^{(\bm{U})}_*)\bigr)>0,$$
% where the expectation is taken with respect to $\bm{U}$,
% is positive definite.
% }
% % we have
% % \begin{equation}\label{eq_cont_rand_linear}
% %     \P(\exists \bm{z}\in\mathbb R^{\id}, \text{ s.t. }\bm{z} \neq \bm{0}\text{ and }{\bm{U}}^\T\bm{z} = \bm{0}) = 0.
% % \end{equation}
% Therefore, $({\bm{U}}^{(p_b)})^\T \nabla_{\bm{\beta}^{(b)}}  \nlog_*(\MU \bm{\beta}^{(\bm{U})}_*) \neq \bm{0}$ holds almost surely, which completes the proof.  

\section{Proof of Theorem~\ref{probcon}}\label{sec_thm2Proof}
Before giving the detailed proof,  we introduce some necessary notations and present a sketch of our derivations. Recall that
$$
\TransformMat \de 
\begin{pmatrix}
\bm{I}_{p_a\times p_a} & & & \\
& & \bm{I}_{p_b\times p_b} &\\
&\bm{I}_{p_c\times p_c}& &\bm{I}_{p_c\times p_c}
\end{pmatrix},
$$
where each $\bm{I}$ is an identity matrix with its dimension indicated by the subscript, and $\bm{X} = ({\bm{X}^{(a)}}^\T, {\bm{X}^{(b)}}^\T, {\bm{X}^{(c)}}^\T)^\T$ and $\bm{X}^{(\A,\B)} = ({\bm{X}^{(a)}}^\T,  {\bm{X}^{(c)}}^\T, {\bm{X}^{(b)}}^\T, {\bm{X}^{(c)}}^\T)^\T$.
Therefore, we have
$$
\bm{X}\TransformMat = \bm{X}^{(\A,\B)}.
$$
To facilitate our derivation, we distinguish between \a's update and \b's update by introducing $k'$ in addition to the original assistance round number $k$, where 
after \a's or \b's update, $k'$ is updated by $k'+1$. In other words, $k'$ indexes each agent's update. Consequently, we have $\floor{k'/2} = k$, where $\floor{x}$ denotes the largest integer less than or equal to $x$.
Let $\nlog^{(\A,\B)}_n(\bm{\beta}^{(\A,\B)}) \de \nlog_n(\TransformMat\bm{\beta}^{(\A,\B)})$, where $\bm{\beta}^{(\A,\B)} = ({\bm{\beta}^{(\A)}}^\T, {\bm{\beta}^{(\B)}}^\T)^\T\in\R^{p_\A +p_\B}$,
and 
$\hat{\bm{\beta}}^{(\A,\B,k')}_n \de 
((\hat{\bm{\beta}}^{(\A,k')}_n)^\T, (\hat{\bm{\beta}}^{(\B,k')}_n)^\T)^\T$, which concatenates the estimated parameters from \A and \B in the $k'$th round.

The sketch of the proof is as follows.
First,  in Section~\ref{subsec_initSet}, we show the existence of a set $\InitSet\subset \R^p$ such that when the initial value satisfies $\TransformMat\hat{\bm{\beta}}^{(\A,\B,0)}_n\in \InitSet$, $$
\P(\TransformMat\hat{\bm{\beta}}^{(\A,\B,k')}_n\in \ThetaSet, \ \forall k'\in \mathbb N)\rightarrow1,
$$
as $n\rightarrow\infty$, where $\ThetaSet$ is from Condition~\ref{cond_sUconv}.
%, and it also indicates that $\hat{\bm{\beta}}^{(\A,\B,k')}_n$ ($k\in\mathbb N$) exist.  
Second, in Section~\ref{subsec_proof_of_lemma9},
we 
%show the uniqueness of $\hat{\bm{\beta}}^{(\A,\B,k')}_n$ ($k'\in\mathbb N$) and
obtain a sample-wise convergence result of $\TransformMat\hat{\bm{\beta}}^{(\A,\B,k')}_n$ to $\check{\bm{\beta}}_n$ as $k\rightarrow\infty$. Third, in Section~\ref{subsec_uniform_convergence_parameters}, we extend the sample-wise convergence result to an asymptotic version. Finally, in Section~\ref{sec_proofCorollary1}, we complete the proof by showing the asymptotic normality of the assisted learning estimator.

% {\color{red}We also include a discussion on the scenario where the number of covariates $p$ may change as $n\rightarrow\infty$ in Section~\ref{subsec_changingp}.
% }
 
% \subsection{Existence and consistency of $\check{\bm{\beta}}_n$}\label{subsec_existenceConsistencyBetan}
% In a similar way as we prove Lemma~\ref{lem_Betaconv}, we can show by Taylor expansion of $\nlog_n(\bm{\beta})-\nlog_n(\bm{\beta}_*)$  that there exists a local minimum of $\nlog_n(\bm{\beta})$ in an open ball centered at $\bm{\beta}_*$. Combining the above result with Lemma~\ref{lem_Betaconv}, we have that the local minimum of $\nlog_n(\bm{\beta})$ is $\check{\bm{\beta}}_n$. Moreover, it is unique and converges to $\bm{\beta}_*$ in probability with probability going to one as $n\rightarrow\infty$.

\subsection{Obtaining the set $\InitSet$}\label{subsec_initSet}
In this section, we obtain the set $\InitSet$ in three steps. We first construct a sample-based set $\InitSet_n$ such that when $\TransformMat\hat{\bm{\beta}}^{(\A,\B,0)}_n\in \InitSet_n$,  $\TransformMat\hat{\bm{\beta}}^{(\A,\B,k')}_n\in \InitSet_n$ holds for all $k'\in\mathbb N$. Second, we construct a population-based set $\InitSet$, such that $\P(\InitSet\subseteq\InitSet_n)\rightarrow1$ as $n\rightarrow\infty$. 
Third, we show that $\P(\InitSet_n \subseteq\ThetaSet)\rightarrow 1$ as $n\rightarrow\infty$.

Let $\BDset$ denote the boundary set of the open ball $\ThetaSet$.
Define
$$
\InitSet_n \de 
\biggl\{\bm{\beta}\in\R^p : \nlog_n(\bm{\beta}) \leq \min_{\bm{\beta}\in\BDset}\nlog_*(\bm{\beta})-\bigl( \min_{\bm{\beta}\in\BDset}\nlog_*(\bm{\beta}) - \nlog_*(\bm{\beta}_*)\bigr)/4 \biggr\}.
$$ 
According to Condition~\ref{cond_sUconv},
\begin{equation}\label{eq_OnNotEmpty}
\P(\InitSet_n\neq\emptySet)\rightarrow 1, \text{ as }n\rightarrow\infty,
\end{equation} 
where $\emptySet$ denotes the empty set.
Since for each $k'\in\mathbb N$,
$$\nlog_n^{(\A,\B)}(\hat{\bm{\beta}}^{(\A,\B,k')}_n)\leq \nlog_n^{(\A,\B)}(\hat{\bm{\beta}}^{(\A,\B,k'-1)}_n),$$
when $\TransformMat\hat{\bm{\beta}}^{(\A,\B,0)}_n\in \InitSet_n$,  $\TransformMat\hat{\bm{\beta}}^{(\A,\B,k')}_n\in \InitSet_n$ for all $k'\in\mathbb N$. 

Next, we define
$$
\InitSet \de 
\biggl\{\bm{\beta}\in\ThetaSet: \nlog_*(\bm{\beta}) \leq \min_{\bm{\beta}\in\BDset}\nlog_*(\bm{\beta})-\bigl( \min_{\bm{\beta}\in\BDset}\nlog_*(\bm{\beta}) - \nlog_*(\bm{\beta}_*)\bigr)/2  \biggr\}.
$$ 
By Condition~\ref{cond_sUconv}, 
\begin{equation}\label{eq_OinOn}
\P(\InitSet\subseteq\InitSet_n)\rightarrow1, \text{ as }n\rightarrow\infty.
\end{equation}
Therefore, when $\hat{\bm{\beta}}^{(\A,\B,0)}_n\in \InitSet$,  $\hat{\bm{\beta}}^{(\A,\B,k')}_n\in \InitSet_n$ for all $k'\in\mathbb N$ holds with probability going to one as $n\rightarrow\infty$. 

Finally, we show that $\InitSet_n \subseteq\ThetaSet$ holds with probability going to one as $n\rightarrow\infty$ by contradiction. 
If the statement does not hold, for each $N>0$, there exists $n\geq N$ and $\bm{\beta}^{(n,1)}\notin\ThetaSet$ such that
\begin{equation}\label{eq_lineSeg1}
    \nlog_n(\bm{\beta}^{(n,1)}) \leq \min_{\bm{\beta}\in\BDset}\nlog_*(\bm{\beta})-\bigl( \min_{\bm{\beta}\in\BDset}\nlog_*(\bm{\beta}) - \nlog_*(\bm{\beta}_*)\bigr)/2
\end{equation}
holds with probability bounded away from zero as $n\rightarrow\infty$. 
Since $\ThetaSet$ is an open ball in $\R^p$, there exists $\bm{\beta}^{(n,2)}\in\BDset$ such that $\bm{\beta}^{(n,2)}$ is on the line segment connecting $\bm{\beta}^{(n,1)}$ and $\bm{\beta}_*$ and satisfies 
\begin{equation}\label{eq_lineSeg}
    \nlog_*(\bm{\beta}^{(n,2)}) \geq \min_{\bm{\beta}\in\BDset}\nlog_*(\bm{\beta}).
\end{equation} 
Next, we demonstrate, using Conditions~\ref{cond_designMatFullrank} and \ref{cond_strictConvexloss}, that the following holds:
\begin{equation}\label{eq_Mnbeta_strictly_convex}
\P(\nlog_n(\bm{\beta})\text { is strictly convex})\rightarrow 1 \text{ as }n\rightarrow\infty.
\end{equation}
Here is a detailed explanation of this result.
 We have $\nabla^2\nlog_n(\bm{\beta}) = \bm{X}^\T D_n
\bm{X},
$
where $D_n$ is a diagonal matrix with diagonal elements $
\nabla_{\lp_1}^2m(y_1, \bm{x}_1^\T\bm{\beta}),\dots, \nabla_{\lp_n}^2m(y_n, \bm{x}_n^\T\bm{\beta})$, where $\lp_i = \bm{x}_i^\T\bm{\beta}$ and $\nabla_{\lp_i}^2$ denotes the second order derivative with respect to $\lp_i$. By Condition~\ref{cond_strictConvexloss}, $D_n$ is positive definite. Therefore, together with Condition~\ref{cond_designMatFullrank}, we obtain the result in \eqref{eq_Mnbeta_strictly_convex}. 
 However, when $\P(\nlog_n(\bm{\beta})$ is strictly convex, we have a contradiction with Condition~\ref{cond_sUconv} and Inequalities~\eqref{eq_lineSeg1} and \eqref{eq_lineSeg}, which indicate that $\nlog_n(\bm{\beta}^{(n,2)})$ is larger than both $\nlog_n(\bm{\beta}_*)$ and $ \nlog_n(\bm{\beta}^{(n,1)})$ with a non-dimishing probability.
 Therefore, when $\TransformMat\hat{\bm{\beta}}^{(\A,\B,0)}_n\in \InitSet$,  $\TransformMat\hat{\bm{\beta}}^{(\A,\B,k')}_n\in \ThetaSet$ for all $k'\in\mathbb N$ holds with probability going to one as $n\rightarrow\infty$.

\subsection{Sample-wise convergence}\label{subsec_proof_of_lemma9}

In this section, we show the convergence of $\TransformMat\hat{\bm{\beta}}^{(\A,\B,k')}_n$ to $ \check{\bm{\beta}}_n$ as $k\rightarrow\infty$ for a fixed dataset. The proof consists of three steps.
First, we condition on a fixed dataset and assume that $\check{\bm{\beta}}_n\in\ThetaSet$ and $\nabla^2 \nlog_n(\bm{\beta})$ is positive definite for all $\bm{\beta}\in\ThetaSet$, which holds with probability going to one as $n\rightarrow\infty$
by Conditions~\ref{cond_pdHessian}, \ref{cond_oraExist}, and \ref{cond_sUconv}.
%according to the result in Section~\ref{subsec_initSet}. 
Then, we show that $\hat{\bm{\beta}}^{(\A,\B,k')}_n$ with $k'\in\mathbb N$ are unique. Second, we will show in Lemma~\ref{lem_linConvNlog} that $ \nlog_n(\TransformMat\hat{\bm{\beta}}^{(\A,\B,k'+1)}_n) - 
\nlog_n(\check{\bm{\beta}}_n)\rightarrow 0$ as $k'\rightarrow\infty$ by Lemmas~\ref{lem_ctsEigen}-\ref{lem_bdLPP}.
Third, we will obtain that $\|\TransformMat\hat{\bm{\beta}}^{(\A,\B,k')}_n- \bm{\beta}^{(\A,\B,*)}_n\|_2\rightarrow 0$ as $k'\rightarrow\infty$ by Lemmas~\ref{lem_linConvNlog} and \ref{lem_convG}.

We consider a fixed data $(\bm{y}, \bm{X})$  where $\check{\bm{\beta}}_n\in\ThetaSet$ and $\nabla^2 \nlog_n(\bm{\beta})$ is positive definite for all $\bm{\beta}\in\ThetaSet$. We assume that the initial value $\hat{\bm{\beta}}^{(\A,\B,0)}_n\in\InitSet$. Let 
$$
\TransformMat^{(\A)} \de 
\begin{pmatrix}
\bm{I}_{p_a\times p_a}& \bm{0}_{p_a\times p_c}\\
 \bm{0}_{p_b\times p_a} &\bm{0}_{p_c\times p_c}\\
 \bm{0}_{p_c\times p_a} &\bm{I}_{p_c\times p_c}
\end{pmatrix}, \quad
\TransformMat^{(\B)} \de 
\begin{pmatrix}
\bm{0}_{p_a\times p_b}& \bm{0}_{p_a\times p_c}\\
 \bm{I}_{p_b\times p_b} &\bm{0}_{p_c\times p_c}\\
 \bm{0}_{p_c\times p_b} &\bm{I}_{p_c\times p_c}
\end{pmatrix}, 
$$
and let $\nabla^2_{\bm{\beta}^{(\A)}}$ and $\nabla^2_{\bm{\beta}^{(\B)}}$ denote Hessian matrices respect to the first $p_\A$ and last $p_\B$ arguments of the function. Since both $\TransformMat^{(\A)}$ and $\TransformMat^{(\B)}$ have full column ranks, both $\nabla^2_{\bm{\beta}^{(\A)}} \nlog_n^{(\A,\B)}\bigl(\hat{\bm{\beta}}^{(\A,\B,k')}_n\bigr) = {\TransformMat^{(\A)}}^\T \nabla \nlog_n\bigl(\TransformMat\hat{\bm{\beta}}^{(\A,\B,k')}_n\bigr)\TransformMat^{(\A)}$ and $\nabla^2_{\bm{\beta}^{(\B)}} \nlog_n^{(\A,\B)}\bigl(\hat{\bm{\beta}}^{(\A,\B,k')}_n\bigr) = {\TransformMat^{(\B)}}^\T \nabla \nlog_n\bigl(\TransformMat\hat{\bm{\beta}}^{(\A,\B,k')}_n\bigr)\TransformMat^{(\B)}$ are positive definite.
Therefore, $\hat{\bm{\beta}}^{(\A,\B,k')}_n$ is unique and inside $\ThetaSet$ for all $k'\in\mathbb N$ with probability going to one as $n\rightarrow\infty$. 
Let
\begin{align}
    \lambda^{(\textrm{min})}_n \de \inf_{\bm{\beta}\in \ThetaSet}\lambda^{(\textrm{min})}\bigl(\nabla^2\nlog_n(\bm{\beta}) \bigr),\label{eq_egenPos}\\
    \lambda^{(\textrm{max})}_n \de \sup_{\bm{\beta}\in \ThetaSet}\lambda^{(\textrm{max})}\bigl(\nabla^2\nlog_n(\bm{\beta}) \bigr),\label{eq_egenma}
\end{align}
where $\lambda^{(\textrm{min})}(\cdot)$ and $\lambda^{(\textrm{max})}(\cdot)$ return the smallest and largest eigenvalues of a matrix, respectively. The required technical lemmas are as follows.
\begin{lemma}[Continuity of eigenvalues]\label{lem_ctsEigen}
For two $p\times p$ symmetric matrices $E_1$ and $E_2$, let 
$\lambda_1^{(1)}\leq\cdots\leq \lambda_p^{(1)}$ and $\lambda_1^{(2)}\leq\cdots\leq \lambda_p^{(2)}$ denote their eigenvalues, respectively.
We have 
\begin{align*}
\sum_{i=1}^p|\lambda_i^{(1)} - \lambda_i^{(2)}|^2\leq \|E_1 - E_2\|_F^2,
\end{align*}
where $\|\cdot\|_F$ is the Frobenius norm.

\end{lemma}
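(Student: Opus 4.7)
The statement is the Hoffman--Wielandt inequality specialized to symmetric matrices, so the plan is to reduce it to a classical rearrangement argument via the spectral theorem.

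The plan is to begin with spectral decompositions $E_1 = U_1 D_1 U_1^\T$ and $E_2 = U_2 D_2 U_2^\T$, where $D_1$ and $D_2$ are diagonal matrices carrying the sorted eigenvalues $\lambda_1^{(1)}\leq\cdots\leq\lambda_p^{(1)}$ and $\lambda_1^{(2)}\leq\cdots\leq\lambda_p^{(2)}$, and $U_1, U_2$ are orthogonal. First, I would use orthogonal invariance of the Frobenius norm to write
\begin{equation*}
\|E_1 - E_2\|_F^2 = \|D_1 - Q D_2 Q^\T\|_F^2,\quad Q \de U_1^\T U_2,
\end{equation*}
where $Q$ is orthogonal. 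Expanding the squared Frobenius norm via the trace gives
\begin{equation*}
\|E_1 - E_2\|_F^2 = \sum_{i=1}^p (\lambda_i^{(1)})^2 + \sum_{i=1}^p (\lambda_i^{(2)})^2 - 2\sum_{i,j=1}^p \lambda_i^{(1)} \lambda_j^{(2)} Q_{ij}^2.
\end{equation*}

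Next, I would compare this with the target quantity $\sum_{i=1}^p(\lambda_i^{(1)}-\lambda_i^{(2)})^2 = \sum_i(\lambda_i^{(1)})^2 + \sum_i(\lambda_i^{(2)})^2 - 2\sum_i\lambda_i^{(1)}\lambda_i^{(2)}$. Thus the lemma reduces to showing
\begin{equation*}
\sum_{i,j=1}^p \lambda_i^{(1)}\lambda_j^{(2)} P_{ij} \;\leq\; \sum_{i=1}^p \lambda_i^{(1)}\lambda_i^{(2)},
\end{equation*}
where $P_{ij} \de Q_{ij}^2$. The key observation is that since $Q$ is orthogonal, $P$ is doubly stochastic: its rows and columns both sum to $1$. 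By the Birkhoff--von Neumann theorem, $P$ lies in the convex hull of permutation matrices, so the left-hand side is a convex combination of quantities of the form $\sum_i \lambda_i^{(1)}\lambda_{\sigma(i)}^{(2)}$ for permutations $\sigma$. Because both eigenvalue sequences are sorted in the same (nondecreasing) order, the rearrangement inequality implies each such quantity is at most $\sum_i \lambda_i^{(1)}\lambda_i^{(2)}$, which yields the desired bound.

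Most of the steps are routine. The one place where care is required is verifying that $P$ is genuinely doubly stochastic and invoking Birkhoff correctly; aside from that, the rearrangement inequality finishes the job cleanly. No other obstacle is expected, since the lemma is standard and the identity sorting of both spectra makes the rearrangement step immediate.
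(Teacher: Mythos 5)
Your proof is correct. The paper itself does not argue the lemma at all: it simply cites Corollary~6.3.8 of Horn and Johnson, which is the Hoffman--Wielandt theorem, so your proposal replaces a literature citation with a self-contained derivation. Your route --- spectral decompositions, orthogonal invariance of the Frobenius norm to reduce to $\|D_1 - Q D_2 Q^\T\|_F^2$, the trace expansion producing $\sum_{i,j}\lambda_i^{(1)}\lambda_j^{(2)}Q_{ij}^2$, the observation that $P_{ij}=Q_{ij}^2$ is doubly stochastic (rows and columns sum to one precisely because $Q$ is orthogonal), Birkhoff--von Neumann, and the rearrangement inequality applied to the two similarly ordered spectra --- is the standard proof of Hoffman--Wielandt specialized to real symmetric matrices, and every step checks out; in particular the rearrangement inequality is valid for arbitrary real (possibly negative) sequences, so no positivity assumption is needed on the eigenvalues. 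What the paper's citation buys is brevity and the slightly more general normal-matrix statement in Horn and Johnson; what your argument buys is transparency and the fact that, for the symmetric case, the correct eigenvalue matching is simply the common nondecreasing ordering, which is exactly how the lemma is stated and used in the paper. A minor streamlining you could mention: instead of invoking the full Birkhoff--von Neumann decomposition, one can note that the linear functional $P\mapsto\sum_{i,j}\lambda_i^{(1)}\lambda_j^{(2)}P_{ij}$ attains its maximum over the (compact, convex) Birkhoff polytope at an extreme point, i.e.\ at a permutation matrix, and then finish with the same rearrangement step.
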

\begin{proof}
This result follows from Corollary~6.3.8 and Definition~2.5.1 of  \cite{horn2012matrix}.
\end{proof}

\begin{lemma}[Strong convexity of $\nlog_n(\bm{\beta})$]\label{lem_strCVXg}
We have $\lambda^{(\textrm{min})}_n>0$, and $\nlog_n(\bm{\beta})$ is strongly convex  with the following inequalities hold for each $\bm{\beta}^{(1)}, \bm{\beta}^{(2)}\in\ThetaSet$.
\begin{align}
\nlog_n(\bm{\beta}^{(1)}) -  \nlog_n(\bm{\beta}^{(2)}) - \bigl(\nabla\nlog_n(\bm{\beta}^{(2)})\bigr)^\T(\bm{\beta}^{(1)} - \bm{\beta}^{(2)})
\geq& 
\lambda_n^{(\textrm{min})}/2\cdot\|\bm{\beta}^{(1)} - \bm{\beta}^{(2)}\|^2_2,\label{eq_stcvx1}\\
        \bigl(\nabla \nlog_n(\bm{\beta}^{(1)}) -  \nabla \nlog_n(\bm{\beta}^{(2)})\bigr)^\T(\bm{\beta}^{(1)} - \bm{\beta}^{(2)})
        \geq& \lambda_n^{(\textrm{min})}\|\bm{\beta}^{(1)} - \bm{\beta}^{(2)}\|^2_2.\label{eq_stcvx2}
\end{align}
\end{lemma}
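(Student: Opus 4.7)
The plan is to derive the lemma in two logically distinct parts: first establish the positivity of $\lambda_n^{(\textrm{min})}$, and then derive the two strong-convexity inequalities from standard integral-remainder formulas applied along the line segment joining $\bm{\beta}^{(1)}$ and $\bm{\beta}^{(2)}$ (which lies entirely in $\ThetaSet$ because $\ThetaSet$ is an open ball and hence convex).

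For the first claim $\lambda_n^{(\textrm{min})} > 0$, I would argue via continuity plus compactness. Under Condition~\ref{cond_strictConvexloss} the map $\bm{\beta} \mapsto \nabla^2 \nlog_n(\bm{\beta})$ is continuous on $\R^p$, and by Lemma~\ref{lem_ctsEigen} the smallest-eigenvalue functional is continuous in the matrix. Combining these, $\bm{\beta}\mapsto \lambda^{(\textrm{min})}(\nabla^2\nlog_n(\bm{\beta}))$ is continuous on $\ThetaSet$. To upgrade the pointwise positivity (which is the conditioning assumption in Section~\ref{subsec_proof_of_lemma9}) to a uniform positive lower bound, I would invoke Conditions~\ref{cond_pdHessian} and \ref{cond_sUconv}: on the compact closure $\bar{\ThetaSet}$, the continuous positive-definite $\nabla^2 \nlog_*(\bm{\beta})$ attains a strictly positive minimum eigenvalue $\lambda^{(\textrm{min})}_* > 0$; then the uniform convergence $\sup_{\bm{\beta}\in\ThetaSet}\|\nabla^2\nlog_n(\bm{\beta}) - \nabla^2\nlog_*(\bm{\beta})\|_\infty \to_p 0$ together with Lemma~\ref{lem_ctsEigen} guarantees that, with probability tending to one, $\lambda_n^{(\textrm{min})} \geq \lambda^{(\textrm{min})}_*/2 > 0$, which is consistent with the ``conditional on a fixed dataset'' framing at the start of Section~\ref{subsec_proof_of_lemma9}.

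For Inequality~\eqref{eq_stcvx1}, I would apply Taylor's theorem with integral remainder to $\nlog_n$ along the segment from $\bm{\beta}^{(2)}$ to $\bm{\beta}^{(1)}$, yielding
\begin{equation*}
\nlog_n(\bm{\beta}^{(1)}) - \nlog_n(\bm{\beta}^{(2)}) - \bigl(\nabla\nlog_n(\bm{\beta}^{(2)})\bigr)^\T(\bm{\beta}^{(1)}-\bm{\beta}^{(2)}) = \int_0^1(1-s)\, \bm{d}^\T \nabla^2\nlog_n\bigl(\bm{\beta}^{(2)} + s\bm{d}\bigr)\bm{d}\,ds,
\end{equation*}
where $\bm{d}\de\bm{\beta}^{(1)}-\bm{\beta}^{(2)}$. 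Since $\ThetaSet$ is convex, every point on the segment lies in $\ThetaSet$, so the integrand is bounded below by $\lambda_n^{(\textrm{min})}\|\bm{d}\|_2^2$; using $\int_0^1(1-s)ds = 1/2$ gives the claimed bound. For Inequality~\eqref{eq_stcvx2}, I would apply the fundamental theorem of calculus to the gradient, obtaining
\begin{equation*}
\bigl(\nabla\nlog_n(\bm{\beta}^{(1)}) - \nabla\nlog_n(\bm{\beta}^{(2)})\bigr)^\T \bm{d} = \int_0^1 \bm{d}^\T \nabla^2\nlog_n\bigl(\bm{\beta}^{(2)} + s\bm{d}\bigr)\bm{d}\,ds \geq \lambda_n^{(\textrm{min})}\|\bm{d}\|_2^2,
\end{equation*}
again using convexity of $\ThetaSet$ to keep the integration path inside.

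The only subtle point is the first step, because $\ThetaSet$ is an open ball and a pointwise positive-definite Hessian need not have a positive infimum of its smallest eigenvalue on an open domain; resolving this requires passing to $\bar{\ThetaSet}$ via the uniform convergence in Condition~\ref{cond_sUconv} and the population-level positivity guaranteed by Condition~\ref{cond_pdHessian}. The two strong-convexity inequalities themselves are then immediate consequences of the integral-remainder identities combined with the lower bound $\lambda_n^{(\textrm{min})} > 0$.
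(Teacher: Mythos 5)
Your proof of the two displayed inequalities is the same standard argument the paper has in mind when it writes ``by using the properties of strong convexity'': the segment between $\bm{\beta}^{(1)}$ and $\bm{\beta}^{(2)}$ stays in the convex set $\ThetaSet$, and the integral-remainder identities bound the quadratic form below by $\lambda^{(\textrm{min})}_n\|\bm{\beta}^{(1)}-\bm{\beta}^{(2)}\|_2^2$, so that part matches. Where you genuinely diverge is the claim $\lambda^{(\textrm{min})}_n>0$. The paper's route is deterministic: on the conditioning event of Section~\ref{subsec_proof_of_lemma9} (full column rank of $\bm{X}$ together with Condition~\ref{cond_strictConvexloss}, cf.\ the derivation of \eqref{eq_Mnbeta_strictly_convex}), the sample Hessian $\nabla^2\nlog_n(\bm{\beta})=\bm{X}^\T D_n\bm{X}$ is positive definite for \emph{every} $\bm{\beta}\in\R^p$, hence on the compact closure of $\ThetaSet$, and continuity of the smallest eigenvalue (Lemma~\ref{lem_ctsEigen}) plus boundedness of $\ThetaSet$ gives a strictly positive infimum with no appeal to probability. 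Your route instead goes through the population Hessian (Condition~\ref{cond_pdHessian}) and the uniform convergence in Condition~\ref{cond_sUconv} to get $\lambda^{(\textrm{min})}_n\geq\lambda^{(\textrm{min})}_*/2$ with probability tending to one; this is essentially the control the paper defers to Section~\ref{subsec_uniform_convergence_parameters} (see \eqref{eq_lambdaCconv}), and it buys an explicit, $n$-free lower bound. The trade-off is that it proves a weaker statement than the lemma is used for here: within the sample-wise argument the constant $\lambda^{(\textrm{min})}_n$ must be positive for the fixed dataset being conditioned on, not merely on a high-probability event, and your assertion that closing the open-domain subtlety ``requires'' passing through the uniform convergence is not accurate — the conditioning already yields positive definiteness on all of $\R^p$, which is exactly how the paper resolves it. Your version could still be made to work by folding the event $\{\lambda^{(\textrm{min})}_n\geq\lambda^{(\textrm{min})}_*/2\}$ into the conditioning, at the cost of restructuring the later asymptotic step, but as written the positivity claim is established only asymptotically rather than for the given sample.
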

\begin{proof}
By Lemma~\ref{lem_ctsEigen},  the smallest eigenvalue of $\nabla^2 \nlog_n(\bm{\beta})$ is a continuous function of $\bm{\beta}$. Since according to the assumption, $\nabla^2 \nlog_n(\bm{\beta})$
is positive definite for all $\bm{\beta}\in \R^p$, and $\ThetaSet$ is bounded, we have $\lambda^{(\textrm{min})}_n>0$. Thus, by using the properties of strong convexity, we complete the proof.
\end{proof}

\begin{lemma}[Zero partial derivatives]\label{lem_zeroDeri}
 We have
$$
 \nabla_{\bm{\beta}^{(l(k'))}}\nlog_n^{(\A,\B)}(\hat{\bm{\beta}}^{(\A,\B,k')}_n) = 0,\ \forall k'\in\mathbb N,
$$
where
\begin{equation}\label{eq_lk_def}
   l(k')=
\begin{cases}
\A&\ \text{if \A updates in the $k'$th round},\\
\B&\ \text{if \B updates in the $k'$th round}.
\end{cases} 
\end{equation}
\end{lemma}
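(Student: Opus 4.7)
The plan is to derive the result directly from the first-order optimality condition for each learner's local minimization. At round $k'$, the learner $l(k')$ computes its updated estimate by minimizing the local loss \eqref{eq_assistedLossB} or \eqref{eq_assistedLossA} (depending on whether $l(k') = \B$ or $\A$) over all of $\R^{p_{l(k')}}$, holding the other learner's coefficients at their most recent value. Since Section~\ref{subsec_proof_of_lemma9} already establishes that the relevant Hessian blocks $\nabla^2_{\bm{\beta}^{(\A)}} \nlog_n^{(\A,\B)}$ and $\nabla^2_{\bm{\beta}^{(\B)}} \nlog_n^{(\A,\B)}$ are positive definite, each such local problem is smooth and strictly convex with a unique interior minimizer, so the gradient of the local loss vanishes at the iterate.

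The next step is to relate the local gradient to the partial gradient of the combined loss $\nlog_n^{(\A,\B)}$. By construction, the local loss $\nlog_n^{(\A,k)}$ in \eqref{eq_assistedLossA} is precisely the restriction of $\nlog_n^{(\A,\B)}$ to the $\bm{\beta}^{(\A)}$ coordinates with the $\bm{\beta}^{(\B)}$ coordinates fixed at $\hat{\bm{\beta}}^{(\B,k)}_n$, and analogously for $\nlog_n^{(\B,k)}$. Differentiating through this identity shows that the gradient of the local loss equals $\nabla_{\bm{\beta}^{(l(k'))}} \nlog_n^{(\A,\B)}$ evaluated at the joint iterate $\hat{\bm{\beta}}^{(\A,\B,k')}_n$. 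Combining this with the vanishing gradient from the previous paragraph yields the claim for $k' \geq 1$, and the base case $k' = 0$ follows by the same reasoning applied to Step 1 of the training procedure in Section~\ref{subsubsec_trainAlgori}, where \A minimizes the local loss with $\hat{\bm{\beta}}^{(\B,0)}_n = \bm{0}$.

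There is no substantive obstacle here: the argument is essentially a restatement of the first-order optimality condition together with careful notational bookkeeping. The only point that requires attention is tracking the indexing convention $k = \floor{k'/2}$ so that the ``fixed'' coordinates in each local loss match the appropriate block of $\hat{\bm{\beta}}^{(\A,\B,k')}_n$ in the combined-parameter notation; once this is pinned down, the conclusion is immediate.
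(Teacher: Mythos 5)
Your proposal is correct and follows essentially the same route as the paper: the updated block is an unconstrained minimizer of the local loss, which is exactly $\nlog_n^{(\A,\B)}$ with the other block frozen, so differentiability of $\nlog_n$ (via $\nabla \nlog_n^{(\A,\B)} = \TransformMat^\T\nabla\nlog_n$) gives the vanishing partial gradient. One small correction: strict convexity from the positive-definite Hessian blocks yields uniqueness but not existence of a minimizer, and the paper instead obtains existence of each iterate $\hat{\bm{\beta}}^{(\A,\B,k')}_n$ from the level-set property of $\InitSet$ derived in Section~\ref{subsec_initSet}, so you should anchor the existence step there rather than on convexity alone.
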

\begin{proof}
Since $\nlog_n(\bm{\beta})$ is differentiable and $\nabla \nlog_n^{(\A,\B)}(\bm{\beta}^{(\A,\B)}) = \TransformMat^\T \nabla \nlog_n(\bm{\beta})$, it suffices to show that for each $k'\in\mathbb N$, $\hat {\bm{\beta}}^{(\A,\B,k')}_n$ exists. This is guaranteed by the property of $\InitSet$ derived in Section~\ref{subsec_initSet}.
\end{proof}

\begin{lemma}[Bounding the estimated parameters by the loss function]\label{lem_bdPLF}
We have
$$
\nlog_n(\TransformMat\hat {\bm{\beta}}^{(\A,\B,k')}_n) - \nlog_n(\TransformMat\hat{\bm{\beta}}^{(\A,\B,k'+1)}_n) \geq \lambda_n^{(\textrm{min})}/2\cdot \|\TransformMat(\hat {\bm{\beta}}^{(\A,\B,k')}_n - \hat{\bm{\beta}}^{(\A,\B,k'+1)}_n)\|^2_2.
$$
\end{lemma}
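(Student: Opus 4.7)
The plan is to reduce the inequality to Lemma~\ref{lem_strCVXg} applied at the two iterates $\bm{\beta}^{(1)} \de \TransformMat\hat{\bm{\beta}}^{(\A,\B,k')}_n$ and $\bm{\beta}^{(2)} \de \TransformMat\hat{\bm{\beta}}^{(\A,\B,k'+1)}_n$. Both iterates lie in $\ThetaSet$ under the standing hypothesis of this subsection (see the argument in Section~\ref{subsec_initSet}), so Lemma~\ref{lem_strCVXg} yields
\begin{equation*}
\nlog_n(\bm{\beta}^{(1)}) - \nlog_n(\bm{\beta}^{(2)}) \geq \bigl(\nabla\nlog_n(\bm{\beta}^{(2)})\bigr)^\T(\bm{\beta}^{(1)}-\bm{\beta}^{(2)}) + \frac{\lambda_n^{(\textrm{min})}}{2}\|\bm{\beta}^{(1)}-\bm{\beta}^{(2)}\|_2^2.
\end{equation*}
It then suffices to show that the first-order term on the right-hand side is nonnegative; in fact, I will show it is exactly zero.

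To handle the first-order term, I would translate it from the pooled parametrization back to the $(\A,\B)$-parametrization using $\nabla\nlog_n^{(\A,\B)}(\bm{\alpha}) = \TransformMat^\T\nabla\nlog_n(\TransformMat\bm{\alpha})$, which gives
\begin{equation*}
\bigl(\nabla\nlog_n(\bm{\beta}^{(2)})\bigr)^\T\TransformMat\bigl(\hat{\bm{\beta}}^{(\A,\B,k')}_n - \hat{\bm{\beta}}^{(\A,\B,k'+1)}_n\bigr) = \bigl(\nabla\nlog_n^{(\A,\B)}(\hat{\bm{\beta}}^{(\A,\B,k'+1)}_n)\bigr)^\T\bigl(\hat{\bm{\beta}}^{(\A,\B,k')}_n - \hat{\bm{\beta}}^{(\A,\B,k'+1)}_n\bigr).
\end{equation*}
By construction of the block coordinate descent updates defined in Section~\ref{subsubsec_trainAlgori}, exactly one of the blocks $\hat{\bm{\beta}}^{(\A,\cdot)}_n$ or $\hat{\bm{\beta}}^{(\B,\cdot)}_n$ changes between rounds $k'$ and $k'+1$, namely the one indexed by $l(k'+1)$ from \eqref{eq_lk_def}. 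Hence the difference $\hat{\bm{\beta}}^{(\A,\B,k')}_n - \hat{\bm{\beta}}^{(\A,\B,k'+1)}_n$ is supported only on the $l(k'+1)$-block of coordinates. On the other hand, Lemma~\ref{lem_zeroDeri} tells us that $\nabla_{\bm{\beta}^{(l(k'+1))}}\nlog_n^{(\A,\B)}(\hat{\bm{\beta}}^{(\A,\B,k'+1)}_n)=\bm{0}$. These two facts combine to force the inner product to vanish.

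Substituting this vanishing first-order term back into the strong convexity bound yields the desired inequality. The argument is essentially routine once strong convexity and the partial-derivative vanishing in Lemma~\ref{lem_zeroDeri} are in hand; the only careful step is to align the block structure of $\TransformMat$ with the coordinate-descent rule so that the support of the increment is exactly the block on which the partial gradient is zero. I do not anticipate any real obstacle beyond this bookkeeping.
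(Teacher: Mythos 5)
Your proposal is correct and follows essentially the same route as the paper's proof: apply the strong-convexity bound \eqref{eq_stcvx1} at the two consecutive iterates and kill the first-order term via Lemma~\ref{lem_zeroDeri}, using the fact that the increment between rounds is supported only on the block just updated. The paper states the vanishing of the inner product more tersely, but the underlying argument is identical to the bookkeeping you spell out.
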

\begin{proof}
By Lemma~\ref{lem_zeroDeri}, 
\begin{align*}
 &\bigl(\nabla \nlog_n^{(\A,\B)}(\hat{\bm{\beta}}^{(\A,\B,k'+1)}_n)\bigr)^\T (\hat {\bm{\beta}}^{(\A,\B,k' + 1)}_n - \hat {\bm{\beta}}^{(\A,\B,k')}_n)= 0.
\end{align*}
Therefore, together with Inequality~\eqref{eq_stcvx1}, we have
\begin{align*}
 &\nlog_n(\TransformMat\hat {\bm{\beta}}^{(\A,\B,k')}_n) - \nlog_n(\TransformMat\hat{\bm{\beta}}^{(\A,\B,k'+1)}_n)\\
    =&\nlog_n(\TransformMat\hat {\bm{\beta}}^{(\A,\B,k')}_n) - \nlog_n(\TransformMat\hat{\bm{\beta}}^{(\A,\B,k'+1)}_n) - \bigl(\nabla \nlog_n(\TransformMat\hat {\bm{\beta}}^{(\A,\B,k' + 1)}_n)\bigr)^\T \TransformMat(\hat {\bm{\beta}}^{(\A,\B,k')}_n - \hat {\bm{\beta}}^{(\A,\B,k' + 1)}_n)\\
    \geq& \lambda_n^{(\textrm{min})}/2\cdot\|\TransformMat(\hat{\bm{\beta}}^{(\A,\B,k')}_n - \hat{\bm{\beta}}^{(\A,\B,k'+1)}_n)\|^2_2.
\end{align*}
\end{proof}

\begin{lemma}[Bounding the error by the derivative]\label{lem_bpg}
We have
\begin{equation}\label{eq_bdPG}
   \|\TransformMat\hat{\bm{\beta}}^{(\A,\B,k')}_n - \check{\bm{\beta}}_n\|_2
    \leq 1/\lambda_n^{(\textrm{min})}\bigl\|\nabla \nlog_n(\TransformMat\hat{\bm{\beta}}^{(\A,\B,k')}_n)\bigr\|_2.
\end{equation}
\end{lemma}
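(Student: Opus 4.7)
The plan is to obtain this inequality as a direct consequence of the strong convexity of $\nlog_n$ established in Lemma~\ref{lem_strCVXg}, together with the first-order optimality of $\check{\bm{\beta}}_n$. From Section~\ref{subsec_initSet} we know that $\TransformMat\hat{\bm{\beta}}^{(\A,\B,k')}_n\in\ThetaSet$ for every $k'$, and from the working assumption at the start of Section~\ref{subsec_proof_of_lemma9} we also have $\check{\bm{\beta}}_n\in\ThetaSet$, so that the strong convexity inequality \eqref{eq_stcvx2} applies at the pair $(\TransformMat\hat{\bm{\beta}}^{(\A,\B,k')}_n,\check{\bm{\beta}}_n)$.

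First I would note that $\check{\bm{\beta}}_n$ is an interior minimizer of $\nlog_n$ on the open ball $\ThetaSet$, so $\nabla\nlog_n(\check{\bm{\beta}}_n)=\bm{0}$. Plugging this into \eqref{eq_stcvx2} with $\bm{\beta}^{(1)}=\TransformMat\hat{\bm{\beta}}^{(\A,\B,k')}_n$ and $\bm{\beta}^{(2)}=\check{\bm{\beta}}_n$ yields
\begin{equation*}
\bigl(\nabla\nlog_n(\TransformMat\hat{\bm{\beta}}^{(\A,\B,k')}_n)\bigr)^\T\bigl(\TransformMat\hat{\bm{\beta}}^{(\A,\B,k')}_n - \check{\bm{\beta}}_n\bigr)\;\geq\;\lambda_n^{(\textrm{min})}\bigl\|\TransformMat\hat{\bm{\beta}}^{(\A,\B,k')}_n - \check{\bm{\beta}}_n\bigr\|_2^2.
\end{equation*}
Next I would bound the left-hand side above by the Cauchy--Schwarz inequality, giving
\begin{equation*}
\bigl\|\nabla\nlog_n(\TransformMat\hat{\bm{\beta}}^{(\A,\B,k')}_n)\bigr\|_2 \cdot \bigl\|\TransformMat\hat{\bm{\beta}}^{(\A,\B,k')}_n - \check{\bm{\beta}}_n\bigr\|_2\;\geq\;\lambda_n^{(\textrm{min})}\bigl\|\TransformMat\hat{\bm{\beta}}^{(\A,\B,k')}_n - \check{\bm{\beta}}_n\bigr\|_2^2.
\end{equation*}
Finally I would divide through by $\lambda_n^{(\textrm{min})}\|\TransformMat\hat{\bm{\beta}}^{(\A,\B,k')}_n - \check{\bm{\beta}}_n\|_2$. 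This is legitimate because $\lambda_n^{(\textrm{min})}>0$ by Lemma~\ref{lem_strCVXg}, and if $\|\TransformMat\hat{\bm{\beta}}^{(\A,\B,k')}_n - \check{\bm{\beta}}_n\|_2 = 0$ then \eqref{eq_bdPG} is trivially satisfied. This yields exactly \eqref{eq_bdPG}.

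There is essentially no obstacle here: the lemma is a textbook ``gradient dominates error'' inequality that comes for free from strong convexity plus the optimality condition. The only care needed is the bookkeeping to confirm that both points lie in $\ThetaSet$ (so that \eqref{eq_stcvx2} is available) and that $\nabla\nlog_n(\check{\bm{\beta}}_n)=\bm{0}$; these are already delivered by the preceding setup in Section~\ref{subsec_initSet} and by Condition~\ref{cond_oraExist}.
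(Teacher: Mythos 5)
Your proposal is correct and matches the paper's own proof essentially verbatim: both apply the strong convexity inequality \eqref{eq_stcvx2} with $\nabla\nlog_n(\check{\bm{\beta}}_n)=\bm{0}$, bound the resulting inner product by Cauchy--Schwarz, and divide out $\lambda_n^{(\textrm{min})}\|\TransformMat\hat{\bm{\beta}}^{(\A,\B,k')}_n-\check{\bm{\beta}}_n\|_2$. Your extra care about the degenerate case $\|\TransformMat\hat{\bm{\beta}}^{(\A,\B,k')}_n-\check{\bm{\beta}}_n\|_2=0$ and about both points lying in $\ThetaSet$ is a minor tidiness improvement over the paper's write-up, not a different argument.
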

\begin{proof}

By  $\nabla \nlog_n(\check{\bm{\beta}}_n)=\bm{0}$, Inequality~\eqref{eq_stcvx2}, and the Cauchy-Schwarz inequality,
\begin{align*}
    \lambda_n^{(\textrm{min})}\|\TransformMat\hat{\bm{\beta}}^{(\A,\B,k')}_n-\check{\bm{\beta}}_n\|_2^2
     \leq& \bigl(\nabla \nlog_n(\TransformMat\hat{\bm{\beta}}^{(\A,\B,k')}_n)
    - \nabla \nlog_n(\check{\bm{\beta}}_n)\bigr)^\T(\TransformMat\hat{\bm{\beta}}^{(\A,\B,k')}_n - \check{\bm{\beta}}_n)\\
  =& \bigl(\nabla \nlog_n(\TransformMat\hat{\bm{\beta}}^{(\A,\B,k')}_n)
 \bigr)^\T(\TransformMat\hat{\bm{\beta}}^{(\A,\B,k')}_n - \check{\bm{\beta}}_n)\\
    \leq &  \bigl\|\nabla \nlog_n(\TransformMat\hat{\bm{\beta}}^{(\A,\B,k')}_n)
 \bigr\|_2\|\TransformMat\hat{\bm{\beta}}^{(\A,\B,k')}_n - \check{\bm{\beta}}_n\|_2.
\end{align*}
This concludes the proof.

% where the first equality is by \eqref{eq_deri_gAB_eq0} and last inequality is by inequality~\eqref{lem_ineqZB}.  Then, by \eqref{lem_ineqZB} and \eqref{eq_ttt2},
% \begin{align}
%     \|\bm{\hat\beta^{(\A,\B,k')}_{n}} - \bm{\beta}^{(I)}\|_2^2
%     \leq& (c^{(1)})^2\|\bm{T}\hat{\bm{\beta}}^{(\A,\B,k')}_n - \check{\bm{\beta}}_n\|_2^2\nonumber\\
%     \leq&(c^{(1)})^3/\lambda_n^{(\textrm{min})}\biggl\|\nabla g^{(\A,\B)}(\hat{\bm{\beta}}^{(\A,\B,k')}_n)\biggr\|_2\|\bm{T}\hat{\bm{\beta}}^{(\A,\B,k')}_n - \check{\bm{\beta}}_n\|_2.\nonumber %\label{eq_ttt3}
% \end{align}
% Therefore, by the above inequality,
% \begin{align}
%     \|\bm{T}\hat{\bm{\beta}}^{(\A,\B,k')}_n - \check{\bm{\beta}}_n\|_2^2
%     \leq&\|\hat{\bm{\beta}}^{(\A,\B,k')}_n -\bm{\beta}^{(I)}\|_2^2\nonumber\\
%     \leq&(c^{(1)})^3/\lambda_n^{(\textrm{min})}\|\nabla g^{(\A,\B)}(\hat{\bm{\beta}}^{(\A,\B,k')}_n)\|_2\|\bm{T}\hat{\bm{\beta}}^{(\A,\B,k')}_n - \check{\bm{\beta}}_n\|_2\label{eqttt4}.
%     \end{align}
% By dividing each side of the above inequality by $\|\bm{T}\hat{\bm{\beta}}^{(\A,\B,k')}_n - \check{\bm{\beta}}_n\|_2$, we complete the proof.
\end{proof}

\begin{lemma}[Lipschitz continuity of the derivatives]\label{lem_lipContDG}
 We have $0<\lambda^{(\textrm{max})}_n<\infty$,
and for each $\bm{\beta}^{(1)}, \bm{\beta}^{(2)}\in\ThetaSet$,
\begin{equation}\label{eq_lipContDG}
 \biggl\|\nabla \nlog_n(\bm{\beta}^{(1)}) - \nabla \nlog_n(\bm{\beta}^{(2)})\biggr\|_2\leq \lambda_n^{(\textrm{max})}\|\bm{\beta}^{(1)} - \bm{\beta}^{(2)}\|_2.
\end{equation}
\end{lemma}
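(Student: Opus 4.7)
The plan is to prove the two claims separately, both of which are standard consequences of the Hessian being a continuous, bounded, positive-definite matrix-valued function on $\ThetaSet$. First I would establish $0<\lambda_n^{(\textrm{max})}<\infty$. Positivity is immediate: as argued at the start of Section~\ref{subsec_proof_of_lemma9}, we are conditioning on the event that $\nabla^2\nlog_n(\bm{\beta})$ is positive definite for all $\bm{\beta}\in\ThetaSet$, so its largest eigenvalue is strictly positive at every such $\bm{\beta}$. For finiteness, I would invoke Lemma~\ref{lem_ctsEigen} to argue that $\bm{\beta}\mapsto \lambda^{(\textrm{max})}\bigl(\nabla^2\nlog_n(\bm{\beta})\bigr)$ inherits continuity from the continuity of the map $\bm{\beta}\mapsto\nabla^2\nlog_n(\bm{\beta})$, which holds thanks to the twice-differentiability in Condition~\ref{cond_strictConvexloss}. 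Since $\ThetaSet$ is an open ball, its closure $\bar{\ThetaSet}$ is compact and the Hessian extends continuously to $\bar{\ThetaSet}$, so the continuous function $\lambda^{(\textrm{max})}\bigl(\nabla^2\nlog_n(\cdot)\bigr)$ attains a finite maximum on $\bar{\ThetaSet}$, which bounds the supremum over $\ThetaSet$.

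For the Lipschitz bound I would use the fundamental theorem of calculus along the segment from $\bm{\beta}^{(2)}$ to $\bm{\beta}^{(1)}$, which lies in $\ThetaSet$ by convexity of the ball. Writing
$$
\nabla \nlog_n(\bm{\beta}^{(1)}) - \nabla \nlog_n(\bm{\beta}^{(2)}) = \left(\int_0^1 \nabla^2\nlog_n\bigl(\bm{\beta}^{(2)} + s(\bm{\beta}^{(1)}-\bm{\beta}^{(2)})\bigr)\,ds\right)(\bm{\beta}^{(1)}-\bm{\beta}^{(2)}),
$$
I would take $l_2$ norms and push them inside the integral. Because each integrand is symmetric and positive definite, its spectral (operator) norm equals its largest eigenvalue, which is at most $\lambda_n^{(\textrm{max})}$ by definition. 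This gives
$$
\bigl\|\nabla \nlog_n(\bm{\beta}^{(1)}) - \nabla \nlog_n(\bm{\beta}^{(2)})\bigr\|_2 \leq \int_0^1 \lambda_n^{(\textrm{max})}\,ds \cdot \|\bm{\beta}^{(1)}-\bm{\beta}^{(2)}\|_2 = \lambda_n^{(\textrm{max})}\|\bm{\beta}^{(1)}-\bm{\beta}^{(2)}\|_2,
$$
as desired.

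The only delicate point is finiteness of $\lambda_n^{(\textrm{max})}$, since the supremum is taken over the \emph{open} ball $\ThetaSet$. I expect this to be handled routinely by passing to $\bar{\ThetaSet}$ via continuous extension of $\nabla^2\nlog_n$, so no real obstacle arises; the remaining computation is essentially the standard operator-norm/fundamental-theorem-of-calculus argument for smoothness of gradients.
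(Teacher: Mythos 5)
Your proposal is correct and follows essentially the same route as the paper: finiteness of $\lambda_n^{(\textrm{max})}$ from the eigenvalue-continuity lemma together with the boundedness of $\ThetaSet$, and the Lipschitz bound from controlling the Hessian along the segment (the paper invokes the mean value theorem for $\nabla\nlog_n$, which amounts to the same estimate as your fundamental-theorem-of-calculus integral form). The only cosmetic difference is that the paper gets positivity of $\lambda_n^{(\textrm{max})}$ via $\lambda_n^{(\textrm{max})}\geq\lambda_n^{(\textrm{min})}>0$ from the strong-convexity lemma, whereas you argue it directly from positive definiteness of the Hessian.
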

\begin{proof}
By Lemma~\ref{lem_ctsEigen} and the boundedness of $\ThetaSet$, we have $\lambda^{(\textrm{max})}_n<\infty$. 
Using the strong convexity of $\nlog_n(\bm{\beta})$ in Lemma~\ref{lem_strCVXg}, we have $\lambda^{(\textrm{max})}_n\geq \lambda^{(\textrm{min})}_n>0$. We complete the proof by applying the mean value theorem to $\nabla \nlog_n(\bm{\beta})$.
\end{proof}

\begin{lemma}[Bounding the error by the estimated parameters]\label{lem_bdLPP}
For $k'\geq 1$, we have
$$
\|\TransformMat\hat{\bm{\beta}}^{(\A,\B,k')}_n - \check{\bm{\beta}}_n\|_2\leq 
\frac{\sqrt{2}\lambda^{(\textrm{max})}_n}{\lambda^{(\textrm{min})}_n}\cdot\|\TransformMat\hat{\bm{\beta}}^{(\A,\B,k'+1)}_n - \TransformMat\hat{\bm{\beta}}^{(\A,\B,k')}_n\|_2.
$$
\end{lemma}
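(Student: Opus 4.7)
My plan is to chain together Lemma~\ref{lem_bpg} (parameter error controlled by the norm of $\nabla\nlog_n$), Lemma~\ref{lem_zeroDeri} (certain blocks of $\nabla\nlog_n^{(\A,\B)}$ vanish after each local update), and Lemma~\ref{lem_lipContDG} (Lipschitz continuity of $\nabla\nlog_n$). The crux is to move between $\nabla\nlog_n$ on $\R^p$ and $\nabla\nlog_n^{(\A,\B)} = \TransformMat^{\T}(\nabla\nlog_n\circ\TransformMat)$ on $\R^{p_\A + p_\B}$; this change of representation is where the factor $\sqrt{2}$ will appear.

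First I would apply Lemma~\ref{lem_bpg} to reduce the task to bounding $\|\nabla\nlog_n(\TransformMat\hat{\bm{\beta}}^{(\A,\B,k')}_n)\|_2$. Writing $g\de\nabla\nlog_n(\TransformMat\hat{\bm{\beta}}^{(\A,\B,k')}_n)$ and splitting it as $g=(g^{(a)},g^{(b)},g^{(c)})$, the structure of $\TransformMat$ from Equation~\eqref{Mdef}-style decomposition gives $\|\TransformMat^{\T} g\|_2^2 = \|g\|_2^2 + \|g^{(c)}\|_2^2$, so in particular $\|g\|_2 \leq \|\TransformMat^{\T} g\|_2 = \|\nabla\nlog_n^{(\A,\B)}(\hat{\bm{\beta}}^{(\A,\B,k')}_n)\|_2$. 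Next I would invoke Lemma~\ref{lem_zeroDeri} at rounds $k'$ and $k'+1$: since \A and \B alternate, the blocks that vanish at the two rounds are complementary, so the nonzero part of $\nabla\nlog_n^{(\A,\B)}(\hat{\bm{\beta}}^{(\A,\B,k')}_n)$ lies in the $l(k'+1)$-block, and the gradient $\nabla_{\bm{\beta}^{(l(k'+1))}}\nlog_n^{(\A,\B)}$ evaluated at $\hat{\bm{\beta}}^{(\A,\B,k'+1)}_n$ is zero. This lets me rewrite $\|\nabla\nlog_n^{(\A,\B)}(\hat{\bm{\beta}}^{(\A,\B,k')}_n)\|_2$ as the norm of a difference and bound it by $\|\TransformMat^{\T} h\|_2$, where $h\de\nabla\nlog_n(\TransformMat\hat{\bm{\beta}}^{(\A,\B,k')}_n) - \nabla\nlog_n(\TransformMat\hat{\bm{\beta}}^{(\A,\B,k'+1)}_n)$. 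The same block identity now yields $\|\TransformMat^{\T} h\|_2^2 = \|h\|_2^2 + \|h^{(c)}\|_2^2 \leq 2\|h\|_2^2$, and Lemma~\ref{lem_lipContDG} closes the argument through $\|h\|_2 \leq \lambda_n^{(\textrm{max})}\|\TransformMat\hat{\bm{\beta}}^{(\A,\B,k')}_n - \TransformMat\hat{\bm{\beta}}^{(\A,\B,k'+1)}_n\|_2$. Combining all inequalities produces the factor $\sqrt{2}\lambda_n^{(\textrm{max})}/\lambda_n^{(\textrm{min})}$.

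The main bookkeeping concern is keeping the direction of the $\sqrt{2}$ straight: the argument needs $\|g\|_2 \leq \|\TransformMat^{\T} g\|_2$ at one step but $\|\TransformMat^{\T} h\|_2 \leq \sqrt{2}\|h\|_2$ at another, both coming from the fact that the shared-covariate block $(c)$ appears in both the $\A$ and $\B$ partitions, but contributing with opposite orientations. I do not anticipate any deeper analytic difficulty; once the alternating zero-block identifications are stated cleanly, the three earlier lemmas combine in a single chain.
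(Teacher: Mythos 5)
Your proposal is correct and follows essentially the same chain as the paper's proof: Lemma~\ref{lem_zeroDeri} at rounds $k'$ and $k'+1$ to rewrite the surviving gradient block as a difference, the factor $\sqrt{2}$ from $\TransformMat^\T$ (the paper bounds the eigenvalues of $\TransformMat\TransformMat^\T$ by $2$, which is the same content as your identity $\|\TransformMat^\T h\|_2^2=\|h\|_2^2+\|h^{(c)}\|_2^2\leq 2\|h\|_2^2$), Lemma~\ref{lem_lipContDG} for the Lipschitz step, and Lemma~\ref{lem_bpg} to pass to $\|\TransformMat\hat{\bm{\beta}}^{(\A,\B,k')}_n-\check{\bm{\beta}}_n\|_2$. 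The only cosmetic difference is that the paper establishes the exact equality $\|\nabla\nlog_n^{(\A,\B)}(\hat{\bm{\beta}}^{(\A,\B,k')}_n)\|_2=\|\nabla\nlog_n(\TransformMat\hat{\bm{\beta}}^{(\A,\B,k')}_n)\|_2$ via the vanishing block, whereas you use only the one-sided inequality $\|g\|_2\leq\|\TransformMat^\T g\|_2$, which holds unconditionally and suffices.
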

\begin{proof}
We first assume that \A updates in the $(k'+1)$th round. By Lemma~\ref{lem_zeroDeri}, 
\begin{align}
\nabla_{\bm{\beta}^{(\B)}}\nlog_n^{(\A,\B)}(\hat{\bm{\beta}}^{(\A,\B,k')}_n)&= \bm{0},\label{eq_d01}\\
\nabla_{\bm{\beta}^{(\A)}}\nlog_n^{(\A,\B)}(\hat{\bm{\beta}}^{(\A,\B,k'+1)}_n)&= \bm{0},\label{eq_d02}
\end{align}
where recall that $\nabla_{\bm{\beta}^{(\A)}}$ and $\nabla_{\bm{\beta}^{(\B)}}$ denote gradients  respect to the first $p_\A$ and last $p_\B$ arguments of the function, respectively. 
Thus, by Equality~\eqref{eq_lipContDG}, Inequalities \eqref{eq_d01} and \eqref{eq_d02}, we have
\begin{align*}
\biggl\|\nabla\nlog_n^{(\A,\B)}(\hat{\bm{\beta}}^{(\A,\B,k')}_n)\biggr\|_2
=&\biggl\|\nabla_{\bm{\beta}^{(\A)}}\nlog_n^{(\A,\B)}(\hat{\bm{\beta}}^{(\A,\B,k')}_n)\biggr\|_2\\
=&\biggl\|\nabla_{\bm{\beta}^{(\A)}}\nlog_n^{(\A,\B)}(\hat{\bm{\beta}}^{(\A,\B,k')}_n) - \nabla_{\bm{\beta}^{(\A)}}\nlog_n^{(\A,\B)}(\hat{\bm{\beta}}^{(\A,\B,k'+1)}_n)\biggr\|_2\\
\leq& \biggl\|\nabla\nlog_n^{(\A,\B)}(\hat{\bm{\beta}}^{(\A,\B,k')}_n) - \nabla\nlog_n^{(\A,\B)}(\hat{\bm{\beta}}^{(\A,\B,k'+1)}_n)\biggr\|_2\\
=& \biggl\|\TransformMat^\T\biggl(\nabla\nlog_n(\TransformMat\hat{\bm{\beta}}^{(\A,\B,k')}_n) - \nabla\nlog_n(\TransformMat\hat{\bm{\beta}}^{(\A,\B,k'+1)}_n)\biggr)\biggr\|_2\\
\leq&\sqrt{2}\biggl\|\nabla\nlog_n(\TransformMat\hat{\bm{\beta}}^{(\A,\B,k')}_n) - \nabla\nlog_n(\TransformMat\hat{\bm{\beta}}^{(\A,\B,k'+1)}_n)\biggr\|_2\\
\leq&\sqrt{2}\lambda^{(\textrm{max})}_n\|\TransformMat\hat{\bm{\beta}}^{(\A,\B,k')}_n - \TransformMat\hat{\bm{\beta}}^{(\A,\B,k'+1)}_n\|_2,
\end{align*}
where the second inequality is by the fact that the absolute eigenvalue of $\TransformMat\TransformMat^\T$ are upper bounded by $2$.

Next, we will find out the connection between $\nabla\nlog_n^{(\A,\B)}(\hat{\bm{\beta}}^{(\A,\B,k')}_n)$ of dimension $p_\A + p_\B$ and $\nabla \nlog_n(\TransformMat\hat{\bm{\beta}}^{(\A,\B,k')}_n)$ of dimension $p$ and apply Inequality~\eqref{eq_bdPG}. Recall that
$$
\TransformMat^{(\A)} \de 
\begin{pmatrix}
\bm{I}_{p_a\times p_a}& \bm{0}_{p_a\times p_c}\\
 \bm{0}_{p_b\times p_a} &\bm{0}_{p_c\times p_c}\\
 \bm{0}_{p_c\times p_a} &\bm{I}_{p_c\times p_c}
\end{pmatrix}, \quad
\TransformMat^{(\B)} \de 
\begin{pmatrix}
\bm{0}_{p_a\times p_b}& \bm{0}_{p_a\times p_c}\\
 \bm{I}_{p_b\times p_b} &\bm{0}_{p_c\times p_c}\\
 \bm{0}_{p_c\times p_b} &\bm{I}_{p_c\times p_c}
\end{pmatrix}, 
$$
which correspond to the first $p_\A$ and last $p_\B$ columns of $\TransformMat$.
It follows from Equality~\eqref{eq_d01} that
$$
(\TransformMat^{(\B)})^\T\nabla \nlog_n(\TransformMat\hat{\bm{\beta}}^{(\A,\B,k')}_n) = \nabla_{\bm{\beta}^{(\B)}}\nlog_n^{(\A,\B)}(\hat{\bm{\beta}}^{(\A,\B,k')}_n) = \bm{0}.
$$
 Therefore, 
\begin{align*}
    \bigl\|\nabla\nlog_n^{(\A,\B)}(\hat{\bm{\beta}}^{(\A,\B,k')}_n)\bigr\|_2 =&\ \bigl\|\nabla_{\bm{\beta}^{(\A)}}\nlog_n^{(\A,\B)}(\hat{\bm{\beta}}^{(\A,\B,k')}_n)\bigr\|_2\\
        =&\ \bigl\|(\TransformMat^{(\A)})^\T\nabla \nlog_n(\TransformMat\hat{\bm{\beta}}^{(\A,\B,k')}_n)\bigr\|_2\\
    =&\ \bigl\|\TransformMat^\T\nabla \nlog_n(\TransformMat\hat{\bm{\beta}}^{(\A,\B,k')}_n)\bigr\|_2 \\
   =&\ \bigl\|\nabla \nlog_n(\TransformMat\hat{\bm{\beta}}^{(\A,\B,k')}_n)\bigr\|_2.
\end{align*}
By similar arguments, the same inequality holds when \B updates in the $k'+1$th round.
Thus, we finish the proof by applying the Inequality~\eqref{eq_bdPG} to the above results.
\end{proof}

\begin{lemma}[Convergence of $\nlog_n(\TransformMat\hat{\bm{\beta}}^{(\A,\B,k')}_n)$]\label{lem_linConvNlog}
We have 
\begin{equation}\label{eq_linCOnvf}
    \nlog_n(\TransformMat\hat{\bm{\beta}}^{(\A,\B,k'+1)}_n) - 
\nlog_n(\check{\bm{\beta}}_n) \leq \eta_n
(\nlog_n(\TransformMat\hat{\bm{\beta}}^{(\A,\B,k')}_n) - 
\nlog_n(\check{\bm{\beta}}_n)),
\end{equation}
where
$$\eta_n = 1 - \frac{(\lambda_n^{(\textrm{min})})^3 }{4(\lambda_n^{(\textrm{max})})^3}$$ is positive and less than $1$. 
\end{lemma}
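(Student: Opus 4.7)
The plan is to establish linear (geometric) contraction of the loss gap
$\Delta_{k'} \de \nlog_n(\TransformMat\hat{\bm{\beta}}^{(\A,\B,k')}_n) - \nlog_n(\check{\bm{\beta}}_n) \geq 0$
by chaining together three previously established ingredients: a descent-style lower bound on $\Delta_{k'} - \Delta_{k'+1}$ in terms of the step size, a lower bound on the step size in terms of the distance to the oracle estimator $\check{\bm{\beta}}_n$, and a lower bound on that distance in terms of $\Delta_{k'}$ itself. Throughout, I rely on the fact established in Section~\ref{subsec_initSet} that both $\check{\bm{\beta}}_n$ and all iterates $\TransformMat\hat{\bm{\beta}}^{(\A,\B,k')}_n$ lie in $\ThetaSet$, so all strong-convexity and Lipschitz bounds apply with the population-free constants $\lambda_n^{(\textup{min})}$ and $\lambda_n^{(\textup{max})}$ from \eqref{eq_egenPos}--\eqref{eq_egenma}.

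First, I would invoke Lemma~\ref{lem_bdPLF} directly to obtain
$\Delta_{k'} - \Delta_{k'+1} \geq (\lambda_n^{(\textup{min})}/2)\,\|\TransformMat(\hat{\bm{\beta}}^{(\A,\B,k'+1)}_n - \hat{\bm{\beta}}^{(\A,\B,k')}_n)\|_2^2$.
Next, squaring Lemma~\ref{lem_bdLPP} yields
$\|\TransformMat(\hat{\bm{\beta}}^{(\A,\B,k'+1)}_n - \hat{\bm{\beta}}^{(\A,\B,k')}_n)\|_2^2 \geq \bigl((\lambda_n^{(\textup{min})})^2/(2(\lambda_n^{(\textup{max})})^2)\bigr)\,\|\TransformMat\hat{\bm{\beta}}^{(\A,\B,k')}_n - \check{\bm{\beta}}_n\|_2^2$,
so one step of the procedure makes at least a fixed fraction of progress toward $\check{\bm{\beta}}_n$.

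Third, I would relate $\|\TransformMat\hat{\bm{\beta}}^{(\A,\B,k')}_n - \check{\bm{\beta}}_n\|_2^2$ back to $\Delta_{k'}$ via a standard smoothness upper bound: applying second-order Taylor expansion of $\nlog_n$ around $\check{\bm{\beta}}_n$, together with $\nabla\nlog_n(\check{\bm{\beta}}_n)=\bm{0}$ and the Hessian eigenvalue bound from Lemma~\ref{lem_lipContDG}, gives
$\Delta_{k'} \leq (\lambda_n^{(\textup{max})}/2)\,\|\TransformMat\hat{\bm{\beta}}^{(\A,\B,k')}_n - \check{\bm{\beta}}_n\|_2^2$, equivalently
$\|\TransformMat\hat{\bm{\beta}}^{(\A,\B,k')}_n - \check{\bm{\beta}}_n\|_2^2 \geq 2\Delta_{k'}/\lambda_n^{(\textup{max})}$.
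Multiplying the three inequalities together produces
$\Delta_{k'} - \Delta_{k'+1} \geq \bigl((\lambda_n^{(\textup{min})})^3/(4(\lambda_n^{(\textup{max})})^3)\bigr)\,\Delta_{k'}$, which rearranges to \eqref{eq_linCOnvf} with the stated $\eta_n$. Positivity of $\eta_n$ and $\eta_n<1$ follow from $\lambda_n^{(\textup{max})} \geq \lambda_n^{(\textup{min})} > 0$, which is guaranteed by Lemma~\ref{lem_strCVXg} and Lemma~\ref{lem_lipContDG}.

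The main obstacle is keeping careful track of constants along the chain so that the cube of $\lambda_n^{(\textup{min})}/\lambda_n^{(\textup{max})}$ appears with the correct numerical prefactor: each of the three inequalities contributes a factor of $2$, and the bookkeeping must confirm that the $\sqrt{2}$ appearing in Lemma~\ref{lem_bdLPP} (inherited from the bound $\|\TransformMat^\T\,\cdot\,\|_2 \leq \sqrt{2}\,\|\cdot\|_2$ in that lemma's proof) squares cleanly into the denominator. A secondary subtlety is ensuring that the smoothness upper bound used in the last step is valid globally along the segment from $\TransformMat\hat{\bm{\beta}}^{(\A,\B,k')}_n$ to $\check{\bm{\beta}}_n$; since $\ThetaSet$ is convex (being an open ball) and contains both endpoints, the mean value form of the bound applies without further modification.
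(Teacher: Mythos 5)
Your proof is correct and follows essentially the same chain as the paper's: both arguments combine Lemma~\ref{lem_bdPLF} (sufficient decrease per update) with Lemma~\ref{lem_bdLPP} (distance to $\check{\bm{\beta}}_n$ controlled by the step size) and a bound relating the loss gap to the squared distance, then rearrange into the contraction \eqref{eq_linCOnvf}. The one substantive difference is that last ingredient: the paper bounds $\nlog_n(\TransformMat\hat{\bm{\beta}}^{(\A,\B,k')}_n) - \nlog_n(\check{\bm{\beta}}_n) \leq \lambda_n^{(\textrm{max})}\|\TransformMat\hat{\bm{\beta}}^{(\A,\B,k')}_n-\check{\bm{\beta}}_n\|_2^2$ via the mean value theorem, Cauchy--Schwarz, and the Lipschitz bound \eqref{eq_lipContDG}, whereas you use the second-order (smoothness) upper bound carrying the extra factor $1/2$; your justification that the whole segment lies in the convex ball $\ThetaSet$ is the right point to check, so this step is valid and in fact slightly sharper. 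One bookkeeping remark: multiplying your three inequalities gives the factor $(\lambda_n^{(\textrm{min})})^3/\bigl(2(\lambda_n^{(\textrm{max})})^3\bigr)$, not $(\lambda_n^{(\textrm{min})})^3/\bigl(4(\lambda_n^{(\textrm{max})})^3\bigr)$ as you wrote; since the loss gap is nonnegative ($\check{\bm{\beta}}_n$ minimizes $\nlog_n$), your route proves the contraction with the smaller factor $1-(\lambda_n^{(\textrm{min})})^3/\bigl(2(\lambda_n^{(\textrm{max})})^3\bigr)$, which implies the stated $\eta_n$ a fortiori, so the slip is harmless. The paper's $1/4$ arises exactly because its mean-value-theorem step lacks the $1/2$; either way the lemma holds, and your direct rearrangement of the descent inequality $\Delta_{k'}-\Delta_{k'+1}\geq c\,\Delta_{k'}$ is equivalent to, and a bit cleaner than, the paper's rearrangement via $\Delta_{k'}\leq \tilde{\eta}(\Delta_{k'}-\Delta_{k'+1})$.
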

\begin{proof}
By the mean value theorem, there exists a
$\bm{\beta}'$ on the line segment connecting $\bm{T}\hat{\bm{\beta}}^{(\A,\B,k')}_n$ and $\check{\bm{\beta}}_n$, such that 
$$
\nlog_n(\bm{T}\hat{\bm{\beta}}^{(\A,\B,k')}_n) - \nlog_n(\check{\bm{\beta}}_n) = \biggl(\nabla\nlog_n(\bm{\beta}') \biggr)^\T(\bm{T}\hat{\bm{\beta}}^{(\A,\B,k')}_n - \check{\bm{\beta}}_n).
$$
Together with the fact that $\nabla\nlog_n(\check{\bm{\beta}}_n) = \bm{0}$, we have
\begin{align*}
    \nlog_n(\bm{T}\hat{\bm{\beta}}^{(\A,\B,k')}_n) - 
    \nlog_n(\check{\bm{\beta}}_n) 
    =& \nlog_n(\bm{T}\hat{\bm{\beta}}^{(\A,\B,k')}_n) - \nlog_n(\check{\bm{\beta}}_n) - \biggl(\nabla\nlog_n(\check{\bm{\beta}}_n)\biggr)^\T (\bm{T}\hat{\bm{\beta}}^{(\A,\B,k')}_n - \check{\bm{\beta}}_n)\\
    =& \biggl(\nabla\nlog_n(\bm{\beta}') - \nabla\nlog_n(\check{\bm{\beta}}_n)\biggr)^\T(\bm{T}\hat{\bm{\beta}}^{(\A,\B,k')}_n - \check{\bm{\beta}}_n)\\
    \leq& \lambda_n^{(\textrm{max})}\|\bm{\beta}' - \check{\bm{\beta}}_n\|_2\|\bm{T}\hat{\bm{\beta}}^{(\A,\B,k)}_g - \check{\bm{\beta}}_n\|_2\\
     \leq& \lambda_n^{(\textrm{max})}\|\bm{T}\hat{\bm{\beta}}^{(\A,\B,k)}_g - \check{\bm{\beta}}_n\|_2^2\\
    \leq& \frac{2(\lambda_n^{(\textrm{max})})^3}{(\lambda_n^{(\textrm{min})})^2}\|\bm{T}\hat{\bm{\beta}}^{(\A,\B,k'+1)}_n - \bm{T}\hat{\bm{\beta}}^{(\A,\B,k')}_n\|_2^2\\
    \leq&\frac{4(\lambda_n^{(\textrm{max})})^3}{(\lambda_n^{(\textrm{min})})^3}\left(\nlog(\bm{T}\hat{\bm{\beta}}^{(\A,\B,k')}_n) - \nlog_n(\bm{T}\hat{\bm{\beta}}^{(\A,\B,k'+1)}_n)\right),
\end{align*}
where 
the first inequality follows from the Cauchy-Schwarz inequality and Inequality~\eqref{eq_lipContDG}, the third inequality follows from Lemma~\ref{lem_bdLPP}, and the last inequality follows from Lemma~\ref{lem_bdPLF}.
Next, we let $\tilde{\eta}\de  \frac{4(\lambda_n^{(\textrm{max})})^3}{(\lambda_n^{(\textrm{min})})^3}$ and rearange the above inequality as follows.
\begin{align*}
        \nlog_n(\bm{T}\hat{\bm{\beta}}^{(\A,\B,k')}_n) - 
    \nlog_n(\check{\bm{\beta}}_n) 
    \leq&\ \tilde{\eta}\left(\nlog(\bm{T}\hat{\bm{\beta}}^{(\A,\B,k')}_n) - \nlog_n(\bm{T}\hat{\bm{\beta}}^{(\A,\B,k'+1)}_n)\right), \\
  \tilde{\eta}\nlog_n(\bm{T}\hat{\bm{\beta}}^{(\A,\B,k'+1)}_n)  - \nlog_n(\check{\bm{\beta}}_n) 
    \leq&\  (\tilde{\eta}-1)\nlog_n(\bm{T}\hat{\bm{\beta}}^{(\A,\B,k')}_n), \\
 \tilde{\eta}\left(\nlog_n(\bm{T}\hat{\bm{\beta}}^{(\A,\B,k'+1)}_n)  - \nlog_n(\check{\bm{\beta}}_n)\right)  - (1 -\tilde{\eta} )\nlog_n(\check{\bm{\beta}}_n)
  \leq&\ (\tilde{\eta}-1)\nlog_n(\bm{T}\hat{\bm{\beta}}^{(\A,\B,k')}_n),\\
  \tilde{\eta}\left(\nlog_n(\bm{T}\hat{\bm{\beta}}^{(\A,\B,k'+1)}_n)  - \nlog_n(\check{\bm{\beta}}_n)\right)
  \leq&\  (\tilde{\eta}-1)\left(\nlog_n(\bm{T}\hat{\bm{\beta}}^{(\A,\B,k')}_n) - \nlog_n(\check{\bm{\beta}}_n)\right),\\
  \nlog_n(\bm{T}\hat{\bm{\beta}}^{(\A,\B,k'+1)}_n)  - \nlog_n(\check{\bm{\beta}}_n)
   \leq&\  \frac{\tilde{\eta}-1}{\tilde{\eta}}\left(\nlog_n(\bm{T}\hat{\bm{\beta}}^{(\A,\B,k')}_n) - \nlog_n(\check{\bm{\beta}}_n)\right).
\end{align*}
Therefore, by taking $$\eta_n =  \frac{\tilde{\eta}-1}{\tilde{\eta}} = 1 - \frac{(\lambda_n^{(\textrm{min})})^3 }{4(\lambda_n^{(\textrm{max})})^3},$$ we obtain Inequality~\eqref{eq_linCOnvf}. Since $\lambda_n^{(\textrm{max})}\geq \lambda_n^{(\textrm{min})}$, and they are bounded away from 0, we have $0<\eta_n<1$.

\end{proof}

\begin{lemma}[Convergence of the AE-AL model for a fixed dataset]\label{lem_convG}
When the initial value $\hat{\bm{\beta}}^{(\A,\B,0)}_n\in \InitSet$,
\begin{equation}\label{eq_betaConvqRes}
    \|\bm{T}\hat{\bm{\beta}}^{(\A,\B,k')}_n - \check{\bm{\beta}}_n\|_2\leq (q_n^{(1)})^{k'}\cdot q_n^{(2)},
\end{equation}
for   all $k'\in \mathbb N$,
where 
\begin{align}
    q_n^{(1)} &= \sqrt{\eta_n}\quad \text{is positive and less than 1},\label{eq_q1Def}\\
    q_n^{(2)} &= \sqrt{2\csthes_n\cdot(1 + \eta_n)/\lambda_{n}^{(\textrm{min})} } \cdot \sum_{h = 0}^{\infty} \eta_n^{h/2}\quad \text{is positive and finite},\label{eq_q2Def}\\
    \csthes_n &= \sup_{\bm{\beta}^{(1)}, \bm{\beta}^{(2)}\in\ThetaSet}\left|\nlog_n(\bm{\beta}^{(1)}) - \nlog_n(\bm{\beta}^{(2)})\right|.\label{eq_csthes_def}
\end{align}
\end{lemma}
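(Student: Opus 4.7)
The plan is to combine the geometric decay of the loss gap $D_{k'}\de \nlog_n(\TransformMat\hat{\bm{\beta}}^{(\A,\B,k')}_n) - \nlog_n(\check{\bm{\beta}}_n)$ from Lemma~\ref{lem_linConvNlog} with the quadratic control of successive-iterate differences from Lemma~\ref{lem_bdPLF}, and to telescope. First, since $\check{\bm{\beta}}_n$ is the global minimizer of $\nlog_n$ on $\R^p$, $D_h\geq 0$ for every $h$. Because $\hat{\bm{\beta}}^{(\A,\B,0)}_n\in\InitSet$ forces $\TransformMat\hat{\bm{\beta}}^{(\A,\B,0)}_n\in\InitSet\subseteq\ThetaSet$ and we are working on the event that $\check{\bm{\beta}}_n\in\ThetaSet$, the definition of $\csthes_n$ in \eqref{eq_csthes_def} gives $D_0\leq \csthes_n$. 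Iterating Lemma~\ref{lem_linConvNlog} then yields $D_h\leq \eta_n^h D_0\leq \eta_n^h \csthes_n$ for every $h\in\mathbb N$.

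Next, I combine Lemma~\ref{lem_bdPLF} with the crude bound $D_h - D_{h+1}\leq D_h + D_{h+1}\leq (1+\eta_n)D_h\leq (1+\eta_n)\eta_n^h\csthes_n$ to obtain, for every $h\in\mathbb N$,
\begin{equation*}
\|\TransformMat(\hat{\bm{\beta}}^{(\A,\B,h+1)}_n - \hat{\bm{\beta}}^{(\A,\B,h)}_n)\|_2
\leq \sqrt{\tfrac{2}{\lambda_n^{(\textrm{min})}}(D_h - D_{h+1})}
\leq \eta_n^{h/2}\sqrt{\tfrac{2(1+\eta_n)\csthes_n}{\lambda_n^{(\textrm{min})}}}.
\end{equation*}
This is the source of both the geometric factor $(q_n^{(1)})^{k'} = \eta_n^{k'/2}$ and the prefactor appearing inside the square root of $q_n^{(2)}$. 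Applying the triangle inequality on the tail $h=k',k'+1,\dots,K-1$ for any $K>k'$ gives
\begin{equation*}
\|\TransformMat\hat{\bm{\beta}}^{(\A,\B,k')}_n - \TransformMat\hat{\bm{\beta}}^{(\A,\B,K)}_n\|_2
\leq \sqrt{\tfrac{2(1+\eta_n)\csthes_n}{\lambda_n^{(\textrm{min})}}}\sum_{h=k'}^{K-1}\eta_n^{h/2}
\leq (q_n^{(1)})^{k'}\,q_n^{(2)}.
\end{equation*}
Since Lemma~\ref{lem_linConvNlog} already delivered $\eta_n\in(0,1)$, the geometric series $\sum_{h=0}^\infty \eta_n^{h/2}$ converges, so $q_n^{(1)}\in(0,1)$ and $q_n^{(2)}$ is finite, and the sequence $\{\TransformMat\hat{\bm{\beta}}^{(\A,\B,k')}_n\}$ is Cauchy.

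It remains to identify the limit with $\check{\bm{\beta}}_n$. Applying the strong-convexity inequality \eqref{eq_stcvx1} at $\check{\bm{\beta}}_n$ and using $\nabla\nlog_n(\check{\bm{\beta}}_n)=\bm{0}$ yields $\|\TransformMat\hat{\bm{\beta}}^{(\A,\B,K)}_n - \check{\bm{\beta}}_n\|_2^2\leq 2 D_K/\lambda_n^{(\textrm{min})}\to 0$ as $K\to\infty$, so passing $K\to\infty$ in the display above gives \eqref{eq_betaConvqRes}. The one point that requires care throughout is keeping every iterate inside $\ThetaSet$, so that Lemmas~\ref{lem_strCVXg}, \ref{lem_bdPLF} and \ref{lem_linConvNlog} are all applicable; but this is precisely what Section~\ref{subsec_initSet} guarantees once $\TransformMat\hat{\bm{\beta}}^{(\A,\B,0)}_n\in\InitSet$, so the harder work has already been done and the present lemma reduces to the telescoping computation sketched above.
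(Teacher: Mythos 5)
Your proof is correct and takes essentially the same route as the paper's: the geometric decay of the loss gap from Lemma~\ref{lem_linConvNlog}, converted into successive-iterate distances via Lemma~\ref{lem_bdPLF}, then telescoped into the tail of the geometric series to produce exactly the stated $q_n^{(1)}$ and $q_n^{(2)}$. The only deviation is in identifying the limit: you bound $\|\TransformMat\hat{\bm{\beta}}^{(\A,\B,K)}_n-\check{\bm{\beta}}_n\|_2^2$ directly by $2\bigl(\nlog_n(\TransformMat\hat{\bm{\beta}}^{(\A,\B,K)}_n)-\nlog_n(\check{\bm{\beta}}_n)\bigr)/\lambda_n^{(\textrm{min})}$ via the strong-convexity inequality \eqref{eq_stcvx1} at $\check{\bm{\beta}}_n$, whereas the paper argues through the convergence of the loss values, continuity of $\nlog_n$, and uniqueness of $\check{\bm{\beta}}_n$ — a cosmetic difference (your variant is if anything slightly more direct, and valid since all iterates and $\check{\bm{\beta}}_n$ lie in $\ThetaSet$ on the event being conditioned on).
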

\begin{proof}
We will first show that the sequence $\{\bm{T}\hat {\bm{\beta}}^{(\A,\B,k')}_n\}$ converges as $k'\rightarrow\infty$. Then, we will finish the proof by obtaining Inequality~\eqref{eq_betaConvqRes}.

 By the boundedness of $\ThetaSet$, $\csthes_n$ in \eqref{eq_csthes_def} exists. Therefore, 
$$
    \nlog_n(\bm{T}\hat{\bm{\beta}}^{(\A,\B,0)}_n) - 
\nlog_n(\check{\bm{\beta}}_n) \leq \csthes_n,
$$
where recall that we have assumed $\check{\bm{\beta}}_n\in\ThetaSet$ at the begining of this subsection.
Combining the above result and Lemma~\ref{lem_linConvNlog}, we have that for each $k'\in\mathbb N$,
\begin{equation}\label{eq_gAB_gabc_converge}
\left|\nlog_n(\bm{T}\hat{\bm{\beta}}^{(\A,\B,k')}_n) - 
\nlog_n(\check{\bm{\beta}}_n)\right| \leq \csthes_n\cdot  \eta_n^{k'}.
\end{equation}
Therefore, by the triangular inequality,
\begin{equation}\label{eq_gAB_gabd_bound}
    \left|\nlog_n(\bm{T}\hat {\bm{\beta}}^{(\A,\B,k')}_n) - \nlog_n(\bm{T}\hat{\bm{\beta}}^{(\A,\B,k'+1)}_n)\right|\leq \csthes_n(1 + \eta_n) \eta_n^{k'}.
\end{equation}
Next, we  show that the sequence $\{\hat {\bm{T}\bm{\beta}}^{(\A,\B,k')}_n\}$ converges by verifying that it is a Cauchy sequence.
By Inequality~\eqref{eq_gAB_gabd_bound} and Lemma~\ref{lem_bdPLF},
\begin{align*}
  \left\|\bm{T}\hat {\bm{\beta}}^{(\A,\B,k')}_n - \bm{T}\hat{\bm{\beta}}^{(\A,\B,k'+1)}_n\right\|^2_2
  \leq&  \frac{2}{\lambda_n^{(\textrm{min})}}\cdot \left|\nlog_n(\bm{T}\hat {\bm{\beta}}^{(\A,\B,k')}_n) - \nlog_n(\bm{T}\hat{\bm{\beta}}^{(\A,\B,k'+1)}_n)\right|\\
  \leq&  \frac{2\csthes_n }{\lambda_n^{(\textrm{min})}}\cdot(1 + \eta_n)\cdot  \eta_n^{k'}.
\end{align*}
Therefore, for each pair  $k_{(1)}'< k_{(2)}'$  from $\mathbb N$,
\begin{align}
  \bigl\|\TransformMat\hat{\bm{ \beta}}^{(\A,\B,k_{(1)}')}_n - \TransformMat\hat{\bm{ \beta}}^{(\A,\B,k_{(2)}')}_n\bigr\|_2 
  \leq&
  \sum_{h = k_{(1)}'}^{k_{(2)}'-1}\|\TransformMat\hat{\bm{\beta}}^{(\A,\B,h)}_n - \TransformMat\hat{\bm{\beta}}^{(\A,\B,h+1)}_n\|_2\label{eq_ineqBLCONV}\\
  \leq&  \sum_{h = k_{(1)}'}^{\infty} \|\TransformMat\hat{\bm{\beta}}^{(\A,\B,h)}_n - \TransformMat\hat{\bm{\beta}}^{(\A,\B,h+1)}_n\|_2\nonumber\\
  \leq&  \sqrt{2\csthes_n\cdot(1 + \eta_n)/\lambda_n^{(\textrm{min})}} \cdot \sum_{h = k_{(1)}'}^{\infty} \eta_n^{h/2}.\nonumber
\end{align}
Since $\eta_n<1$, the power series $\sum_{h = k_{(1)}'}^{\infty} (\sqrt{\eta_n})^{h}$ converges, and the right-hand side of the above inequality goes to 0 as $k_{(1)}'\rightarrow\infty$. Thus, $\{\TransformMat\hat {\bm{\beta}}^{(\A,\B,k')}_n\}$ is a Cauchy sequence in $\mathbb R$ and it converges. By the convergence of $\nlog_n(\TransformMat\hat{\bm{\beta}}^{(\A,\B,k')}_n)$ to $\nlog_n(\check{\bm{\beta}}_n)$ from Inequality~\eqref{eq_gAB_gabc_converge}, the fact that $\nlog_n(\bm{\beta})$ is a continuous function, and the uniqueness of $\check{\bm{\beta}}_n$, we have   $\TransformMat\hat{\bm{\beta}}^{(\A,\B,k')}_n\rightarrow \check{\bm{\beta}}_n$ as $k'\rightarrow\infty$. 

It remains to show the linear convergence rate with factor $q_n^{(1)}$ in Inequality~\eqref{eq_betaConvqRes}.
Take $k_{(1)}' = k'$ and let $k_{(2)}'$ on the left-hand side of  Inequality~\eqref{eq_ineqBLCONV} go to infinity, we have 
\begin{align*}
    \|\TransformMat\hat {\bm{\beta}}^{(\A,\B,k')}_n - \check{\bm{\beta}}_n\|_2 
  \leq &
  \sqrt{2\csthes_n\cdot(1 + \eta_n)/\lambda_n^{(\textrm{min})} } \cdot \sum_{h = k'}^{\infty} \eta_n^{h/2}\\
  =& \sqrt{2\csthes_n\cdot(1 + \eta_n)/\lambda_n^{(\textrm{min})} } \cdot \sum_{h = 0}^{\infty} \eta_n^{h/2} \cdot \eta_n^{k'/2}.
\end{align*}
Therefore, we obtain Inequality~\eqref{eq_betaConvqRes}.
\end{proof}

\subsection{Uniform convergence of the estimated parameters}\label{subsec_uniform_convergence_parameters}
The results in the previous two subsections show that when $\TransformMat\hat{\bm{\beta}}^{(\A,\B,0)} \in\InitSet$,
\begin{equation}\label{eq_betaCOnvqUniform}
 \|\TransformMat\hat{\bm{\beta}}^{(\A,\B,k')}_n - \check{\bm{\beta}}_n\|_2\leq (q_n^{(1)})^{k'}\cdot q_n^{(2)}
\end{equation}
holds for all $k'\in \mathbb N$ with probability going to 1 as $n\rightarrow\infty$, where  $q_{n}^{(1)}$ and  $q_{n}^{(2)}$ are defined in \eqref{eq_q1Def} and \eqref{eq_q2Def}, respectively. 
To prove $\P(A_n(q_1,q_2))\rightarrow 1$ as $n\rightarrow\infty$, it remains to show that $q_{n}^{(1)}$ is bounded away from 0 and 1, and $q_{n}^{(2)}$ is positive and upper bounded with probability going to 1 as $n\rightarrow\infty$. We show these results by the convergence of components in $q_{n}^{(1)}$ and  $q_{n}^{(2)}$. 

Let $\lambda^{(\textrm{min})}_* \de \inf_{\bm{\beta}\in \ThetaSet}\lambda^{(\textrm{min})}\bigl(\nabla^2\nlog_*(\bm{\beta}) \bigr)$, $
    \lambda^{(\textrm{max})}_* \de \sup_{\bm{\beta}\in \ThetaSet}\lambda^{(\textrm{max})}\bigl(\nabla^2\nlog_*(\bm{\beta}) \bigr)$, and $\csthes_* \de \sup_{\bm{\beta}^{(1)}, \bm{\beta}^{(2)}\in\ThetaSet}|\nlog_*(\bm{\beta}^{(1)}) - \nlog_*(\bm{\beta}^{(2)})|$. 
    By Condition~\ref{cond_sUconv} and Lemma~\ref{lem_ctsEigen}, 
    \begin{equation}\label{eq_lambdaCconv}
         |\lambda^{(\textrm{min})}_n - \lambda^{(\textrm{min})}_*|\limp 0,\ |\lambda^{(\textrm{max})}_n - \lambda^{(\textrm{max})}_*|\limp 0,\ |\csthes_n - \csthes_*|\limp 0
    \end{equation}
as $n\rightarrow\infty$. 
Let
\begin{equation}\label{eq_etaStar_def}
\eta_* \de 1 - \frac{(\lambda_*^{(\textrm{min})})^3 }{4(\lambda_*^{(\textrm{max})})^3}.
\end{equation}
By Condition~\ref{cond_pdHessian}, we have $\lambda_*^{(\textrm{min})}>0$. Therefore, $0<\eta_*<1$. We define constants
$$
     \epsilon_2 \de \min\{|1-\eta_*|/2, \eta_*/2\},\quad
     \epsilon_3  \de \lambda^{(\textrm{min})}_*/2,\quad
     \epsilon_4  \de \csthes_*/2,
$$ 
and the event
$$\mathcal E_n \de \biggl\{|\eta_n - \eta_*| <\epsilon_2\ \text{ and } |\lambda^{(\textrm{min})}_n-\lambda^{(\textrm{min})}_*|<\epsilon_3\   \text{ and } |\csthes_n - \csthes_*|<\epsilon_4 \biggr\}.$$ 
By the results in \eqref{eq_lambdaCconv},
\begin{equation}\label{eq_EProbgoesToOne}
    \P(\mathcal E_n)\rightarrow 1\text{ as }n\rightarrow\infty.
\end{equation} 
Next, we let $q_1 \de \sqrt{\eta_*+\epsilon_2}$ and 
\begin{align*}
    q_2 =&\ \sqrt{2(\csthes_* + \epsilon_4)\cdot(1 + \eta_* + \epsilon_2)/(\lambda_{*}^{(\textrm{min})} -\epsilon_3)} \cdot \sum_{h = 0}^{\infty} (\eta_* + \epsilon_2)^{h/2}\\
    \leq &\ \sqrt{ 3\csthes_*\cdot(1 + \eta_* + \epsilon_2)/\lambda_*^{(\textrm{min})} } \cdot \sum_{h = 0}^{\infty} (\eta_* + \epsilon_2)^{h/2}.
\end{align*}
Then, by the definitions of $q_{n}^{(1)}$ and  $q_{n}^{(2)}$, it can be verified that
% $$\P\biggl(\{ \eta_* - \epsilon_2< q_n^{(1)}<q_1\}\ \text{ and }  \{q_n^{(2)} < q_2\} \biggr)\rightarrow 1,$$
$$\P\biggl(\{q_n^{(1)}<q_1<1\},\\\{q_n^{(1)}> \sqrt{\eta_* - \epsilon_2} >0\}, \text{ and }  \{q_n^{(2)} < q_2\} \biggr)\rightarrow 1,$$
as $n\rightarrow\infty$. Therefore, we obtain the result that $\P(A_n(q_1,q_2))\rightarrow 1$ as $n\rightarrow\infty$.

\subsection{Asymptotic normality}\label{sec_proofCorollary1}
We have
$\|\hat{\bm{\beta}}^{(k)}_n - \check{\bm{\beta}}_n\|_2 = O_p(q_1^k),$
as $n\rightarrow\infty$ and $k\rightarrow\infty$. Together with Condition~\ref{cond_AsymNorm}, it suffices to show that 
$\hat{\bm{\beta}}^{(k)}_n = \check{\bm{\beta}}_n + o_p(1/\sqrt{n})$, which is equivalent to
\begin{align*}
  & q_1^k\sqrt{n}\rightarrow0,\\
    \Longleftrightarrow& k \cdot\log q_1+ \frac{1}{2}\log n\rightarrow -\infty,\\
    \Longleftrightarrow& \log n\cdot\biggl(\frac{k}{\log n} \cdot\log q_1+ \frac{1}{2}\biggr) \rightarrow -\infty,
\end{align*}
as $n\rightarrow\infty$. Accordingly, it suffices to take a $k$ such that  $$\frac{k}{\log n} \cdot\log q_1+ \frac{1}{2}<0.$$
Since $0<q_1<1$, the above inequality holds when $k/\log n$ is sufficiently large. Therefore, we complete the proof by taking an arbitrary $q_3>-1/(2\log q_1)$ and let $k\geq q_3\log n$.

\section{Finite Sample Convergence Results}\label{sec_prof_coro_finite_sample}
We first prove Corollary~\ref{coro_finite_sample_convergence} in Section~\ref{sec_prof_coro_finite_sample_Detal}. Then, we provide a discussion about extending the theoretical result to the scenario with a changing $p$ in Section~\ref{subsec_changingp}.
\subsection{Proof of Corollary~\ref{coro_finite_sample_convergence}}\label{sec_prof_coro_finite_sample_Detal}
We prove Corollary~\ref{coro_finite_sample_convergence} by bounding the probabilities of the events in \eqref{eq_OnNotEmpty}, \eqref{eq_OinOn}, \eqref{eq_Mnbeta_strictly_convex}, and \eqref{eq_EProbgoesToOne}.

First, for \eqref{eq_OnNotEmpty} and \eqref{eq_OinOn}, we take $\Mb_1 = \bigl(\min_{\bm{\beta}\in\BDset} \nlog_*(\bm{\beta}) - \nlog_*(\bm{\beta}_*)\bigr)/4$. In the following proof, we show $\InitSet_n\neq\emptySet$  by $\bm{\beta}_*\in \InitSet_n$.
When 
$$
\sup_{\bm{\beta}\in \ThetaSet}\left|\nlog_n(\bm{\beta})-\nlog_*(\bm{\beta})\right|< \Mb_1, 
$$
we have 
\begin{align*}
\nlog_n(\bm{\beta}_*) \leq &\ \nlog_*(\bm{\beta}_*) + \bigl( \min_{\bm{\beta}\in\BDset}\nlog_*(\bm{\beta}) - \nlog_*(\bm{\beta}_*)\bigr)/4\\ 
=&\ (3/4)\cdot \nlog_*(\bm{\beta}_*) + (1/4)\cdot \min_{\bm{\beta}\in\BDset}\nlog_*(\bm{\beta})\\
\leq&\ 
(1/4)\cdot \nlog_*(\bm{\beta}_*) + (3/4)\cdot \min_{\bm{\beta}\in\BDset}\nlog_*(\bm{\beta})\\
= &\ \min_{\bm{\beta}\in\BDset}\nlog_*(\bm{\beta})-\bigl( \min_{\bm{\beta}\in\BDset}\nlog_*(\bm{\beta}) - \nlog_*(\bm{\beta}_*)\bigr)/4.
\end{align*}
Therefore, $\InitSet_n\neq\emptySet$ holds.
Also, for each $\bm{\beta}\in\InitSet$ we have
\begin{align*}
\nlog_n(\bm{\beta})\leq&\  \nlog_*(\bm{\beta}) + \Mb_1\\
\leq&\ \min_{\bm{\beta}\in\BDset}\nlog_*(\bm{\beta})-\bigl( \min_{\bm{\beta}\in\BDset}\nlog_*(\bm{\beta}) - \nlog_*(\bm{\beta}_*)\bigr)/2 + \Mb_1\\
\leq&\ \min_{\bm{\beta}\in\BDset}\nlog_*(\bm{\beta})-\bigl( \min_{\bm{\beta}\in\BDset}\nlog_*(\bm{\beta}) - \nlog_*(\bm{\beta}_*)\bigr)/4.
\end{align*}
Therefore, $\InitSet\subseteq\InitSet_n$ holds. Accordingly, we have 
\begin{equation}\label{eq_finite_res1}
\sup_{\bm{\beta}\in \ThetaSet}\left|\nlog_n(\bm{\beta})-\nlog_*(\bm{\beta})\right|< \Mb_1 \implies \InitSet_n\neq\emptySet\text{ and }\InitSet\subseteq\InitSet_n.
\end{equation}

Second, for \eqref{eq_Mnbeta_strictly_convex}, we have
\begin{equation}\label{eq_finite_res2}
\bm{X} \text{ has full column rank}\implies \nlog_n(\bm{\beta})\text { is strictly convex}.
\end{equation}

Finally, for \eqref{eq_EProbgoesToOne},
we take $\Mb_2>0$ such that 
$$
\Mb_2\leq \frac{\lambda^{(\textrm{min})}_*}{\cst p},
$$
where $\cst = 8$.
% $$
% |\eta_n - \eta_*| <\epsilon_2\ \text{ and } |\lambda^{(\textrm{min})}_n-\lambda^{(\textrm{min})}_*|<\epsilon_3.
% $$
Assume
$
\sup_{\bm{\beta}\in \ThetaSet}\left\|\nabla^2\nlog_n(\bm{\beta})-\nabla^2\nlog_*(\bm{\beta})\right\|_2< \Mb_2. 
$
By Lemma~\ref{lem_ctsEigen}, 
\begin{align*}
|\lambda_n^{(\textrm{min})} - \lambda_*^{(\textrm{min})}|\leq&\ p\cdot\Mb_2,\\
|\lambda_n^{(\textrm{max})} - \lambda_*^{(\textrm{max})}|\leq&\ p\cdot\Mb_2.
\end{align*}
Therefore, since we also have $\Mb_2\leq \lambda^{(\textrm{max})}_*/\cst p$, we have
\begin{align*}
|\eta_n - \eta_*|=&
\frac{1}{4} \biggl|\biggl(\frac{\lambda_n^{(\textrm{min})}}{\lambda_n^{(\textrm{max)}}}\biggr)^3 - \biggl(\frac{\lambda_*^{(\textrm{min})}}{\lambda_*^{(\textrm{max)}}}\biggr)^3\biggr|\\
=&\ \frac{1}{4} \biggl|\frac{(\lambda_n^{(\textrm{min})})^3 (\lambda_*^{(\textrm{max)}})^3 - (\lambda_*^{(\textrm{min})})^3 (\lambda_n^{(\textrm{max)}})^3}{(\lambda_n^{(\textrm{max)}})^3 (\lambda_*^{(\textrm{max)}})^3} \biggr|\\
\leq &\ 
\frac{1}{4} \frac{(\lambda_*^{(\textrm{min})} + p\Mb_2)^3 (\lambda_*^{(\textrm{max)}}+ p\Mb_2))^3 - (\lambda_*^{(\textrm{min})}- p\Mb_2))^3 (\lambda_*^{(\textrm{max)}}- p\Mb_2))^3}{(\lambda_*^{(\textrm{max)}}- p\Mb_2))^3 (\lambda_*^{(\textrm{max)}})^3} \\
\leq &\ 
\frac{1}{4} \frac{((\cst + 1)^6/\cst^6)(\lambda_*^{(\textrm{min})} )^3 (\lambda_*^{(\textrm{max)}})^3 - ((\cst-1)^6/\cst^6)(\lambda_*^{(\textrm{min})})^3 (\lambda_*^{(\textrm{max)}})^3}{((\cst-1)^3/\cst^3)(\lambda_*^{(\textrm{max)}})^6} \\
\leq&\ 0.03\frac{(\lambda_*^{(\textrm{min})})^3}{(\lambda_*^{(\textrm{max)}})^3}.
\end{align*}
Accordingly, we have 
$$
|\eta_n - \eta_*|  < \frac{1}{8}\frac{(\lambda_*^{(\textrm{min})})^3}{(\lambda_*^{(\textrm{max)}})^3}
=(1-\eta_*)/2
$$
and 
$$
\eta_*/2 - |\eta_n - \eta_*|\geq 1/2 -  0.155\frac{(\lambda_*^{(\textrm{min})})^3}{(\lambda_*^{(\textrm{max)}})^3} >0.
$$
Thus, 
\begin{equation}\label{eq_finite_res3}
\sup_{\bm{\beta}\in \ThetaSet}\left|\nabla^2\nlog_n(\bm{\beta})-\nabla^2\nlog_*(\bm{\beta})\right|< \Mb_2, \implies |\eta_n - \eta_*| <\epsilon_2\ \text{ and } |\lambda^{(\textrm{min})}_n-\lambda^{(\textrm{min})}_*|<\epsilon_3.
\end{equation}
Next, we take 
$$
\Mb_3 = \csthes_*/4.
$$
Assuming
$
\sup_{\bm{\beta}\in \ThetaSet}\left|\nlog_n(\bm{\beta})-\nlog_*(\bm{\beta})\right|< \Mb_3, 
$
we have
\begin{align*}
\csthes_n =&\ \sup_{\bm{\beta}^{(1)}, \bm{\beta}^{(2)}\in\ThetaSet}|\nlog_n(\bm{\beta}^{(1)}) - \nlog_n(\bm{\beta}^{(2)})|\\
=&\ \sup_{\bm{\beta}^{(1)}, \bm{\beta}^{(2)}\in\ThetaSet}|\nlog_n(\bm{\beta}^{(1)}) - \nlog_*(\bm{\beta}^{(1)}) - \nlog_n(\bm{\beta}^{(2)}) +\nlog_*(\bm{\beta}^{(2)}) + \nlog_*(\bm{\beta}^{(1)})- \nlog_*(\bm{\beta}^{(2)})|  \\ 
\leq&\  \sup_{\bm{\beta}^{(1)}, \bm{\beta}^{(2)}\in\ThetaSet}|\nlog_n(\bm{\beta}^{(1)}) - \nlog_*(\bm{\beta}^{(1)})| + \sup_{\bm{\beta}^{(1)}, \bm{\beta}^{(2)}\in\ThetaSet}|\nlog_n(\bm{\beta}^{(2)}) -\nlog_*(\bm{\beta}^{(2)})| + \\
&\sup_{\bm{\beta}^{(1)}, \bm{\beta}^{(2)}\in\ThetaSet}|\nlog_*(\bm{\beta}^{(1)})- \nlog_*(\bm{\beta}^{(2)})| \\
\leq&\ \sup_{\bm{\beta}^{(1)}, \bm{\beta}^{(2)}\in\ThetaSet}|\nlog_*(\bm{\beta}^{(1)})- \nlog_*(\bm{\beta}^{(2)})|  + 2\Mb_3\\
=&\ \csthes_* + 2\Mb_3.
\end{align*}
Since we can similarly show that $\csthes_* \leq \csthes_n + 2\Mb_3$, 
\begin{equation}\label{eq_finite_res4}
\sup_{\bm{\beta}\in \ThetaSet}\left|\nlog_n(\bm{\beta})-\nlog_*(\bm{\beta})\right|< \Mb_3 \implies |\csthes_n - \csthes_*|<\epsilon_4.
\end{equation}
We take $\Mb = \min(\Mb_1, \Mb_2, \Mb_3)$ and complete the proof by the results in \eqref{eq_finite_res1}, \eqref{eq_finite_res2}, \eqref{eq_finite_res3}, and \eqref{eq_finite_res4}.

\subsection{Remarks on the extension to a changing $p$}\label{subsec_changingp}
When $p$ is allowed to change with $n$ but smaller than $n$, the required modifications involve the proofs in Sections~\ref{subsec_initSet} and \ref{subsec_uniform_convergence_parameters}.

In Section~\ref{subsec_initSet}, the set $\ThetaSet\in\R^p$ will change as the dimension $p$ changes. Also, the difference between
$$\min_{\bm{\beta}\in\BDset}\nlog_*(\bm{\beta})-\bigl( \min_{\bm{\beta}\in\BDset}\nlog_*(\bm{\beta}) - \nlog_*(\bm{\beta}_*)\bigr)/4
$$
and 
$$
\min_{\bm{\beta}\in\BDset}\nlog_*(\bm{\beta})-\bigl( \min_{\bm{\beta}\in\BDset}\nlog_*(\bm{\beta}) - \nlog_*(\bm{\beta}_*)\bigr)/2 $$
needs to be bounded away from zero.  Additionally, the rates of 
$\sup_{\bm{\beta}\in \ThetaSet}\left|\nlog_n(\bm{\beta})-\nlog_*(\bm{\beta})\right|$ converging to zero and 
$\P(\bm{X}\text{ has full rank})$
converging to one
will be dependent on $p$.

In Section~\ref{subsec_uniform_convergence_parameters},
$\lambda^{(\textrm{min})}_*$, $\lambda^{(\textrm{max})}_*$, and $\csthes_*$ will be dependent on $p$. To guarantee the convergence property, we need $\lambda^{(\textrm{min})}_*$ to be bounded away from zero and $\lambda^{(\textrm{max})}_*$ and $\csthes_*$ to be upper bounded as $p$ changes.
Moreover, the rate $\cRate$ will depend on $p$.

\section{Privacy Protection of AE-AL testing}\label{sec_AEAL_local_priv}
We present the definition of $\epsilon$-local differential privacy
in Section~\ref{sec_localdiff_def} and prove Proposition~\ref{prop_local_diff_privacy} in Section~\ref{sec_prop1Proof}.

\subsection{Definition of $\epsilon$-local differential privacy}\label{sec_localdiff_def}

Let $\mathcal X\subseteq \R^{p}$ denote the support of $X$ and $\epsilon$ be a positive constant.
The definition of $\epsilon$-local differential privacy \citep{kasiviswanathan2011can, duchi2013local, duchi2018minimax} is as follows:
\begin{definition}
Let a privacy mechanism be a set of conditional distributions $\P(X^{(\text {priv})}\mid X^{(\mathrm{raw})} = \bm{x}^{(\mathrm{raw})})$ for all $\bm{x}^{(\mathrm{raw})}\in\mathcal X$ that transforms  $\bm{x}^{(\mathrm{raw})}$ to privatized data $\bm{x}^{(\text {priv})}$.
We say that 
$\bm{x}^{(\text {priv})}$ is a $\epsilon$-local differential private view of  $\bm{x}^{(\mathrm{raw})}$ if for all $\bm{x}^{(\text {raw},1)}, \bm{x}^{(\text {raw},2)}\in\mathcal X$,
$$
\forall S \in \sigma\bigl(X^{(\text {priv})}\bigr): \frac{\P(X^{(\text {priv})}\in S\mid X^{(\mathrm{raw})} = \bm{x}^{(\mathrm{raw,1})})}{\P(X^{(\text {priv})}\in S\mid X^{(\mathrm{raw})} = \bm{x}^{(\mathrm{raw,2})})} \leq \exp(\epsilon)
,$$
 where $ \sigma\bigl(X^{(\text {priv})}\bigr)$ is a $\sigma$-filed on $X^{(\text {priv})}$.
\end{definition}

\subsection{Proof of Proposition~\ref{prop_local_diff_privacy}}\label{sec_prop1Proof}
Let $\noiseMat$ be a $\id$-dimensional random vector with its elements i.i.d.\ from the Laplace distribution with mean zero and scale parameter $2\id\cdot\cstt/\epsilon$. Thus, the rows of the matrix $\bm{\noiseMat}$ can be considered as i.i.d.\ sample from the distribution of $\noiseMat$.
The idea of the proof is to consider $\noiseMat$ as an additional covariate held by \B  and following a proof similar to the one of Theorem~\ref{thm_Wn}. In this way, we are replacing the old assisted learning problem with
\begin{equation}\label{eq_originalProb}
    \nlog_*(\bm{\beta}) = \E\bigl(m(Y,X^\T\bm{\beta})\bigr),\quad \nlog_*^{(\bm{U})}(\bm{\beta}^{(\bm{U})})
=\E\bigl(\mest\bigl(Y,{X^{(\bm{U})}}^\T \bm{\beta}^{(\bm{U})} \bigr)\mid\bm{U}\bigr),
\end{equation}
by a new assisted learning problem with the two losses replaced by
\begin{equation}\label{eq_new_Prob}
\nlog_*^{(\noise)}(\bm{\beta}^{(\noise)})\de \E\bigl(m(Y,(X^\T,\noiseMat^\T )\bm{\beta}^{(\noise)})\bigr), \quad \nlog_*^{(\bm{U}')}(\bm{\beta}^{(\bm{U}')})\de \E\bigl(\mest\bigl(Y,{X^{(\bm{U}')}}^\T \bm{\beta}^{(\bm{U}')} \bigr)\mid\bm{U}\bigr),
\end{equation}
respectively, where 
$X^{(\bm{U}')}\de (X^\T, \noiseMat^\T)\MUp$,
\begin{equation*}
\MUp \de 
\begin{pmatrix}
\bm{I}_{p_a\times p_a} & \bm{0}& \bm{0}\\
\bm{0}& \bm{0} &\bm{U}^{(p_b)}\\
\bm{0}&\bm{I}_{p_c\times p_c}&\bm{U}^{(p_c)}\\
\bm{0} & \bm{0} & \bm{I}_{\id\times\id}
\end{pmatrix},
\end{equation*}
$\bm{I}_{\id\times\id}$ is an $\id\times\id$ identity matrix, and $\bm{\beta}^{(\bm{U}')}$ is the coefficient vector of $X^{(\bm{U}')}$.

For $\textrm{H}_0$, recall that we have $\bm{\beta}^{(b)}_* = \bm{0}$ 
in the original assisted learning problem~\eqref{eq_originalProb}.  To generalize the proof to the new assisted learning problem~\eqref{eq_new_Prob}, it suffices to show that the components in $\bm{\beta}^{(\noise)}_*\de \argmin_{\bm{\beta}^{(\noise)}\in\mathbb R^{p+\id}}\E\bigl(m(Y,(X^\T,\noiseMat^\T) \bm{\beta}^{(\noise)})\bigr)$ that correspond to $X^{(\B)}$ and $\noiseMat$ are all $0$. 
Let $\bm{\beta}^{(\noiseMat)}$ denote the coefficients of $\bm{\beta}^{(\noise)}$ that correspond to $\noiseMat$.
By our assumption, the function $\lp \mapsto \mest(Y, \lp)$ is almost surely convex conditional on $Y$. Therefore, it can be shown that $\E\bigl(m(Y,X^\T\bm{\beta} + \noiseMat^\T \bm{\beta}^{(\noiseMat)})\mid Y, X\bigr)$ is almost surely a convex function of $\noiseMat$ for each fixed $\bm{\beta}^{(\noise)} = (\bm{\beta}, \bm{\beta}^{(\noiseMat)})$. Also, by the independence between $\noiseMat$ and $(Y,X)$, we have $\E(\noiseMat\mid Y, X) = \E(\noiseMat) = \bm{0}$. 
Therefore, by Jensen's inequality,
\begin{align*}
    \E\bigl(m(Y,X^\T\bm{\beta} + \noiseMat^\T \bm{\beta}^{(\noiseMat)})\bigr) = &\
\E\bigl(\E\bigl(m(Y,X^\T\bm{\beta} + \noiseMat^\T \bm{\beta}^{(\noiseMat)})\mid Y, X\bigr) \bigr)\\
\geq &\  \E\bigl(m(Y,X^\T\bm{\beta} + \E\bigl(\noiseMat^\T \bm{\beta}^{(\noiseMat)}\mid Y, X)\bigr) \bigr)\\=&\ \E\bigl(m(Y,X^\T\bm{\beta} )\bigr).
\end{align*}
Thus, $\bm{\beta}^{(\noise)}_* = ({\bm{\beta}^{(a)}_*}^\T,{\bm{\beta}^{(b)}_*}^\T,{\bm{\beta}^{(c)}_*}^\T,\bm{0}^\T)^\T$ is a minimizer of $ \E\bigl(m(Y,X^\T\bm{\beta} + \noiseMat^\T \bm{\beta}^{(\noiseMat)})\bigr)$, where recall that $({\bm{\beta}^{(a)}_*}^\T,{\bm{\beta}^{(b)}_*}^\T,{\bm{\beta}^{(c)}_*}^\T)^\T$ is the minimizer of $\nlog_*(\bm{\beta})$ from the original assisted learning problem~\eqref{eq_originalProb}. Together with the strict convexity of $\mest(Y, \lp)$ with respect to $\lp$, it can be shown that
$$\bm{\beta}^{(\noiseMat)}_* = ({\bm{\beta}^{(a)}_*}^\T,{\bm{\beta}^{(b)}_*}^\T,{\bm{\beta}^{(c)}_*}^\T,\bm{0}^\T)^\T = ({\bm{\beta}^{(a)}_*}^\T,\bm{0}^\T,{\bm{\beta}^{(c)}_*}^\T,\bm{0}^\T)^\T $$
is the unique minimizer.  Accordingly, the components of 
$\bm{\beta}^{(\noiseMat)}_*$ corresponding to $X^{(\B)}$ and $\noise$ are all $0$. 
% {\color{green}Comment before submission. \textbf{Question:} what if the Jensen's inequality is applied to $X$ when $\E X = 0$? Will we show that the optimal $\bm{\beta}=\bm{0}$ even for the original assisted learning problem? The answer is not necessary, since $\E(X\mid Y)\neq \E(X)$ when $X$ and $Y$ are not independent, which is naturally required.}

For $\textrm{H}_1$,
let $\bm{\beta}^{(\bm{U}')}_* = ({\bm{\beta}^{(\bm{U}',\A)}_*}^\T, {\bm{\beta}^{(\bm{U}',\id)}_*}^\T)^\T$ be the minimizer of $\nlog_*^{(\bm{U}')}(\bm{\beta}^{(\bm{U}')})$ where let $\bm{\beta}^{(\bm{U}',\id)}_*$  and $\bm{\beta}^{(\bm{U}',\id)}_*$ are the coefficients corresponding to $X^{(\A)}$ and $X^{(\bm{U}')}$, respectively. 
We follow a proof similar to the one of Theorem~\ref{thm_Wn} in Section~\ref{sec_H1} by showing $\bm{\beta}^{(\bm{U}',\id)}_*$, which is the coefficient in $\bm{\beta}^{(\bm{U}')}_*$ that corresponds to the sketchy data, is not $\bm{0}$. 
To prove this, we proceed by contradiction, with the following outline:
\begin{enumerate}
\item When 
$\bm{\beta}^{(\bm{U}',\id)}_*=\bm{0}$, we first show that $\nabla\nlog_*^{(\noise)}(\MUp \bm{\beta}^{(\bm{U}')}_*) \neq \bm{0}$.
\item Based on the above result, we show that $\nabla \nlog^{(\bm{U}')}_*( \bm{\beta}^{(\bm{U}')}_* )\neq \bm{0}$, which contradicts with the definition of $\bm{\beta}^{(\bm{U}')}_*$.
\end{enumerate}

When 
$\bm{\beta}^{(\bm{U}',\id)}_*=\bm{0}$, we have 
\begin{equation}\label{eq_betaUprimeEq}
\MUp \bm{\beta}^{(\bm{U}')}_* =
\MUp
\begin{pmatrix}
\bm{\beta}^{(\bm{U}',a)}_*\\
\bm{\beta}^{(\bm{U}',c)}_*\\
\bm{0}_{\id}
\end{pmatrix}
=
\begin{pmatrix}
\bm{\beta}^{(\bm{U}',a)}_*\\
\bm{0}_{p_b+\id}\\
\bm{\beta}^{(\bm{U}',c)}_*
\end{pmatrix},
\end{equation}
where $\bm{\beta}^{(\bm{U}',a)}_*$ and  $\bm{\beta}^{(\bm{U}',c)}_*$ correspond to $\bm{X}^{(a)}$ and $\bm{X}^{(c)}$, respectively, and the subscripts of $\bm{0}$ denote the number of zeros in the vector.
Recall that under $\textrm{H}_1$, $\bm{\beta}^{(b)}_* \neq \bm{0}$, where $ \bm{\beta}_*=({\bm{\beta}^{(a)}_*}^\T,{\bm{\beta}^{(b)}_*}^\T,{\bm{\beta}^{(c)}_*}^\T)^\T\de \underset{\bm{\beta}\in\mathbb R^{p}}{\argmin}\  \nlog_*(\bm{\beta})$.
Therefore, 
$$
\nlog_*(\bm{\beta}_*) <\nlog_*\bigl(({\bm{\beta}^{(\bm{U}',a)}_*}^\T, \bm{0}_{p_b}^\T, {\bm{\beta}^{(\bm{U}',c)}_*}^\T)^\T). 
$$
Since 
\begin{align*}
\nlog_*(\bm{\beta}_*)  =&\  \nlog_*^{(\noise)}\bigl(({\bm{\beta}_*}^\T,\bm{0}_{\id}^\T)^\T\bigr),\\
\nlog_*\bigl(({\bm{\beta}^{(\bm{U}',a)}_*}^\T, \bm{0}_{p_b}^\T, {\bm{\beta}^{(\bm{U}',c)}_*}^\T)^\T) =&\ \nlog_*^{(\noise)}\bigl(({\bm{\beta}^{(\bm{U}',a)}_*}^\T, \bm{0}_{p_b + \id}^\T, {\bm{\beta}^{(\bm{U}',c)}_*}^\T)^\T) = \nlog_*^{(\noise)}\bigl(\MUp \bm{\beta}^{(\bm{U}')}_*\bigr),
\end{align*}
we have 
$$
\nlog_*^{(\noise)}\bigl(({\bm{\beta}_*}^\T,\bm{0}_{\id}^\T)^\T\bigr)< \nlog_*^{(\noise)}\bigl(\MUp \bm{\beta}^{(\bm{U}')}_*\bigr). 
$$
Thus, $\MUp\bm{\beta}^{(\bm{U}')}_*$ is not the minimizer of $\nlog_*^{(\noise)}(\bm{\beta}^{(\noise)})$ and $\nabla\nlog_*^{(\noise)}(\MUp \bm{\beta}^{(\bm{U}')}_*) \neq \bm{0}$. Additionally, 
we have
\begin{align*}
&\nabla \nlog^{(\bm{U}')}_*( \bm{\beta}^{(\bm{U}')}_*  ) \\
=& {\MUp}^\T\nabla \nlog_*^{(\noise)}(\MUp \bm{\beta}^{(\bm{U}')}_*) \\
=&
\begin{pmatrix}
	\nabla_{\bm{\beta}^{(a)}}  \nlog_*^{(\noise)}(\MUp \bm{\beta}^{(\bm{U}')}_*) \\
		\nabla_{\bm{\beta}^{(c)}}   \nlog_*^{(\noise)}(\MUp \bm{\beta}^{(\bm{U}')}_*) \\
{(\bm{U}}^{(p_b)})^\T \nabla_{\bm{\beta}^{(b)}}   \nlog_*^{(\noise)}(\MUp \bm{\beta}^{(\bm{U}')}_*) +
{(\bm{U}}^{(p_c)})^\T \nabla_{\bm{\beta}^{(c)}}   \nlog_*^{(\noise)}(\MUp \bm{\beta}^{(\bm{U}')}_*) +
\nabla_{\bm{\beta}^{(\noiseMat)}}   \nlog_*^{(\noise)}(\MUp \bm{\beta}^{(\bm{U}')}_*)
\end{pmatrix}.
\end{align*}
When $\nabla_{\bm{\beta}^{(a)}}  \nlog_*^{(\noise)}(\MUp \bm{\beta}^{(\bm{U}')}_*)\neq \bm{0}$ or $\nabla_{\bm{\beta}^{(a)}}  \nlog_*^{(\noise)}(\MUp \bm{\beta}^{(\bm{U}')}_*)\neq \bm{0}$, we obtain a contraction with the definition of $\bm{\beta}^{(\bm{U}')}_*$. Otherwise, we have 
\begin{align*}
&\nabla \nlog^{(\bm{U}')}_*( \bm{\beta}^{(\bm{U}')}_*  ) \\
=&
\begin{pmatrix}
	\bm{0}\\
		\bm{0} \\
  \begin{pmatrix}
  {\bm{U}}^{(p_b)}\\
  \bm{I}_{\id\times\id}
  \end{pmatrix}^\T
  \begin{pmatrix}
  \nabla_{\bm{\beta}^{(b)}}   \nlog_*^{(\noise)}(\MUp \bm{\beta}^{(\bm{U}')}_*) \\
  \nabla_{\bm{\beta}^{(\noiseMat)}}   \nlog_*^{(\noise)}(\MUp \bm{\beta}^{(\bm{U}')}_*)
  \end{pmatrix}
\end{pmatrix}.
\end{align*}
By Equation~\eqref{eq_betaUprimeEq}, the vector $\bigl((\nabla_{\bm{\beta}^{(b)}}   \nlog_*^{(\noise)}(\MUp \bm{\beta}^{(\bm{U}')}_*))^\T, (\nabla_{\bm{\beta}^{(\noiseMat)}}   \nlog_*^{(\noise)}(\MUp \bm{\beta}^{(\bm{U}')}_*))^\T\bigr)^\T$ does not depend on $\bm{U}'$. Together with
\begin{equation*}
\P\biggl(\exists \bm{z}\in\mathbb R^{p_b + \id}, \text{ s.t. }\bm{z} \neq \bm{0}\text{ and }\bigl({\bigl(\bm{U}}^{(p_b)}\bigr)^\T,\bm{I}_{\id\times\id}\bigr)\bm{z} = \bm{0}\biggr) = 0,
\end{equation*} 
we have 
$$
\begin{pmatrix}
	\bm{0}\\
		\bm{0} \\
  \begin{pmatrix}
  {\bm{U}}^{(p_b)}\\
  \bm{I}_{\id\times\id}
  \end{pmatrix}^\T
  \begin{pmatrix}
  \nabla_{\bm{\beta}^{(b)}}   \nlog_*^{(\noise)}(\MUp \bm{\beta}^{(\bm{U}')}_*) \\
  \nabla_{\bm{\beta}^{(\noiseMat)}}   \nlog_*^{(\noise)}(\MUp \bm{\beta}^{(\bm{U}')}_*)
  \end{pmatrix}
\end{pmatrix}\neq \bm{0}
$$
holds almost surely. Thus, we obtain a contradiction and
complete the proof.

\section{Proof of 
Corollary~\ref{prop_suffGLM}}\label{sec_ProofOfCo1}
% It suffices to show Conditions~\ref{A2}, \ref{A3}, \ref{cond_exchInteDiff}, \ref{cond_oraExist}, \ref{cond_sUconv}, and \ref{cond_AsymNorm} hold given Conditions~\ref{cond_designMatFullrank}, \ref{A1},  \ref{cond_strictConvexloss}, \ref{cond_pdHessian},   \ref{cond_diffu}, \ref{dc}, \ref{cond_pdI}, \ref{Finite_exp_deri}, and \ref{lem_sbb_inm}.
The sketch of the proof is as follows. First, we prove Condition~\ref{cond_sUconv} by Lemmas~\ref{lem_bddExp} and \ref{lem_sUconv},  where we take $\ThetaSet$ to be $\ThetaSett$. Second, we show Condition~\ref{cond_exchInteDiff} and the differentiability of $\nlog_*^{(\bm{U})}(\bm{\beta}^{(\bm{U})})$ by Lemmas~\ref{lem_exch} and \ref{lem_diffbetaU}.
%Second, we show Conditions~\ref{A1} and \ref{cond_pdHessian} in Lemmas~\ref{lem_proofA1} and \ref{lem_cvx}, respectively. 
Third, we obtain Conditions~\ref{A2} and \ref{cond_oraExist} by Lemmas~\ref{lem_snConvex}-\ref{lem_Betaconv}. Fourth, we prove the result of Condition~\ref{A3} in Lemma~\ref{lem_proofOfCond4}. Fourth, we obtain the result of Condition~\ref{cond_AsymNorm} in Lemma~\ref{lem_proofOfCond10}.
Finally, we show Condition~\ref{cond_finiteSample} in Lemma~\ref{lem_additional_assumptions}.

% For notational convenience, let
% $$
%    \underset{\bm{z}\in\R^{p}}{\sup}|\g(\bm{z})| \de 
% \max\biggl\{\underset{\bm{z}\in\R^{p}}{\sup}|\g_1(\bm{z})|, \ldots, \underset{\bm{z}\in\R^{p}}{\sup}|\g_q(\bm{z})|\biggr\}, 
% $$
% where $\g$ is a function that maps a vector $\bm{z}\in \R^p$ to a vector, matrix, or three-dimensional array that consists of $\g_1(\bm{\bm{z}}), \cdots, \g_q(\bm{z})$.

Recall that $\|\cdot\|_{\infty}$ denotes the sup norm of a vector or matrix.
\begin{lemma}[Bounded expectations]\label{lem_bddExp}
By Conditions~\ref{dc} and 
% \ref{Finite_exp_deri}
\ref{cond_Fbounded}, we have
\begin{align}
  \E\biggl(\sup_{\bm{\beta}\in  \ThetaSet}\biggl\|\log f(Y\mid X^\T\bm{\beta}) \biggr\|_{\infty}\biggr) <& \infty,\label{eq_boundDeri0}\\
    \E\biggl(\sup_{\bm{\beta}\in  \ThetaSet}\biggl\| \nabla_{\bm{\beta}} \log f(Y\mid X^\T\bm{\beta}) \biggr\|_{\infty}\biggr) <& \infty,\label{eq_boundDeri1}\\
    \E\biggl(\sup_{\bm{\beta}\in  \ThetaSet}\biggl\|\nabla_{\bm{\beta}}^2  \log f(Y\mid X^\T\bm{\beta}) \biggr\|_{\infty} \biggr) <& \infty, \label{eq_boundDeri2}
\end{align}
where $\nabla_{\bm{\beta}}$ and $\nabla_{\bm{\beta}}^2$ denote the gradient and Hessian matrix with respect to $\bm{\beta}$, respectively.
\end{lemma}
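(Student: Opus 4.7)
The plan is to establish the three inequalities in the order \eqref{eq_boundDeri1}, \eqref{eq_boundDeri0}, \eqref{eq_boundDeri2}, so that each uses only the previous one together with Conditions~\ref{dc} and \ref{cond_Fbounded}. The ball $\ThetaSet = \ThetaSett$ is bounded, so let $r>0$ denote its radius, giving $\|\bm{\beta}-\bm{\beta}_*\|_2 \le r$ for every $\bm{\beta}\in \ThetaSet$. The main tool throughout is a first-order mean value expansion around $\bm{\beta}_*$, which converts a supremum over $\ThetaSet$ into the value at $\bm{\beta}_*$ plus a remainder controlled by a higher-order derivative.

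For \eqref{eq_boundDeri1}, I would exploit the pointwise identity $\|\bm{v}\bm{v}^\T\|_\infty = \|\bm{v}\|_\infty^2$ valid for any $\bm{v}\in\R^p$ (the maximum entry of the outer product is attained on the diagonal). Applying this to $\bm{v} = \nabla_{\bm{\beta}}\log f(Y\mid X^\T\bm{\beta})$ and taking the supremum over $\ThetaSet$ gives
\[
\sup_{\bm{\beta}\in \ThetaSet}\bigl\|\nabla_{\bm{\beta}}\log f(Y\mid X^\T\bm{\beta})\bigr\|_\infty^2 \;=\; \sup_{\bm{\beta}\in \ThetaSet}\bigl\|\nabla_{\bm{\beta}}\log f(Y\mid X^\T\bm{\beta})(\nabla_{\bm{\beta}}\log f(Y\mid X^\T\bm{\beta}))^\T\bigr\|_\infty,
\]
whose expectation is finite by Condition~\ref{cond_Fbounded}. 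Then Jensen's inequality (equivalently Cauchy--Schwarz) yields $\E(\sup_{\bm{\beta}\in\ThetaSet}\|\nabla_{\bm{\beta}}\log f(Y\mid X^\T\bm{\beta})\|_\infty) \le \sqrt{\E(\sup \|\cdot\|_\infty^2)} <\infty$. For \eqref{eq_boundDeri0}, the mean value theorem applied to $\bm{\beta}\mapsto \log f(Y\mid X^\T\bm{\beta})$ gives, for some $\tilde{\bm{\beta}}$ between $\bm{\beta}$ and $\bm{\beta}_*$, $|\log f(Y\mid X^\T\bm{\beta})| \le |\log f(Y\mid X^\T\bm{\beta}_*)| + \sqrt{p}\,r \cdot \|\nabla_{\bm{\beta}}\log f(Y\mid X^\T\tilde{\bm{\beta}})\|_\infty$; taking $\sup_{\bm{\beta}\in\ThetaSet}$ and then expectation, and combining the bound already proven in \eqref{eq_boundDeri1} with $\E\|\log f(Y\mid X^\T\bm{\beta}_*)\|_\infty<\infty$ from Condition~\ref{cond_Fbounded}, delivers the claim.

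Finally, for \eqref{eq_boundDeri2}, I would expand each mixed second partial $\partial^2_{jk}\log f(Y\mid X^\T\bm{\beta})$ around $\bm{\beta}_*$ by the mean value theorem, obtaining $|\partial^2_{jk}\log f(Y\mid X^\T\bm{\beta})| \le |\partial^2_{jk}\log f(Y\mid X^\T\bm{\beta}_*)| + \sum_{l=1}^p |\partial^3_{jkl}\log f(Y\mid X^\T\tilde{\bm{\beta}})|\,|\beta_l - \beta_{*,l}|$. Condition~\ref{dc} bounds each third partial almost surely by $\bdd(Y,X)$, so the remainder is at most $p\,r\,\bdd(Y,X)$. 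Maximizing over $j,k$ inside and then over $\bm{\beta}\in\ThetaSet$, and taking expectation, gives $\E(\sup_{\bm{\beta}\in\ThetaSet}\|\nabla^2_{\bm{\beta}}\log f(Y\mid X^\T\bm{\beta})\|_\infty) \le \E\|\nabla^2_{\bm{\beta}}\log f(Y\mid X^\T\bm{\beta}_*)\|_\infty + p\,r\,\E \bdd(Y,X)$, which is finite by Conditions~\ref{dc} and \ref{cond_Fbounded}. There is no genuine obstacle here; the only subtlety is ordering the three bounds so that the gradient estimate is available when bounding the log-likelihood, and making sure to use the boundedness of $\ThetaSet$ to absorb the linear factor $\|\bm{\beta}-\bm{\beta}_*\|$ into a constant.
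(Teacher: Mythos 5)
Your proposal is correct; all three bounds go through, and two of the three steps coincide with the paper's argument (the Hessian bound via a mean value expansion whose remainder is dominated by the third-derivative envelope $\bdd(Y,X)$ from Condition~\ref{dc}, and the log-likelihood bound via a mean value expansion controlled by the gradient bound). Where you differ is the gradient bound \eqref{eq_boundDeri1} and the resulting ordering: the paper proves \eqref{eq_boundDeri2} first and then obtains \eqref{eq_boundDeri1} by expanding $\nabla_{\bm{\beta}}\log f$ around $\bm{\beta}_*$, controlling the remainder with the Hessian envelope and handling the leading term $\|\nabla_{\bm{\beta}}\log f(Y\mid X^\T\bm{\beta}_*)\|_{\infty}$ through Condition~\ref{cond_Fbounded}; you instead get \eqref{eq_boundDeri1} directly and uniformly from the outer-product moment in Condition~\ref{cond_Fbounded}, using the pointwise identity $\|\bm{v}\bm{v}^\T\|_\infty=\|\bm{v}\|_\infty^2$ together with Cauchy--Schwarz, with no appeal to second or third derivatives. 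Your route shortens the dependency chain (the gradient bound no longer sits downstream of the Hessian bound) and exploits the full uniform strength of the outer-product condition, whereas the paper effectively uses that condition only at $\bm{\beta}_*$ and leans on smoothness instead; either way the constants are absorbed by the boundedness of $\ThetaSet$, and the measurability of the suprema is unproblematic because the integrands are continuous in $\bm{\beta}$ under Condition~\ref{dc}.
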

\begin{proof}
We first prove Inequality~\eqref{eq_boundDeri2}, followed by Inequality~\eqref{eq_boundDeri1}, and finally, Inequality~\eqref{eq_boundDeri0}.
Fix an arbitrary $\bm{\beta}'\in\ThetaSet$. By the triangle inequality, the mean value theorem,  and the Cauchy-Schwarz inequality, we have
\begin{align*}
\sup_{\bm{\beta}'\in  \ThetaSet}\biggl\|\nabla_{\bm{\beta}}^2  \log f(y\mid \bm{x}^\T\bm{\beta}')\biggr\|_{\infty}
\leq&\
    \sup_{\bm{\beta}'\in  \ThetaSet}\biggl\|\nabla_{\bm{\beta}}^2  \log f(y\mid \bm{x}^\T\bm{\beta}') - \nabla_{\bm{\beta}}^2  \log f(y\mid \bm{x}^\T\bm{\beta}_*)\biggr\|_{\infty} +\\ &\biggl\|\nabla_{\bm{\beta}}^2  \log f(y\mid \bm{x}^\T\bm{\beta}_*)\biggr\|_{\infty}\\
    \leq&\  \sqrt{p}\sup_{\bm{\beta}'\in  \ThetaSet}\biggl\| \nabla_{\bm{\beta}}^3 \log f(y\mid \bm{x}^\T\bm{\beta}) \biggr\|_{\infty}\cdot \sup_{\bm{\beta}'\in  \ThetaSet}\|\bm{\beta}' - \bm{\beta}_*\|_2 +\\
    &\biggl\|\nabla_{\bm{\beta}}^2  \log f(y\mid \bm{x}^\T\bm{\beta}_*)\biggr\|_{\infty},
\end{align*}
where $\nabla_{\bm{\beta}}^3$ denotes the tensor consisting of the third order derivatives with respect to $\bm{\beta}$.
By Conditions~\ref{dc} and 
% \ref{Finite_exp_deri}
\ref{cond_Fbounded}
and the boundedness of $\ThetaSet$, the right-hand side of the above inequality has a finite expectation. Therefore, we obtain Inequality~\eqref{eq_boundDeri2}. 

For Inequality~\eqref{eq_boundDeri1}, we can similarly show 
\begin{align*}
\sup_{\bm{\beta}'\in  \ThetaSet}\biggl\|\nabla_{\bm{\beta}}  \log f(y\mid \bm{x}^\T\bm{\beta}')\biggr\|_{\infty}
\leq&  \sqrt{p}\sup_{\bm{\beta}'\in  \ThetaSet}\biggl\| \nabla_{\bm{\beta}}^2 \log f(y\mid \bm{x}^\T\bm{\beta}) \biggr\|_{\infty}\cdot \sup_{\bm{\beta}'\in  \ThetaSet}\|\bm{\beta}' - \bm{\beta}_*\|_2 +\\
    &\biggl\|\nabla_{\bm{\beta}}  \log f(y\mid \bm{x}^\T\bm{\beta}_*)\biggr\|_{\infty}.
\end{align*}
We obtain Inequality~\eqref{eq_boundDeri1} by combining the above inequality, 
% Condition~\ref{Finite_exp_deri}
Condition~\ref{cond_Fbounded}, and Inequality~\eqref{eq_boundDeri2}.

Inequality~\eqref{eq_boundDeri0} follows from a similar proof given Inequality~\eqref{eq_boundDeri1}.
\end{proof}

 \begin{lemma}[Uniform convergence of the log-likelihood and its derivatives]\label{lem_sUconv}
 Under Conditions~\ref{dc} and 
 % \ref{Finite_exp_deri},
 \ref{cond_Fbounded},
 \begin{align}
  \sup_{\bm{\beta}\in \ThetaSet}\left|\nlog_n(\bm{\beta})-\nlog_*(\bm{\beta})\right|  \limp& 0,\label{eq_snConv1}\\
  \sup_{\bm{\beta}\in \ThetaSet}\left\|\nabla\nlog_n(\bm{\beta})-\nabla\nlog_*(\bm{\beta})\right\|_{\infty}  \limp& 0\label{eq_snConv2},\\
     \sup_{\bm{\beta}\in \ThetaSet}\left\|\nabla^2\nlog_n(\bm{\beta})-\nabla^2\nlog_*(\bm{\beta})\right\|_{\infty}  \limp& 0,\label{eq_snConv3}
 \end{align}
as $n\rightarrow\infty$.
 \end{lemma}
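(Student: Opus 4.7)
The plan is to invoke a classical uniform law of large numbers (ULLN) for each of the three empirical averages on the compact closure $\bar{\ThetaSet}$. Since $\bar{\ThetaSet}$ is a closed ball containing the open ball $\ThetaSet$, uniform convergence on $\bar{\ThetaSet}$ implies the asserted statements on $\ThetaSet$. Recall that $\nlog_n(\bm{\beta}) = n^{-1}\sum_{i=1}^n m(y_i,\bm{x}_i^\T\bm{\beta})$ with $m = -\log f$, and by Condition~\ref{cond_diffu} together with the GLM form~\eqref{eq_expFamilyDf}, the map $\bm{\beta}\mapsto m(y,\bm{x}^\T\bm{\beta})$ is almost surely three times continuously differentiable on $\ThetaSett \supseteq \bar{\ThetaSet}$.

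For~\eqref{eq_snConv1}, I would apply a textbook ULLN (for instance Theorem~2 of Jennrich, 1969, or Lemma~2.4 of Newey and McFadden, 1994). The two hypotheses required are pointwise continuity of $\bm{\beta}\mapsto m(y,\bm{x}^\T\bm{\beta})$, noted above, and an integrable envelope, which is supplied by Inequality~\eqref{eq_boundDeri0} of Lemma~\ref{lem_bddExp}. The conclusion is uniform almost sure, and hence in probability, convergence of $\nlog_n(\bm{\beta})$ to $\nlog_*(\bm{\beta})$ over $\bar{\ThetaSet}$.

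For~\eqref{eq_snConv2} and~\eqref{eq_snConv3}, I would first identify $\nabla\nlog_*(\bm{\beta}) = \E\bigl[\nabla_{\bm{\beta}}m(Y,X^\T\bm{\beta})\bigr]$ and $\nabla^2\nlog_*(\bm{\beta}) = \E\bigl[\nabla^2_{\bm{\beta}}m(Y,X^\T\bm{\beta})\bigr]$ on $\ThetaSett$ by interchanging differentiation and expectation. Once these identities are in place, the ULLN above can be applied entrywise to each scalar component of the gradient and the Hessian, using the envelopes from Inequalities~\eqref{eq_boundDeri1} and~\eqref{eq_boundDeri2} respectively; since the number of entries is finite, taking maxima gives convergence in the $\|\cdot\|_{\infty}$ norm.

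The main obstacle is justifying the interchange of differentiation and expectation at the gradient and Hessian levels, because Condition~\ref{dc} is phrased at the third-derivative level. To handle this, I would use the third-derivative bound $|\partial_j\partial_k\partial_l \log f(y\mid \bm{x}^\T\bm{\beta})| \leq \bdd(y,\bm{x})$ together with the mean value theorem along line segments in $\ThetaSett$ to show that each second-order partial derivative of $\log f$ is locally Lipschitz in $\bm{\beta}$ with an integrable modulus, and similarly that each first-order partial derivative is Lipschitz using Inequality~\eqref{eq_boundDeri2}. A standard dominated convergence argument applied to difference quotients then permits the exchange at both the Hessian and gradient levels, after which the ULLN step is routine.
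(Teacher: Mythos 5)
Your proposal is correct and takes essentially the same route as the paper: its proof of Lemma~\ref{lem_sUconv} is precisely the envelope bounds of Lemma~\ref{lem_bddExp} combined with the uniform law of large numbers, applied entrywise to the loss, gradient, and Hessian. The interchange of differentiation and expectation that you justify via the third-derivative bound, the mean value theorem, and dominated convergence is the same argument the paper carries out separately in Lemma~\ref{lem_exch}.
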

\begin{proof}
The above results follow from Lemma~\ref{lem_bddExp} and the uniform law of large numbers.
% Let
% \begin{equation}\label{eq_supderi}
%     S^{(1)}(y,\bm{x})\de \sup_{\bm{\beta}\in \ThetaSet}\biggl|\nabla_{\bm{\beta}} \log f (y \mid {\bm{x}}^\T \bm{\beta})\biggr|.
% \end{equation}
%  Then, for any $\bm{\beta}'$ and $\bm{\beta}''$ in $\ThetaSet$,  by mean value theorem,
% \begin{align*}
%     |\log f (y \mid {\bm{x}}^\T \bm{\beta}') - \log f (y \mid {\bm{x}}^\T \bm{\beta}''))|
% \leq \sqrt{p}\cdot S^{(1)}(y,\bm{x})\|\bm{\beta}'-\bm{\beta}''\|_2.
% \end{align*}
% According to Inequality~\eqref{eq_boundDeri1}, $\E(S^{(1)}(y,\bm{x}))<\infty$. Therefore, by the result from Example~19.7 of \cite{van2000asymptotic},
%  Condition~\ref{dc}, and Theorem~19.4 of  \cite{van2000asymptotic}, we have that $\{f (y \mid {\bm{x}}^\T \bm{\beta}):\bm{\beta}\in \ThetaSet \}$  is a \textit{Glivenko-Cantelli class} with respect to the data-generating distribution.  Thus, we obtain the result in \eqref{eq_snConv1}. 
 
%  We can show the results in \eqref{eq_snConv2} and \eqref{eq_snConv3} in a similar way by Inequality~\eqref{eq_boundDeri2} and Condition~\ref{dc}.
 \end{proof}

\begin{lemma}[Exchange of integrations and differentiations]\label{lem_exch}
By Conditions~\ref{dc} and 
% \ref{Finite_exp_deri}, 
\ref{cond_Fbounded}, for each $\bm{\beta}\in\ThetaSet$,
\begin{align}
   \nabla \nlog_*(\bm{\beta})&= -\E\biggl(\nabla_{\bm{\beta}}\log f (Y \mid X^\T \bm{\beta})\biggr),\label{eq_exch1}\\
    \nabla^2 \nlog_*(\bm{\beta})&= -\E\biggl(\nabla_{\bm{\beta}}^2\log f (Y \mid X^\T \bm{\beta})\biggr),\label{eq_exch2}
\end{align}
where recall that $\nlog_*(\bm{\beta})\de  -\E(\log f (y \mid {\bm{x}}^\T \bm{\beta}))$.
\end{lemma}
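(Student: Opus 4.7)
The plan is to obtain both identities by justifying the interchange of differentiation and expectation, applying the dominated convergence theorem (DCT) once for the first equality and once more for the second. The uniform integrability bounds provided by Lemma~\ref{lem_bddExp} on $\ThetaSet$ will supply the required dominating functions, so the work reduces to producing the correct difference quotients and checking that their pointwise limits are what we claim.

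For Equation~\eqref{eq_exch1}, I would fix a coordinate index $j\in\{1,\dots,p\}$ and a point $\bm{\beta}\in\ThetaSet$. Since $\ThetaSet$ is an open ball, there exists $h_0>0$ such that $\bm{\beta}+h\bm{e}_j\in\ThetaSet$ for all $|h|\le h_0$, where $\bm{e}_j$ is the $j$th standard basis vector. For each such $h\neq 0$, the mean value theorem yields a point $\tilde{\bm{\beta}}(h,Y,X)$ on the segment between $\bm{\beta}$ and $\bm{\beta}+h\bm{e}_j$ (hence in $\ThetaSet$) such that
\begin{equation*}
\frac{\log f(Y\mid X^\T(\bm{\beta}+h\bm{e}_j))-\log f(Y\mid X^\T\bm{\beta})}{h}
=\frac{\partial}{\partial\beta_j}\log f\bigl(Y\mid X^\T\tilde{\bm{\beta}}(h,Y,X)\bigr).
\end{equation*}
The absolute value of the right-hand side is dominated by $\sup_{\bm{\beta}'\in\ThetaSet}\|\nabla_{\bm{\beta}'}\log f(Y\mid X^\T\bm{\beta}')\|_{\infty}$, which has finite expectation by Inequality~\eqref{eq_boundDeri1} of Lemma~\ref{lem_bddExp}. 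Letting $h\to 0$ and invoking the DCT give
$\partial_{\beta_j}\E(\log f(Y\mid X^\T\bm{\beta}))=\E(\partial_{\beta_j}\log f(Y\mid X^\T\bm{\beta}))$, which upon negating and collecting over $j$ yields \eqref{eq_exch1}.

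For Equation~\eqref{eq_exch2}, I would repeat the same argument one level deeper. Starting from the identity just established, apply the mean value theorem to the integrand $\partial_{\beta_j}\log f(Y\mid X^\T\bm{\beta})$ in the direction $\bm{e}_k$ to obtain
\begin{equation*}
\frac{\partial_{\beta_j}\log f(Y\mid X^\T(\bm{\beta}+h\bm{e}_k))-\partial_{\beta_j}\log f(Y\mid X^\T\bm{\beta})}{h}
=\frac{\partial^2}{\partial\beta_j\partial\beta_k}\log f\bigl(Y\mid X^\T\tilde{\bm{\beta}}(h,Y,X)\bigr),
\end{equation*}
whose absolute value is dominated by $\sup_{\bm{\beta}'\in\ThetaSet}\|\nabla_{\bm{\beta}'}^2\log f(Y\mid X^\T\bm{\beta}')\|_{\infty}$, integrable by Inequality~\eqref{eq_boundDeri2}. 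A second application of the DCT then gives \eqref{eq_exch2}. The only substantive issue is verifying that the MVT-based bound is dominated uniformly in $h$; this is immediate because $\tilde{\bm{\beta}}(h,Y,X)$ remains in $\ThetaSet$ for $|h|\le h_0$, so the supremum over $\ThetaSet$ is a legitimate envelope. No further structure beyond Conditions~\ref{dc} and \ref{cond_Fbounded} (filtered through Lemma~\ref{lem_bddExp}) is needed.
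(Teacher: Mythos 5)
Your proof is correct and follows essentially the same route as the paper's: a mean value theorem bound on the difference quotients, dominated by the $\ThetaSet$-suprema of the gradient and Hessian norms whose integrability comes from Inequalities~\eqref{eq_boundDeri1} and \eqref{eq_boundDeri2} of Lemma~\ref{lem_bddExp}, followed by the dominated convergence theorem once for each equality. The only cosmetic difference is that you argue coordinate-wise along basis directions while the paper phrases the first bound via Cauchy--Schwarz with $\|\bm{h}\|_2$; the substance is identical.
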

\begin{proof}
We first show Equality~\eqref{eq_exch1}.
By the mean value theorem and the Cauchy-Schwarz inequality,
$$\bigl|\log f (y \mid {\bm{x}}^\T \bm{\beta}) - \log f (y \mid {\bm{x}}^\T (\bm{\beta} + \bm{h})\bigr|/\|\bm{h}\|_2\leq \sup_{\bm{\beta}\in\ThetaSet} \biggl\|\nabla_{\bm{\beta}}\log f (y \mid {\bm{x}}^\T \bm{\beta})\biggr\|_2,$$
for all $\bm{\beta}$ and $\bm{\beta} + \bm{h}$ in the interior of $\ThetaSet$.
 Thus, Equation~\eqref{eq_exch1} follows from Inequality~\eqref{eq_boundDeri1} and the dominated convergence theorem. 

For Equality~\eqref{eq_exch2}, it suffices to show that
$$\nabla_{\bm{\beta}}  \E\biggl(\nabla_{\bm{\beta}} \log f (Y \mid X^\T \bm{\beta})\biggr)= \E\biggl(\nabla_{\bm{\beta}}^2\log f (Y \mid X^\T \bm{\beta})\biggr),$$
which can be obtained by Inequality~\eqref{eq_boundDeri2}, mean value theorem, and the dominated convergence theorem.

\end{proof}
\begin{lemma}[Differentiability of $\nabla\nlog_*^{(\bm{U})}(\bm{\beta}^{(\bm{U})})$]\label{lem_diffbetaU}
By Condition~\ref{A1}, $\nlog_*^{(\bm{U})}(\bm{\beta}^{(\bm{U})})$ is differentiable for all $\bm{\beta}^{(\bm{U})}\in \R^{p_\A + \id}$. Together with Conditions~ \ref{dc}, 
% \ref{Finite_exp_deri},  
\ref{cond_Fbounded},
and \ref{lem_sbb_inm}, we have
\begin{align}    \nabla\nlog_*^{(\bm{U})}(\bm{\beta}^{(\bm{U})})&=-\E\biggl(\nabla_{\bm{\beta}^{(\bm{U})}}\log f\bigl(Y, {X^{(\bm{U})}}^\T\bm{\beta}^{(\bm{U})})\mid\bm{U}\biggr),\ \forall \bm{\beta}^{(\bm{U})}\in \ThetaSetU, \label{eq_exch3}
\end{align}
where $\ThetaSetU\de \{\bm{\beta}\in\R^p : \|\bm{\beta} - \MU\bm{\beta}^{(\bm{U})}_*\|_2\leq \cst\}$, which is a subset of $\ThetaSet$,  and recall that $\nlog_*^{(\bm{U})}(\bm{\beta}^{(\bm{U})})
\de -\E\bigl(\log f\bigl(Y, {X^{(\bm{U})}}^\T \bm{\beta}^{(\bm{U})} \bigr)\mid\bm{U}\bigr)$.
\end{lemma}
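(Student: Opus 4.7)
My plan is to reduce both assertions to facts already established for the unconditional loss $\nlog_*$ by exploiting two structural observations. First, $\bm{U}$ is generated independently of $(Y,X)$, so the conditional expectation $\E(\cdot\mid \bm{U})$ reduces to an unconditional expectation with $\bm{U}$ held fixed as a parameter. Second, ${X^{(\bm{U})}}^\T\bm{\beta}^{(\bm{U})} = X^\T\MU\bm{\beta}^{(\bm{U})}$ with $\MU$ the linear map introduced in \eqref{Mdef}. Combining these gives the identity
\begin{equation*}
\nlog_*^{(\bm{U})}(\bm{\beta}^{(\bm{U})}) = \nlog_*(\MU\bm{\beta}^{(\bm{U})}),
\end{equation*}
so $\nlog_*^{(\bm{U})}$ is exactly the composition of the linear map $\bm{\beta}^{(\bm{U})}\mapsto \MU\bm{\beta}^{(\bm{U})}$ with $\nlog_*$.

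For the global differentiability claim, Condition~\ref{A1} provides differentiability of $\nlog_*$ on all of $\R^p$, and a linear map is always smooth, so the chain rule immediately yields differentiability of $\nlog_*^{(\bm{U})}$ on $\R^{p_\A + \id}$ almost surely in $\bm{U}$, with gradient $\MU^\T \nabla\nlog_*(\MU\bm{\beta}^{(\bm{U})})$. This step uses none of the integrability hypotheses, only Condition~\ref{A1}, matching the claim.

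For the integral representation on $\ThetaSetU$, I would invoke Equality~\eqref{eq_exch1} from Lemma~\ref{lem_exch}, which gives $\nabla\nlog_*(\bm{\beta}) = -\E\bigl(\nabla_{\bm{\beta}}\log f(Y\mid X^\T\bm{\beta})\bigr)$ for every $\bm{\beta}\in\ThetaSet$. Condition~\ref{lem_sbb_inm} is precisely what is needed to legalize this step: it guarantees that whenever $\MU\bm{\beta}^{(\bm{U})}$ lies in $\ThetaSetU$, it also lies in $\ThetaSett=\ThetaSet$ almost surely, so the exchange of differentiation and expectation is valid there. Substituting this into the chain-rule expression and then using $\MU^\T\nabla_{\bm{\beta}}\log f(Y\mid X^\T\MU\bm{\beta}^{(\bm{U})}) = \nabla_{\bm{\beta}^{(\bm{U})}}\log f(Y\mid {X^{(\bm{U})}}^\T\bm{\beta}^{(\bm{U})})$ produces the claimed identity, after reintroducing the conditioning on $\bm{U}$ (permissible again by the independence of $\bm{U}$ from $(Y,X)$).

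The calculation is essentially mechanical; the only delicate point is tracking the interplay between the conditional expectation $\E(\cdot\mid \bm{U})$ appearing in $\nlog_*^{(\bm{U})}$ and the unconditional one under which Lemma~\ref{lem_exch} was proved. This is where the independence of $\bm{U}$ from $(Y,X)$ is doing all the real work, and where I would need to write out the argument carefully. All required integrability/domination bounds were already furnished by Lemma~\ref{lem_bddExp} via Conditions~\ref{dc} and \ref{cond_Fbounded}, so no fresh analytic estimates are needed.
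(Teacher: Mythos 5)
Your proposal is correct and takes essentially the same route as the paper: both rest on the identity $\nlog_*^{(\bm{U})}(\bm{\beta}^{(\bm{U})})=\nlog_*(\MU\bm{\beta}^{(\bm{U})})$, obtain differentiability and the gradient ${\MU}^\T\nabla\nlog_*(\MU\bm{\beta}^{(\bm{U})})$ from Condition~\ref{A1} via the chain rule, and then derive \eqref{eq_exch3} by applying the exchange identity \eqref{eq_exch1} of Lemma~\ref{lem_exch} at the point $\MU\bm{\beta}^{(\bm{U})}$, which Condition~\ref{lem_sbb_inm} places in $\ThetaSett$, with the integrability supplied by Lemma~\ref{lem_bddExp} and the conditioning on $\bm{U}$ handled through the independence of $\bm{U}$ from $(Y,X)$. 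Your explicit remark that the independence of $\bm{U}$ justifies evaluating the unconditional result at the $\bm{U}$-measurable point inside $\E(\cdot\mid\bm{U})$ is exactly the step the paper performs implicitly, so no substantive difference remains.
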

\begin{proof}
Recall that 
$X^{(\bm{U})} = X^\T\MU$, where $\MU$ was defined in \eqref{Mdef}.
The differentiability of  $\nabla\nlog_*^{(\bm{U})}(\bm{\beta}^{(\bm{U})})$ is obtained from the differentiability of $M_*(\bm{\beta})$ and the fact that $$\nabla \nlog^{(\bm{U})}_*( \bm{\beta}^{(\bm{U})}_*  ) 
= {\MU}^\T\nabla \nlog_*(\MU \bm{\beta}^{(\bm{U})}_*).$$

For Equality~\eqref{eq_exch3}, by Inequality~\eqref{eq_boundDeri2} and Condition~\ref{lem_sbb_inm}, we have 
$$
-\E\biggl(\nabla_{\bm{\beta}}\log f\bigl(Y,X^\T\MU\bm{\beta}^{(\bm{U})})\mid\bm{U}\biggr)<\infty,\ \text{ almost surely }\forall \bm{\beta}^{(\bm{U})}\in\ThetaSetU.
$$
Therefore,
\begin{align*}
   -\E\biggl(\nabla_{\bm{\beta}^{(\bm{U})}}\log f\bigl(Y, {X^{(\bm{U})}}^\T\bm{\beta}^{(\bm{U})})\mid\bm{U}\biggr)
    =&\ -\E\biggl({\MU}^\T\nabla_{\bm{\beta}}\log f\bigl(Y,X^\T\MU\bm{\beta}^{(\bm{U})})\mid\bm{U}\biggr)\\
    =&\
    {\MU}^\T \nabla \nlog_*\bigl(\MU\bm{\beta}^{(\bm{U})}\bigr)\\
    =&\     \nabla \nlog_*^{(\bm{U})}(\bm{\beta}^{(\bm{U})}).
\end{align*}
\end{proof}

\begin{lemma}[Convexity of $\nlog_n(\bm{\beta})$]\label{lem_snConvex}
By Conditions~\ref{cond_designMatFullrank} and \ref{cond_strictConvexloss}, $\nlog_n(\bm{\beta})$ is strictly convex with probability going to one as $n\rightarrow\infty$. 
\end{lemma}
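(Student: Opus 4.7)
The plan is to argue via the Hessian of $\nlog_n(\bm{\beta})$, exploiting the chain-rule factorization that makes the dependence on the design matrix explicit. Specifically, since $\nlog_n(\bm{\beta})=n^{-1}\sum_{i=1}^n m(y_i,\bm{x}_i^{\T}\bm{\beta})$ and $\lp \mapsto m(y,\lp)$ is twice differentiable by Condition~\ref{cond_strictConvexloss}, one computes
\begin{equation*}
\nabla^2 \nlog_n(\bm{\beta}) \;=\; \frac{1}{n}\,\bm{X}^{\T} D_n(\bm{\beta})\,\bm{X},
\end{equation*}
where $D_n(\bm{\beta})$ is the diagonal matrix with entries $\nabla_{\lp}^2 m(y_i,\bm{x}_i^{\T}\bm{\beta})$ for $i=1,\dots,n$.

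Next, I would observe that strict convexity of $\lp\mapsto m(y,\lp)$ together with its twice differentiability forces $\nabla_{\lp}^2 m(y,\lp)>0$ almost surely for every $y$ in the essential support of $Y$ and every $\lp\in\R$. Consequently, $D_n(\bm{\beta})$ is almost surely positive definite for every $\bm{\beta}\in\R^p$. On the event that $\bm{X}$ has full column rank, the map $\bm{v}\mapsto \bm{X}\bm{v}$ is injective on $\R^p\setminus\{\bm{0}\}$, so $\bm{v}^{\T}\bm{X}^{\T} D_n(\bm{\beta})\bm{X}\bm{v} = (\bm{X}\bm{v})^{\T}D_n(\bm{\beta})(\bm{X}\bm{v})>0$ for any nonzero $\bm{v}$. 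Thus $\nabla^2 \nlog_n(\bm{\beta})$ is positive definite on this event, uniformly in $\bm{\beta}$, which yields strict convexity of $\nlog_n$.

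Finally, I would invoke Condition~\ref{cond_designMatFullrank} to conclude that the event under which the above argument applies has probability tending to one as $n\to\infty$. This already appeared informally in the derivation leading to \eqref{eq_Mnbeta_strictly_convex} in the proof of Theorem~\ref{probcon}, so the lemma essentially packages that observation. No real obstacle is anticipated; the only subtlety is noting that strict convexity of a twice differentiable univariate function implies a strictly positive second derivative pointwise (which follows directly from the second-derivative characterization of convexity combined with strictness ruling out $\nabla_{\lp}^2 m(y,\lp)=0$ on any interval), after which the matrix-level argument is immediate.
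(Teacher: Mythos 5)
Your route is the same as the paper's: the paper proves this lemma by pointing back to the derivation of \eqref{eq_Mnbeta_strictly_convex}, which writes $\nabla^2\nlog_n(\bm{\beta}) = \bm{X}^\T D_n \bm{X}$ with $D_n$ diagonal, asserts that $D_n$ is positive definite by Condition~\ref{cond_strictConvexloss}, and concludes positive definiteness of the Hessian on the full-column-rank event supplied by Condition~\ref{cond_designMatFullrank}. The one place where your write-up overreaches is the parenthetical claim that strict convexity of a twice differentiable univariate function forces a strictly positive second derivative pointwise: this is false in general ($\lp\mapsto\lp^4$ is strictly convex and smooth yet has vanishing second derivative at $0$); strictness only rules out the second derivative vanishing on an interval, not at isolated points. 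The paper's own proof makes the same unproved assertion that $D_n$ is positive definite, and it is harmless for the losses the paper actually targets (logistic and other canonical-link GLMs, log-cosh, pseudo-Huber, where $\nabla^2_{\lp}\mest(y,\lp)>0$ everywhere), but as stated your justification does not close that step. If you want an argument immune to this issue, prove strict convexity directly without Hessians: on the event that $\bm{X}$ has full column rank, $\bm{\beta}_1\neq\bm{\beta}_2$ implies $\bm{x}_i^\T\bm{\beta}_1\neq\bm{x}_i^\T\bm{\beta}_2$ for some $i$; strict convexity of $\lp\mapsto \mest(y_i,\lp)$ gives a strict convexity inequality for that summand and plain convexity gives the weak inequality for the others, so $\nlog_n$ is strictly convex on this event, whose probability tends to one by Condition~\ref{cond_designMatFullrank}.
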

\begin{proof}
This result is obtained from the proof of \eqref{eq_Mnbeta_strictly_convex}.
% We have $\nabla^2\nlog_n(\bm{\beta}) = \bm{X}^\T D_n
% \bm{X},
% $
% where $D_n$ is a diagonal matrix with diagonal elements $
% \nabla_{\lp_1}^2m(y_1, \bm{x}_1^\T\bm{\beta}),\dots, \nabla_{\lp_n}^2m(y_n, \bm{x}_n^\T\bm{\beta})$, where $\lp_i = \bm{x}_i^\T\bm{\beta}$ and $\nabla_{\lp_i}^2$ denotes the second order derivative with respect to $\lp_i$. By Condition~\ref{cond_strictConvexloss}, $D_n$ is positive definite. Therefore, together with Condition~\ref{cond_designMatFullrank}, we have shown that $\nlog_n(\bm{\beta})$ is strictly convex with probability going to one as $n\rightarrow\infty$. 
\end{proof}

\begin{lemma}[Convexity of  $\nlog^{(\bm{U})}_n(\bm{\beta}^{(\bm{U})})$]\label{lem_snAsnUconvex}
By Conditions~\ref{cond_designMatFullrank} and \ref{cond_strictConvexloss},  $\nlog^{(\bm{U})}_n\allowbreak(\bm{\beta}^{(\bm{U})})$ is strictly convex with probability going to one. 
%By Conditions~\ref{cond_pdHessian}, \ref{dc}, and \ref{Finite_exp_deri}, $\nlog^{(\bm{U})}_*(\bm{\beta}^{(\bm{U})})$ is almost surely strictly convex.
\end{lemma}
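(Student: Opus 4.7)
The plan is to mirror the proof of Lemma~\ref{lem_snConvex}: show that the Hessian of $\nlog_n^{(\bm{U})}$ is almost surely positive definite on an event with probability tending to one. Writing
\[
\nabla^2 \nlog_n^{(\bm{U})}(\bm{\beta}^{(\bm{U})}) = (\bm{X}^{(\bm{U})})^\T D_n^{(\bm{U})} \bm{X}^{(\bm{U})},
\]
where $D_n^{(\bm{U})}$ is the diagonal matrix with entries $\nabla_{\lp}^2 m(y_i, (\bm{x}_i^{(\bm{U})})^\T \bm{\beta}^{(\bm{U})})$, Condition~\ref{cond_strictConvexloss} makes $D_n^{(\bm{U})}$ positive definite. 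Hence it suffices to show that $\bm{X}^{(\bm{U})}$ has full column rank $p_\A + \id$ with probability going to one.

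The key observation is the factorization $\bm{X}^{(\bm{U})} = \bm{X}\, \MU$ with $\MU$ defined in~\eqref{Mdef}, so $\bm{X}^{(\bm{U})}$ has full column rank whenever both $\bm{X}$ and $\MU$ do. Condition~\ref{cond_designMatFullrank} already gives $\bm{X}$ full column rank with probability tending to one, so the remaining step is to verify that $\MU \in \mathbb{R}^{p \times (p_\A + \id)}$ has full column rank. Writing a generic null combination of its columns and equating it to zero, the block structure of $\MU$ reduces the condition to $\bm{U}^{(p_b)} \bm{c} = \bm{0}$ forcing $\bm{c} = \bm{0}$, i.e.\ to $\bm{U}^{(p_b)}$ having linearly independent columns. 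Since the entries of $\bm{U}$ are obtained by normalizing i.i.d.\ standard normal samples (a continuous distribution), any fixed-dimension submatrix has linearly independent columns with probability one, giving $\P(\MU \text{ has full column rank}) = 1$ almost surely in $\bm{U}$.

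Combining the two events by a union bound yields $\P(\bm{X}^{(\bm{U})} \text{ has full column rank}) \to 1$, and on this event $\nabla^2 \nlog_n^{(\bm{U})}(\bm{\beta}^{(\bm{U})})$ is positive definite for every $\bm{\beta}^{(\bm{U})}$, giving strict convexity of $\nlog_n^{(\bm{U})}$. The main point to be careful about is the column-rank analysis of $\MU$: one must keep track of the three blocks (the $\bm{I}_{p_a}$, the $\bm{I}_{p_c}$, and the randomized $\bm{U}^{(p_b)}, \bm{U}^{(p_c)}$ rows) so that the conclusion $\bm{c} = \bm{0}$ propagates correctly through the system, but this is essentially a linear-algebra bookkeeping exercise rather than a substantive difficulty.
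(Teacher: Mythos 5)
Your argument is essentially the paper's: the paper writes $\nabla^2\nlog^{(\bm{U})}_n(\bm{\beta}^{(\bm{U})}) = {\MU}^\T\nabla^2\nlog_n(\MU\bm{\beta}^{(\bm{U})})\MU$, invokes Lemma~\ref{lem_snConvex} for the middle factor, and uses that $\MU$ has full column rank almost surely because the entries of $\bm{U}$ are drawn from a continuous distribution — exactly the factorization and rank reduction you carry out, just with the proof of Lemma~\ref{lem_snConvex} unfolded explicitly via the diagonal matrix $D_n^{(\bm{U})}$. The one caveat in your rank analysis (reducing to $\bm{U}^{(p_b)}$ having linearly independent columns implicitly requires $\id\leq p_b$) is equally implicit in the paper's own proof, so it is not a gap relative to the paper.
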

\begin{proof}
We have 
\begin{align*}
    \nabla^2\nlog^{(\bm{U})}_n(\bm{\beta}^{(\bm{U})}) =& {\MU}^\T\nabla^2\nlog_n(\MU \bm{\beta}^{(\bm{U})}) \MU.
\end{align*}
Since the elements of $\bm{U}$ are sampled from a continuous distribution, it can be shown that $\MU$ has full column rank almost surely. Together with Lemma~\ref{lem_snConvex}, we obtain the result of $\nlog^{(\bm{U})}_n(\bm{\beta}^{(\bm{U})})$. 

% The strict convexity of $\nlog^{(\bm{U})}_*(\bm{\beta}^{(\bm{U})})$ is by Lemma~\ref{lem_exch} and
% \begin{align*}
%     \nabla^2\nlog^{(\bm{U})}_*(\bm{\beta}^{(\bm{U})}) =& {\MU}^\T\nabla^2\nlog_*(\MU \bm{\beta}^{(\bm{U})}) \MU.
% \end{align*}
\end{proof}

\begin{lemma}[Existence and convergence of $\hat{\bm{\beta}}^{(\bm{U})}_n$ and $\check{\bm{\beta}}_n$]\label{lem_Betaconv}
Under Conditions~\ref{cond_designMatFullrank}, \ref{cond_strictConvexloss}, and \ref{dc}, $\hat{\bm{\beta}}^{(\bm{U})}_n$ and $\check{\bm{\beta}}_n$ uniquely exist with probability going to one as $n\rightarrow\infty$. Additionally, 
\begin{align}
    \|\hat{\bm{\beta}}^{(\bm{U})}_n - \bm{\beta}^{(\bm{U})}_*\|_2\limp & 0,\label{eq_betaUconv}\\
    \|\check{\bm{\beta}}_n - \bm{\beta}_*\|_2\limp & 0, \label{eq_betaAconv}
\end{align}
as $n\rightarrow\infty$.
\end{lemma}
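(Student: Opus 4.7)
The plan is to follow the classical convex $M$-estimator consistency route: strict convexity of the sample objective plus uniform convergence on a neighborhood of the population minimizer, together with uniqueness of that minimizer, localize the sample minimizer to arbitrarily small balls. I would execute this first for $\check{\bm{\beta}}_n$ on $\R^p$, and then in parallel for $\hat{\bm{\beta}}^{(\bm{U})}_n$ on $\R^{p_\A + \id}$ conditional on $\bm{U}$.

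For $\check{\bm{\beta}}_n$, fix $\epsilon>0$ small enough that $\bar B_\epsilon(\bm{\beta}_*)\subset\ThetaSet=\ThetaSett$. Uniqueness of $\bm{\beta}_*$ (Condition~\ref{A1}) together with continuity of $\nlog_*$ (Lemma~\ref{lem_exch}) and compactness of $\partial B_\epsilon(\bm{\beta}_*)$ produce some $\delta>0$ with $\inf_{\bm{\beta}\in\partial B_\epsilon(\bm{\beta}_*)}\nlog_*(\bm{\beta})\geq \nlog_*(\bm{\beta}_*)+\delta$. By \eqref{eq_snConv1}, $\sup_{\bm{\beta}\in\ThetaSet}|\nlog_n(\bm{\beta})-\nlog_*(\bm{\beta})|<\delta/3$ with probability tending to one, so on that event $\inf_{\bm{\beta}\in\partial B_\epsilon(\bm{\beta}_*)}\nlog_n(\bm{\beta})>\nlog_n(\bm{\beta}_*)+\delta/3$. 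The strict convexity of $\nlog_n$ on all of $\R^p$ from Lemma~\ref{lem_snConvex} then forces any $\bm{\beta}\notin B_\epsilon(\bm{\beta}_*)$ to satisfy $\nlog_n(\bm{\beta})>\nlog_n(\bm{\beta}_*)$, via the convex-combination rearrangement applied to the segment from $\bm{\beta}_*$ to $\bm{\beta}$ (which crosses $\partial B_\epsilon$). Hence any global minimizer of $\nlog_n$ must lie in $B_\epsilon(\bm{\beta}_*)$; continuity of $\nlog_n$ and compactness of $\bar B_\epsilon(\bm{\beta}_*)$ deliver existence on the closed ball, the boundary separation forces interiority (so the point is a global minimum), and strict convexity yields uniqueness. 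Letting $\epsilon\downarrow 0$ gives \eqref{eq_betaAconv}.

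For $\hat{\bm{\beta}}^{(\bm{U})}_n$, I would run the analogous argument conditional on $\bm{U}$, using $\nlog^{(\bm{U})}_n(\bm{\beta}^{(\bm{U})})=\nlog_n(\MU\bm{\beta}^{(\bm{U})})$ and $\nlog^{(\bm{U})}_*(\bm{\beta}^{(\bm{U})})=\nlog_*(\MU\bm{\beta}^{(\bm{U})})$ on a small ball $B_\epsilon(\bm{\beta}^{(\bm{U})}_*)\subset\R^{p_\A+\id}$ whose image under $\MU$ sits inside $\ThetaSett$ almost surely by Condition~\ref{lem_sbb_inm}. Strict convexity of $\nlog^{(\bm{U})}_n$ on $\R^{p_\A+\id}$ comes from Lemma~\ref{lem_snAsnUconvex}; uniform convergence on $\MU\cdot B_\epsilon(\bm{\beta}^{(\bm{U})}_*)$ is inherited from \eqref{eq_snConv1} through the bounded linear map $\MU$ (its columns are unit vectors, so $\MU$ is bounded in operator norm); and uniqueness of $\bm{\beta}^{(\bm{U})}_*$ along with the required boundary separation follows from the positive definiteness of $\MU^\T\nabla^2\nlog_*(\MU\bm{\beta}^{(\bm{U})})\MU$, using Condition~\ref{cond_pdHessian} and the almost-sure full column rank of $\MU$. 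The rest of the argument transplants verbatim.

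\textbf{Main obstacle.} The uniform convergence furnished by Lemma~\ref{lem_sUconv} lives only on the bounded set $\ThetaSet$, whereas the estimators are a priori defined as minimizers over the unbounded $\R^p$. The crucial bridge is the global strict convexity of $\nlog_n$ on $\R^p$ from Lemma~\ref{lem_snConvex}, which promotes the local domination of values on $\partial B_\epsilon(\bm{\beta}_*)$ into a global statement that no point outside $B_\epsilon(\bm{\beta}_*)$ can be a minimizer. A secondary subtlety is the random rotation $\MU$ in the second half: Condition~\ref{lem_sbb_inm} and the unit-norm structure of the columns of $\bm{U}$ are precisely what guarantee that a fixed $\R^{p_\A+\id}$-neighborhood of $\bm{\beta}^{(\bm{U})}_*$ is mapped into $\ThetaSett$ almost surely, allowing the $\check{\bm{\beta}}_n$ argument to be carried over.
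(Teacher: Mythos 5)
Your argument is correct, but it travels a genuinely different route from the paper's. The paper localizes the minimizer via a third-order Taylor expansion of $\nlog_n^{(\bm{U})}$ (resp.\ $\nlog_n$) around $\bm{\beta}^{(\bm{U})}_*$ (resp.\ $\bm{\beta}_*$): the gradient term is shown to vanish in probability using \eqref{eq_snConv2} and the derivative-exchange identities, the quadratic term is bounded below via positive definiteness of $\MU^\T\nabla^2\nlog_*(\cdot)\MU$, and the cubic remainder is controlled by the envelope function $\bdd$ of Condition~\ref{dc}; for a small enough radius the quadratic term dominates, a local minimum exists inside the shell, and convexity (Lemmas~\ref{lem_snConvex} and \ref{lem_snAsnUconvex}) promotes it to the unique global minimizer. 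You instead run the classical Wald/convexity consistency argument: only the zeroth-order uniform convergence \eqref{eq_snConv1} on $\ThetaSett$ is used, combined with a well-separated population minimum on the sphere (which you correctly derive from uniqueness of the stationary point in Condition~\ref{A1}, or equivalently strict convexity of $\nlog_*$ under Condition~\ref{cond_pdHessian}) and the convex-combination rearrangement to exclude minimizers outside the ball; the same final convexity step then yields existence, uniqueness, and consistency, and Condition~\ref{lem_sbb_inm} plus the bounded operator norm of $\MU$ keeps the relevant ball inside $\ThetaSett$ for the $\bm{U}$-problem. Your route is more elementary in that it avoids the Taylor expansion, the gradient convergence at the point, and any direct appeal to the third-derivative envelope for this step, whereas the paper's Cram\'er-style expansion reuses machinery (Condition~\ref{dc}, Lemma~\ref{lem_exch}) that it needs elsewhere and localizes at a rate tied to the gradient decay. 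Note that, like the paper's own proof, you draw on hypotheses beyond the three listed in the lemma header (Conditions~\ref{A1}, \ref{cond_pdHessian}, \ref{lem_sbb_inm} and Lemma~\ref{lem_sUconv}), which is legitimate since the lemma is invoked only under the full hypothesis set of Corollary~\ref{prop_suffGLM}; and your conditioning on $\bm{U}$ should, strictly speaking, be integrated out at the end (the gap $\delta$ is $\bm{U}$-dependent), which is immediate by dominated convergence and matches how the paper handles the randomness of $\bm{U}$.
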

\begin{proof}
We first present the proof for $\hat{\bm{\beta}}^{(\bm{U})}_n$. 
Recall that 
$$
 \ThetaSet^{(\bm{U})}\de
\{\bm{\beta}' \in\mathbb R^{p_\A+\id}: \MU\bm{\beta}'\in\ThetaSet \}.
$$
Let
\begin{align}
\ThetaSet^{(\bm{U},a)}\de& \{\bm{\beta}'\in\R^{p_\A + \id}: \|\MU\bm{\beta}' - \MU\bm{\beta}^{(\bm{U})}_*\|_2= a\},\nonumber
\end{align}
with a positive and sufficiently small $a$ such that  $\ThetaSet^{(\bm{U},a)}\subset \ThetaSet^{(\bm{U})}$ holds almost surely. We first show that for each  $\epsilon>0$, there exists $N_\epsilon$, such that when $n\geq N_\epsilon$, $$\P\left(\nlog_n^{(\bm{U})}(\bm{\beta}') - \nlog_n^{(\bm{U})}(\bm{\beta}^{(\bm{U})}_*)>0,\ \forall \bm{\beta}'\in \ThetaSet^{(\bm{U},a)}\right)>1-\epsilon,$$
which indicates that there exists a local minimum of $\nlog_n^{(\bm{U})}(\bm{\beta}^{(\bm{U})})$  in 
$$
\Omega^{(\bm{U},a)}\de 
\biggl\{\bm{\beta}'\in\R^{p_\A + \id}: \|\MU\bm{\beta}' - \MU\bm{\beta}^{(\bm{U})}_*\|_2< a\biggr\}.$$
By the Taylor expansion,
$$
\nlog_n^{(\bm{U})}(\bm{\beta}') - \nlog_n^{(\bm{U})}(\bm{\beta}^{(\bm{U})}_*)\geq T_1 + T_2 - T_3,
$$
where
\begin{align*}
T_1=& (\bm{\beta}' - \bm{\beta}^{(\bm{U})}_*)^\T \nabla\nlog_n^{(\bm{U})}(\bm{\beta}^{(\bm{U})}_*),\\
T_2 = &\frac{1}{2}(\bm{\beta}' - \bm{\beta}^{(\bm{U})}_*)^\T \nabla^2 \nlog_n^{(\bm{U})}(\bm{\beta}^{(\bm{U})}_*)(\bm{\beta}' - \bm{\beta}^{(\bm{U})}_*),\\
T_3 = &\frac{1}{6}p^3\|\bm{\beta}' - \bm{\beta}^{(\bm{U})}_*\|_2^3 \sup_{\bm{\beta}^{(\bm{U})}\in\Omega^{(\bm{U},a)}}\biggl\|\nabla^3\nlog_n^{(\bm{U})}(\bm{\beta}^{(\bm{U})})\biggr\|_{\infty},
\end{align*}
and $\biggl\|\nabla^3\nlog_n^{(\bm{U})}(\bm{\beta}^{(\bm{U})})\biggr\|_{\infty}$ denotes the sup norm of the 3-D tensor $\nabla^3\nlog_n^{(\bm{U})}(\bm{\beta}^{(\bm{U})})$.
Next, we show that for each $a$, $T_1\limp 0$, $T_2/a^2$ is lower bounded by a positive constant, and $T_3/a^3$ is upper bounded by a positive constant with probability going to one as  $n\rightarrow\infty$.

First, by Equality~\eqref{eq_exch3}, 
\begin{align*}
    \nabla \nlog_*^{(\bm{U})}(\bm{\beta}^{(\bm{U})}) 
=& -\E\biggl(\nabla_{\bm{\beta}^{(\bm{U})}} \log f(Y\mid  {X^{(\bm{U})}}^\T\bm{\beta}^{(\bm{U})})\mid\bm{U}\biggr) \\
=&-\E\biggl({\MU}^\T\nabla_{\bm{\beta}}\log f(Y\mid X^\T\MU\bm{\beta}^{(\bm{U})})\mid\bm{U}\biggr)\\
=&-{\MU}^\T\E\biggl(\nabla_{\bm{\beta}}\log f(Y\mid X^\T\MU\bm{\beta}^{(\bm{U})})\mid\bm{U}\biggr),
\end{align*}
where recall that $\nabla_{\bm{\beta}^{(\bm{U})}}$ and $\nabla_{\bm{\beta}}$ denote the derivatives of $\bm{\beta}^{(\bm{U})}$ and $\MU\bm{\beta}^{(\bm{U})}$, respectively, and $\nlog_*^{(\bm{U})}(\bm{\beta}^{(\bm{U})})
\de -\E\bigl(\log f (Y, {X^{(\bm{U})}}^\T\bm{\beta}^{(\bm{U})} \bigr)\mid\bm{U}\bigr)$.
% Together with Inequality~\eqref{eq_boundDeri1} and the fact that the columns of $\bm{U}^{(p_b,\id)}$ have bounded $l_2$ norm, the right-hand side of the above inequality is upper bounded. Therefore, $\nlog_*^{(\bm{U})}(\bm{\beta}^{(\bm{U})})$ is differentiable. 
Since $\nabla \nlog_*^{(\bm{U})}(\bm{\beta}^{(\bm{U})}_*) = {\MU}^\T\nabla \nlog_*(\MU\allowbreak\bm{\beta}^{(\bm{U})}_*)$, by Condition~\ref{lem_sbb_inm} and Lemma~\ref{lem_sUconv}, $\bigl\|\nabla \nlog_*^{(\bm{U})}(\bm{\beta}^{(\bm{U})}_*)- \nabla \nlog_n^{(\bm{U})}(\bm{\beta}^{(\bm{U})}_*)\bigr\|_{\infty}\limp 0$.
Together with \eqref{eq_snConv2} and $\nabla \nlog_*^{(\bm{U})}(\bm{\beta}^{(\bm{U})}_*) =\bm{0}$, we have 
$$\nabla \nlog_n^{(\bm{U})}(\bm{\beta}^{(\bm{U})}_*) \limp\bm{0},
$$ as $n\rightarrow\infty$. Therefore, 
$T_1\limp 0$ uniformly for all $\bm{\beta}'\in \ThetaSet^{(\bm{U},a)}$ as $n\rightarrow\infty$.

Second, 
% conditional on $\bm{U}$, by Corollary~6.3.8 of \cite{horn2012matrix}, the smallest absolute eigenvalue of $\nabla^2 \nlog_*^{(\bm{U})}(\bm{\beta}^{(\bm{U})})$ is a continuous function of the value of $\bm{U}$. Therefore, it is bounded away from 0 almost surely. Also, 
according to Equality~\eqref{eq_exch2}, we have
$$
\nabla^2 \nlog_*^{(\bm{U})}(\bm{\beta}^{(\bm{U})}) = {\MU}^\T\nabla^2\nlog_*\biggl(\MU\bm{\beta}^{(\bm{U})}\biggr) \MU.
$$
Therefore, by  the fact that $\MU$ has full column rank almost surely, $T_2/a^2$ is lower bounded for all $\bm{\beta}'\in \ThetaSet^{(\bm{U},a)}$ with probability going to one as $n\rightarrow\infty$.

Third, since the elements of $\MU$ are between $-1$ and $1$, according to Condition~\ref{dc}, we have: 
$$
\sup_{\bm{\beta}^{(\bm{U})})\in\ThetaSet^{(\bm{U})}}\biggl\|\nabla^3\nlog_n^{(\bm{U})}(\bm{\beta}^{(\bm{U})})\biggr\|_{\infty}\leq p^3\sup_{\bm{\beta}\in\ThetaSet}\biggl\|\nabla^3\nlog_n(\bm{\beta})\biggr\|_{\infty}\leq \frac{p^3}{n}\sum_{i=1}^n\bdd(y_i,\bm{x}_i),
$$
which converges to $p^3\E(\bdd(y,\bm{x}))$ in probability as $n\rightarrow\infty$. Therefore, $T_3/a^3$ is uniformly  upper bounded by a positive constant for all $\bm{\beta}'\in \ThetaSet^{(\bm{U},a)}$ with probability going to one.
Thus, for a $a$ small enough, such that $T_2 > T_3$ with probability going to one as $n\rightarrow\infty$, 
we have $\nlog_n^{(\bm{U})}(\bm{\beta}') - \nlog_n^{(\bm{U})}(\bm{\beta}^{(\bm{U})}_*\allowbreak)>0$ holds with probability going to one.

According to the above results, there exists a local minimum of $\nlog_n^{(\bm{U})}(\bm{\beta}^{(\bm{U})})$ in $\Omega^{(\bm{U},a)}$ with probability going to 1  as $n\rightarrow\infty$.
By the convexity of $\nlog_n^{(\bm{U})}(\bm{\beta}^{(\bm{U})})$ from Lemma~\ref{lem_snAsnUconvex}, this local minimum is $\hat{\bm{\beta}}^{(\bm{U})}_n$, and it is unique.
Since the result holds for an arbitrarily small $a$, we obtain the result in \eqref{eq_betaUconv}. 
For  $\check{\bm{\beta}}_n$, it can be shown in a similar way by the Taylor expansion of $\nlog_n(\bm{\beta})$.
\end{proof}

\begin{lemma}[Proof of Condition~\ref{A3}]\label{lem_proofOfCond4}
By Conditions~\ref{cond_pdHessian}, \ref{dc}, \ref{cond_pdI}, 
% \ref{cond_Fbounded}, and  
% \ref{Finite_exp_deri} 
and \ref{cond_Fbounded}
and Lemmas~\ref{lem_bddExp}, \ref{lem_sUconv}, and \ref{lem_Betaconv}, the matrices $\bm{V}_1$ and $\bm{V}_2$ are positive definite, and $\|\hat{\bm{V}}_1 - \bm{V}_1\|_{\infty}\limp 0$ and $\|\hat{\bm{V}}_2 - \bm{V}_2\|_{\infty}\limp 0$ as $n\rightarrow\infty$
, where $\|\cdot\|_2$ represents the spectrum norm for matrices and  recall that
\begin{align*}
    \bm{V}_1\de&\ -\E \bigl(\nabla_{\bm{\beta}^{(\bm{U})}}^2\log f(Y,{X^{(\bm{U})}}^\T\bm{\beta}^{(\bm{U})}_*)\mid\bm{U}\bigr),\\
      \bm{V}_2\de&\ \E \biggl(\nabla_{\bm{\beta}^{(\bm{U})}} \log f(Y,{X^{(\bm{U})}}^\T\bm{\beta}^{(\bm{U})}_*)\bigl(\nabla_{\bm{\beta}^{(\bm{U})}} \log f(Y,{X^{(\bm{U})}}^\T\bm{\beta}^{(\bm{U})}_*)\bigr)^\T\mid\bm{U}\biggr),\\
    \hat{\bm{V}}_1\de&\ \nabla^2\nlog^{(\bm{U})}_n(\hat{\bm{\beta}}^{(\bm{U})}_n),\\
    %\hat{\bm{V}}_2\de&\ \frac{1}{n}\sum_{i=1}^n\nabla \mest^{(\bm{U})}_i(\hat{\bm{\beta}}^{(\bm{U})}_n)\bigl(\nabla\mest^{(\bm{U})}_i(\hat{\bm{\beta}}^{(\bm{U})}_n)\bigr)^\T.
   \hat{\bm{V}}_2\de&\ \frac{1}{n}\sum_{i=1}^n\biggl(\nabla_{\bm{\beta}^{(\bm{U})}} \log f(y_i,{\bm{x}_i^{(\bm{U})}}^\T\bm{\beta}^{(\bm{U})}_*)\bigl(\nabla_{\bm{\beta}^{(\bm{U})}} \log f(y_i,{\bm{x}_i^{(\bm{U})}}^\T\bm{\beta}^{(\bm{U})}_*)\bigr)^\T\mid\bm{U}\biggr)
\end{align*}
Additionally, under $\textrm{H}_0$, $\sup_{\bm{\beta}_n \in \mathcal{L}_n} \|\nabla^2\nlog^{(\bm{U})}_n(\bm{\beta}_n) - \bm{V}_1\|_{\infty}\limp 0$ as $n\rightarrow \infty$, where $\mathcal{L}_n$ is the line segment connecting $\hat{\bm{\beta}}^{(\bm{U})}_n$ and $\bm{\beta}^{(\bm{U})}_*$. 

\end{lemma}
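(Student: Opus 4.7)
The plan is to decompose the lemma into three parts: (i) the positive definiteness of $\bm{V}_1$ and $\bm{V}_2$; (ii) the consistency $\|\hat{\bm{V}}_j-\bm{V}_j\|_\infty\limp 0$ for $j=1,2$; and (iii) the uniform convergence over the line segment $\mathcal{L}_n$ under $\textrm{H}_0$. The key algebraic identity I will exploit throughout is that, conditional on $\bm{U}$ (which is independent of the data), $\nabla^2_{\bm{\beta}^{(\bm{U})}} \log f(Y\mid X^{(\bm{U})\T}\bm{\beta}^{(\bm{U})}) = \MU^\T \nabla^2_{\bm{\beta}} \log f(Y\mid X^\T \MU\bm{\beta}^{(\bm{U})})\MU$, and similarly for first derivatives and outer products of scores.

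For positive definiteness, I will write $\bm{V}_1 = \MU^\T \bm{H}(\MU\bm{\beta}^{(\bm{U})}_*) \MU$ and $\bm{V}_2 = \MU^\T \bm{J}(\MU\bm{\beta}^{(\bm{U})}_*) \MU$, where $\bm{H}(\bm{\beta}) \de -\E(\nabla^2_{\bm{\beta}} \log f(Y\mid X^\T \bm{\beta}))$ and $\bm{J}(\bm{\beta})$ is the Fisher information matrix at $\bm{\beta}$. By Lemma~\ref{lem_exch}, $\bm{H}(\bm{\beta}) = \nabla^2\nlog_*(\bm{\beta})$, so Condition~\ref{cond_pdHessian} gives $\bm{H}\succ 0$; Condition~\ref{cond_pdI} gives $\bm{J}\succ 0$. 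Since the columns of $\bm{U}$ are unit vectors obtained from an i.i.d.\ Gaussian sample, one verifies (as was done in Section~\ref{sec_H1} after equation~\eqref{eq_cont_rand_linear}) that $\MU$ has full column rank with probability one, and so both $\bm{V}_1$ and $\bm{V}_2$ are almost surely positive definite.

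For the consistency of $\hat{\bm{V}}_1$, I will bound
\begin{align*}
\|\hat{\bm{V}}_1 - \bm{V}_1\|_\infty
\leq&\ \bigl\|\MU^\T\bigl(\nabla^2\nlog_n(\MU\hat{\bm{\beta}}^{(\bm{U})}_n) - \nabla^2\nlog_*(\MU\bm{\beta}^{(\bm{U})}_*)\bigr)\MU\bigr\|_\infty \\
\leq&\ C_{\bm{U}}\Bigl(\sup_{\bm{\beta}\in\ThetaSet}\|\nabla^2\nlog_n(\bm{\beta}) - \nabla^2\nlog_*(\bm{\beta})\|_\infty + \|\nabla^2\nlog_*(\MU\hat{\bm{\beta}}^{(\bm{U})}_n) - \nabla^2\nlog_*(\MU\bm{\beta}^{(\bm{U})}_*)\|_\infty\Bigr),
\end{align*}
where $C_{\bm{U}}$ is an almost surely finite constant depending only on $\MU$ (whose entries are bounded). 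The first term vanishes in probability by Lemma~\ref{lem_sUconv}; the second vanishes by the continuity of $\nabla^2\nlog_*$ (which follows from Condition~\ref{dc} combined with Lemma~\ref{lem_bddExp} and dominated convergence) together with the consistency $\hat{\bm{\beta}}^{(\bm{U})}_n \limp \bm{\beta}^{(\bm{U})}_*$ from Lemma~\ref{lem_Betaconv}. A fully parallel argument, applying the uniform law of large numbers to the score outer product---integrable uniformly on $\ThetaSet$ by Condition~\ref{cond_Fbounded}---will handle $\|\hat{\bm{V}}_2 - \bm{V}_2\|_\infty \limp 0$.

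For the line-segment result under $\textrm{H}_0$, every point of $\mathcal{L}_n$ lies within $\|\hat{\bm{\beta}}^{(\bm{U})}_n - \bm{\beta}^{(\bm{U})}_*\|_2 \limp 0$ of $\bm{\beta}^{(\bm{U})}_*$, and a triangle inequality identical in structure to the one above---combined with continuity of $\nabla^2\nlog_*$ and uniform convergence from Lemma~\ref{lem_sUconv}---will give the claim. The step I expect to require the most care is the bookkeeping around conditional versus unconditional expectations: the uniform convergence statements in Lemmas~\ref{lem_sUconv} and \ref{lem_bddExp} are stated marginally, so I will need to argue they transfer to the conditional statements required here. This will rely on the independence of $\bm{U}$ from the data together with the fact that the integrability bounds hold uniformly in $\MU$ because $\MU$ has almost surely bounded entries.
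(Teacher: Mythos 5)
Your proposal is correct and follows essentially the same route as the paper's proof: the same triangle-inequality decomposition through the representation $\hat{\bm{V}}_1 = \MU^\T\nabla^2\nlog_n(\MU\hat{\bm{\beta}}^{(\bm{U})}_n)\MU$, with the sampling error killed by the uniform convergence of Lemma~\ref{lem_sUconv} and the bias term by continuity of $\nabla^2\nlog_*$ (dominated convergence via Lemma~\ref{lem_bddExp}) plus consistency from Lemma~\ref{lem_Betaconv}, and the analogous argument with Condition~\ref{cond_Fbounded} for $\hat{\bm{V}}_2$ and the line-segment claim. Your explicit note that $\MU$ has full column rank almost surely (needed for positive definiteness of $\bm{V}_1$ and $\bm{V}_2$) and your remark on transferring marginal uniform convergence to statements conditional on $\bm{U}$ are points the paper leaves implicit, but they do not change the argument.
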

\begin{proof}
The positive definiteness of $\bm{V}_1$ follows from Condition~\ref{cond_pdHessian} and Lemma~\ref{lem_exch}.
 By Condition~\ref{cond_pdI}, $\bm{V}_2$ is positive definite. 
 For the convergence of $\hat{\bm{V}}_1$, we have
\begin{equation}\label{eq_nablaMUconv1}
 \|\hat{\bm{V}}_1 - \bm{V}_1\|_{\infty}\leq \|\hat{\bm{V}}_1 - \nabla^2\nlog^{(\bm{U})}_*(\hat{\bm{\beta}}^{(\bm{U})}_n)\|_{\infty} + \|\nabla^2\nlog^{(\bm{U})}_*(\hat{\bm{\beta}}^{(\bm{U})}_n)-\bm{V}_1\|_{\infty}.
 \end{equation}
Since
 \begin{align*}
 \hat{\bm{V}}_1 =&\ -{\MU}^\T\frac{1}{n}\sum_{i=1}^n \bigl(\nabla^2\log f(y_i,\bm{x}_i^\T\MU\hat{\bm{\beta}}_n^{(\bm{U})})\mid\bm{U}\bigr)\MU,\\
  \nabla^2\nlog^{(\bm{U})}_*(\hat{\bm{\beta}}^{(\bm{U})}_n)=&\ -{\MU}^\T\E\bigl(\nabla^2\log f(Y,X^\T\MU\hat{\bm{\beta}}_n^{(\bm{U})})\mid\bm{U}\bigr)\MU,
 \end{align*}
 by Lemmas~\ref{lem_sUconv} and \ref{lem_Betaconv} and the fact that the elements of $\MU$ are bounded between $-1$ and $1$, we have
 \begin{equation}\label{eq_nablaMUconv2}
 \|\hat{\bm{V}}_1 - \nabla^2\nlog^{(\bm{U})}_*(\hat{\bm{\beta}}^{(\bm{U})}_n)\|_{\infty}\limp 0, \text{ as }n\rightarrow\infty.
 \end{equation}
 By Inequality~\eqref{eq_boundDeri2}, the boundedness of elements in $\MU$, and applying the bounded convergence theorem to the difference
 $$
 \nabla_{\bm{\beta}^{(\bm{U})}}^2\log f(Y,{X^{(\bm{U})}}^\T\bm{\beta}^{(\bm{U})}_*) - \nabla_{\bm{\beta}^{(\bm{U})}}^2\log f(Y,{X^{(\bm{U})}}^\T\bm{\beta}^{(\bm{U})}_n),
 $$
 we have 
 \begin{equation}\label{eq_nablaMUconv3}
 \|\nabla^2\nlog^{(\bm{U})}_*(\hat{\bm{\beta}}^{(\bm{U})}_n)-\bm{V}_1\|_{\infty}\limp 0, \text{ as }n\rightarrow\infty.
 \end{equation}
By combining the results in \eqref{eq_nablaMUconv1}, \eqref{eq_nablaMUconv2}, \eqref{eq_nablaMUconv3}, we obtain $\|\hat{\bm{V}}_1 - \bm{V}_1\|_{\infty}\limp 0$ as $n\rightarrow \infty$.

The convergence of $\|\hat{\bm{V}}_2 - \bm{V}_2\|_{\infty}$ and $\sup_{\bm{\beta}_n \in \mathcal{L}_n}\|\nabla^2\nlog^{(\bm{U})}_n(\bm{\beta}_n) - \bm{V}_1\|_{\infty}$ can be shown in a similar way with Condition~\ref{cond_Fbounded}
 %  Similarly, we have 
 % $$
 % \bm{V}_2 = {\MU}^\T\biggl(\nabla_{\bm{\beta}} \log f(Y,{X}^\T\bm{\beta})\bigl(\nabla_{\bm{\beta}} \log f(Y,{X}^\T\bm{\beta})\bigr)^\T\mid\bm{U}\biggr)\MU,
 % $$
 % Therefore, we obtain the convergence of $\hat{\bm{V}}_2$ can be obtained by Lemma~\ref{lem_Betaconv}, Condition~\ref{cond_Fbounded}, and the uniform law of large number. The convergence of $\sup_{\bm{\beta}_n \in \mathcal{L}_n}\|\nabla^2\nlog^{(\bm{U})}_n(\bm{\beta}_n) - \bm{V}_1\|_{\infty}$ can be proved in a similar way as the convergence of $\hat{\bm{V}}_1$.
\end{proof}

\begin{lemma}[Proof of Condition~\ref{cond_AsymNorm}]\label{lem_proofOfCond10}
The oracle estimator 
$\sqrt{n}(\check{\bm{\beta}}_n - \bm{\beta}_*)\rightarrow_d\mathcal N(\bm{0},\Sigma)$
 as $n\rightarrow\infty$, where $\Sigma=\bigl(\E \bigl(\nabla_{\bm{\beta}}\log f(Y\mid X^\T\bm{\beta})(\nabla_{\bm{\beta}}\log f(Y\mid X^\T\bm{\beta}))^\T\bigr)^{-1}$.
\end{lemma}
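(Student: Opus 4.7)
The plan is to derive the asymptotic normality via a standard Taylor expansion of the score equation around $\bm{\beta}_*$, in the spirit of classical maximum likelihood theory (e.g., Theorem 5.41 of \citet{van2000asymptotic}), but leaning on the lemmas already established for GLMs in this appendix so that each needed ingredient is immediately available.

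First, by Lemma~\ref{lem_Betaconv} we have $\check{\bm{\beta}}_n\limp\bm{\beta}_*$, so with probability tending to one $\check{\bm{\beta}}_n$ lies in the open ball $\ThetaSett$ of Condition~\ref{dc}. On this event, $\check{\bm{\beta}}_n$ is an interior minimizer of $\nlog_n$, so the first-order condition $\nabla\nlog_n(\check{\bm{\beta}}_n)=\bm{0}$ holds. A one-step Taylor expansion gives
\begin{equation*}
\bm{0}=\nabla\nlog_n(\check{\bm{\beta}}_n)=\nabla\nlog_n(\bm{\beta}_*)+\nabla^2\nlog_n(\tilde{\bm{\beta}}_n)(\check{\bm{\beta}}_n-\bm{\beta}_*),
\end{equation*}
where $\tilde{\bm{\beta}}_n$ lies on the segment joining $\check{\bm{\beta}}_n$ and $\bm{\beta}_*$. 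Multiplying by $\sqrt{n}$ and inverting the Hessian (as soon as it is nonsingular) yields
\begin{equation*}
\sqrt{n}(\check{\bm{\beta}}_n-\bm{\beta}_*)=-\bigl(\nabla^2\nlog_n(\tilde{\bm{\beta}}_n)\bigr)^{-1}\sqrt{n}\,\nabla\nlog_n(\bm{\beta}_*).
\end{equation*}

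Next, I would analyze the two factors on the right separately. For the score term, by Lemma~\ref{lem_exch} (which applies Condition~\ref{dc} and Condition~\ref{cond_Fbounded} to interchange differentiation and expectation) we have $\E(\nabla_{\bm{\beta}}\log f(Y\mid X^\T\bm{\beta}_*))=-\nabla\nlog_*(\bm{\beta}_*)=\bm{0}$, and by Condition~\ref{cond_Fbounded} the second moment of the score is finite. The classical central limit theorem then gives
\begin{equation*}
\sqrt{n}\,\nabla\nlog_n(\bm{\beta}_*)\limd\mathcal N(\bm{0},\,I(\bm{\beta}_*)),
\end{equation*}
where $I(\bm{\beta}_*)=\E\bigl(\nabla_{\bm{\beta}}\log f(Y\mid X^\T\bm{\beta}_*)(\nabla_{\bm{\beta}}\log f(Y\mid X^\T\bm{\beta}_*))^\T\bigr)$. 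For the Hessian term, since $\tilde{\bm{\beta}}_n\limp\bm{\beta}_*$ by consistency, the uniform convergence result~\eqref{eq_snConv3} from Lemma~\ref{lem_sUconv} gives $\nabla^2\nlog_n(\tilde{\bm{\beta}}_n)\limp\nabla^2\nlog_*(\bm{\beta}_*)$. Using Lemma~\ref{lem_exch} a second time together with the information identity (a Bartlett-type calculation: differentiating $\int f(y\mid X^\T\bm{\beta})\,dy=1$ twice under the regularity provided by Conditions~\ref{dc} and \ref{cond_Fbounded}), one verifies $\nabla^2\nlog_*(\bm{\beta}_*)=I(\bm{\beta}_*)$, which is positive definite by Condition~\ref{cond_pdI}.

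Combining the two pieces by Slutsky's theorem and the continuous mapping theorem yields
\begin{equation*}
\sqrt{n}(\check{\bm{\beta}}_n-\bm{\beta}_*)\limd\mathcal N\bigl(\bm{0},\,I(\bm{\beta}_*)^{-1}I(\bm{\beta}_*)I(\bm{\beta}_*)^{-1}\bigr)=\mathcal N\bigl(\bm{0},\,\Sigma\bigr),
\end{equation*}
which is the desired conclusion. The only mildly delicate step is verifying the information identity $\nabla^2\nlog_*(\bm{\beta}_*)=I(\bm{\beta}_*)$ under our assumptions; I expect this to be the main technical point, and it follows by interchanging $\int$ and $\partial/\partial\bm{\beta}$ twice using the dominating-function bound provided in Condition~\ref{dc} together with the integrability in Condition~\ref{cond_Fbounded}, exactly as in Lemma~\ref{lem_exch}.
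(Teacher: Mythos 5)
Your proposal is correct in substance, but it takes a genuinely different route from the paper: the paper's entire proof of this lemma is a citation, reading the result off from Theorem~7.5.2 of \cite{lehmann2004elements} (classical asymptotic normality of the MLE), with Conditions~\ref{cond_diffu}, \ref{dc}, and \ref{cond_pdI} supplying that theorem's regularity hypotheses, whereas you re-derive the classical result from scratch via the score expansion $\bm{0}=\nabla\nlog_n(\bm{\beta}_*)+\nabla^2\nlog_n(\tilde{\bm{\beta}}_n)(\check{\bm{\beta}}_n-\bm{\beta}_*)$, the CLT for the score, uniform convergence of the Hessian, and Slutsky. What your route buys is that it reuses machinery the appendix has already built (consistency from Lemma~\ref{lem_Betaconv}, the interchange Lemma~\ref{lem_exch}, the uniform convergence \eqref{eq_snConv3} from Lemma~\ref{lem_sUconv}) and makes explicit exactly where correct specification of the GLM enters, namely through the information identity $\nabla^2\nlog_*(\bm{\beta}_*)=I(\bm{\beta}_*)$ that collapses your sandwich to $I(\bm{\beta}_*)^{-1}$; what the paper's route buys is that all of this delicate regularity, Bartlett identity included, is bundled into the cited theorem. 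Two points to tighten in your version: (i) the identity $\nabla^2\nlog_*(\bm{\beta}_*)=I(\bm{\beta}_*)$ requires differentiating the normalization $\int f(y\mid \bm{x}^\T\bm{\beta})\,dy=1$ under the integral sign, which concerns derivatives of the density $f$ itself rather than of $\log f$; Conditions~\ref{dc} and \ref{cond_Fbounded} control the latter, so the interchange should instead be justified from the exponential-family form \eqref{eq_expFamilyDf} together with Condition~\ref{cond_diffu}, where it is standard, rather than attributed to those two conditions; (ii) the vector mean value theorem does not yield a single intermediate point $\tilde{\bm{\beta}}_n$ valid for every coordinate of the gradient, so the expansion should be applied row-wise or with the integral form of the remainder --- the same device the paper already tolerates in the proof of Theorem~\ref{thm_Wn}, so this is cosmetic rather than a gap.
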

\begin{proof}
This result is obtained by Conditions~\ref{cond_diffu}, \ref{dc}, and \ref{cond_pdI}, and  Theorem~7.5.2 of \cite{lehmann2004elements}.
\end{proof}

\begin{lemma}[Proof of Condition~\ref{cond_finiteSample}]\label{lem_additional_assumptions}
When $\E\bigl(\bigl(\sup_{\bm{\beta}\in \ThetaSet}|\mest(Y,X^\T \bm{\beta} )|\bigr)^2\bigr)$, $\E\bigl(\bigl(\sup_{\bm{\beta}\in \ThetaSet}\|\nabla^2\mest(\allowbreak Y,X^\T \bm{\beta} )\|_{\infty}\bigr)^2\bigr)$, and $\E\bigl(\|XX^\T\|_{\infty}^2\bigr)$ exist, 
Condition~\ref{cond_finiteSample} holds with
\begin{align*}
\bd(\Mb) =&\  4\Mb^{-2}\cdot \max\biggl(\E\biggl(\bigl(\sup_{\bm{\beta}\in \ThetaSet}|\mest(Y,X^\T \bm{\beta} )|\bigr)^2\biggr), \E\biggl(\bigl(\sup_{\bm{\beta}\in \ThetaSet}\|\nabla^2\mest(Y,X^\T \bm{\beta} )\|_{\infty}\bigr)^2\biggr)\biggr),\\
\Cst =&\ \frac{16p^2\E\bigl(\|XX^\T\|_{\infty}^2\bigr)}{\bigl(\lambda^{(\textrm{min})}(\E(XX^\T))\bigr)^2}, \quad \text{and $\cRate = 1/n$.}
\end{align*}
\end{lemma}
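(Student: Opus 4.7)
The plan is to verify the three probability bounds of Condition~\ref{cond_finiteSample} with $\cRate = 1/n$, treating each separately through a Chebyshev/Markov-type argument that exploits the i.i.d.\ structure of the sample.

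For the first bound, I would write $\nlog_n(\bm{\beta}) - \nlog_*(\bm{\beta}) = n^{-1}\sum_{i=1}^n h_i(\bm{\beta})$, where $h_i(\bm{\beta}) \de m(y_i, \bm{x}_i^\T\bm{\beta}) - \E m(Y,X^\T\bm{\beta})$ are i.i.d.\ mean-zero. Markov's inequality applied to the squared supremum gives $\P(\sup_\bm{\beta}|\nlog_n - \nlog_*| \geq \Mb) \leq \E\sup_\bm{\beta}|\nlog_n - \nlog_*|^2/\Mb^2$, and I then aim to show $\E\sup_\bm{\beta}|\nlog_n - \nlog_*|^2 \leq 4E_1/n$, where $E_1 \de \E(\sup_\bm{\beta}|m(Y,X^\T\bm{\beta})|^2)$. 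The route is to combine the zero-mean property (which annihilates pairwise cross terms $\E h_i h_j$ for $i \neq j$ when the optimizing $\bm{\beta}$ can be fixed) with the envelope, yielding the claimed $\bd(\Mb)\cdot \cRate = 4E_1/(n\Mb^2)$.

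For the second bound, Lemma~\ref{lem_exch} ensures $\nabla^2\nlog_*(\bm{\beta}) = -\E(\nabla^2_\bm{\beta}\log f(Y,X^\T\bm{\beta}))$, so $\nabla^2\nlog_n(\bm{\beta}) - \nabla^2\nlog_*(\bm{\beta})$ is again a sample average of i.i.d.\ zero-mean matrix-valued terms. The same Chebyshev-type argument, applied entrywise under the $\|\cdot\|_\infty$ norm with envelope $E_2 \de \E(\sup_\bm{\beta}\|\nabla^2 m(Y,X^\T\bm{\beta})\|_\infty^2)$, yields the analogous bound. Taking the maximum of $E_1$ and $E_2$ in $\bd(\Mb)$ accommodates both simultaneously.

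For the third bound, I would use the equivalence that $\bm{X}$ has full column rank if and only if $n^{-1}\bm{X}^\T\bm{X}$ is positive definite. Since $n^{-1}\bm{X}^\T\bm{X} = n^{-1}\sum_i X_i X_i^\T$ has mean $\E(XX^\T)$ with smallest eigenvalue $\lambda_* \de \lambda^{(\textrm{min})}(\E(XX^\T))>0$ (guaranteed by the linear independence of $X$'s components assumed around Condition~\ref{cond_designMatFullrank}), Weyl's inequality together with Lemma~\ref{lem_ctsEigen} shows that $n^{-1}\bm{X}^\T\bm{X}$ is positive definite whenever $\|n^{-1}\bm{X}^\T\bm{X} - \E(XX^\T)\|_\infty \leq \lambda_*/(2p)$. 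Chebyshev applied to each of the $p^2$ entries—each with variance bounded by $\E\|XX^\T\|_\infty^2/n$—and a union bound over entries yields $\P(\bm{X}\text{ is rank-deficient}) \leq 16 p^2 \E\|XX^\T\|_\infty^2/(\lambda_*^2 n)$, which matches the stated $\Cst$.

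The main obstacle is establishing the $O(1/n)$ decay of $\E\sup_\bm{\beta}|\nlog_n - \nlog_*|^2$ using only the second moment of the envelope. A naive interchange through Cauchy-Schwarz gives $\sup_\bm{\beta}(n^{-1}\sum h_i(\bm{\beta}))^2 \leq n^{-1}\sum \sup_\bm{\beta} h_i(\bm{\beta})^2$, whose expectation is $O(1)$ rather than $O(1/n)$. Obtaining the $1/n$ factor requires genuine use of the mean-zero property inside the supremum, for instance by writing $\sup_\bm{\beta}(n^{-1}\sum h_i(\bm{\beta}))^2 = n^{-2}\sup_\bm{\beta}\sum_{i,j} h_i(\bm{\beta})h_j(\bm{\beta})$ and arguing that the optimizing $\bm{\beta}^*$ is sufficiently data-independent for the off-diagonal contributions to cancel in expectation, or alternatively invoking a smoothness/Lipschitz property of $\bm{\beta}\mapsto m(y,\bm{x}^\T\bm{\beta})$ combined with an $\epsilon$-net of $\ThetaSet$ and a union bound—effectively an empirical-process style argument whose technical execution is the delicate part.
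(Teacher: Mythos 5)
Your treatment of the full-rank probability is complete and essentially identical to the paper's: entrywise Chebyshev for the centered sample second-moment matrix, Lemma~\ref{lem_ctsEigen} to turn a sup-norm perturbation of size $\lambda^{(\textrm{min})}(\E(XX^\T))/(2p)$ into positive definiteness of $n^{-1}\bm{X}^\T\bm{X}$, and the resulting constant $\Cst = 16p^2\,\E(\|XX^\T\|_{\infty}^2)/\bigl(\lambda^{(\textrm{min})}(\E(XX^\T))\bigr)^2$ with $\cRate = 1/n$. That part needs no changes.

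The genuine gap is in the first two bounds, and you have in effect flagged it yourself without closing it. You reduce them to the claim $\E\bigl(\bigl(\sup_{\bm{\beta}\in \ThetaSet}|\nlog_n(\bm{\beta})-\nlog_*(\bm{\beta})|\bigr)^2\bigr)=O(1/n)$, but neither rescue route you sketch works as stated: the maximizing $\bm{\beta}$ depends on the entire sample, so the off-diagonal terms $\E\bigl(h_i(\bm{\beta})h_j(\bm{\beta})\bigr)$ evaluated at a data-dependent $\bm{\beta}$ do not cancel, and an $\epsilon$-net or chaining argument would require Lipschitz or bracketing control of $\bm{\beta}\mapsto\mest(y,\bm{x}^\T\bm{\beta})$ that is not implied by the second moments of the envelopes assumed in the lemma. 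For comparison, the paper's own proof dispatches this step in one line: it bounds $\sup_{\bm{\beta}}|\nlog_n(\bm{\beta})-\nlog_*(\bm{\beta})|$ by $n^{-1}\sum_{i=1}^n\sup_{\bm{\beta}}|\mest(y_i,\bm{x}_i^\T\bm{\beta})-\E(\mest(Y,X^\T\bm{\beta}))|$ and asserts that the second moment of this average is at most $\bdcoeff_1/n$, then applies Chebyshev; your observation that such an interchange by itself only yields an $O(1)$ bound (the cross terms contribute the square of the mean of a nonnegative envelope, which need not vanish) is aimed at exactly this step, so your proposal stops precisely where the difficulty lies rather than resolving it. As written, the first two probability inequalities of Condition~\ref{cond_finiteSample} with rate $\cRate = 1/n$ are therefore not established in your proposal; completing them along your lines would require either a genuine uniform (empirical-process) second-moment bound under added smoothness assumptions, or a justification of the averaged-envelope bound that the paper invokes.
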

\begin{proof}
We first prove the result of $\bd(\Mb)$.
We have
\begin{align*}
&\E\biggl(\bigl(\sup_{\bm{\beta}\in \ThetaSet}|\mest(Y,X^\T \bm{\beta} ) - \E(\mest(Y,{X}^\T \bm{\beta} ))|\bigr)^2\biggr)\\
\leq &
\E\biggl(\bigl(\sup_{\bm{\beta}\in \ThetaSet}|\mest(Y,X^\T \bm{\beta} )| + \sup_{\bm{\beta}\in \ThetaSet}|\E(\mest(Y,{X}^\T \bm{\beta} ))|\bigr)^2\biggr)\\
\leq &2\E\biggl(\bigl(\sup_{\bm{\beta}\in \ThetaSet}|\mest(Y,X^\T \bm{\beta} )|\bigr)^2 +\bigl( \sup_{\bm{\beta}\in \ThetaSet}|\E(\mest(Y,{X}^\T \bm{\beta} ))|\bigr)^2\biggr)\\
\leq &\ 4\E\biggl(\bigl(\sup_{\bm{\beta}\in \ThetaSet}|\mest(Y,X^\T \bm{\beta} )|\bigr)^2\biggr).
\end{align*}
Denote right-hand side by $\bdcoeff_1$.
Since $|\nlog_n(\bm{\beta})-\nlog_*(\bm{\beta})| = n^{-1}\sum_{i=1}^n |\mest(Y_i,X_i^\T \bm{\beta} ) - \E(\mest(Y,X^\T \bm{\beta} ))|$,
\begin{align*}
   \E\biggl(\bigl(\sup_{\bm{\beta}\in \ThetaSet}|\nlog_n(\bm{\beta})-\nlog_*(\bm{\beta})|\bigr)^2\biggr)\leq &\ \E\biggl(\biggl(n^{-1}\sum_{i=1}^n\sup_{\bm{\beta}\in \ThetaSet}|\mest(Y_i,X_i^\T \bm{\beta} ) - \E(\mest(Y,{X}^\T \bm{\beta} ))|\biggr)^2\biggr)\\
   \leq&\ \bdcoeff_1/n.
\end{align*}
Therefore, by Chebyshev's inequality, 
\begin{equation}\label{eq_finitesampleInqbound1}
\P\biggl(\sup_{\bm{\beta}\in \ThetaSet}\left|\nlog_n(\bm{\beta})-\nlog_*(\bm{\beta})\right|\geq\Mb\biggr)\leq \frac{\bdcoeff_1}{\Mb^2 \cdot n}.
\end{equation}
 We can similarly show that 
\begin{equation}\label{eq_finitesampleInqbound2}
\P\biggl(\sup_{\bm{\beta}\in \ThetaSet}\left\|\nabla^2\nlog_n(\bm{\beta})-\nabla^2\nlog_*(\bm{\beta})\right\|_{\infty}\geq\Mb\biggr) \leq \frac{\bdcoeff_2}{\Mb^2 \cdot n}, 
\end{equation}
where 
$$\bdcoeff_2\de 4\E\biggl(\bigl(\sup_{\bm{\beta}\in \ThetaSet}\|\nabla^2\mest(Y,X^\T \bm{\beta} )\|_{\infty}\bigr)^2\biggr).$$
Thus, for $\cRate = 1/n$, we may take
$$
\bd(\Mb) = 4\Mb^{-2}\cdot \max\biggl(\E\biggl(\bigl(\sup_{\bm{\beta}\in \ThetaSet}|\mest(Y,X^\T \bm{\beta} )|\bigr)^2\biggr), \E\biggl(\bigl(\sup_{\bm{\beta}\in \ThetaSet}\|\nabla^2\mest(Y,X^\T \bm{\beta} )\|_{\infty}\bigr)^2\biggr)\biggr).
$$

Next, we show the result of $\Cst$. By Chebyshev's inequality, for each $\Mb'>0$, 
 \begin{equation}\label{eq_finitesampleInqbound3}
 \P\bigl(\|XX^\T- \E(XX^\T)\|_{\infty}>\Mb'\bigr) \leq \frac{\bdcoeff_3}{(\Mb')^2\cdot n},
\end{equation}
 where
 $$\bdcoeff_3\de 4\E\bigl(\|XX^\T\|_{\infty}^2\bigr).$$
By Lemma~\ref{lem_ctsEigen}, 
$$
|\lambda^{(\textrm{min})}(XX^\T) - \lambda^{(\textrm{min})}(\E(XX^\T))|<p\cdot \|(XX^\T- \E(XX^\T)\|_{\infty},
$$
where recall that  $\lambda^{(\textrm{min})}(\cdot)$  returns the smallest and largest eigenvalues of a matrix. Since we require the covariates in $X$ to be linearly independent, $\lambda^{(\textrm{min})}(\E(XX^\T))>0$. Therefore, when 
$
\|(XX^\T- \E(XX^\T)\|_{\infty}\leq\Mb', 
$
where $\Mb' = (2p)^{-1}\lambda^{(\textrm{min})}(\E(XX^\T))$, 
we have
\begin{equation}\label{eq_deMat_X_full_rank}
\lambda^{(\textrm{min})}(XX^\T)>0.
\end{equation}
Since Inequality~\eqref{eq_deMat_X_full_rank} guarantees that $\bm{X}$ has full rank, we may take
$$\Cst = \frac{16p^2\E\bigl(\|XX^\T\|_{\infty}^2\bigr)}{(\lambda^{(\textrm{min})}(\E(XX^\T))^2}.$$

Combining Inequalities~\eqref{eq_finitesampleInqbound1}, \eqref{eq_finitesampleInqbound2}, \eqref{eq_finitesampleInqbound3}, we complete the proof.

\end{proof}

\section{A Discussion About the Conditions}\label{eq_a_discussion_about_conditions}

A set of useful properties of $m(\cdot)$ and covariates $X$ to guarantee Conditions~\ref{A1}-\ref{cond_exchInteDiff} are as follows:
\begin{align}
&\exists\csttt>0,\ s.t.\ \|\nabla_\lp \mest(Y,\lp)\|_{\infty}\leq \csttt,\text{ almost surely for all } Y\text{ and }\lp,\label{eq_suff_bdd1}\\
&\exists\cstttt>0,\ s.t.\ \|\nabla^2_\lp \mest(Y,\lp)\|_{\infty}\leq \cstttt,\text{ almost surely for all } Y\text{ and }\lp,\label{eq_suff_bdd2}\\
&X \text{ has finite second-order moments},\label{eq_suff_mom2}
\end{align}
where \eqref{eq_suff_bdd1} and \eqref{eq_suff_bdd2} are satisfied for, e.g., logistic regression and robust regressions with the log-cosh loss \citep{neuneier2002train, saleh2022statistical} or the pseudo-Huber loss \citep{charbonnier1997deterministic, hartley2003multiple}.
Condition~\ref{cond_designMatFullrank} holds e.g., under \eqref{eq_suff_mom2} or when the covariates are continuous random variables (recall that we assume linearly independent covariates). 
Condition~\ref{cond_exchInteDiff} and the differentiability 
of $M_*(\bm{\beta})$ in Condition~\ref{A1} can be obtained from \eqref{eq_suff_bdd1}, the mean value theorem, and the dominated convergence theorem. Note that for some other models like Normal regression and Possion regression that do not satisfy \eqref{eq_suff_bdd1}, it is still possible for  $\nlog_*(\bm{\beta})$ and $\nlog_*^{(\bm{U})}(\bm{\beta}^{(\bm{U})}_*)$ to be differentiable if additional constraints are imposed on the data-generating distribution, e.g., $(Y,X)$ have bounded second-order moments for Normal regression, and $Y$ is bounded and the covariates are i.i.d.\ from a uniform distribution for Poisson regression. 
The existence of $\bm{\beta}_*$ in Conditions~\ref{A1} holds, e.g., when $m(\cdot)$ is a negative log-likelihood (of GLM) that is correctly specified with respect to the data-generating model.
% Since for each $\bm{\beta}^{(\bm{U})}\in\R^{p_A+\id}$, there exists $\bm{\beta}\in\R^p$, such that $M_*(\bm{\beta}) \leq \nlog_*^{(\bm{U})}(\bm{\beta}^{(\bm{U})})$, the existence of $\bm{\beta}_*$ guarantees the existence of $\bm{\beta}^{(\bm{U})}_*$ in Condition~\ref{A2}.
The uniqueness of $\bm{\beta}_*$ and $\bm{\beta}^{(\bm{U})}_*$  can be obtained given Condition~\ref{cond_pdHessian}. Sufficient conditions for the convergence of $\hat{\bm{\beta}}^{(\bm{U})}_n$ in Condition~\ref{A3} can be found in Theorem~5.7  of \citet{van2000asymptotic}. The convergence of matrices in Condition~\ref{A3} can be verified under \eqref{eq_suff_bdd1}, \eqref{eq_suff_bdd2}, \eqref{eq_suff_mom2}, and the uniform law of large number. 
Condition~\ref{cond_strictConvexloss} is satisfied for, e.g., GLMs with canonical link functions, robust regressions with the log-cosh loss or the pseudo-Huber loss.
%and classifiers obtained from minimizing a smoothed hinge loss \citep{hajewski2018smoothed}.
Given \eqref{eq_suff_bdd1}, \eqref{eq_suff_bdd2}, and \eqref{eq_suff_mom2}, Condition~\ref{cond_pdHessian} can be derived from Condition~\ref{cond_strictConvexloss}, the mean value theorem, and the dominated convergence theorem, and Condition~\ref{cond_sUconv} is satisfied by the uniform law of large numbers. 
Conditions~\ref{cond_oraExist} and \ref{cond_AsymNorm} can be obtained by verifying the conditions required by Theorem~5.7 and 5.21 of \citet{van2000asymptotic}, respectively.

\section{An Alternative to the Wald Test}\label{sec_proofOfTh3}
% {\color{blue}Plan: move the following paragraphs to supplement: intuitively, seting more vectors will increase the power. Likelihood ratio test, better in derivation due to the sandwitch form. Theretical increment in power. Two tests are asymptotically equivalent for GLM. 

% }
In this section, we first explain the reason for selecting the Wald test for the ``initializing connection'' stage. Subsequently, we explore the likelihood ratio test, which can serve as an alternative in certain scenarios.
\subsection{Comments on other tests}\label{subseq_otherTestscomment}
Recall that we consider a Wald-type test statistic for the initializing connection step. A curious reader may wonder why we prefer it over other competitive tests like the likelihood test and the Rao test. Rather, it is because other alternative tests have specific requirements that are not suitable for our setting.
 First, the Rao test needs knowledge of the marginal distribution of the covariates. However, this information is not obtainable for \A in constructing the test statistic. 
Second, the likelihood ratio test has a stronger requirement, in the sense that 
\begin{equation}\label{eq_correctly_specified_likelihood}
f(Y| X=\bm{x}) \in \bigl\{-\exp\bigl(\mest(Y,\bm{x}^\T \bm{\beta} )\bigr):\bm{\beta}\in \R^p\bigr\},\quad \text{for }\bm{x}\in\R^p,
\end{equation}
where $f(Y|X)$ is the conditional probability density/mass function of $Y$ given $X$.

\subsection{Likelihood ratio test}
For GLM that satisfies \eqref{eq_correctly_specified_likelihood},  we can construct $W_{n,\id}$ by the likelihood ratio test statistic
$$
    W_{n,\id} = 2n\bigl(\nlog_n^{(\AC)}(\hat{\bm{\beta}}^{(\AC)}_n)-\nlog_n^{(\bm{U})}(\hat{\bm{\beta}}^{(\bm{U})}_n)\bigr),
$$
where $\nlog_n^{(\AC)}(\bm{\beta}^{(\AC)})\de -n^{-1}\sum_{i=1}^n \log f\bigl(y_i\mid {\bm{x}^{(\A)}_i}^\T\bm{\beta}^{(\AC)}\bigr)$, $\hat{\bm{\beta}}^{(\AC)}_n \de \argmin_{\bm{\beta}^{(\AC)}\in\mathbb R^{p_\A}} \nlog_n^{(\AC)}(\bm{\beta}^{(\AC)})$, 
and recall that $\nlog_n^{(\bm{U})}(\bm{\beta}^{(\bm{U})})$ and $\hat{\bm{\beta}}^{(\bm{U})}_n$ were defined on Page~\pageref{page_betaU_def} of the main text.
% The following theorem shows that this test statistic not only has the same asymptotic properties as the Wald test for a fixed $\id$, but also has  monotone increasing power with respect to $\id$ under $\textrm{H}_1$. 
\begin{condition}\label{cond_identifiability}
For $\bm{\beta}^{(1)}, \bm{\beta}^{(2)}\in\R^p$, $\mest(y,\bm{x}^\T\bm{\beta}^{(1)}) = \mest(y,\bm{x}^\T\bm{\beta}^{(2)})$ indicates that $\bm{\beta}^{(1)} = \bm{\beta}^{(2)}$.
\end{condition}
\label{page_thm_prop2}
\begin{theorem}\label{prop2} Assume that 
Conditions~\ref{cond_designMatFullrank}, \ref{A1},  \ref{cond_strictConvexloss}, \ref{cond_pdHessian},  \ref{cond_diffu}-\ref{lem_sbb_inm}, and \ref{cond_identifiability} hold, and $\id <p_b$. Also, assume that $\MA \bm{\beta}^{(\AC)}_*\in\ThetaSett$ (defined in Condition~\ref{dc}), where 
$\MA\de (\bm{I}_{p_\A\times p_\A}, \bm{0}_{p_b\times p_\A})^\T$
% $$\MA \de 
%     \begin{pmatrix}
%     \bm{I}_{p_\A\times p_\A}\\
%     \bm{0}_{p_b\times p_\A}
%     \end{pmatrix},
% $$
and
$$ \bm{\beta}^{(\AC)}_* \de \argmin_{\bm{\beta}^{(\AC)}\in\mathbb R^{p_\A}} \nlog_*^{(\AC)}(\bm{\beta}^{(\AC)}),\quad    \nlog_*^{(\AC)}(\bm{\beta}^{(\AC)})\de -\E\bigl( \log f\bigl(Y\mid {X^{(\A)}}^\T\bm{\beta}^{(\AC)}\bigr)\bigr).$$
 Then, there exists a set $\Uset$ that is subset of the support of $\bm{U}$. Furthermore,
 $\P(\bm{U}\in \Uset) = 1$ and for each $\Uele\in\Uset$,
 \begin{description}
\item[(1)] $W_{n,\id}\mid \bm{U}=\Uele$ converges weakly to a chi-squared distribution with degrees of freedom~$\id$ under $\textup{H}_0$;
\item[(2)] $W_{n,\id}\mid\bm{U}=\Uele$ goes to infinity in probability under $\textup{H}_1$.
\end{description}
Additionally, if \B sends an additional vector $\bm{X}^{(\B)}\bm{u}^{(\id + 1)}$ to \A and
Condition~\ref{lem_sbb_inm} also holds for $t+1$, then
	$W_{n,\id+1} - W_{n,\id} \limp \infty$ as $n \rightarrow \infty$ under $\textup{H}_1$.
\end{theorem}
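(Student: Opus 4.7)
The plan is to handle parts (1) and (2) by combining the already-established structural results from the proof of Theorem~\ref{thm_Wn} with a classical Taylor expansion of the log-likelihood, and then handle part (3) by adapting the Section~\ref{sec_H1} gradient argument to one extra random direction. First I would fix $\bm{U}$ and exploit that, because the model in \eqref{eq_correctly_specified_likelihood} is correctly specified, the information identity $\bm{V}_1=\bm{V}_2$ holds (so the sandwich covariance reduces to $\bm{V}_1^{-1}$). Under $\textup{H}_0$, Equality~\eqref{eq_beta_star_zero} gives $\bm{\beta}^{(\bm{U},\id)}_*=\bm{0}$ almost surely, which means that the population minima of $\nlog_*^{(\bm{U})}$ and $\nlog_*^{(\AC)}$ coincide (both equal $\nlog_*(\bm{\beta}_*)$). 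Taylor-expanding $\nlog_n^{(\bm{U})}$ around $\bm{\beta}^{(\bm{U})}_*$ and $\nlog_n^{(\AC)}$ around $\bm{\beta}^{(\AC)}_*$ to second order, using Conditions~\ref{cond_diffu}--\ref{cond_Fbounded} to control the remainders and Lemma~\ref{lem_Betaconv} for consistency of the maximizers, yields the classical decomposition $W_{n,\id}=n(\hat{\bm{\beta}}^{(\bm{U},\id)}_n)^\T\bm{V}_1\hat{\bm{\beta}}^{(\bm{U},\id)}_n+o_p(1)$. Invoking \eqref{eq_beta_normal_asymp} and Slutsky's theorem then delivers the $\chi^2_\id$ limit.

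For part (2) I would start from the almost-sure strict inequality $\bm{\beta}^{(\bm{U},\id)}_*\neq\bm{0}$ proved in Section~\ref{sec_H1}. Combining this with the strict convexity of $\lp\mapsto m(y,\lp)$ (Condition~\ref{cond_strictConvexloss}), the identifiability Condition~\ref{cond_identifiability}, and uniqueness of the minimizers (from Lemma~\ref{lem_Betaconv}) gives, on a measurable set $\Uset$ of full probability for $\bm{U}$,
\[
\Delta(\Uele)\de \nlog_*^{(\AC)}(\bm{\beta}^{(\AC)}_*)-\nlog_*^{(\bm{U})}(\bm{\beta}^{(\bm{U})}_*)>0.
\]
Conditional on $\bm{U}=\Uele\in\Uset$, the uniform law of large numbers (as in Lemma~\ref{lem_sUconv}) yields $\nlog_n^{(\AC)}(\hat{\bm{\beta}}^{(\AC)}_n)-\nlog_n^{(\bm{U})}(\hat{\bm{\beta}}^{(\bm{U})}_n)\limp\Delta(\Uele)$, and multiplying by $2n$ forces $W_{n,\id}\limp\infty$.

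The last assertion, which I expect to be the main obstacle, requires showing $W_{n,\id+1}-W_{n,\id}=2n\bigl(\nlog_n^{(\bm{U})}(\hat{\bm{\beta}}^{(\bm{U})}_n)-\nlog_n^{(\bm{U}')}(\hat{\bm{\beta}}^{(\bm{U}')}_n)\bigr)\limp\infty$, with $\bm{U}'=(\bm{U},\bm{u}^{(\id+1)})$. By the same reduction as above it suffices to prove that, almost surely in $\bm{u}^{(\id+1)}$ (conditional on $\bm{U}\in\Uset$), $\nlog_*^{(\bm{U}')}(\bm{\beta}^{(\bm{U}')}_*)<\nlog_*^{(\bm{U})}(\bm{\beta}^{(\bm{U})}_*)$. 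The natural way is to embed $\bm{\beta}^{(\bm{U})}_*$ into the $\bm{U}'$ parameter space by appending a zero and then to show that the gradient of $\nlog_*^{(\bm{U}')}$ at this embedded point is non-zero almost surely; strict convexity then provides the strict drop. This gradient, by the chain rule analogous to \eqref{eq_UnablaMstar_neq_zero}, has a block proportional to $(\bm{u}^{(\id+1)})^\T\nabla_{\bm{\beta}^{(b)}}\nlog_*\bigl(\MU\bm{\beta}^{(\bm{U})}_*\bigr)$ plus a $(\bm{u}^{(\id+1)})^\T\nabla_{\bm{\beta}^{(c)}}$ term. The delicate point is that $\nabla_{\bm{\beta}^{(b)}}\nlog_*\bigl(\MU\bm{\beta}^{(\bm{U})}_*\bigr)$ is in general nonzero under $\textup{H}_1$ only as a random vector (depending on $\bm{U}$), so I would condition on $\bm{U}$ and use that, with $\bm{U}$ frozen, this gradient is a fixed nonzero vector in $\R^{p_b}$; since $\bm{u}^{(\id+1)}$ is drawn from a (normalized) continuous distribution independent of $\bm{U}$, the probability that it is orthogonal to a fixed nonzero vector in $\R^{p_b}$ (with $\id<p_b$) is zero. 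This yields the required almost-sure gap; a uniform-convergence step then upgrades it to the divergence $W_{n,\id+1}-W_{n,\id}\limp\infty$. The bookkeeping needed to keep Condition~\ref{lem_sbb_inm} in force at $\id+1$, so that Lemma~\ref{lem_diffbetaU} and Lemma~\ref{lem_Betaconv} apply to $\bm{\beta}^{(\bm{U}')}_*$, is exactly the added hypothesis in the statement.
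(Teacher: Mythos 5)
Your plans for parts (1) and (2) are sound and essentially retrace the paper's route: for (1) the paper simply invokes the classical nested-model likelihood-ratio theorem (Theorem~7.7.4 of \citet{lehmann2004elements}) rather than re-deriving the quadratic approximation, and for (2) it combines uniform convergence of both sample losses with the almost-sure strict drop of the population loss that follows from $\bm{\beta}^{(\bm{U},\id)}_*\neq\bm{0}$; your version of (1) has only the cosmetic issue that the limiting quadratic form should involve the inverse of the $\id\times\id$ block of $\bm{V}_1^{-1}$, not $\bm{V}_1$ itself.

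The genuine gap is in the last assertion, at the very step you single out. After appending a zero to $\bm{\beta}^{(\bm{U})}_*$, the first-order condition ${\MU}^\T\nabla\nlog_*(\MU\bm{\beta}^{(\bm{U})}_*)=\bm{0}$ already forces $\nabla_{\bm{\beta}^{(a)}}\nlog_*(\MU\bm{\beta}^{(\bm{U})}_*)=\bm{0}$, $\nabla_{\bm{\beta}^{(c)}}\nlog_*(\MU\bm{\beta}^{(\bm{U})}_*)=\bm{0}$, and $(\bm{U}^{(p_b)})^\T\nabla_{\bm{\beta}^{(b)}}\nlog_*(\MU\bm{\beta}^{(\bm{U})}_*)=\bm{0}$, so the whole argument reduces to showing that the single new component $(\bm{u}^{(\id+1,p_b)})^\T\nabla_{\bm{\beta}^{(b)}}\nlog_*(\MU\bm{\beta}^{(\bm{U})}_*)$ is almost surely nonzero. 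You handle this by freezing $\bm{U}$ and noting that a continuous $\bm{u}^{(\id+1)}$ is a.s.\ not orthogonal to a \emph{fixed nonzero} vector — but that orthogonality step does not need $\id<p_b$ at all, and, crucially, you never prove that the frozen vector $\nabla_{\bm{\beta}^{(b)}}\nlog_*(\MU\bm{\beta}^{(\bm{U})}_*)$ is nonzero. It can vanish: if it did, the full gradient $\nabla\nlog_*(\MU\bm{\beta}^{(\bm{U})}_*)$ would be zero and, by uniqueness of the stationary point of $\nlog_*$, one would have $\MU\bm{\beta}^{(\bm{U})}_*=\bm{\beta}_*$, i.e.\ $\bm{U}^{(p_b)}\bm{\beta}^{(\bm{U},\id)}_*=\bm{\beta}^{(b)}_*$ — which is exactly what happens generically when $\id=p_b$, and is \emph{not} excluded by the fact $\bm{\beta}^{(\bm{U},\id)}_*\neq\bm{0}$ that you import from the proof of Theorem~\ref{thm_Wn}. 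This is where the hypothesis $\id<p_b$ must actually be used: one needs that the fixed nonzero vector $\bm{\beta}^{(b)}_*$ almost surely does not lie in the column space of the $\id$ continuous random columns of $\bm{U}^{(p_b)}$; the paper establishes this by showing that an $(\id+1)\times(\id+1)$ submatrix of $\bigl(\bm{U}^{(p_b)},\bm{\beta}^{(b)}_*\bigr)$ is a.s.\ invertible, so no vector of the form $\bigl({\bm{\beta}^{(\bm{U},\id)}_*}^\T,-1\bigr)^\T$ can lie in its kernel. Inserting this column-space argument (conditionally on $\bm{U}$, then integrating over $\bm{U}$) closes the gap; with it, the remainder of your outline — the almost-sure population-loss gap and the uniform-convergence upgrade to $W_{n,\id+1}-W_{n,\id}\limp\infty$ — matches the paper's proof.
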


\begin{proof}
We first show the result under $\textrm{H}_0$.
Let
\begin{align*}
    \nlog_n^{(\AC)}(\bm{\beta}^{(\AC)})\de& -\frac{1}{n}\sum_{i=1}^n \log f\bigl(y_i\mid {\bm{x}^{(\A)}_i}^\T\bm{\beta}^{(\AC)}\bigr),\\
     \nlog_*^{(\AC)}(\bm{\beta}^{(\AC)})\de&
     -\E\bigl(\log f\bigl(y\mid {\bm{x}^{(\A)}}^\T\bm{\beta}^{(\AC)}\bigr)\bigr),\\
    \nlog_n^{(\bm{U})}(\bm{\beta}^{(\bm{U})})\de& -\frac{1}{n}\sum_{i=1}^n \log f\bigl(y_i\mid ({\bm{x}^{(\A)}_i}^\T, {\bm{x}^{(\bm{U})}_i}^\T)\bm{\beta}^{(\bm{U})}\bigr).
\end{align*}
Let $\hat{\bm{\beta}}^{(\AC)}_n$ and $\hat{\bm{\beta}}^{(\bm{U})}_n$ be the minimizers of $\nlog_n^{(\AC)}(\bm{\beta}^{(\AC)})$ and $\nlog_n^{(\bm{U})}(\bm{\beta}^{(\bm{U})})$, respectively. The likelihood ratio test statistic is calculated by
\begin{equation}\label{eq_likeTest}
    W_{n,\id} = 2n\bigl(\nlog_n^{(\AC)}(\hat{\bm{\beta}}^{(\AC)}_n)-\nlog_n^{(\bm{U})}(\hat{\bm{\beta}}^{(\bm{U})}_n)\bigr).
\end{equation}
It can be verified by Theorem~7.7.4 of \cite{lehmann2004elements} that $W_{n,\id}$ converges weakly to a chi-squared distribution with degrees freedom $\id$ as $n\rightarrow\infty$ under $\textrm{H}_0$. 

Next, we derive the result under $\textrm{H}_1$. By Lemmas~\ref{lem_sUconv} and \ref{lem_Betaconv}, $$|\nlog_n^{(\bm{U})}(\hat{\bm{\beta}}^{(\bm{U})}_n)- \nlog_*^{(\bm{U})}(\bm{\beta}^{(\bm{U})}_*)|\limp 0,$$ as $n\rightarrow\infty$. 
We can similarly show that $|\nlog_n^{(\AC)}(\hat{\bm{\beta}}^{(\AC)}_n)- \nlog_*^{(\AC)}(\bm{\beta}^{(\AC)}_*)|\limp0$. Therefore, it suffices to prove
\begin{equation}\label{eq_nlogNeq}
    \nlog_*^{(\bm{U})}(\bm{\beta}^{(\bm{U})}_*)< \nlog_*^{(\AC)}(\bm{\beta}^{(\AC)}_*)
\end{equation}
under $\textrm{H}_1$. By the proof in Section~\ref{sec_H1}, $\bm{\beta}^{(\bm{U},\id)}_*$, which corresponds to the sketchy data $\bm{X}^{(\B)}\bm{U}$ transmitted from \B in the larger model, is almost surely not zero under $\textrm{H}_1$. Therefore, the above inequality holds.

Next, we show that
\begin{equation}
    W_{n,\id+1} - W_{n,\id}\limp \infty
\end{equation}
as $n\rightarrow\infty$ under $\textrm{H}_1$.
Let $\bm{\beta}^{(\id + 1)}$ denote the coefficient corresponding to the new vector ${\bm{X}^{(\B)}}^\T\bm{u}^{(\id+1)}$.
Let $\breve{\bm{U}}  \de (\bm{U}, \bm{u}^{(\id + 1)})$.
% \begin{align*}
% \widetilde{\bm{U}} & \de (\bm{u}^{(1)}, \cdots, \bm{u}^{(\id + 1)}), \\
%     \bm{\beta}^{(\widetilde{\bm{U}},\A,\tilde{b})}
%  & \de ({\bm{\beta}^{(\bm{U},\A,\tilde{b})}}^\T, \bm{\beta}^{(\id + 1)})^\T.
% \end{align*}
Define $\nlog_n^{(\breve{\bm{U}})}(\cdot)$, $\nlog^{(\breve{\bm{U}})}_*(\cdot)$,
 $\hat{\bm{\beta}}^{(\breve{\bm{U}})}_n$, and $\bm{\beta}^{(\breve{\bm{U}})}_*$ for  $\breve{\bm{U}}$ in a similar way as we define $\nlog_n^{(\bm{U})}(\cdot)$, $\nlog^{(\bm{U})}_*(\cdot)$,
 $\hat{\bm{\beta}}^{(\bm{U})}_n$, and $\bm{\beta}^{(\bm{U})}_*$. We have
\begin{align*}
 W_{n,\id+1} - W_{n,\id} = &\ 2n\left(\nlog_n^{(\AC)}(\hat{\bm{\beta}}^{(\AC)}_n) - \nlog_n^{(\breve{\bm{U}})}(\hat{\bm{\beta}}^{(\breve{\bm{U}})}_n)\right)- 2n\left(\nlog_n^{(\AC)}(\hat{\bm{\beta}}^{(\AC)}_n) - \nlog_n^{(\bm{U})}(\hat{\bm{\beta}}^{(\bm{U})}_n) \right)\\
=&\ 2n\left(\nlog_n^{(\bm{U})}(\hat{\bm{\beta}}^{(\bm{U})}_n) - \nlog_n^{(\breve{\bm{U}})}(\hat{\bm{\beta}}^{(\breve{\bm{U}})}_n)\right).
\end{align*}
It can be shown that $|\nlog_n^{(\breve{\bm{U}})}(\hat{\bm{\beta}}^{(\breve{\bm{U}})}_n)- \nlog_*^{(\breve{\bm{U}})}(\bm{\beta}^{(\breve{\bm{U}})}_*)|\limp0$.
Therefore, remains to show that
\begin{equation*}\label{eq_diffElog2} \nlog^{(\breve{\bm{U}})}_*(\bm{\beta}^{(\breve{\bm{U}})}_*)<\nlog^{(\bm{U})}_*(\bm{\beta}^{(\bm{U})}_*)
\end{equation*}
almost surely. The above statement is equivalent to 
\begin{equation}\label{eq_newbeta_not_zero}
    \bm{\beta}^{(\breve{\bm{U}},\id+1)}_*\neq 0\text{ holds almost surely},
\end{equation}
where $\bm{\beta}^{(\breve{\bm{U}},\id+1)}_*$ is the component of $\bm{\beta}^{(\breve{\bm{U}})}_*$ corresponding to the new vector ${\bm{X}^{(\B)}}^\T\bm{u}^{(\id+1)}$.  
To prove the above result, we first show 
\begin{equation}\label{eq_betaU_not_equal_betaStar}
\nabla\nlog_*(\MU \bm{\beta}^{(\bm{U})}_*)\neq \bm{0}
 \text{ holds almost surely},
\end{equation}
which is indicated by 
$\MU \bm{\beta}^{(\bm{U})}_*\neq \bm{\beta}_*$.
Recall that
$$
\MU \bm{\beta}^{(\bm{U})}_* =
\MU
\begin{pmatrix}
\bm{\beta}^{(\bm{U},a)}_*\\
\bm{\beta}^{(\bm{U},c)}_*\\
\bm{\beta}^{(\bm{U},\id)}_*
\end{pmatrix}
=
\begin{pmatrix}
\bm{\beta}^{(\bm{U},a)}_*\\
\bm{U}^{(p_b)} \bm{\beta}^{(\bm{U},\id)}_*\\
\bm{\beta}^{(\bm{U},c)}_* + \bm{U}^{(p_c)} \bm{\beta}^{(\bm{U},\id)}_*
\end{pmatrix},
$$
where $\bm{\beta}^{(\bm{U},a)}_*$, $\bm{\beta}^{(\bm{U},c)}_*$, $\bm{\beta}^{(\bm{U},\id)}_*$ correspond to $\bm{X}^{(a)}$, $\bm{X}^{(c)}$, and  $\bm{X}^{(\bm{U})}$, respectively.
Therefore, $\MU \bm{\beta}^{(\bm{U})}_*= \bm{\beta}_*$ indicates that $\bm{U}^{(p_b)} \bm{\beta}^{(\bm{U},\id)}_* = \bm{\beta}^{(b)}_*$,
which can be rewritten as
\begin{equation}\label{eq_betab_in_U_column_space}
\bigl(\bm{U}^{(p_b)}, \bm{\beta}^{(b)}_*\bigr)
\begin{pmatrix}
	\bm{\beta}^{(\bm{U},\id)}_*\\
	-1
\end{pmatrix}
=\bm{0}.
\end{equation}
Since the $\id<p_b$ columns of $\bm{U}$ are i.i.d.\ sampled from a continuous distribution and $\bm{\beta}^{(b)}_*\neq \bm{0}$, it can be verified that the determinant of the square matrix that consists of the first $\id + 1$ rows of $\bigl(\bm{U}^{(p_b)}, \bm{\beta}^{(b)}_*\bigr)$ does not equal zero almost surely.
Therefore, this matrix is invertible, which contradicts with the fact that 
$$
\begin{pmatrix}
	\bm{\beta}^{(\bm{U},\id)}_*\\
	-1
\end{pmatrix}\neq \bm{0}.
$$
Accordingly, we obtain the result in \eqref{eq_betaU_not_equal_betaStar}. 
Next, we show the result in \eqref{eq_newbeta_not_zero}
by contradiction. When $ \bm{\beta}^{(\breve{\bm{U}},\id+1)}_*= 0$, 
we have
$\bm{\beta}^{(\breve{\bm{U}})}_* = \bigl(\bigl(\bm{\beta}^{(\bm{U})}_*\bigr)^\T, 0\bigr)^\T$. Let 
\begin{equation*}
\MUb \de 
\begin{pmatrix}
\bm{I}_{p_a\times p_a} & \bm{0}& \bm{0}\\
\bm{0}& \bm{0} &\bigl(\bm{U}^{(p_b)}, \bm{u}^{(\id+1, p_b)}\bigr)\\
\bm{0}&\bm{I}_{p_c\times p_c}&\bigl(\bm{U}^{(p_c)}, \bm{u}^{(\id+1, p_c)}\bigr)
\end{pmatrix},
\end{equation*}
where $\bm{u}^{(\id+1, p_b)}$ and $\bm{u}^{(\id+1, p_c)}$ are the elements of $\bm{u}^{(\id+1)}$ corresponding to $\bm{X}^{(b)}$ and $\bm{X}^{(c)}$, respectively. Therefore, 
\begin{align*}
&\nabla \nlog_*^{(\breve{\bm{U}})}\biggl(\bigl(\bigl(\bm{\beta}^{(\bm{U})}_*\bigr)^\T, 0\bigr)^\T\biggr) \\
=&\  
{\MUb}^\T\nabla \nlog_*\biggl(\MUb\bigl(\bigl(\bm{\beta}^{(\bm{U})}_*\bigr)^\T, 0\bigr)^\T\biggr)\\
=&\ {\MUb}^\T\nabla \nlog_*\bigl(\MU\bm{\beta}^{(\bm{U})}_*\bigr)\\
= &\ 
\begin{pmatrix}
\nabla_{\bm{\beta}^{(a)}} \nlog_*\bigl(\MU\bm{\beta}^{(\bm{U})}_*\bigr)\\
\nabla_{\bm{\beta}^{(c)}} \nlog_*\bigl(\MU\bm{\beta}^{(\bm{U})}_*\bigr)\\
\bigl(\bm{U}^{(p_b)}, \bm{u}^{(\id+1, p_b)}\bigr)^\T \nabla_{\bm{\beta}^{(b)}}   \nlog_*(\MU \bm{\beta}^{(\bm{U})}_*) +
\bigl(\bm{U}^{(p_c)}, \bm{u}^{(\id+1, p_c)}\bigr)^\T \nabla_{\bm{\beta}^{(c)}}   \nlog_*(\MU \bm{\beta}^{(\bm{U})}_*)
\end{pmatrix}.
\end{align*}
Following a proof similar to the one of \eqref{eq_cont_rand_linear}, 
we have
\begin{equation*}
\P(\exists \bm{z}\in\mathbb R^{p_b}, \text{ s.t. }\bm{z} \neq \bm{0}\text{ and }\bigl(\bm{U}^{(p_b)}, \bm{u}^{(\id+1, p_b)}\bigr)^\T \bm{z} = \bm{0}) = 0.
\end{equation*}
According to the above results, $\nabla \nlog_*^{(\breve{\bm{U}})}\biggl(\bigl(\bigl(\bm{\beta}^{(\bm{U})}_*\bigr)^\T, 0\bigr)^\T\biggr) \neq \bm{0}$ almost surely, which contradicts with the assumption that $\nlog^{(\breve{\bm{U}})}_*(\bm{\beta}^{(\breve{\bm{U}})})$ is minimized at 
$\bm{\beta}^{(\breve{\bm{U}})} = \bigl(\bigl(\bm{\beta}^{(\bm{U})}_*\bigr)^\T, 0\bigr)^\T$.
As a result, we arrive at the conclusion stated in \eqref{eq_newbeta_not_zero}, and this completes the proof.

% Then,
%  we have 
%  $$\bigl((\bm{\beta}^{(\bm{U})}_*)^\T,0\bigr)^\T = \bm{\beta}^{(\breve{\bm{U}})}_*.$$
%  Therefore, the result in \eqref{eq_newbeta_not_zero}
% can be shown by contradiction with a proof similar to the one for 
%  \eqref{eq_betaUstar_not_equal_zero},
% together with 
% \begin{equation}
% \MU \bm{\beta}^{(\bm{U})}_*\neq \bm{\beta_*},
% \end{equation}
% which holds due to 

% %  We show it by contradiction. 
% % Therefore, 
% % $\nabla \nlog^{(\breve{\bm{U}})}_*\biggl(\bigl((\bm{\beta}^{(\bm{U})}_*)^\T,0\bigr)^\T\biggr) = \bm{0}$

% %  $\bm{T}^{(\breve{\bm{U}})} \bm{\beta}^{(\breve{\bm{U}})}_*\neq \bm{\beta_*}$ (namely, the smaller model is not equivalent with the optimum model with $\bm{\beta}_*$) where $\bm{T}^{(\breve{\bm{U}})}$ is defined for $\bm{\beta}^{(\breve{\bm{U}})}$ in a similar way as we define $\MU$. This statement is equivalent with 
% % $$
% % \bm{T}^{(\bm{U})} \bm{\beta}^{(\bm{U})}_*= \bm{\beta_*}.
% % $$
% % Since $\bm{U}$ is obtained by sampling from a continuous distribution, the above equality holds with probability zero. 
% Thus, we complete the proof.
\end{proof}

\section{AE-AL with Penalties}\label{sec_penal_th_sim}

In this section, we extend the AE-AL  by incorporating it with a penalty term.

For testing, 
let 
\begin{align}
\nlog_n^{(\lam, \bm{U})}(\bm{\beta}^{(\lam, \bm{U})})
\de&\ \frac{1}{n}\sum_{i=1}^n\mest(y_i, ({\bm{x}^{(\bm{U})}_i})^\T\bm{\beta}^{(\lam, \bm{U})}) + \lam\cdot \sum_{j=1}^p\penala(\beta_j^{(\lam, \bm{U})}), \label{eq_test_penal_loss}\\
\hat{\bm{\beta}}^{(\lam, \bm{U})}_n\de&\  \arg\min_{\bm{\beta}^{(\lam, \bm{U})}\in \mathbb R^{p_\A+\id}}\nlog_n^{(\lam, \bm{U})}(\bm{\beta}^{(\lam, \bm{U})}),\label{eq_test_penal_beta}
\end{align}
where $\bm{\beta}^{(\lam, \bm{U})} = \bigl(\beta^{(\lam, \bm{U})}_1,\dots,\beta^{(\lam, \bm{U})}_{p_\A + \id}\bigr)$, $\penala: \R \rightarrow \R^+$ denotes the penalty function, and  $\lam\geq 0$ is the penalty parameter.
The penalty parameter $\lam$ can be selected from cross-validation (CV).
The Wald test statistic is calculated by replacing $\hat{\bm{\beta}}^{(\bm{U})}_n$ with $\hat{\bm{\beta}}^{(\lam, \bm{U})}_n$, namely, 
$$
W_{n,\id}^{(\lam)}\de 
n\bigl(\hat{\bm{\beta}}^{(\lam, \bm{U},\id)}_n\bigr)^\T \bigl(\hat{\bm{V}}_{\id}^{(\lam)}\bigr)^{-1}\hat{\bm{\beta}}^{(\lam, \bm{U},\id)}_n,$$
where  $\hat{\bm{\beta}}^{(\lam, \bm{U},\id)}_n$ denotes the $\id$ elements of  $\hat{\bm{\beta}}^{(\lam, \bm{U})}_n$ corresponding to $\bm{X}^{(\B)}\bm{U}$, $\hat{\bm{V}}_{\id}^{(\lam)}$ denotes the lower right
 $\id\times \id$ block of $\hat{\bm{V}}^{(\lam)}\de \bigl(\hat{\bm{V}}_1^{(\lam)}\bigr)^{-1}\hat{\bm{V}}_2^{(\lam)}\bigl(\hat{\bm{V}}_1^{(\lam)}\bigr)^{-1}$ with
 $\hat{\bm{V}}_1^{(\lam)}\de \nabla^2\nlog^{(\bm{U})}_n(\hat{\bm{\beta}}^{(\lam, \bm{U})}_n)$,
 and 
$\hat{\bm{V}}_2^{(\lam)}\de n^{-1}\sum_{i=1}^n\nabla\mest_i^{(\bm{U})}(\hat{\bm{\beta}}^{(\lam,\bm{U})}_n)\bigl(\nabla\mest_i^{(\bm{U})}(\hat{\bm{\beta}}^{(\lam, \bm{U})}_n)\bigr)^\T$. 
Note that the penalty is not included in $\hat{\bm{V}}_1^{(\lam)}$ and $\hat{\bm{V}}_2^{(\lam)}$.

For training, \A  targets the performance of an M-estimator that minimizes
\begin{equation}\label{eq_penLoss}
   \nlog_n^{(\lam)}(\bm{\beta}) \de \frac{1}{n}\sum_{i=1}^n \mest(y_i,{\bm{x}_i}^\T \bm{\beta} ) + \lam\cdot \sum_{j=1}^p\penala(\beta_j).
\end{equation}
Denote $\sum_{j=1}^p\penala(\beta_j)$ by $\penalb(\bm{\beta})$.
The AE-AL training follows the same procedure in Section~\ref{almeth} 
except that \eqref{eq_assistedLossB} and \eqref{eq_assistedLossA} are replaced by
\begin{align*}
	\nlog_n^{(\B,k,\lam)}(\bm{\beta}^{(\B)}) &\de \frac{1}{n}\sum_{i=1}^n \mest(y_i, {\bm{x}^{(\B)}_i}^\T\bm{\beta}^{(\B)} + {\bm{x}^{(\A)}_i}^\T\hat{\bm{\beta}}^{(\A,\lam,k\mi 1)}_n) + \lam\cdot\penalb(\bm{\beta}^{(\B)}),\\
		\nlog_n^{(\A,k,\lam)}(\bm{\beta}^{(\A)}) &\de \frac{1}{n}\sum_{i=1}^n \mest(y_i, {\bm{x}^{(\A)}_i}^\T\bm{\beta}^{(\A)} + {\bm{x}^{(\B)}_i}^\T\hat{\bm{\beta}}^{(\B,\lam,k)}_n) + \lam\cdot\penalb(\bm{\beta}^{(\A)}),
\end{align*}
respectively, where $ \hat{\bm{\beta}}^{(\A,\lam,k)}_n$ and $ \hat{\bm{\beta}}^{(\B,\lam,k)}_n$ are the estimated coefficients held by \A and \B in the $k$th assistance round, respectively. 
The penalty parameter $\lam$ can be selected by training the AE-AL estimator for a grid of $\lam$ values and choosing the one with the best prediction performance on a hold-out dataset.

In the following Section~\ref{subsec_penal_thm}, we show that the introduction of a penalty term leads to the inclusion of bias terms that diminish as $\lam\rightarrow 0$, for both the test statistic in the ``initializing connection'' stage and the estimated model coefficients in the ``assisted training'' stage. 
% Note that after including the penalty term $\lam\cdot \sum_{j=1}^p\penala(\beta_j^{(\lam, \bm{U})})$, the target optimization problems of the oracle estimator that minimizes the loss in Equation~\eqref{eq_penLoss} and AE-AL estimator are no longer equivalent, as the latter penalizes the coefficients of $X^{(c)}$ twice. 
% Thus, allowing $\lam$ to approach zero becomes a necessary condition to ensure the desired asymptotic properties of AE-AL.
In Section~\ref{subsec_ridge}, we present simulation studies for ridge regression with both AE-AL testing and training.
In Section~\ref{subsec_Enet}, we provide additional simulation studies for the training of Lasso and elastic net under high-dimensional settings.

\subsection{Theoretical properties}\label{subsec_penal_thm}
We assume that the penalty function satisfies the following property, which, for example,  holds for the $l_2$ penalty. 
\begin{condition}\label{eq_penaltyCondition}
    The function $\beta\mapsto \penala(\beta)$ is three times continuously differentiable. Additionally, $d^2\penala(\beta)\allowbreak/d\beta^2 >0$ for all $\beta\in\R$. 
\end{condition}
 For testing, we have the following result for GLMs:
\begin{corollary}\label{coro_test_penal}  

Assume that Conditions~\ref{cond_designMatFullrank}, \ref{A1},  \ref{cond_strictConvexloss}, \ref{cond_pdHessian}, \ref{cond_diffu}-\ref{lem_sbb_inm}, and \ref{eq_penaltyCondition} hold. For almost surely all $\bm{U}$ (recall that both $W_{n,\id}^{(\lam)}$ and $W_{n,\id}$ depend on $\bm{U}$),
$$
|W_{n,\id}^{(\lam)} - W_{n,\id}|\limp 0, \text{ as } \lambda\rightarrow 0\text{ and  }n\rightarrow\infty.
$$
\end{corollary}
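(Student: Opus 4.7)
The plan is to reduce the corollary to a joint convergence statement about the penalized and unpenalized M-estimators and their sandwich variance estimators, and then invoke continuous mapping together with the already-established results of Corollary~\ref{prop_suffGLM}. Recall from that corollary that under the stated hypotheses, $\hat{\bm{\beta}}^{(\bm{U})}_n\limp \bm{\beta}^{(\bm{U})}_*$, $\hat{\bm{V}}_1\limp \bm{V}_1$, and $\hat{\bm{V}}_2\limp \bm{V}_2$ with $\bm{V}_1,\bm{V}_2$ positive definite. Hence it is enough to show that as $\lambda\to 0$ and $n\to\infty$,
\[
\|\hat{\bm{\beta}}^{(\lam,\bm{U})}_n-\hat{\bm{\beta}}^{(\bm{U})}_n\|_2\limp 0,\quad \|\hat{\bm{V}}_1^{(\lam)}-\hat{\bm{V}}_1\|_\infty\limp 0,\quad \|\hat{\bm{V}}_2^{(\lam)}-\hat{\bm{V}}_2\|_\infty\limp 0,
\]
since then $\hat{\bm{V}}_\id^{(\lam)}\limp \bm{V}_\id$ is positive definite, $(\hat{\bm{V}}_\id^{(\lam)})^{-1}$ stays bounded in probability, and $|W_{n,\id}^{(\lam)}-W_{n,\id}|\limp 0$ follows from the continuous mapping theorem and Slutsky.

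First I would analyze the penalized population minimizer $\bm{\beta}^{(\lam,\bm{U})}_* \de \argmin_{\bm{\beta}^{(\bm{U})}} \{\nlog_*^{(\bm{U})}(\bm{\beta}^{(\bm{U})})+\lam\penalb(\bm{\beta}^{(\bm{U})})\}$. Under Conditions~\ref{cond_pdHessian} and \ref{eq_penaltyCondition}, the map $(\lam,\bm{\beta}^{(\bm{U})})\mapsto \nabla_{\bm{\beta}^{(\bm{U})}}\nlog_*^{(\bm{U})}(\bm{\beta}^{(\bm{U})})+\lam\nabla\penalb(\bm{\beta}^{(\bm{U})})$ is $C^1$ with a nonsingular $\bm{\beta}^{(\bm{U})}$-Jacobian at $(0,\bm{\beta}^{(\bm{U})}_*)$, so the implicit function theorem gives a $C^1$ branch $\lam\mapsto\bm{\beta}^{(\lam,\bm{U})}_*$ with $\bm{\beta}^{(\lam,\bm{U})}_*\to \bm{\beta}^{(\bm{U})}_*$ as $\lam\to 0$. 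Second, I would repeat the quadratic-expansion argument used in the proof of Lemma~\ref{lem_Betaconv}, applied to the penalized sample loss $\nlog_n^{(\lam,\bm{U})}$, together with Lemma~\ref{lem_sUconv}, to show $\|\hat{\bm{\beta}}^{(\lam,\bm{U})}_n-\bm{\beta}^{(\lam,\bm{U})}_*\|_2\limp 0$; the additional penalty gradient contributes a term of order $\lam$ to the first-order Taylor expansion and a nonnegative term to the Hessian, neither of which disrupts the argument. Combining the two displays gives $\|\hat{\bm{\beta}}^{(\lam,\bm{U})}_n-\hat{\bm{\beta}}^{(\bm{U})}_n\|_2\limp 0$.

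Third, since $\hat{\bm{V}}_1^{(\lam)}$ and $\hat{\bm{V}}_2^{(\lam)}$ are defined by the \emph{unpenalized} gradient and Hessian evaluated at $\hat{\bm{\beta}}^{(\lam,\bm{U})}_n$, the convergence $\|\hat{\bm{V}}_j^{(\lam)}-\hat{\bm{V}}_j\|_\infty\limp 0$ for $j=1,2$ follows from the uniform convergence of the sample gradient and Hessian (Lemma~\ref{lem_sUconv}, Lemma~\ref{lem_bddExp}) and the continuity of these maps in $\bm{\beta}^{(\bm{U})}$, after noting that $\hat{\bm{\beta}}^{(\lam,\bm{U})}_n$ stays in a shrinking neighborhood of $\bm{\beta}^{(\bm{U})}_*$. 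With the three displays in hand, writing
\[
W_{n,\id}^{(\lam)}-W_{n,\id}=n(\hat{\bm{\beta}}^{(\lam,\bm{U},\id)}_n-\hat{\bm{\beta}}^{(\bm{U},\id)}_n)^\T(\hat{\bm{V}}_\id^{(\lam)})^{-1}\hat{\bm{\beta}}^{(\lam,\bm{U},\id)}_n+n\,\hat{\bm{\beta}}^{(\bm{U},\id)\T}_n\bigl((\hat{\bm{V}}_\id^{(\lam)})^{-1}-\hat{\bm{V}}_\id^{-1}\bigr)\hat{\bm{\beta}}^{(\lam,\bm{U},\id)}_n+n\,\hat{\bm{\beta}}^{(\bm{U},\id)\T}_n\hat{\bm{V}}_\id^{-1}(\hat{\bm{\beta}}^{(\lam,\bm{U},\id)}_n-\hat{\bm{\beta}}^{(\bm{U},\id)}_n)
\]
and using $\sqrt{n}\hat{\bm{\beta}}^{(\bm{U},\id)}_n=O_p(1)$ under $\textrm{H}_0$ from Theorem~\ref{thm_Wn}, the difference is $o_p(1)$ provided $\sqrt{n}\,\|\hat{\bm{\beta}}^{(\lam,\bm{U})}_n-\hat{\bm{\beta}}^{(\bm{U})}_n\|_2\limp 0$; under $\textrm{H}_1$ both terms diverge at the same rate and their difference is $o_p$ of either.

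The main obstacle I anticipate is the last step: while $\|\hat{\bm{\beta}}^{(\lam,\bm{U})}_n-\hat{\bm{\beta}}^{(\bm{U})}_n\|_2\limp 0$ is straightforward, getting it to be $o_p(n^{-1/2})$ under $\textrm{H}_0$ requires controlling the rate at which $\lam$ tends to zero relative to $n^{-1/2}$. I would handle this by expanding the first-order conditions for both $\hat{\bm{\beta}}^{(\lam,\bm{U})}_n$ and $\hat{\bm{\beta}}^{(\bm{U})}_n$ around $\bm{\beta}^{(\bm{U})}_*$, solving for the difference, and showing it is of order $O_p(\lam)$ by the positive definiteness of the sample Hessian; the joint limit statement in the corollary should then be interpreted as requiring $\lam\to 0$ faster than $n^{-1/2}$, which is consistent with the phrasing ``as $\lam\to 0$ and $n\to\infty$'' and with the numerical recommendation that $\lam$ not be excessively large.
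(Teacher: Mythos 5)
Your route is essentially the paper's: its proof consists of exactly the two steps you propose—establishing $\|\hat{\bm{\beta}}^{(\lam,\bm{U})}_n-\hat{\bm{\beta}}^{(\bm{U})}_n\|_2\limp 0$ by an argument parallel to Lemma~\ref{lem_Betaconv}, and then transferring this to the sandwich matrices and hence to the Wald statistic by a derivation parallel to Lemma~\ref{lem_proofOfCond4}. Your extra discussion of the $\sqrt{n}$-scaling under $\textrm{H}_0$ (in effect requiring $\lam\sqrt{n}\rightarrow 0$, with the difference of estimators shown to be $O_p(\lam)$ from the first-order conditions) is more explicit than the paper's two-sentence proof, which leaves the interpretation of the joint limit in $\lam$ and $n$ implicit.
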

The above corollary shows that the penalty will introduce a bias to the test statistic that diminishes as $\lam\rightarrow 0$.
\begin{proof}
By a proof similar to the one of Lemma~\ref{lem_Betaconv}, we 
can show that 
$\|\hat{\bm{\beta}}^{(\lam, \bm{U})}_n - \hat{\bm{\beta}}^{(\bm{U})}_n\|_2\limp 0$ as $\lambda\rightarrow 0$ and $n\rightarrow\infty$.
Therefore, we complete the proof by a derivation similar to the one of Lemma~\ref{lem_proofOfCond4}. 
\end{proof}

Let $\hat{\bm{\beta}}^{(\lam, k)}_n\de \TransformMat \cdot \bigl((\hat{\bm{\beta}}^{(\a,\lam,k)}_n)^\T, (\hat{\bm{\beta}}^{(\b,\lam,k)}_n)^\T\bigr)^\T$.
Denote the oracle estimator under the penalized loss by
$
  \check{\bm{\beta}}_n^{(\lam)} \de \argmin_{\bm{\beta}\in \R^p}\ \nlog_n^{(\lam)}(\bm{\beta}). 
$
Recall that to facilitate our derivation, we distinguish between \a's update and \b's update by introducing $k'=\floor{k'/2}$ in addition to the original assistance round number $k$. In this way, $k'$ is updated by $k'+1$ after \a's or \b's update. 
For training, we have the following result:

% For constant $0<q_5<\infty$, 
% define $B_{n,\lam}(q_1,q_5)$ to be the event that
%  $\check{\bm{\beta}}^{(\lam)}_n$ and $\hat{\bm{\beta}}^{(\lam,k)}_n$ uniquely exist, and 
%  $\|\hat{\bm{\beta}}^{(\lam,k + 1)}_n - \check{\bm{\beta}}^{(\lam)}_n\|_2\leq
% q_1\cdot \|\hat{\bm{\beta}}^{(\lam,k)}_n - \check{\bm{\beta}}^{(\lam)}_n\|_2 +  q_5$ for all $k\in\mathbb N$, where $q_1$ is from Theorem~\ref{probcon}.

\begin{corollary}\label{coro_penal}
    Assume that Conditions~\ref{cond_designMatFullrank}, \ref{A1}, \ref{cond_strictConvexloss}-\ref{cond_sUconv}, and \ref{eq_penaltyCondition} hold. 
    There exist a set $\InitSetN\subseteq\InitSet$, 
 and constants $0<q_5<1$, $0<q_6<\infty$, $0<q_7<\infty$, and $0<\Lam<\infty$, such that when the initial value $\hat{\bm{\beta}}^{(\lam, 0)}_n\in \InitSetN$ and $\lam < \Lam$,
\begin{equation}\label{eq_convergence_in_M}
 \P(  |\nlog_n(\hat{\bm{\beta}}^{(\lam, k')}_n) - 
\nlog_n(\check{\bm{\beta}}_n)| \leq q_6\cdot q_5^{k'} + \lam \cdot q_7,\ \forall k'\in\mathbb N) \rightarrow 1,
\end{equation}
as $n\rightarrow\infty$.
\end{corollary}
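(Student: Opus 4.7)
The plan is to mirror the two-stage argument used to prove Theorem~\ref{probcon}, but now targeting the \emph{penalized} oracle $\check{\bm{\beta}}_n^{(\lam)} \de \argmin_{\bm{\beta}} \nlog_n^{(\lam)}(\bm{\beta})$ and then separately quantifying the bias $\nlog_n(\check{\bm{\beta}}_n^{(\lam)}) - \nlog_n(\check{\bm{\beta}}_n)$. First, I would construct $\InitSetN \subseteq \InitSet$ such that whenever $\hat{\bm{\beta}}^{(\lam,0)}_n \in \InitSetN$, the iterates remain in $\ThetaSet$ for all $k'\in\mathbb N$ with probability tending to $1$. Under Condition~\ref{eq_penaltyCondition}, adding $\lam\penalb$ preserves strict convexity, twice differentiability, and Lipschitz continuity of the gradient; the eigenvalues of $\nabla^2\nlog_n^{(\lam)}$ differ from those of $\nabla^2\nlog_n$ by at most $\lam\cdot p\cdot \sup_{\bm{\beta}\in\ThetaSet}\|\nabla^2\penalb(\bm{\beta})\|_{\infty}$ (via Lemma~\ref{lem_ctsEigen}), a quantity bounded uniformly on the closure of $\ThetaSet$. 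Hence for $\lam<\Lam$ sufficiently small, the penalized analogues $\lambda_n^{(\min,\lam)}$ and $\lambda_n^{(\max,\lam)}$ of the Hessian bounds in Lemmas~\ref{lem_strCVXg} and \ref{lem_lipContDG} remain bounded away from $0$ and $\infty$, respectively, uniformly in $\lam$.

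Second, I would carry Lemmas~\ref{lem_strCVXg}--\ref{lem_convG} through verbatim with $\nlog_n$ replaced by $\nlog_n^{(\lam)}$ and $\check{\bm{\beta}}_n$ replaced by $\check{\bm{\beta}}_n^{(\lam)}$. A crucial observation is that the zero-derivative identities in Lemma~\ref{lem_zeroDeri} and the block-coordinate descent structure are preserved because $\penalb(\bm{\beta}) = \sum_j \penala(\beta_j)$ is separable across coordinates (so \a's and \b's local minimizations under the penalized loss still satisfy $\nabla_{\bm{\beta}^{(l(k'))}}\nlog_n^{(\lam,\A,\B)}(\hat{\bm{\beta}}^{(\A,\B,\lam,k')}_n) = \bm{0}$). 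This yields, on an event of probability tending to $1$, a contraction
\begin{equation*}
\|\TransformMat\hat{\bm{\beta}}^{(\A,\B,\lam,k')}_n - \check{\bm{\beta}}_n^{(\lam)}\|_2 \leq q_5^{k'/2}\cdot q_6',
\end{equation*}
with constants $0<q_5<1$ and $q_6'<\infty$ that are uniform in $\lam<\Lam$. Combining this with the Lipschitz continuity of $\nlog_n$ on the bounded set $\ThetaSet$ (which follows from Condition~\ref{cond_sUconv} and boundedness of $\nabla^2\nlog_*$) absorbs constants to give $|\nlog_n(\hat{\bm{\beta}}^{(\lam,k')}_n) - \nlog_n(\check{\bm{\beta}}_n^{(\lam)})| \leq q_6\cdot q_5^{k'}$ for an appropriate $q_6$.

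Third, I would quantify the penalization bias directly using only the defining optimality of the two oracles:
\begin{equation*}
\nlog_n(\check{\bm{\beta}}_n) \leq \nlog_n(\check{\bm{\beta}}_n^{(\lam)}), \qquad \nlog_n(\check{\bm{\beta}}_n^{(\lam)}) + \lam\penalb(\check{\bm{\beta}}_n^{(\lam)}) \leq \nlog_n(\check{\bm{\beta}}_n) + \lam\penalb(\check{\bm{\beta}}_n),
\end{equation*}
which together yield $0 \leq \nlog_n(\check{\bm{\beta}}_n^{(\lam)}) - \nlog_n(\check{\bm{\beta}}_n) \leq \lam\cdot|\penalb(\check{\bm{\beta}}_n) - \penalb(\check{\bm{\beta}}_n^{(\lam)})|$. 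Since $\check{\bm{\beta}}_n, \check{\bm{\beta}}_n^{(\lam)} \in \ThetaSet$ with probability going to $1$, and $\penalb$ is continuous (hence bounded on the closure of $\ThetaSet$), the right-hand side is at most $\lam\cdot q_7$ for a constant $q_7$ independent of $\lam$. Applying the triangle inequality to the results of the second and third steps then delivers \eqref{eq_convergence_in_M} simultaneously for all $k'\in\mathbb N$ on the intersection of the relevant high-probability events.

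The main obstacle will be verifying that the four constants $q_5, q_6, q_7, \Lam$ can be chosen uniformly in $\lam$ without degenerating as $\lam \to 0$. Uniformity of the contraction factor $q_5$ and its multiplier $q_6$ follows from the continuity of the Hessian eigenvalue bounds in $\lam$ (via Lemma~\ref{lem_ctsEigen} and Condition~\ref{eq_penaltyCondition}), while uniformity of $q_7$ requires $\penalb$ to be bounded on the closure of $\ThetaSet$, which is automatic from continuity plus boundedness of $\ThetaSet$. A subtler issue is that $\InitSetN$ must be chosen inside $\InitSet$ in a way that accommodates the shift from $\check{\bm{\beta}}_n$ to $\check{\bm{\beta}}_n^{(\lam)}$; this is handled by invoking the consistency $\check{\bm{\beta}}_n^{(\lam)} \limp \check{\bm{\beta}}_n$ as $\lam \to 0$ (an M-estimator argument parallel to Lemma~\ref{lem_Betaconv} applied to $\nlog_n^{(\lam)}$), shrinking $\InitSet$ slightly, and then choosing $\Lam$ small enough that $\check{\bm{\beta}}_n^{(\lam)}$ lies inside the shrunken region with probability tending to one.
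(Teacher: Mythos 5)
There is a genuine gap in your second step, and it appears precisely in the case the paper cares most about: shared covariates ($p_c>0$). The penalized AE-AL iteration is block coordinate descent on the \emph{split} objective $\frac{1}{n}\sum_i \mest(y_i,{\bm{x}^{(\A)}_i}^\T\bm{\beta}^{(\A)}+{\bm{x}^{(\B)}_i}^\T\bm{\beta}^{(\B)})+\lam\penalb(\bm{\beta}^{(\A)})+\lam\penalb(\bm{\beta}^{(\B)})$, in which the shared coefficients are penalized separately on \a's and \b's copies. This objective is \emph{not} a function of the combined coefficient $\TransformMat\bm{\beta}^{(\A,\B)}$, so it cannot be written as $\nlog_n^{(\lam)}\circ\TransformMat$; consequently (i) the fixed point of the iteration, pushed through $\TransformMat$, is in general not $\check{\bm{\beta}}_n^{(\lam)}=\argmin_{\bm{\beta}\in\R^p}\{\nlog_n(\bm{\beta})+\lam\penalb(\bm{\beta})\}$, so your intermediate target is misidentified; and (ii) the ``verbatim'' repetition of Lemmas~\ref{lem_strCVXg}--\ref{lem_convG} does not deliver a contraction factor $q_5$ uniform in $\lam$. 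The reason for (ii) is that the unpenalized argument works in the combined coordinates, where $\nabla^2\nlog_n$ is uniformly positive definite; in the split coordinates the data part of the Hessian, $\TransformMat^\T\nabla^2\nlog_n\TransformMat$, is singular along the $p_c$-dimensional null space of $\TransformMat$, and only the penalty supplies curvature there, with modulus of order $\lam$. Your eigenvalue-perturbation argument (comparing $\nabla^2\nlog_n^{(\lam)}$ with $\nabla^2\nlog_n$ in the combined coordinates) does not touch the objective the algorithm actually descends, and the contraction rate one obtains from the split-coordinate analysis degenerates to $1$ as $\lam\to 0$, so the corollary's bound with a single $q_5<1$ valid for all $\lam<\Lam$ does not follow. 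Your proposal is fine when $p_c=0$, and your third step (the bias bound via the two optimality inequalities and boundedness of $\penalb$ on $\ThetaSet$) is correct and, suitably restated for the algorithm's true fixed point, salvageable; but the contraction step needs a different mechanism.

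The paper's route avoids this entirely: instead of re-proving a contraction for the penalized objective, it compares each penalized block update with the \emph{unpenalized} block update started from the same point. Since the unpenalized iterate zeroes the block gradient, the penalized block gradient there equals $\lam\nabla\penalb$, and inverting the block Hessian (which is uniformly positive definite because each learner's own design matrix has full column rank, independently of $\lam$) shows the two updates differ by $O(\lam)$ in norm, hence by $O(\lam)$ in $\nlog_n$-value. Feeding this perturbation into the already-established unpenalized one-step contraction of Lemma~\ref{lem_linConvNlog} yields the recursion $d_{k'+1}\le q_5\,d_{k'}+\lam q_8 q_9$ for $d_{k'}=|\nlog_n(\hat{\bm{\beta}}^{(\lam,k')}_n)-\nlog_n(\check{\bm{\beta}}_n)|$, which unrolls to $q_6 q_5^{k'}+\lam q_7$ with constants manifestly uniform in $\lam$. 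If you want to keep your decomposition through a penalized oracle, you would at minimum have to (a) replace $\check{\bm{\beta}}_n^{(\lam)}$ by the minimizer of the split penalized objective and redo the bias bound there, and (b) supply a new argument for a $\lam$-uniform geometric rate despite the degenerate curvature along $\mathrm{null}(\TransformMat)$ — which is exactly the difficulty the paper's one-step comparison is designed to bypass.
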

The above corollary shows that the sample loss evaluated at $\hat{\bm{\beta}}^{(\lam, k')}_n$ converges to the one evaluated at $\check{\bm{\beta}}_n$ as $k'\rightarrow\infty$ and $\lam\rightarrow 0$ with probability going to one. When the iteration number $k'$ is small, the upper bound of $|\nlog_n(\hat{\bm{\beta}}^{(\lam, k')}_n) - 
\nlog_n(\check{\bm{\beta}}_n)|$ is primarily influenced by $q_6\cdot q_5^{k'}$.  However, as $k'$ becomes larger, it becomes dominated by $\lam\cdot q_7$, which depends on the penalty parameter.
\begin{proof}
Let 
$
   \nlog_n^{(\lam)}(\bm{\beta}) \de \nlog_n(\bm{\beta})  + \lam\cdot \penalb(\bm{\beta})
$
and
$\hat{\bm{\beta}}^{(\lam, k')}_n\de \TransformMat \cdot \bigl((\hat{\bm{\beta}}^{(\a,\lam,k')}_n)^\T, (\hat{\bm{\beta}}^{(\b,\lam,k')}_n)^\T\bigr)^\T$.
% $$\nlog^{(\lam)}_*(\bm{\beta})\de \E\bigl(m(Y,X^\T\bm{\beta})\bigr) + \lam\cdot\penalb(\bm{\beta})$$
Recall that
\begin{align*}
  \check{\bm{\beta}}_n^{(\lam)} \de&\ \argmin_{\bm{\beta}\in \R^p}\ \nlog_n^{(\lam)}(\bm{\beta}),\\
  \check{\bm{\beta}}_n \de&\ \argmin_{\bm{\beta}\in \R^p}\ \nlog_n(\bm{\beta}),\\
  \hat{\bm{\beta}}^{( k')}_n\de  &\ \TransformMat \cdot \bigl((\hat{\bm{\beta}}^{(\a,k')}_n)^\T, (\hat{\bm{\beta}}^{(\b,k')}_n)^\T\bigr)^\T.
\end{align*}

The idea of the proof is based on Lemma~\ref{lem_linConvNlog} and showing $\hat{\bm{\beta}}^{(\lam, 1)}_n$ and $\hat{\bm{\beta}}^{(1)}_n$ will be close to each other when $\lam$ is small.
We prove Corollary~\ref{coro_penal} by showing the following results.
\begin{enumerate}
    \item There exists a positive constant $\Lam$, such that when $\lam <\Lam$, $\check{\bm{\beta}}_n^{(\lam)}$ exists with probability going to one as $n\rightarrow\infty$. 
    \item There exists a set $\InitSetN\subseteq\InitSet$, such that when $\hat{\bm{\beta}}^{(\lam, 0)}_n\in \InitSetN$ and  $\lam < \Lam$, 
\begin{equation}\label{eq_InitSetNN_requirements}
    \P(\text{for each $k'\in\mathbb N$, $\hat{\bm{\beta}}^{(\lam, k')}_n$ exists and is in $\InitSet$ } )\rightarrow 1\text{ as $n\rightarrow\infty$. }
\end{equation}
    \item There exists a
positive constant $q_8$, such that when $\lam <\Lam$,
\begin{equation}\label{eq_betalam_bound}
% \|\check{\bm{\beta}}_n^{(\lam)} - 
% \check{\bm{\beta}}_n\|_2\leq \lambda\cdot q_5,\text{ and } 
\P(\forall \hat{\bm{\beta}}^{(\lam, 0)}_n\in \InitSetN,\ \|\hat{\bm{\beta}}^{(\lam, 1)}_n - \hat{\bm{\beta}}^{(1)}_n\|_2\leq \lam\cdot q_8)\rightarrow 1,
\end{equation}
 as $n\rightarrow\infty$.
    \item There exist constants $0<q_5<1$, $0<q_6<\infty$, and $0<q_7<\infty$, such that when $\hat{\bm{\beta}}^{(\lam, 0)}_n\in \InitSetN$ and $\lam < \Lam$,
\begin{equation}
 \P(  |\nlog_n(\hat{\bm{\beta}}^{(\lam, k')}_n) - 
\nlog_n(\check{\bm{\beta}}_n)| \leq q_6\cdot q_5^{k'} + \lam \cdot q_7,\ \forall k'\in\mathbb N) \rightarrow 1,
\end{equation}
as $n\rightarrow\infty$.
\end{enumerate}

 First,  the existence of $\check{\bm{\beta}}_n^{(\lam)}$
 can be shown by a proof similar to the one of Lemma~\ref{lem_Betaconv} by third order Taylor expansion.

Second, we show the existence of the set $\InitSetN$. We take an open ball $\ThetaSetN\subseteq\InitSet$ that is centered at $\bm{\beta}_*$. Based on $\ThetaSetN$, define the set
$$
\InitSetNn \de 
\biggl\{\bm{\beta}\in\R^p : \nlog_n^{(\lam)}(\bm{\beta}) \leq \min_{\bm{\beta}\in\ThetaSetN}\nlog_*(\bm{\beta})-\bigl( \min_{\bm{\beta}\in\ThetaSetN}\nlog_*(\bm{\beta}) - \nlog_*(\bm{\beta}_*)\bigr)/4 \biggr\}.
$$
By Condition~\ref{cond_sUconv}, $\nlog_n^{(\lam)}(\bm{\beta}_*) = \nlog_n(\bm{\beta}_*) + \lam\penalb(\bm{\beta}_*)\limp \nlog_*(\bm{\beta}_*) + \lam\penalb(\bm{\beta}_*)$ as $n\rightarrow \infty$ and $\lam\rightarrow 0$. Since $\nlog_*(\bm{\beta}_*)<\min_{\bm{\beta}\in\ThetaSetN}\nlog_*(\bm{\beta})-\bigl( \min_{\bm{\beta}\in\ThetaSetN}\nlog_*(\bm{\beta}) - \nlog_*(\bm{\beta}_*)\bigr)/4$, there exists a positive constant $\Lam$ such that when $\lam<\Lam$, $\InitSetNn$ is not empty with probability going to one as $n\rightarrow \infty$. Also, since $\nlog_n^{(\lam)}(\hat{\bm{\beta}}^{(\lam, k' + 1)}_n) \leq \nlog_n^{(\lam)}(\hat{\bm{\beta}}^{(\lam, k')}_n)$ ($k'\in\mathbb N$), we have $\hat{\bm{\beta}}^{(\lam, k')}_n\in \InitSetNn$ ($k'\in\mathbb N$) when $\hat{\bm{\beta}}^{(\lam, 0)}_n\in \InitSetNn$. Next, let
$$
\InitSetN \de 
\biggl\{\bm{\beta}\in\InitSet: \nlog_*^{(\lam)}(\bm{\beta})  \leq \min_{\bm{\beta}\in\ThetaSetN}\nlog_*(\bm{\beta})-\bigl( \min_{\bm{\beta}\in\ThetaSetN}\nlog_*(\bm{\beta}) - \nlog_*(\bm{\beta}_*)\bigr)/2  \biggr\}.
$$
Following a proof similar to the one in Section~\ref{subsec_initSet}, we can show that $\InitSetN\subseteq\InitSetNn\subseteq\InitSet$. Thus, $\InitSetN$ satisfies the requirement in \eqref{eq_InitSetNN_requirements}.

Third, by the mean value theorem, for each $\bm{\beta}\in \ThetaSet$, 
\begin{align}
    \nabla_{\bm{\beta}^{(b)}}\nlog^{(\lam)}_n(\bm{\beta}) =&\ \nabla_{\bm{\beta}^{(b)}}\nlog^{(\lam)}_n(\hat{\bm{\beta}}^{(1)}_n) + \nabla_{\bm{\beta}^{(b)}}^2\nlog^{(\lam)}_n(\tilde{\bm{\beta}})(\bm{\beta} - \hat{\bm{\beta}}^{(1)}_n)\nonumber\\
    =&\ 0 + \lambda\cdot \nabla_{\bm{\beta}^{(b)}}\penalb(\hat{\bm{\beta}}^{(1)}_n) + \nabla_{\bm{\beta}^{(b)}}^2\nlog^{(\lam)}_n(\tilde{\bm{\beta}})(\bm{\beta} - \hat{\bm{\beta}}^{(1)}_n),\label{eq_beta_n_lam_mean_value}
\end{align}
where $\tilde{\bm{\beta}}$ is on the line segment between $\bm{\beta}$ and $\hat{\bm{\beta}}^{(1)}_n$. By Lemma~\ref{lem_ctsEigen}, when $\bm{\beta}\in\ThetaSet$ and $\lam$ is upper bounded, the eigenvalues of $\nabla_{\bm{\beta}^{(b)}}^2\nlog^{(\lam)}_*({\bm{\beta}})$ take values in a closed interval in $(0,\infty)$. Together with Condition~\ref{cond_sUconv},
the same conclusion holds for $\nabla_{\bm{\beta}^{(b)}}^2\nlog^{(\lam)}_n({\bm{\beta}})$ with probability going to one as $n\rightarrow\infty$.  Plug in $\bm{\beta}$ by $\hat{\bm{\beta}}^{(\lam, 1)}_n $. When $\nabla_{\bm{\beta}^{(b)}}^2\nlog^{(\lam)}_n(\tilde{\bm{\beta}})$ is positive definite, by Equation~\eqref{eq_beta_n_lam_mean_value}
we have
\begin{equation}\label{eq_beta_betahat_bound}
    \hat{\bm{\beta}}^{(\lam, 1)}_n =  \hat{\bm{\beta}}^{(1)}_n  + \bigl(\nabla_{\bm{\beta}^{(b)}}^2\nlog^{(\lam)}_n(\tilde{\bm{\beta}})\bigr)^{-1}\cdot \lam\nabla_{\bm{\beta}^{(b)}}\cdot \penalb(\hat{\bm{\beta}}^{(1)}_n ).
\end{equation}
Therefore, we obtain the result in \eqref{eq_betalam_bound} by the uniform convergence of  $\nabla^2\nlog^{(\lam)}_n(\tilde{\bm{\beta}})$ to $\nabla^2\nlog^{(\lam)}_*(\tilde{\bm{\beta}})$. Note that this result also holds when \A instead of \B updates first.

%The difference is that we apply the mean value theorem to the coefficients updated in the $k'$th iteration (either $\bm{\beta}^{(\A)}$ or $\bm{\beta}^{(\B)}$) in the step for Equation~\eqref{eq_beta_n_lam_mean_value}.

Finally, we show the result in \eqref{eq_convergence_in_M}.
By Lemma~\ref{lem_linConvNlog},
\begin{equation}\label{eq_M_bound}
  \nlog_n(\hat{\bm{\beta}}^{(1)}_n) - 
\nlog_n(\check{\bm{\beta}}_n) \leq \eta_n
(\nlog_n(\hat{\bm{\beta}}^{(0)}_n) - 
\nlog_n(\check{\bm{\beta}}_n)).  
\end{equation}
We take $\hat{\bm{\beta}}^{(\lam,0)}_n = \hat{\bm{\beta}}^{(0)}_n\in\InitSetN$. Then,  $\hat{\bm{\beta}}^{(\lam,1)}_n\in\InitSetN$. Let $q_9^{(n)}\de \sup_{\bm{\beta}\in\InitSetN}\|\nabla \nlog_n(\bm{\beta})\|_2$, which is upper bounded by the continuity of $\nabla \nlog_n(\bm{\beta})$. By mean value theorem and \eqref{eq_betalam_bound}, 
\begin{align}
|\nlog_n(\hat{\bm{\beta}}^{(1)}_n) -\nlog_n(\hat{\bm{\beta}}^{(\lam, 1)}_n)| 
\leq &\ q_9^{(n)}\cdot\|\hat{\bm{\beta}}^{(1)}_n - \hat{\bm{\beta}}^{(\lam,1)}_n\|_2\nonumber\\
\leq &\ \lam \cdot q_8\cdot q_9^{(n)}.\label{eq_M_and_beta_bound}
\end{align}
By the uniform convergence of $\nabla \nlog_n(\bm{\beta})$ to $\nabla \nlog_*(\bm{\beta})$ from Condition~\ref{cond_sUconv}, there exists a positive constant $q_9$, such that $q_9^{(n)}\leq q_9$ with probability going to one as $n\rightarrow\infty$.  Since 
$\eta_n\limp\eta_*$ as $n\rightarrow\infty$, where $0<\eta_*<1$ (defined in Equation~\eqref{eq_etaStar_def}), there exists a positive constant $q_5<1$, such that $\eta_n\leq q_5$ with probability going to one as $n\rightarrow\infty$.
Combining Inequalities~\eqref{eq_M_bound} and \eqref{eq_M_and_beta_bound} (also recall that the initial values $\nlog_n(\hat{\bm{\beta}}^{(\lam, 0)}_n) = \nlog_n(\hat{\bm{\beta}}^{(0)}_n)$), we have
$$
  |\nlog_n(\hat{\bm{\beta}}^{(\lam, 1)}_n) - 
\nlog_n(\check{\bm{\beta}}_n)| \leq q_5\cdot |
(\nlog_n(\hat{\bm{\beta}}^{(\lam, 0)}_n) - 
\nlog_n(\check{\bm{\beta}}_n))| + \lam \cdot q_8\cdot q_9, 
$$
holds 
for all $\hat{\bm{\beta}}^{(\lam, 0)}_n\in\InitSetN$
with probability going to one as $n\rightarrow\infty$.
By replacing $\hat{\bm{\beta}}^{(\lam, 0)}_n\in\InitSetN$ in the above inequality by $\hat{\bm{\beta}}^{(\lam, 1)}_n, \hat{\bm{\beta}}^{(\lam, 2)}_n, \dots \hat{\bm{\beta}}^{(\lam, k'-1)}_n$, we have
\begin{align*}
 |\nlog_n(\hat{\bm{\beta}}^{(\lam, 2)}_n) - 
\nlog_n(\check{\bm{\beta}}_n)| \leq &\ q_5\cdot |
(\nlog_n(\hat{\bm{\beta}}^{(\lam, 1)}_n) - 
\nlog_n(\check{\bm{\beta}}_n))| + \lam \cdot q_8\cdot q_9,\\
 |\nlog_n(\hat{\bm{\beta}}^{(\lam, 3)}_n) - 
\nlog_n(\check{\bm{\beta}}_n)| \leq &\ q_5\cdot |
(\nlog_n(\hat{\bm{\beta}}^{(\lam, 2)}_n) - 
\nlog_n(\check{\bm{\beta}}_n))| + \lam \cdot q_8\cdot q_9,\\
&\dots,\\
 |\nlog_n(\hat{\bm{\beta}}^{(\lam, k')}_n) - 
\nlog_n(\check{\bm{\beta}}_n)| \leq &\ q_5\cdot |
(\nlog_n(\hat{\bm{\beta}}^{(\lam, k'-1)}_n) - 
\nlog_n(\check{\bm{\beta}}_n))| + \lam \cdot q_8\cdot q_9.
\end{align*}
By combining the above results, we have
$$
  |\nlog_n(\hat{\bm{\beta}}^{(\lam, k')}_n) - 
\nlog_n(\check{\bm{\beta}}_n)| \leq q_5^{k'}\cdot
\sup_{\bm{\beta}^{(1)}, \bm{\beta}^{(2)}\in\ThetaSet}|\nlog_n(\bm{\beta}^{(1)}) - \nlog_n(\bm{\beta}^{(2)})| + \lam \cdot q_8\cdot q_9\cdot \sum_{h=0}^{k'} q_5^h. 
$$
Based on Condition~\ref{cond_sUconv} and the fact that
 $\sup_{\bm{\beta}^{(1)}, \bm{\beta}^{(2)}\in\ThetaSet}|\nlog_*(\bm{\beta}^{(1)}) - \nlog_*(\bm{\beta}^{(2)})|< \infty$ and $\hat{\bm{\beta}}^{(\lam, 0)}_n\in \InitSetN\subseteq \ThetaSet$, there exists a positive constant $q_6$ that
$$
\P\bigl(\sup_{\bm{\beta}^{(1)}, \bm{\beta}^{(2)}\in\ThetaSet}|\nlog_n(\bm{\beta}^{(1)}) - \nlog_n(\bm{\beta}^{(2)})|<q_6\bigr)\rightarrow 1,
$$
as $n\rightarrow\infty$.
Additionally, since $\sum_{h=0}^{k'} q_5^h$ converges to a positive constant as $k'\rightarrow\infty$, it is upper bounded. Let $q_7$ be an upper bound of $q_8\cdot q_9\cdot \sum_{h=0}^{k'} q_5^h$. Accordingly,
we obtain the result in \eqref{eq_convergence_in_M} and finish the proof.

\end{proof}

\subsection{Simulation study: ridge regression}\label{subsec_ridge}
In this subsection, we present simulation examples of ridge regression.
% where covariates are highly correlated, of ridge regression with highly correlated covariates, highlighting scenarios where penalty terms are necessary. These examples illustrate that incorporating a penalty term can have a moderate impact on the testing and training algorithm of AE-AL, particularly when the penalty parameter is not excessively large.
The data are generated by linear regression models with the following two settings.
\begin{description}
\item[Setting 1: ]    $y =  \beta_1x_1 +\cdots+  \beta_{12}x_{12} +\varepsilon$, where $\A$ holds $x_1,\dots,x_6$ and $\B$ holds $x_7,\dots, x_{12}$,
\item[Setting 2: ]    $y =  \beta_1x_1 +\cdots+  \beta_{24}x_{24}+\varepsilon$, where $\A$ holds $x_1,\dots,x_{20}$ and $\B$ holds $x_5,\dots, x_{24}$,
\end{description}
where $\varepsilon\sim \mathcal N(0,1)$. The covariate observations are generated from the same procedure with the one in Section~\ref{subsec_type1}  where we take $\rho=0.9$. 

For the test under $\textrm{H}_0$, we take $\beta_1,\dots \beta_6$ in Setting~1 and $\beta_1\dots,\beta_{20}$ in Setting~2 to be one and other coefficients to be zero. The scale of the Laplace noise (refer to Section~\ref{sec_priv} in the main text) is either 0.1 or 0.5. The penalty parameter $\lam$ is selected from 10-fold CV. We repeat the experiment 100 times and plot the Q-Q plots of the p-values of the test statistic with $\id=1,\dots,5$. The results of Settings~1 and 2 are presented in Figures~\ref{ridgeQQ_setting1} and \ref{ridgeQQ_setting2}, respectively. It shows that when the sample size is small ($n=200$), the test statistic may fail to control the type-one error (blue dots may fall below the red straight lines) due to the added penalty in model fitting. Nevertheless, as the sample size increases, the performance of the test statistic improves. We also plot the box plots of the selected penalty parameters ($\lam$) in Figure~\ref{ridgeH0_lam}. It shows that in the considered settings, the selected $\lam$ tends to get smaller when $n$ gets larger, which aligns with the requirement of Corollary~\ref{coro_test_penal}.
\begin{figure}
\centering
\begin{subfigure}[b]{0.9\textwidth}
   \includegraphics[width=1\linewidth]{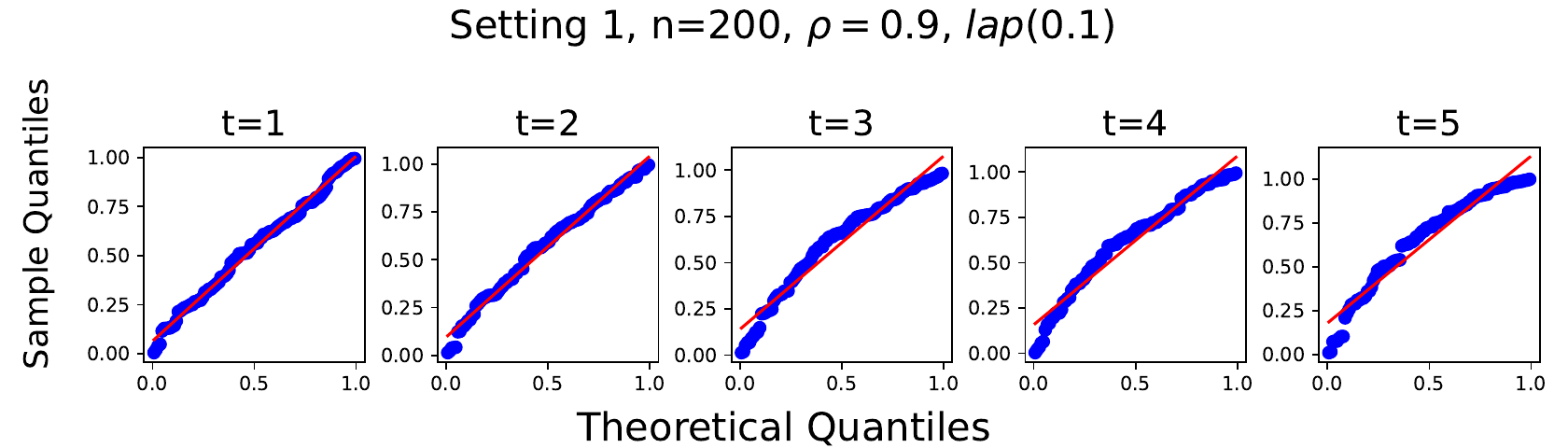}
\end{subfigure}\vspace{0.5em}
\begin{subfigure}[b]{0.9\textwidth}
   \includegraphics[width=1\linewidth]{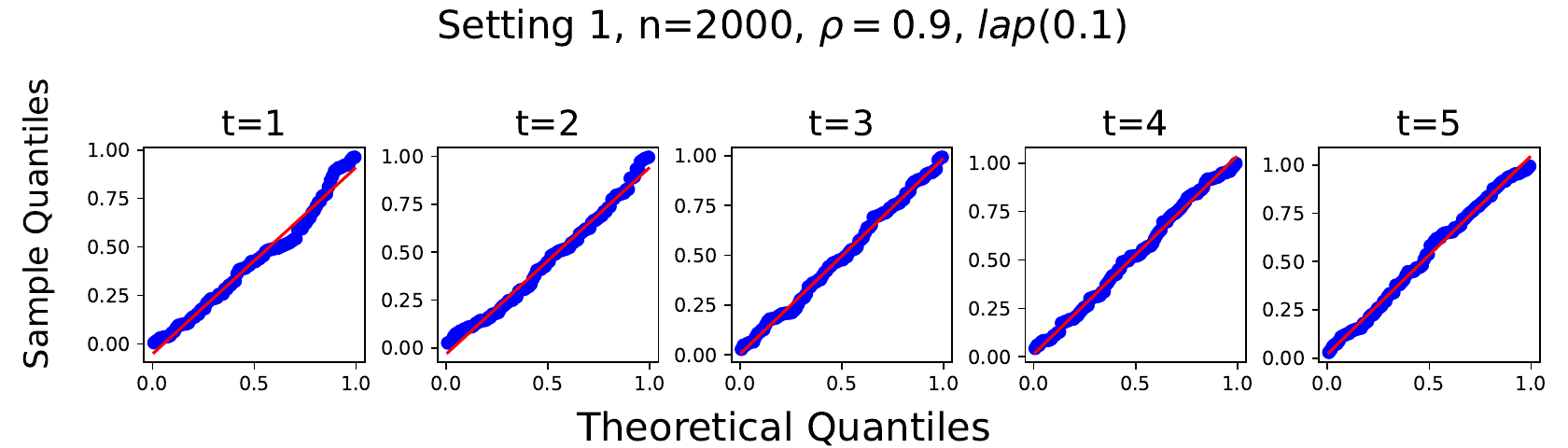}
\end{subfigure}\vspace{0.5em}
\begin{subfigure}[b]{0.9\textwidth}
   \includegraphics[width=1\linewidth]{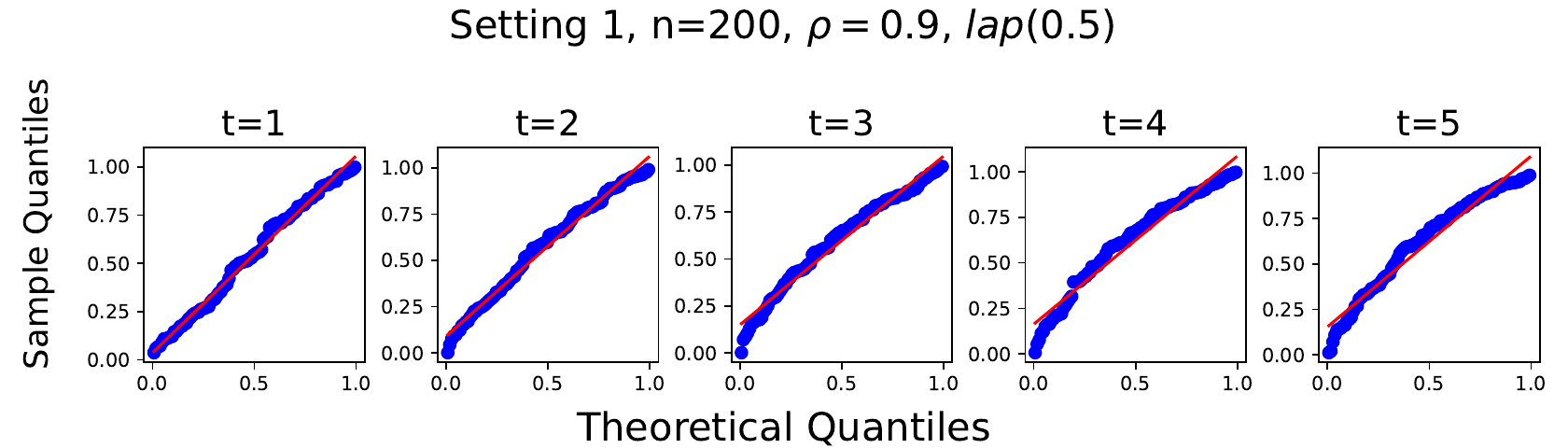}
\end{subfigure}\vspace{0.5em}
\begin{subfigure}[b]{0.9\textwidth}
   \includegraphics[width=1\linewidth]{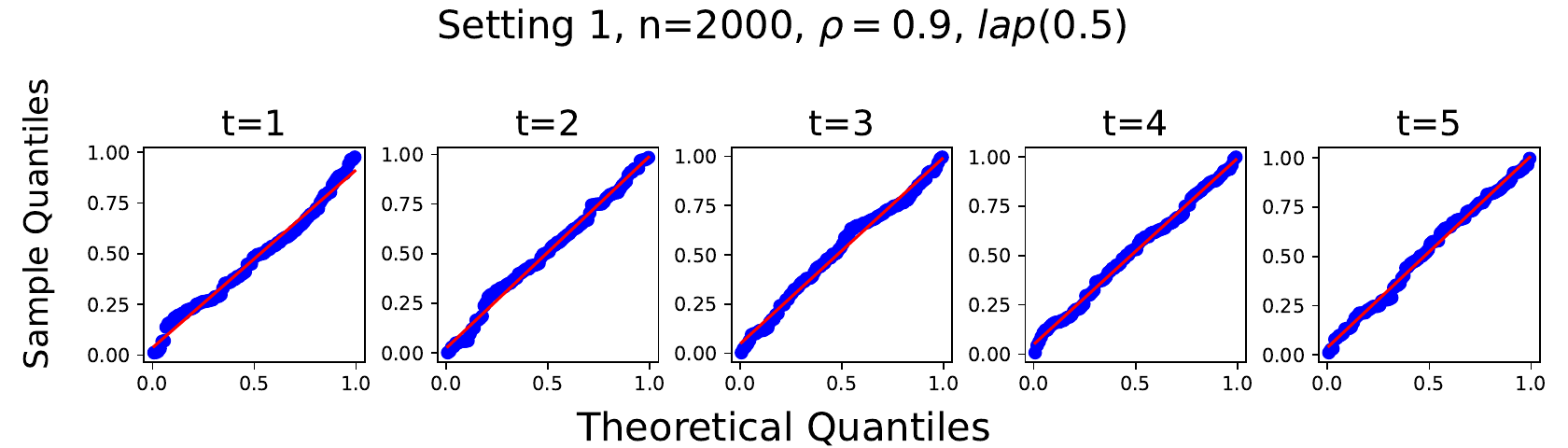}
\end{subfigure}
\caption{The Q-Q plot of the p-values of $W_{n,\id}$ from the ridge regression in Setting~1. The red straight line stands for the perfect match between the sample quantiles and theoretical quantiles, and $\textrm{Lap}(0.1)$ and $\textrm{Lap}(0.5)$ denote adding Laplace noise with scale $0.1$ and $0.5$, 
respectively.}\label{ridgeQQ_setting1}
\end{figure}

\begin{figure}
\centering
\begin{subfigure}[b]{0.9\textwidth}
   \includegraphics[width=1\linewidth]{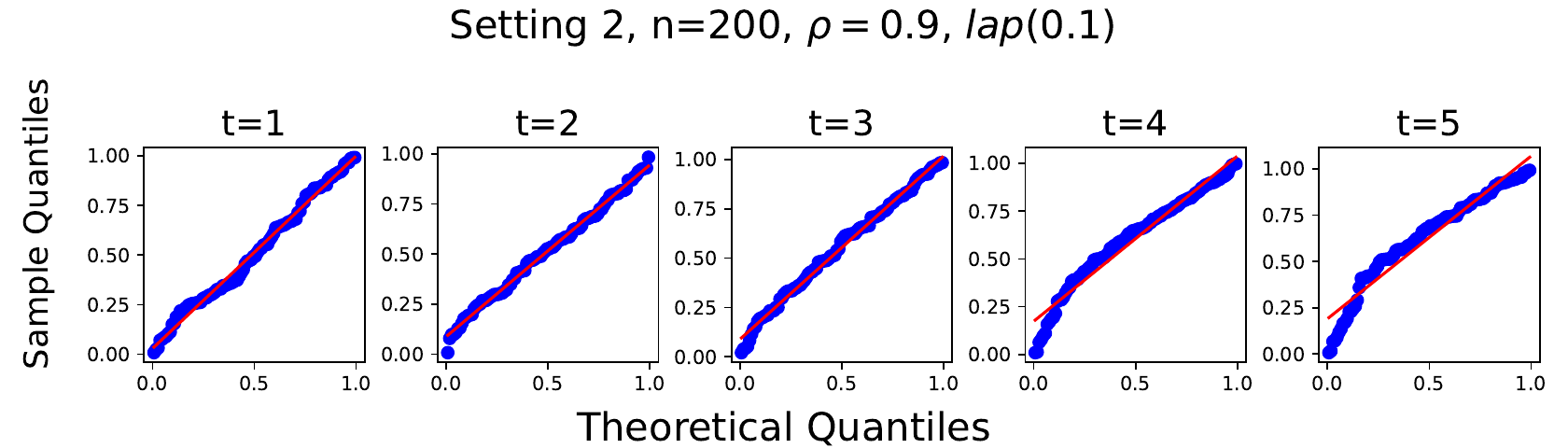}
\end{subfigure}\vspace{0.5em}
\begin{subfigure}[b]{0.9\textwidth}
   \includegraphics[width=1\linewidth]{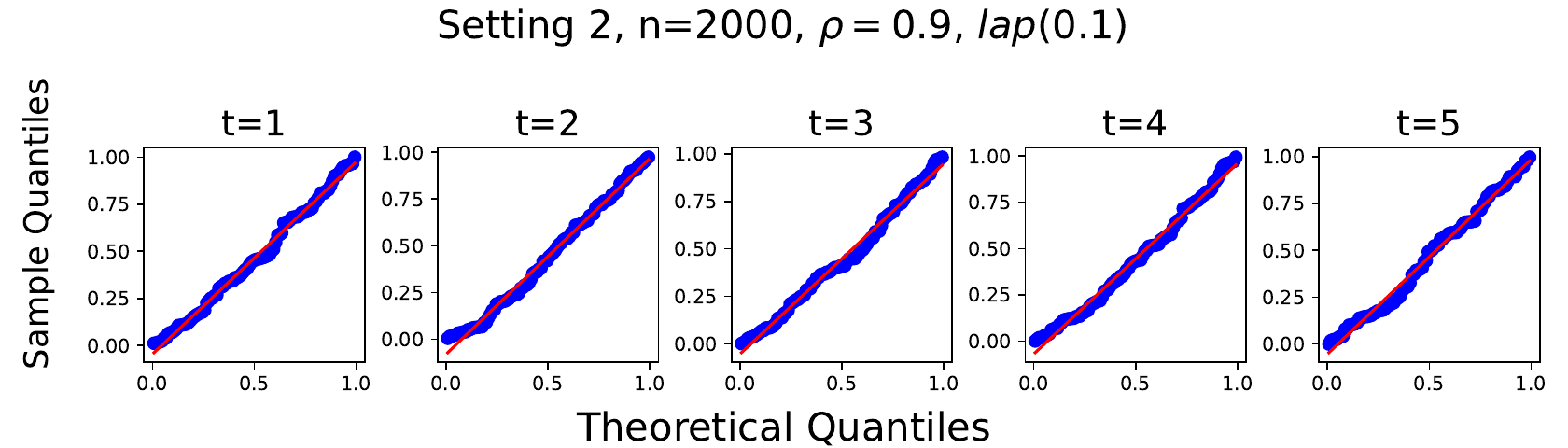}
\end{subfigure}\vspace{0.5em}
\begin{subfigure}[b]{0.9\textwidth}
   \includegraphics[width=1\linewidth]{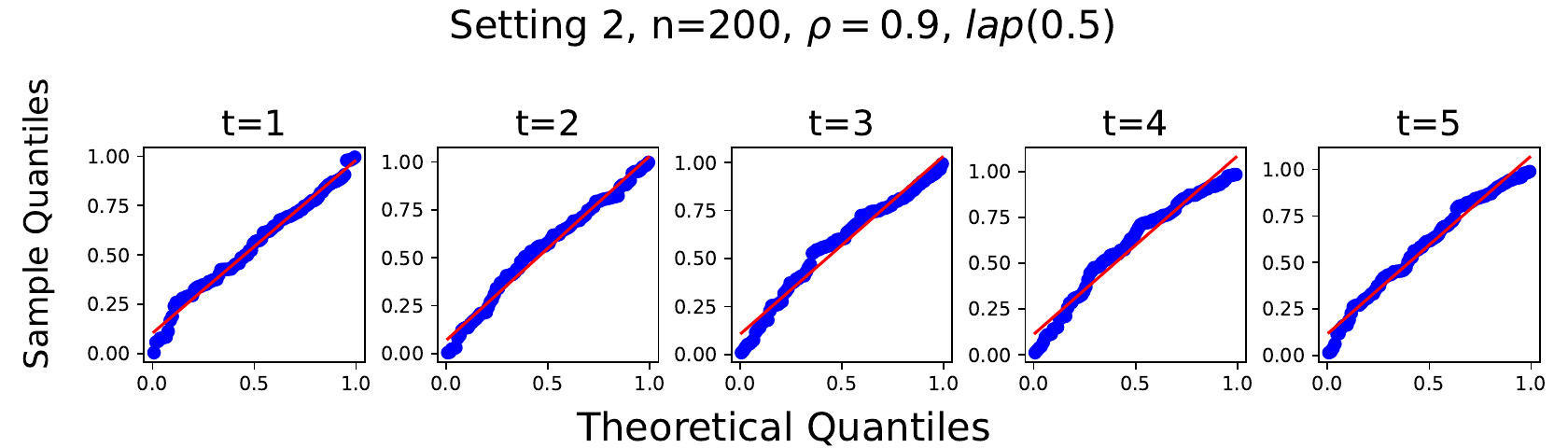}
\end{subfigure}\vspace{0.5em}
\begin{subfigure}[b]{0.9\textwidth}
   \includegraphics[width=1\linewidth]{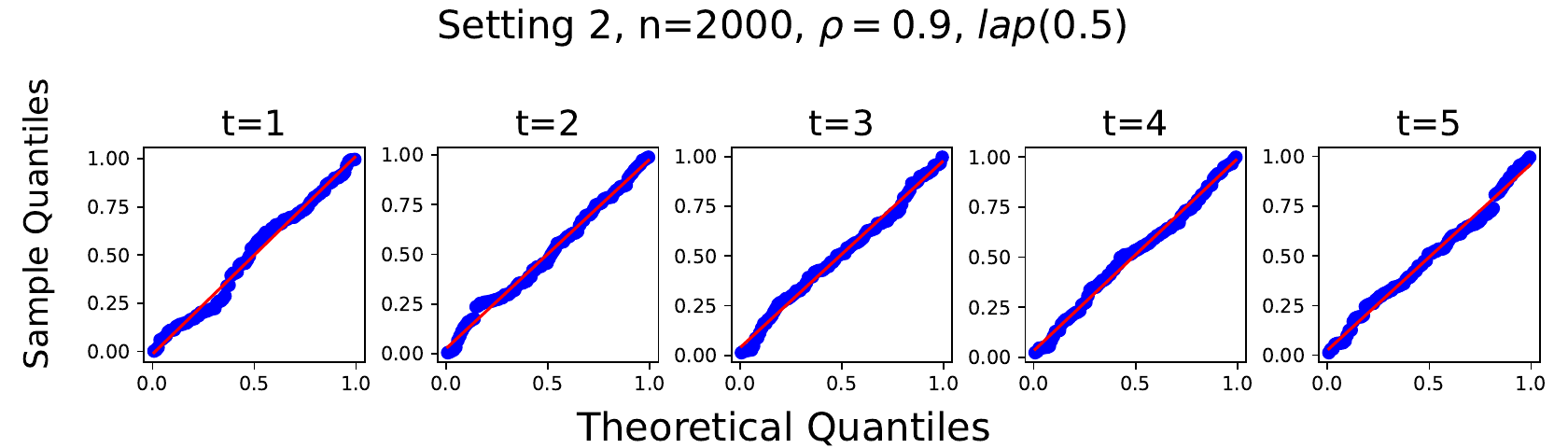}
\end{subfigure}
\caption{The Q-Q plot of the p-values of $W_{n,\id}$ from the ridge regression in Setting 2. The notations are the same with Figure~\ref{ridgeQQ_setting1}.}\label{ridgeQQ_setting2}
\end{figure}
\begin{figure}[!htb]
	\centering
	\vspace{-0.1cm}
	\includegraphics[width=1\linewidth]{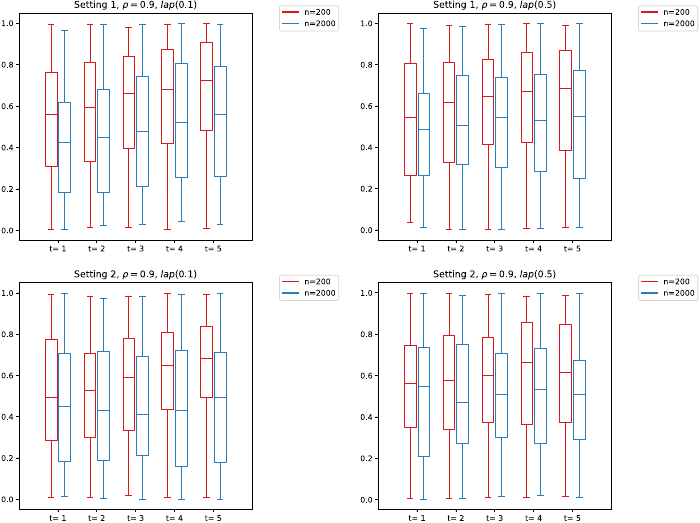}
	\vspace{-0.0cm}
	\caption{The selected penalty parameter values of the ridge regression model under $\textrm{H}_0$.
	}
	\label{ridgeH0_lam}
\end{figure}
For the test under $\textrm{H}_1$, the coefficients are i.i.d.\ generated from the standard normal distribution. We calculate the rejection rates of the test statistic at the significance level of 0.05.  The remaining settings are the same as those for $\textrm{H}_0$.  Figure~\ref{ridgeH1_rej} demonstrates that the addition of the penalty term has little impact on the rejection rate of the test statistic. In Figure~\ref{ridgeH1_lam}, the boxplot of $\lam$ illustrates that as the sample size increases, the selected $\lam$ tends to become smaller.
\begin{figure}[!htb]
	\centering
	\vspace{-0.1cm}
	\includegraphics[width=1\linewidth]{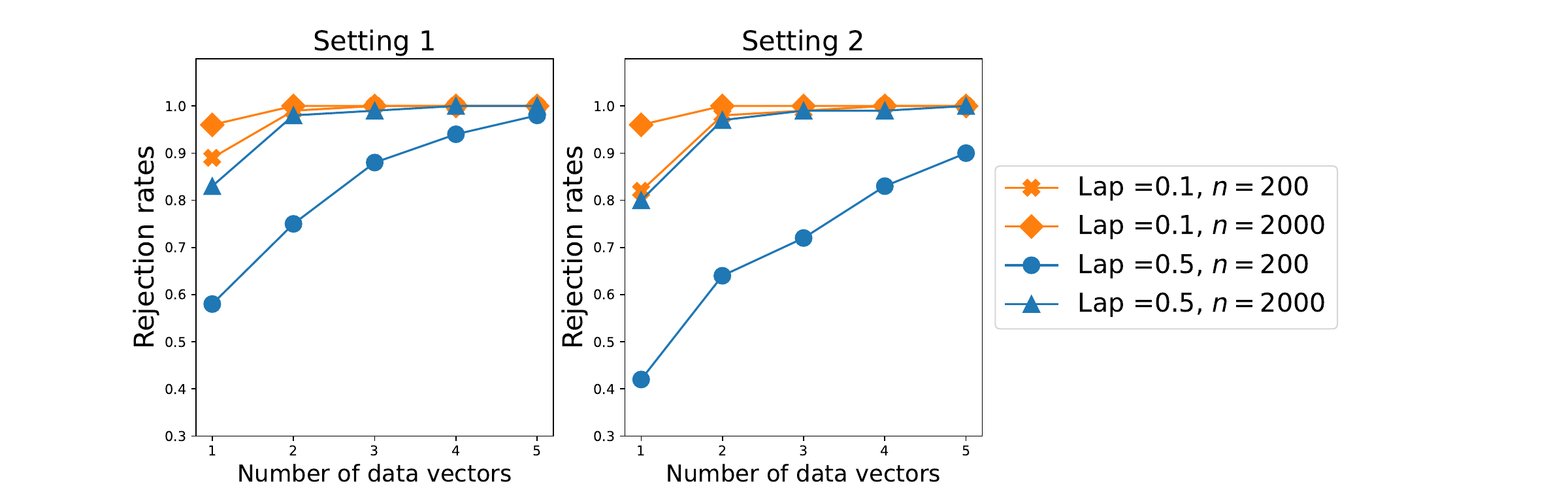}
	\vspace{-0.0cm}
	\caption{The rejection rates of $W_{n,\id}^{(\lam)}$ at the significance level of $0.05$ versus the number of vectors sent in 100 replications.
	}
	\label{ridgeH1_rej}
\end{figure}

\begin{figure}[!htb]
	\centering
	\vspace{-0.1cm}
	\includegraphics[width=1\linewidth]{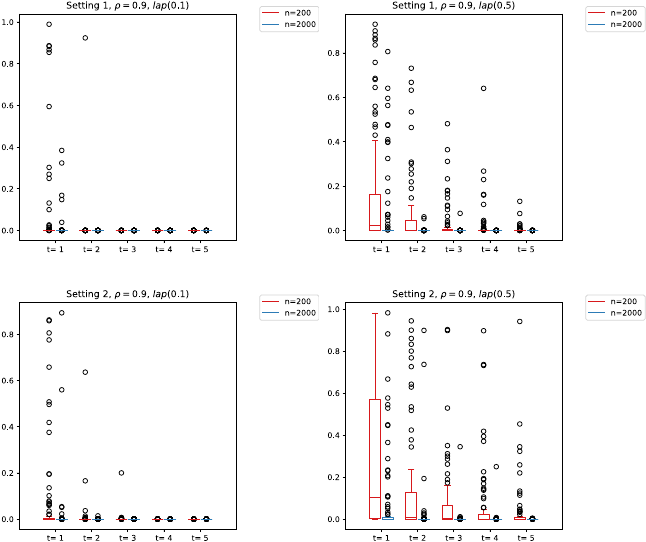}
	\vspace{-0.0cm}
	\caption{The selected penalty parameter values of the ridge regression model under $\textrm{H}_1$.
	}
	\label{ridgeH1_lam}
\end{figure}

For model training performance, we apply the same data generating procedure as the one for testing under $\textrm{H}_1$. The penalty parameters $\lam$ for both the AE-AL model and the oracle model are obtained through 10-fold CVs. In the case of AE-AL, we select the $\lam$ value with the smallest CV-prediction mean squared error (MSE) after five iteration rounds.
Figures~\ref{ridgeMSE} and \ref{ridgeTraining_lam} show that AE-AL performs well in the iterative training, due to the relatively small penalty parameters.

\begin{figure}
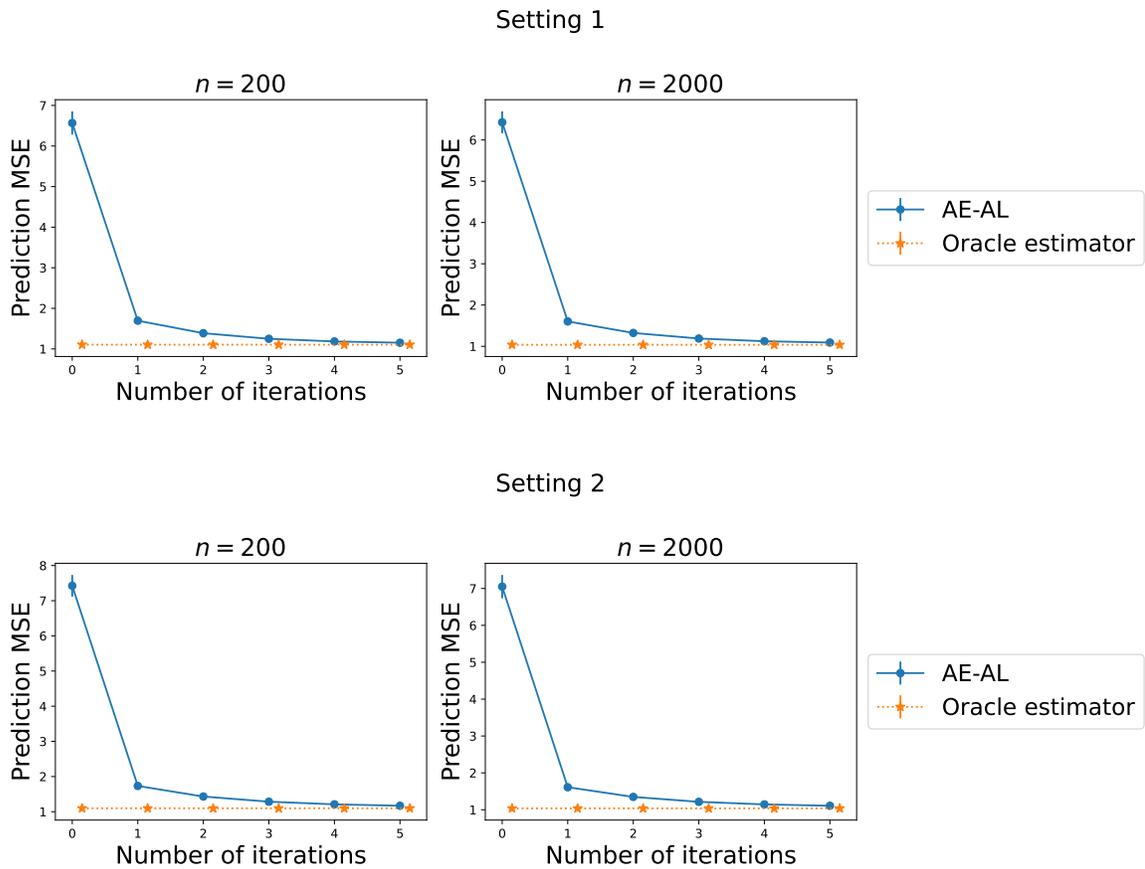

\centering
\begin{subfigure}[b]{1\textwidth}
   \includegraphics[width=1\linewidth]{ridgeTrainingOracleMSE__betasetting_1_n_2000_Xdist_uniform.pdf}
\end{subfigure}\vspace{0.8em}
\begin{subfigure}[b]{1\textwidth}
   \includegraphics[width=1\linewidth]{ridgeTrainingOracleMSE__betasetting_2_n_2000_Xdist_uniform.pdf}
\end{subfigure}
\caption{The average prediction MSE of the AE-AL estimator and the oracle estimator for ridge regression on the evaluation data. The vertical bars represent the standard errors from 100 replications.}\label{ridgeMSE}
\end{figure}

\begin{figure}[!htb]
	\centering
	\vspace{-0.1cm}
	\includegraphics[width=1\linewidth]{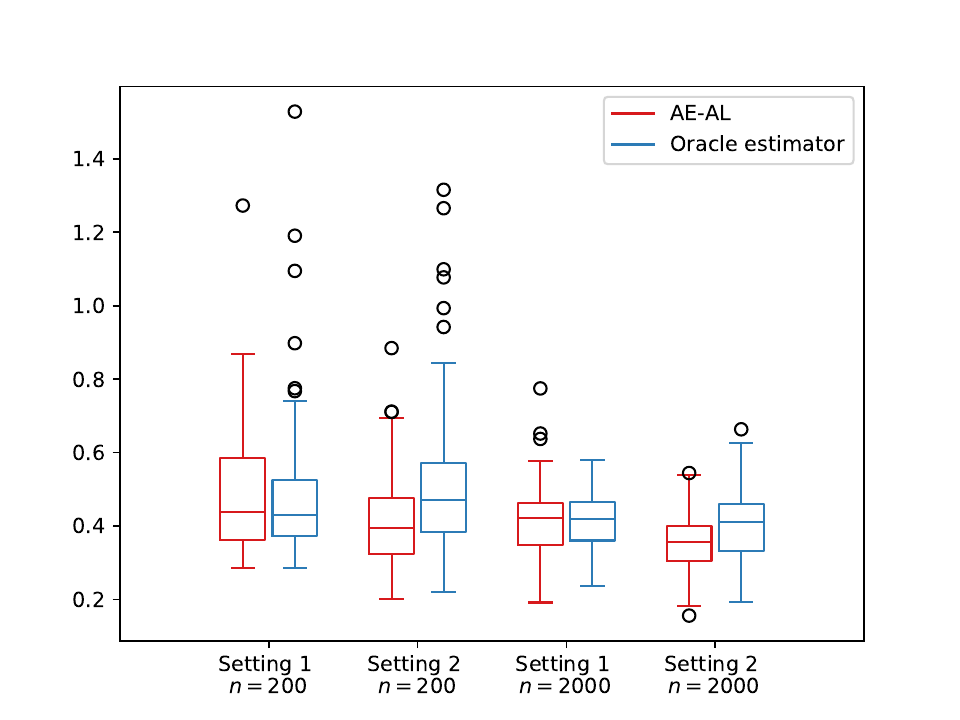}
	\vspace{-0.0cm}
	\caption{The selected penalty parameter values of AE-AL and the oracle model in training the ridge regression models.
	}
	\label{ridgeTraining_lam}
\end{figure}

\subsection{Simulation study: Lasso regression and elastic net regression}\label{subsec_Enet}

Although Corollary~\ref{coro_test_penal} does not directly apply to Lasso regression and Elastic Net regression, it is still worth considering their application when the number of covariates $p$ exceeds the sample size $n$. 
In this subsection, we conduct simulation studies to assess the performance of AE-AL training for these two models in high-dimensional settings.

We generate data by sparse linear regression models under the following two settings.
\begin{description}
\item[Setting 1: ]    
\begin{align*}
y =&\   0.3 x_1 + 0.2 x_2 + 0.1 x_3 + 0.3 x_{201} + 0.2 x_{202} + 0.1 x_{203} + \\
&\ 0.3 x_{401} + 0.2 x_{402} + 0.1 x_{403} + \varepsilon.
\end{align*}
Let $x^{(a)} = (x_1,\dots,x_{200})^\T, x^{(b)} =(x_{201}\cdots, x_{400})^\T, x^{(c)} =(x_{401}\cdots, x_{600})^\T$. \A holds the observations of $x^{(a)}$ and $x^{(c)}$, and \B holds the observations of $x^{(b)}$ and $x^{(c)}$. 
\item[Setting 2: ]    \begin{align*}
y =&\   0.3 x_1 + 0.2 x_2 + 0.1 x_3 + 0.3 x_{301} + 0.2 x_{302} + 0.1 x_{303} + \varepsilon.
\end{align*}
\A holds the observations of $x_1,\dots,x_{300}$ and \B holds the observations of $x_{301},\dots,x_{600}$.
\end{description}
The random noise $\varepsilon\sim \mathcal N(0,1)$. 
%  Let $V$ be a $p\times p$ matrix with $V_{i,j}=\rho^{|i-j|}$, namely, it represents the AR(1) covariance matrix.
% The covariate observations 
% $\bm{X} = \tilde{\bm{X}}\sqrt{V}$, where $\tilde{X}$ is a $n\times p$ matrix with elements
%  i.i.d.\ sampled from $\textrm{Uniform}(0, 1)$. 
 %, and $\sqrt{V}$ is a $p\times p$ matrix such that $\sqrt{V}^\T\sqrt{V} = V$. 
 We apply the same procedures to generate covariate observations and select tuning parameters as the previous section. The averaged prediction MSEs from 100 replications are shown in Figure~\ref{EnetMSE}. It can be seen that the predicted MSE of the AE-AL may fail to converge to the one of the oracle estimator. Nevertheless, it still brings an improvement to the original model based on \a's local data only.

\begin{figure}
\centering
\begin{subfigure}[b]{1\textwidth}
   \includegraphics[width=1\linewidth]{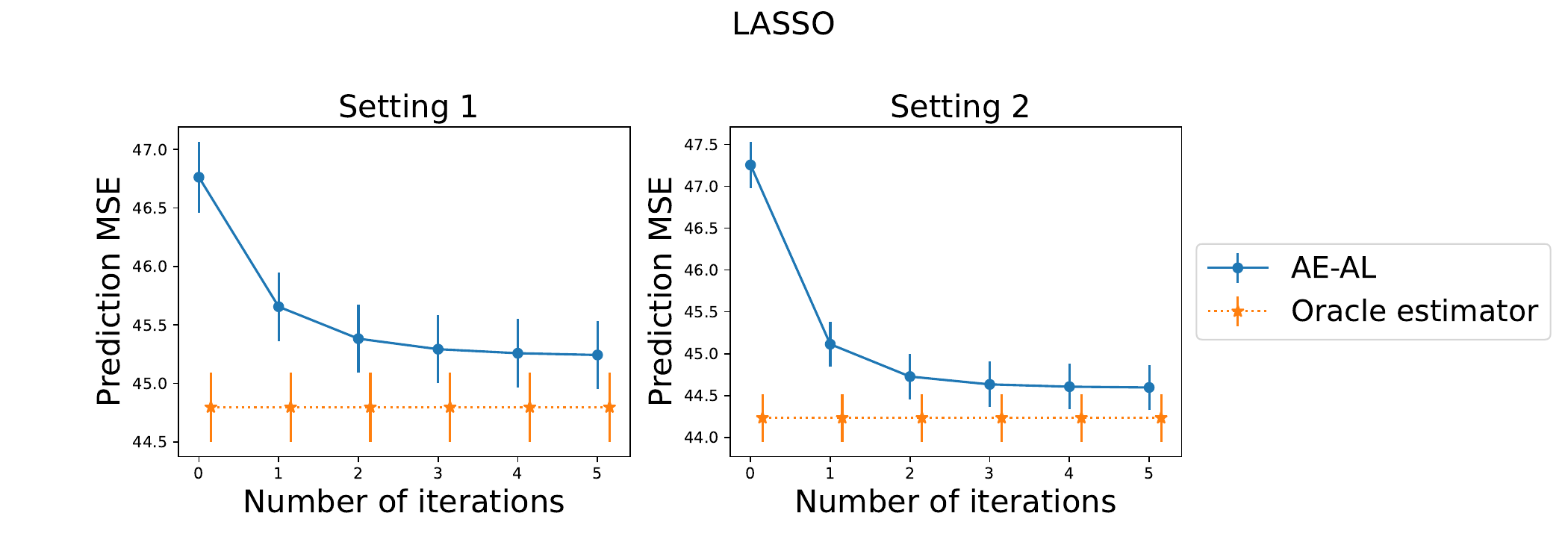}
\end{subfigure}\vspace{0.8em}
\begin{subfigure}[b]{1\textwidth}
   \includegraphics[width=1\linewidth]{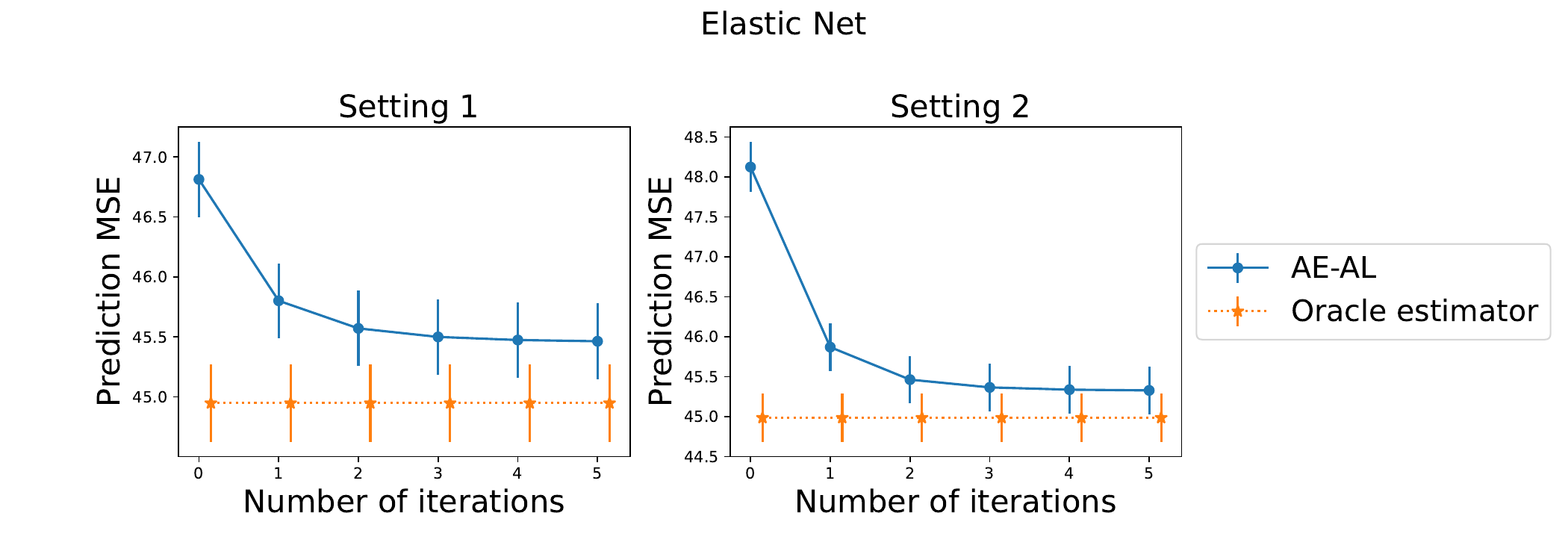}
\end{subfigure}
\caption{The average prediction MSE of the AE-AL estimator and the oracle estimator for Lasso regression and elastic net regression on the evaluation data. The vertical bars represent the standard errors from 100 replications.}\label{EnetMSE}
\end{figure}

\clearpage

\section{Protecting $Y$ in AE-AL Training}\label{sec_masking the response}
Recall that in the ``assisted training'' stage, \B needs to obtain the response from \a.
To protect the response observations, \A may instead send $Y'$ that is obtained from a random transformation of $Y$. 
Then, after jointly fitting the model of $Y'$, \A obtains predictions according to the transformation mechanism. In this section, we provide an example for binary classifications.

For binary classifications, we may apply the randomized response method by \citet{warner1965randomized}.
Specifically, 
\A generates $Y'$ by 
\begin{equation*}
\begin{cases}
\P(Y'=1\mid Y=1) = 1-p',\\
\P(Y'=1\mid Y=0) = p',
\end{cases}
\end{equation*}
where $p'\in (0,0.5)$.
Since 
\begin{align*}
\frac{\P(Y'=1\mid Y=1)}{\P(Y'=1\mid Y=0)} =&\ \frac{\P(Y'=0\mid Y=0)}{\P(Y'=0\mid Y=1)} = \frac{1-p'}{p'},\\
\frac{\P(Y'=1\mid Y=0)}{\P(Y'=1\mid Y=1)} =&\ \frac{\P(Y'=0\mid Y=1)}{\P(Y'=0\mid Y=0)} = \frac{p'}{1-p'},
\end{align*}
this mechanism
satisfies the $\epsilon$-local differential privacy defined in Section~\ref{sec_localdiff_def}
when 
$$
p'\geq \frac{1}{1 + \exp(\epsilon)}.
$$
After fitting $Y'$ and obtaining the joint prediction of $\P(Y'=1\mid \bm{x})$,
\A will calculate the predicted probability for $Y=1$ using the equation:
$$
P(Y=1\mid\bm{x})=\frac{\P(Y'=1\mid \bm{x}) -p'}{(1-2p')},
$$
which is obtained from 
$$
P(Y'=1\mid\bm{x}) = (1-p')\cdot P(Y=1\mid\bm{x}) + p'\cdot \bigl(1-P(Y=1\mid\bm{x})\bigr).
$$

We conduct simulation studies to examine 
the relationship between $p'$ and the predictive performance of the AE-AL estimator. For the experiments, we apply
Setting~2 from 
%Section~\ref{subset_s1} of the main text 
Section~\ref{subsec_type1}
for AE-AL training where $n=2000$, $\rho=0.1$. The transformation probability $p'$ is taken to be $0.1$, $0.2$, $0.3$, or $0.4$. The AUCs are shown in Figure~\ref{MaskingY_results}. It can be seen that both the AE-AL estimator and the oracle estimator exhibit declining performance as $p'$ increases. Nevertheless, the AE-AL still improves upon the initial model trained solely with the data from \a, which corresponds to $k=0$ in Figure~\ref{MaskingY_results}.
\begin{figure}[!htb]
	\centering
	\vspace{-0.1cm}
	\includegraphics[width=1\linewidth]{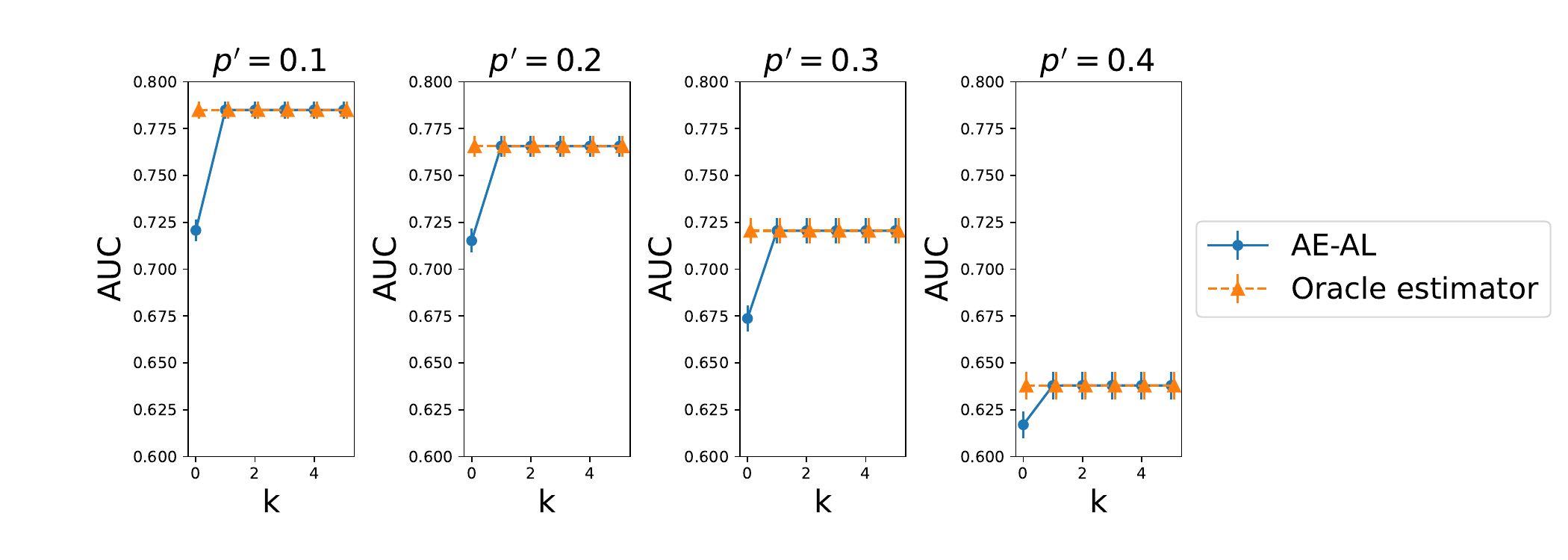}
	\vspace{-0.0cm}
	\caption{AUCs of the AE-AL estimator for logistic regression. The x-axis represents the number of transmisson rounds $k$.
	}
	\label{MaskingY_results}
\end{figure}

\section{Details of the Hospital Length of Stay data}\label{sec_hosp_data_detail}
We follow the data pre-processing procedure of \citet{wang2022parallel}.
The resulting dataset consists 16 covariate measurements of 6916 patients, collected from intensive care units (ICU) and laboratories. 
These covariates are height, weight, temperature, heart rate, oxygen saturation, respiratory rate, diastolic blood pressure, systolic blood pressure, fraction inspired oxygen, mean blood pressure, glascow coma scale total, glascow coma scale eye, glascow coma scale motor, glascow coma scale verbal, pH, and glucose. We assume that the first 10 covariates are from the ICU and the remaining ones are from the labotarary.

\section{Simulation Studies}\label{sec_addexp}

\subsection{The remaining Q-Q plots of the logistic regression}\label{subsec_QQlogistic}
The remaining Q-Q plots from Section~\ref{subsec_type1}  are shown in Figures~\ref{logisticQQ_setting1_sig0_lap0}-\ref{logisticQQ_setting3_sig0.25_lap0.5}.

			\begin{figure}[tb]
				\centering
			\begin{subfigure}[b]{0.9\textwidth}
			\includegraphics[width=1\linewidth]{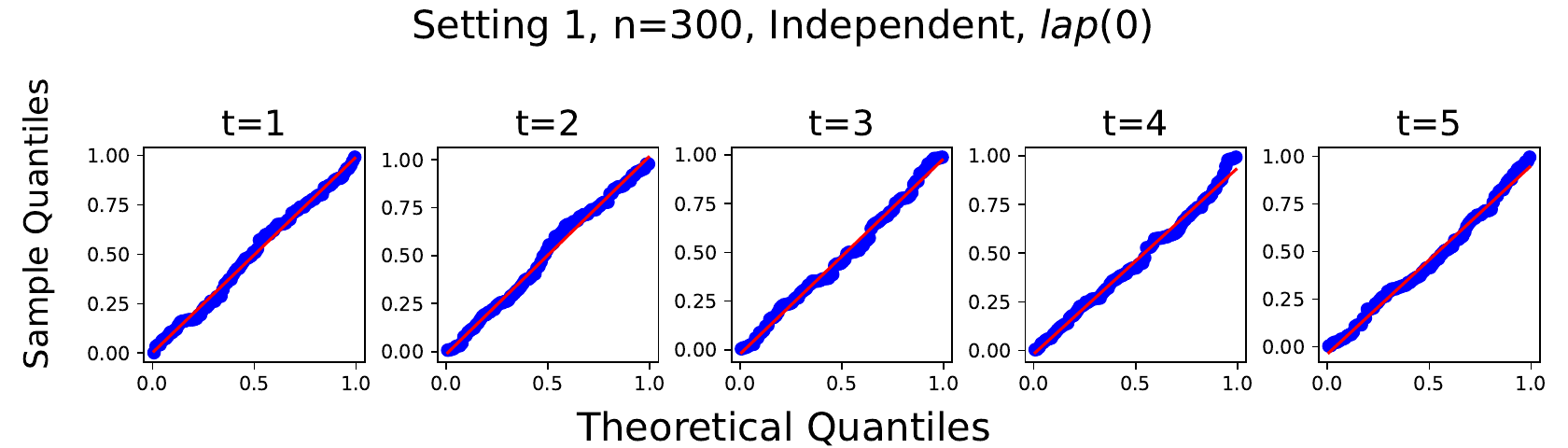}
			\end{subfigure}\vspace{0.8em}
			\begin{subfigure}[b]{0.9\textwidth}
			\includegraphics[width=1\linewidth]{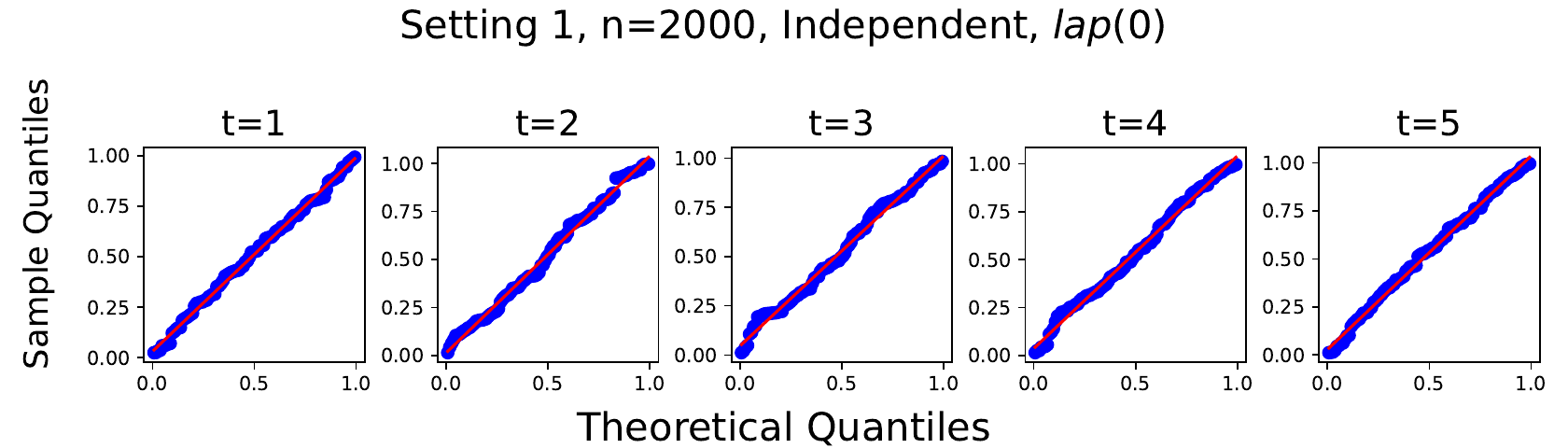}
			\end{subfigure}
				\caption{The Q-Q plot of $W_{n,\id}$ for logistic regression from Setting 1 with Independent covariates,  and without the Laplace noise. 
				}
				\label{logisticQQ_setting1_sig0_lap0}
			\end{figure}

			\begin{figure}[tb]
				\centering
			\begin{subfigure}[b]{0.9\textwidth}
			\includegraphics[width=1\linewidth]{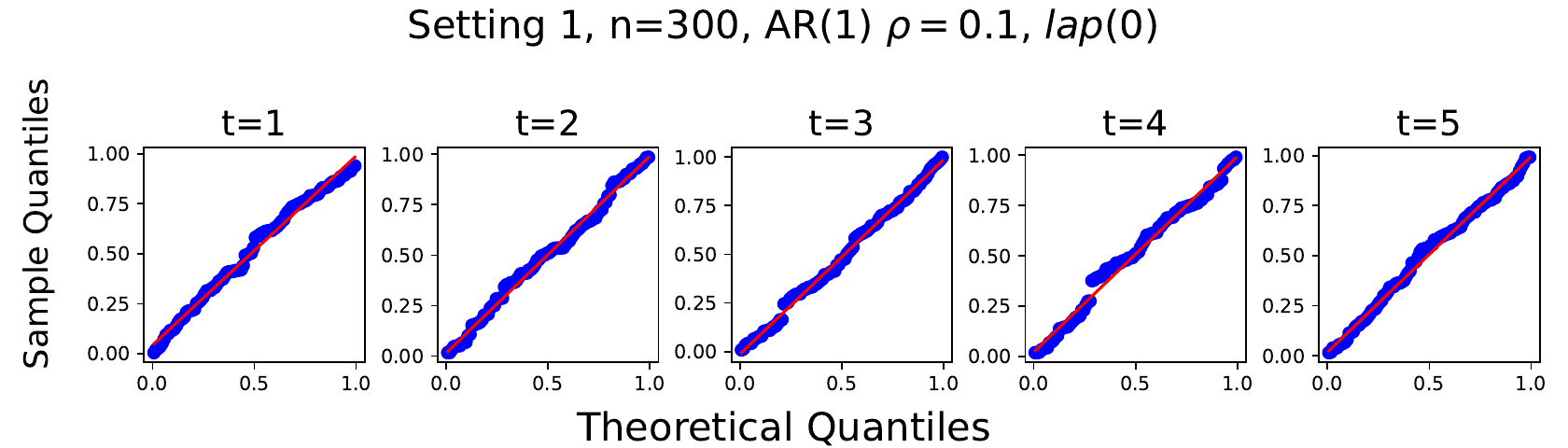}
			\end{subfigure}\vspace{0.8em}
			\begin{subfigure}[b]{0.9\textwidth}
			\includegraphics[width=1\linewidth]{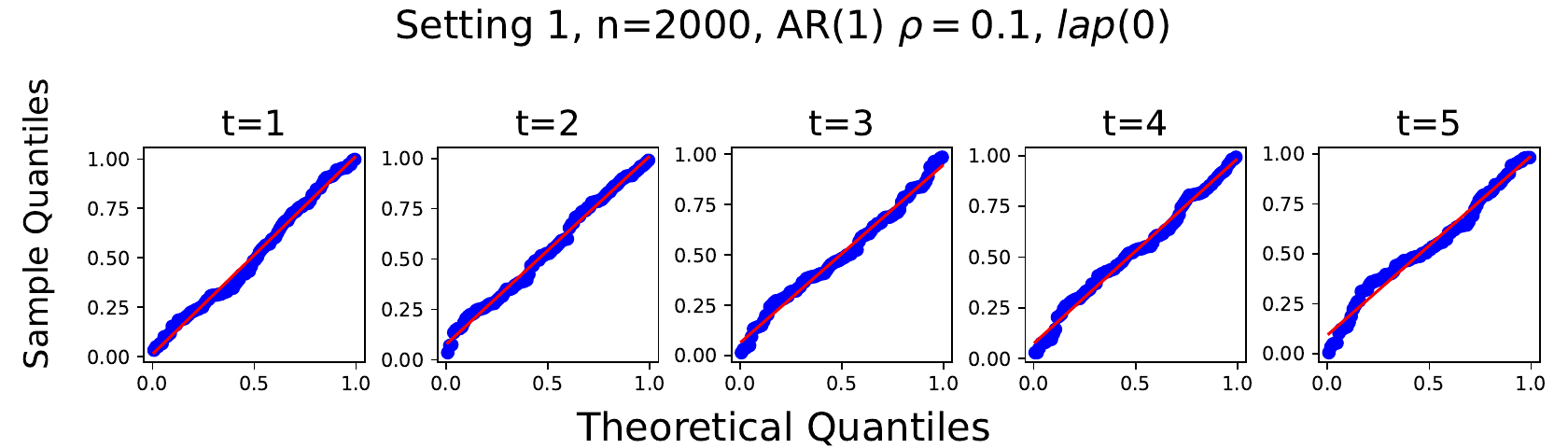}
			\end{subfigure}
				\caption{The Q-Q plot of $W_{n,\id}$ for logistic regression from Setting 1 with AR(1) covariates, $\rho =0.1$,  and without the Laplace noise. 
				}
				\label{logisticQQ_setting1_sig0.1_lap0}
			\end{figure}

			\begin{figure}[tb]
				\centering
			\begin{subfigure}[b]{0.9\textwidth}
			\includegraphics[width=1\linewidth]{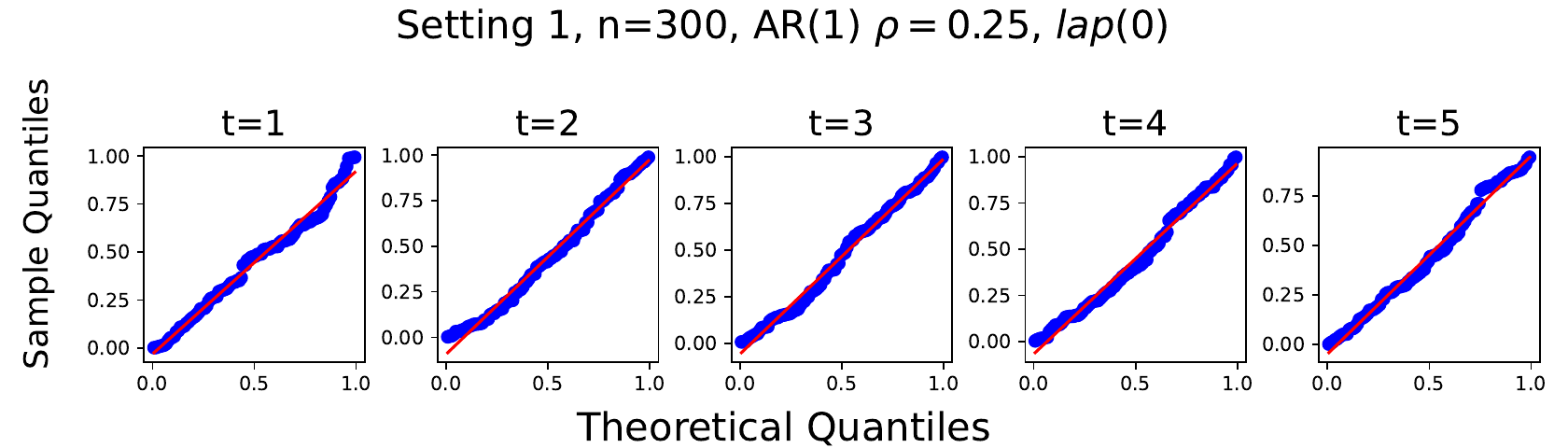}
			\end{subfigure}\vspace{0.8em}
			\begin{subfigure}[b]{0.9\textwidth}
			\includegraphics[width=1\linewidth]{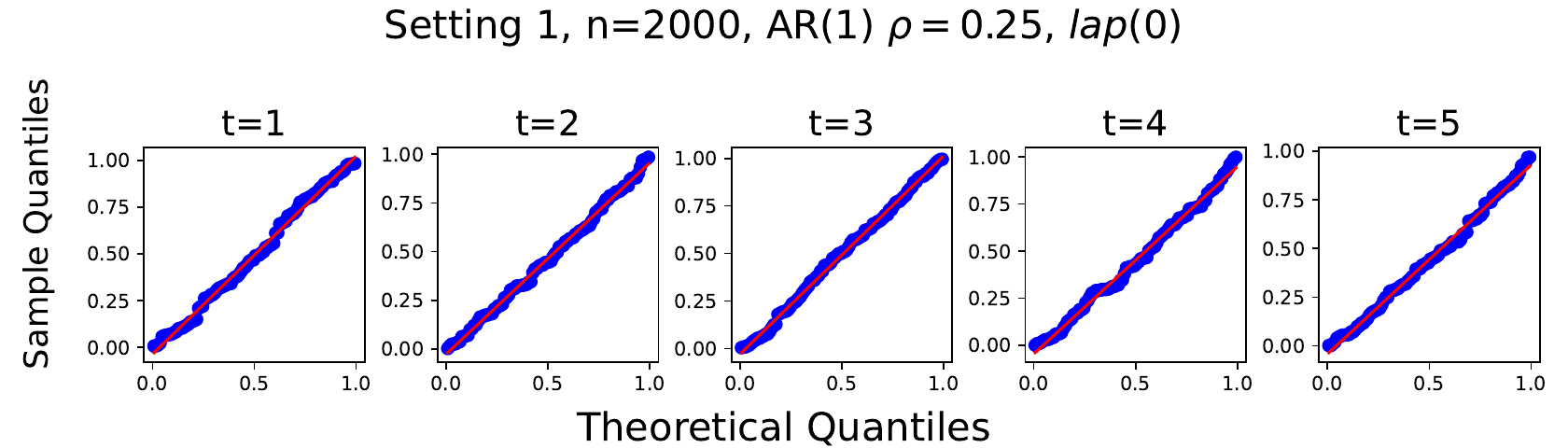}
			\end{subfigure}
				\caption{The Q-Q plot of $W_{n,\id}$ for logistic regression from Setting 1 with AR(1) covariates, $\rho =0.25$,  and without the Laplace noise. 
				}
				\label{logisticQQ_setting1_sig0.25_lap0}
			\end{figure}

			\begin{figure}[tb]
				\centering
			\begin{subfigure}[b]{0.9\textwidth}
			\includegraphics[width=1\linewidth]{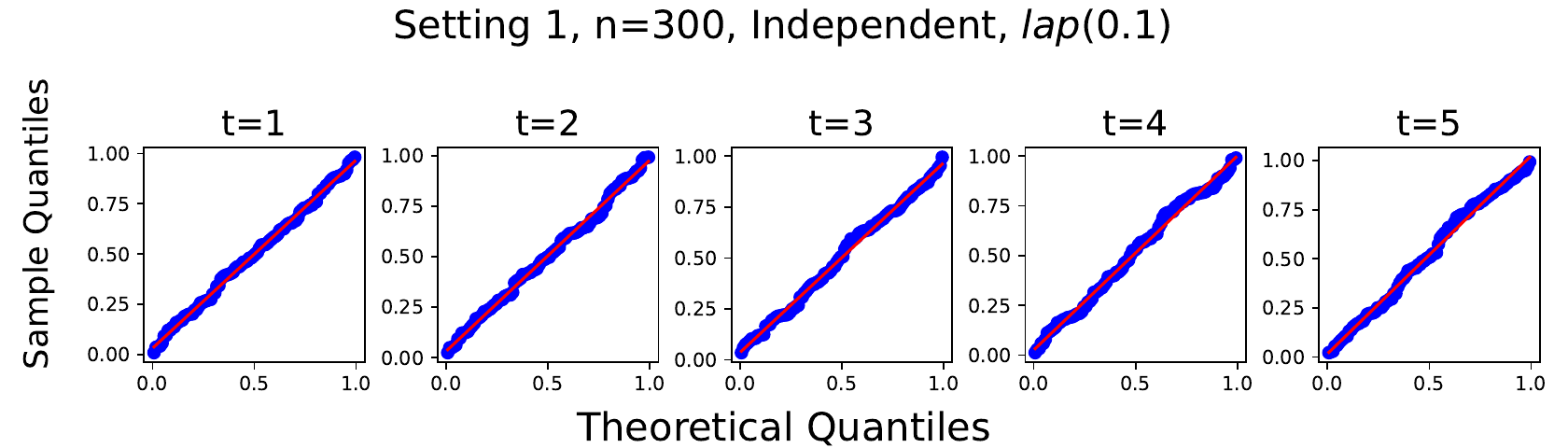}
			\end{subfigure}\vspace{0.8em}
			\begin{subfigure}[b]{0.9\textwidth}
			\includegraphics[width=1\linewidth]{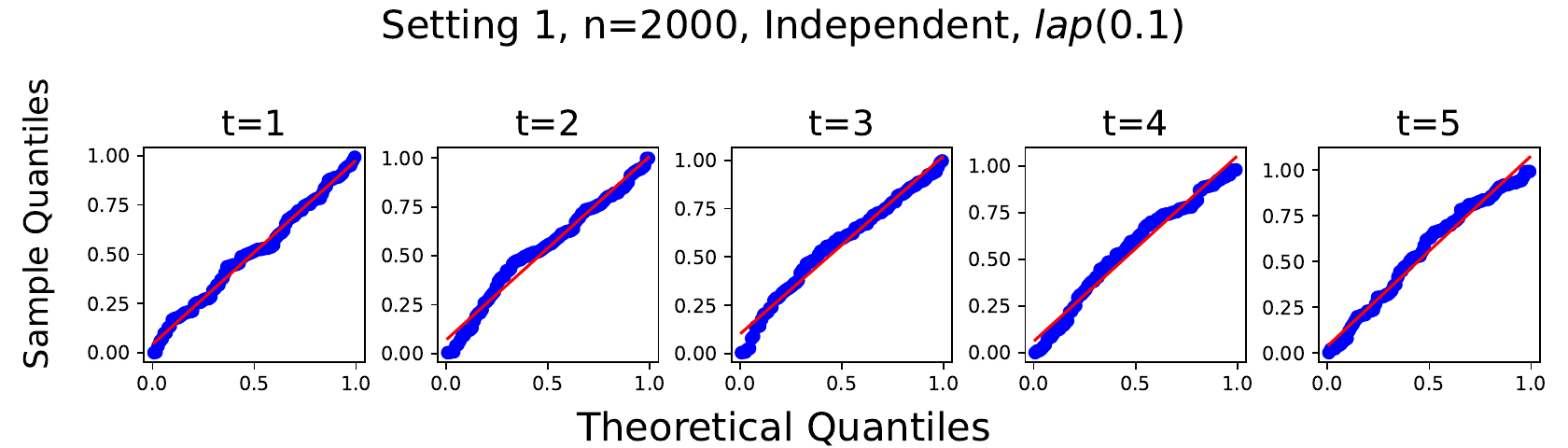}
			\end{subfigure}
				\caption{The Q-Q plot of $W_{n,\id}$ for logistic regression from Setting 1 with Independent covariates,  and the scale of the Laplace noise is 0.1. 
				}
				\label{logisticQQ_setting1_sig0_lap0.1}
			\end{figure}

			\begin{figure}[tb]
				\centering
			\begin{subfigure}[b]{0.9\textwidth}
			\includegraphics[width=1\linewidth]{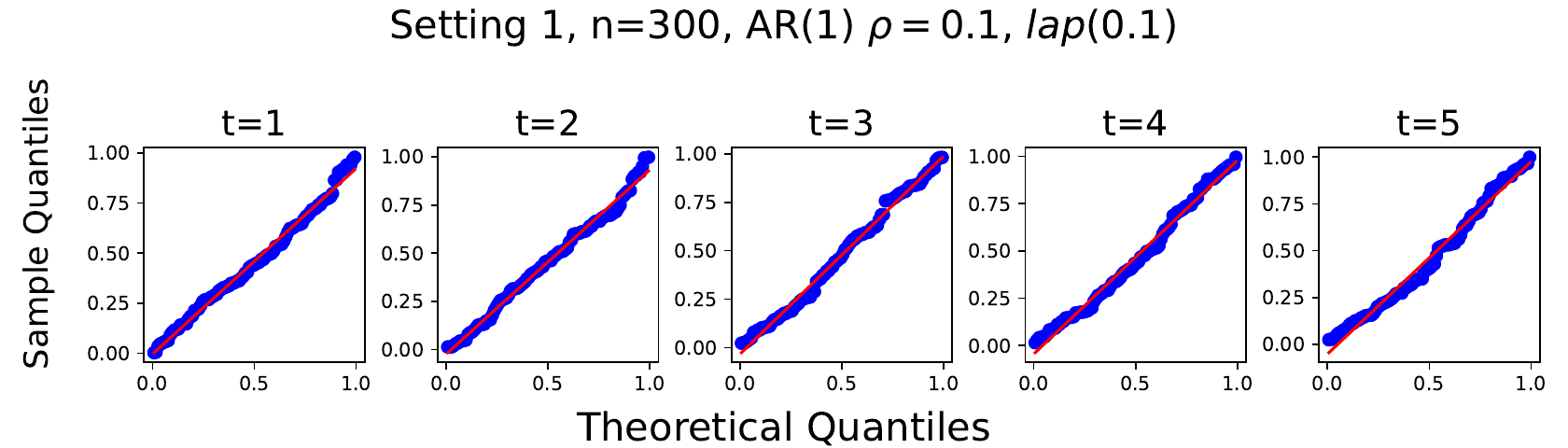}
			\end{subfigure}\vspace{0.8em}
			\begin{subfigure}[b]{0.9\textwidth}
			\includegraphics[width=1\linewidth]{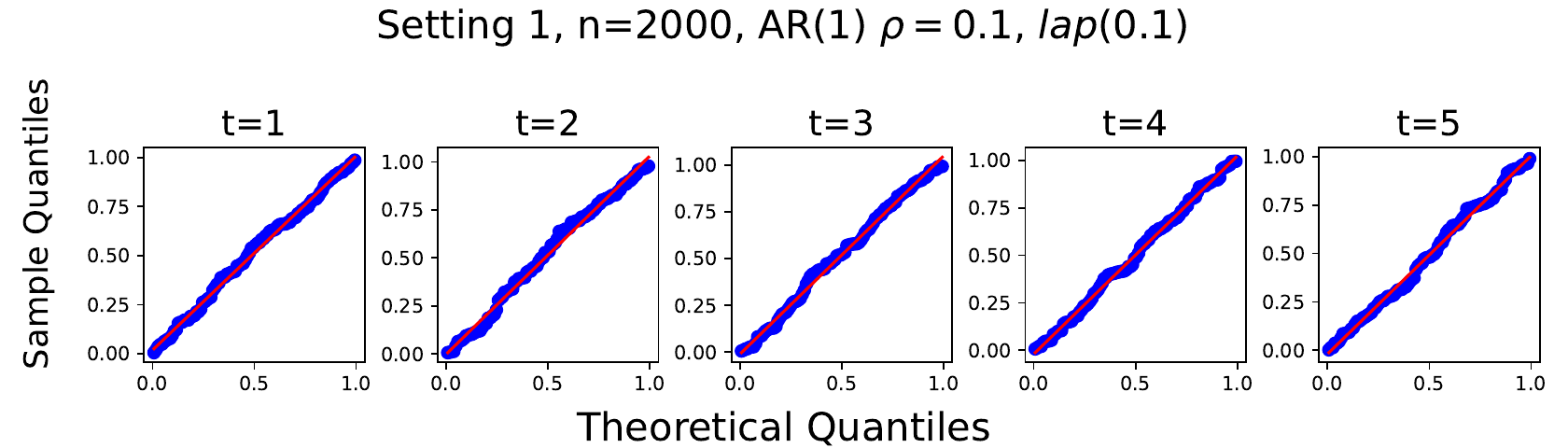}
			\end{subfigure}
				\caption{The Q-Q plot of $W_{n,\id}$ for logistic regression from Setting 1 with AR(1) covariates, $\rho =0.1$,  and the scale of the Laplace noise is 0.1. 
				}
				\label{logisticQQ_setting1_sig0.1_lap0.1}
			\end{figure}

			\begin{figure}[tb]
				\centering
			\begin{subfigure}[b]{0.9\textwidth}
			\includegraphics[width=1\linewidth]{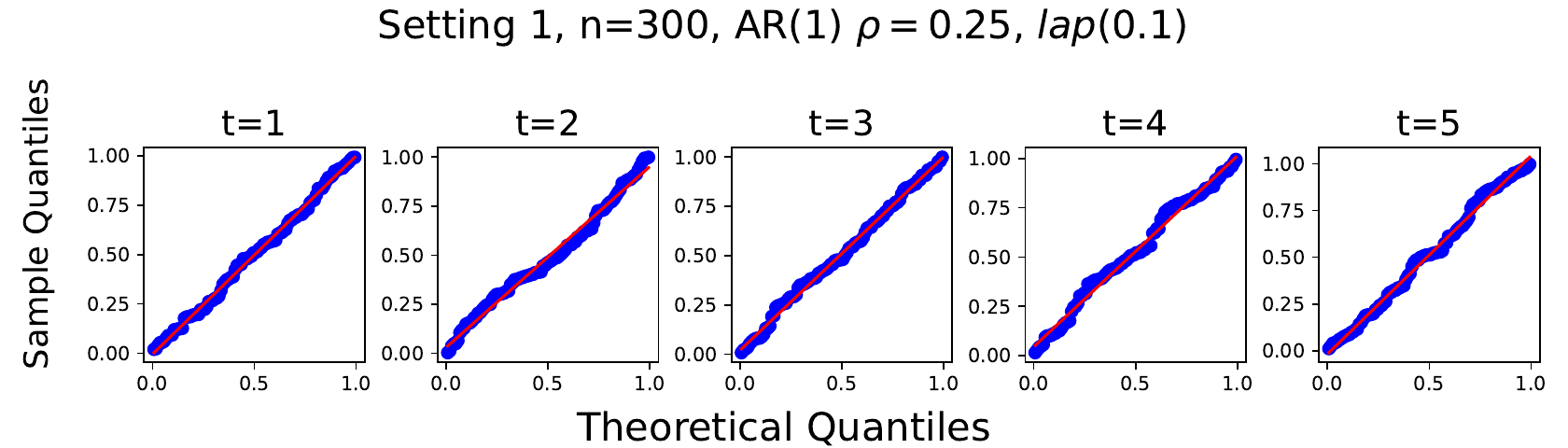}
			\end{subfigure}\vspace{0.8em}
			\begin{subfigure}[b]{0.9\textwidth}
			\includegraphics[width=1\linewidth]{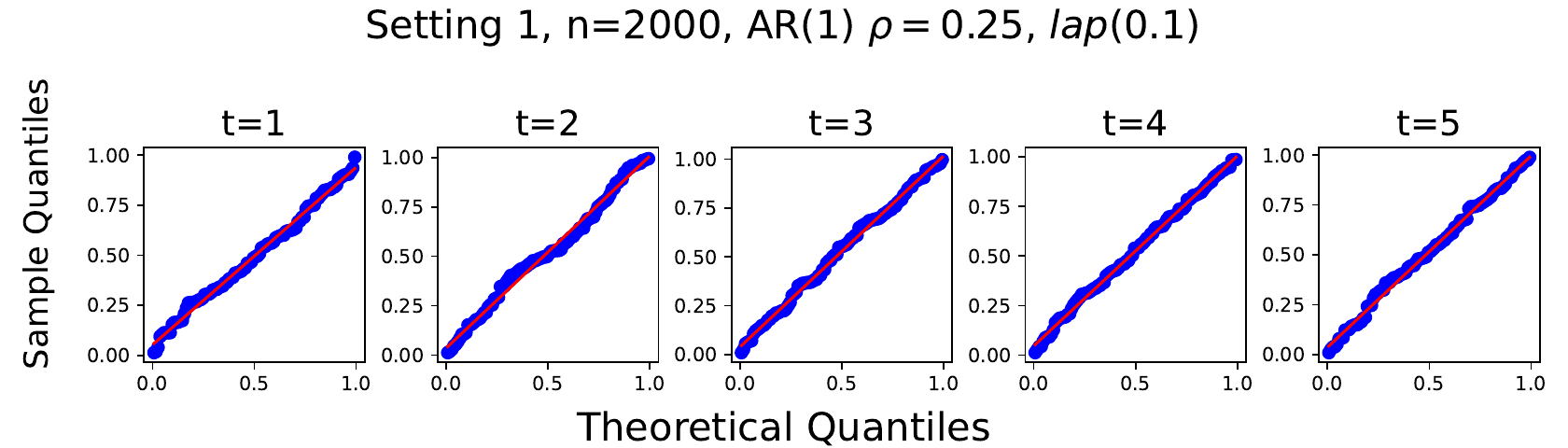}
			\end{subfigure}
				\caption{The Q-Q plot of $W_{n,\id}$ for logistic regression from Setting 1 with AR(1) covariates, $\rho =0.25$,  and the scale of the Laplace noise is 0.1. 
				}
				\label{logisticQQ_setting1_sig0.25_lap0.1}
			\end{figure}

			\begin{figure}[tb]
				\centering
			\begin{subfigure}[b]{0.9\textwidth}
			\includegraphics[width=1\linewidth]{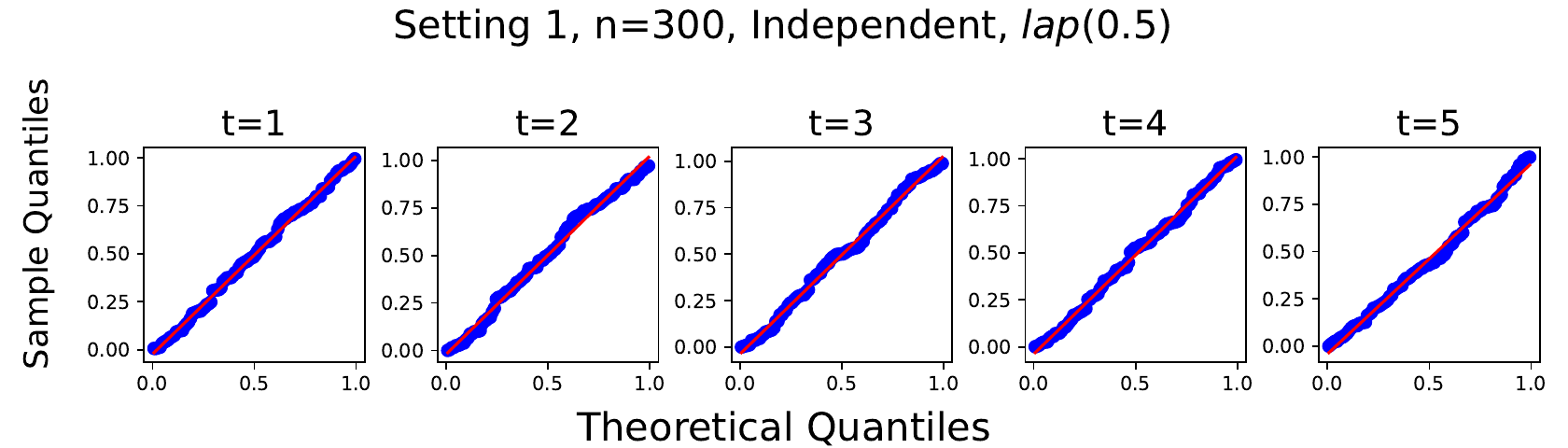}
			\end{subfigure}\vspace{0.8em}
			\begin{subfigure}[b]{0.9\textwidth}
			\includegraphics[width=1\linewidth]{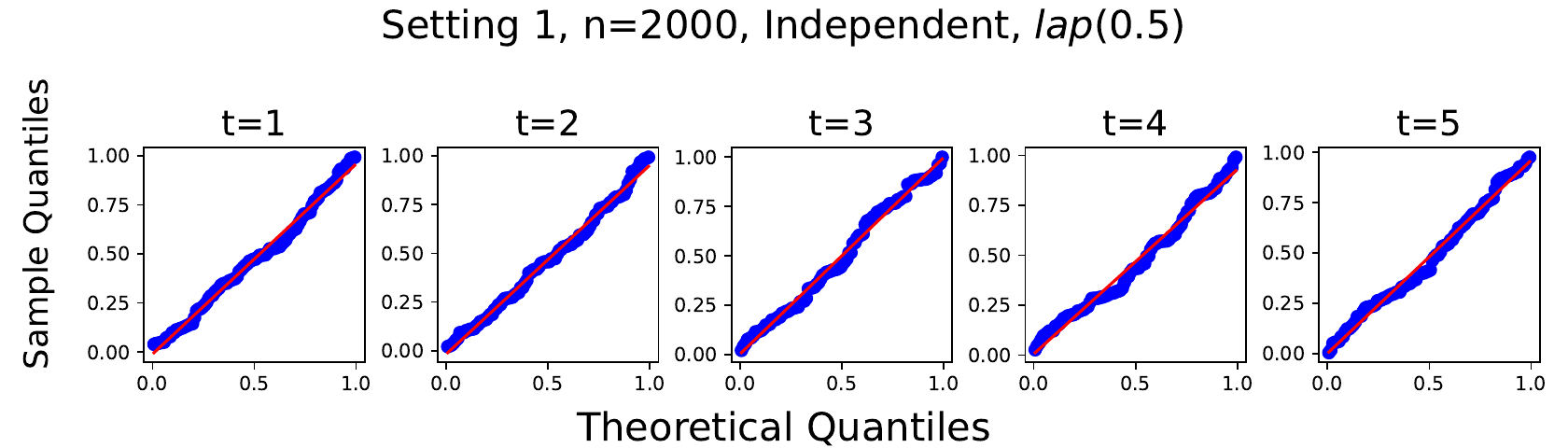}
			\end{subfigure}
				\caption{The Q-Q plot of $W_{n,\id}$ for logistic regression from Setting 1 with Independent covariates,  and the scale of the Laplace noise is 0.5. 
				}
				\label{logisticQQ_setting1_sig0_lap0.5}
			\end{figure}

			\begin{figure}[tb]
				\centering
			\begin{subfigure}[b]{0.9\textwidth}
			\includegraphics[width=1\linewidth]{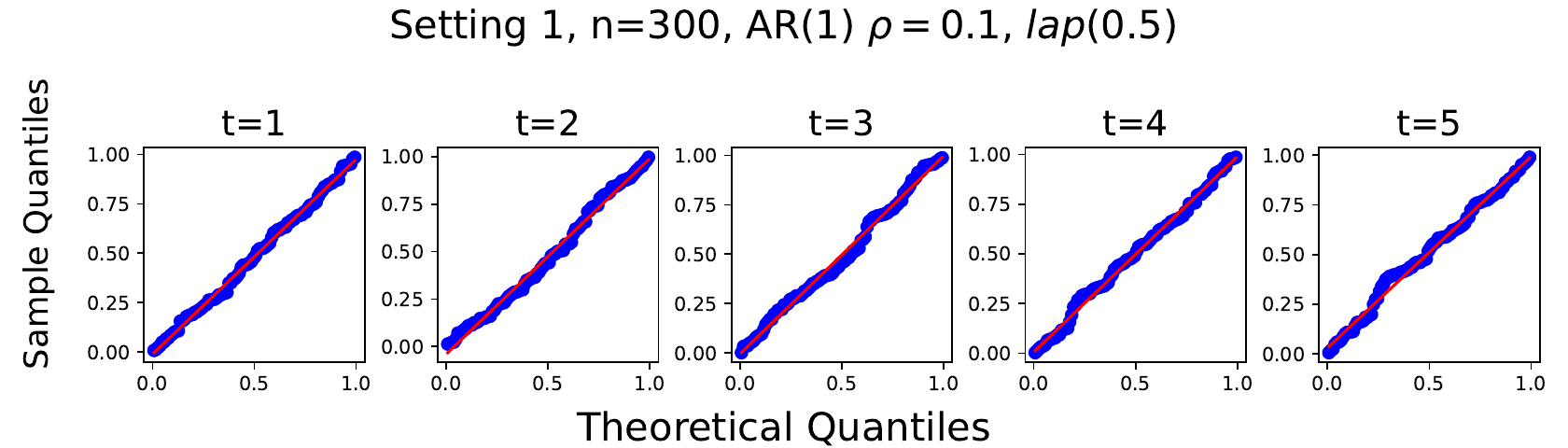}
			\end{subfigure}\vspace{0.8em}
			\begin{subfigure}[b]{0.9\textwidth}
			\includegraphics[width=1\linewidth]{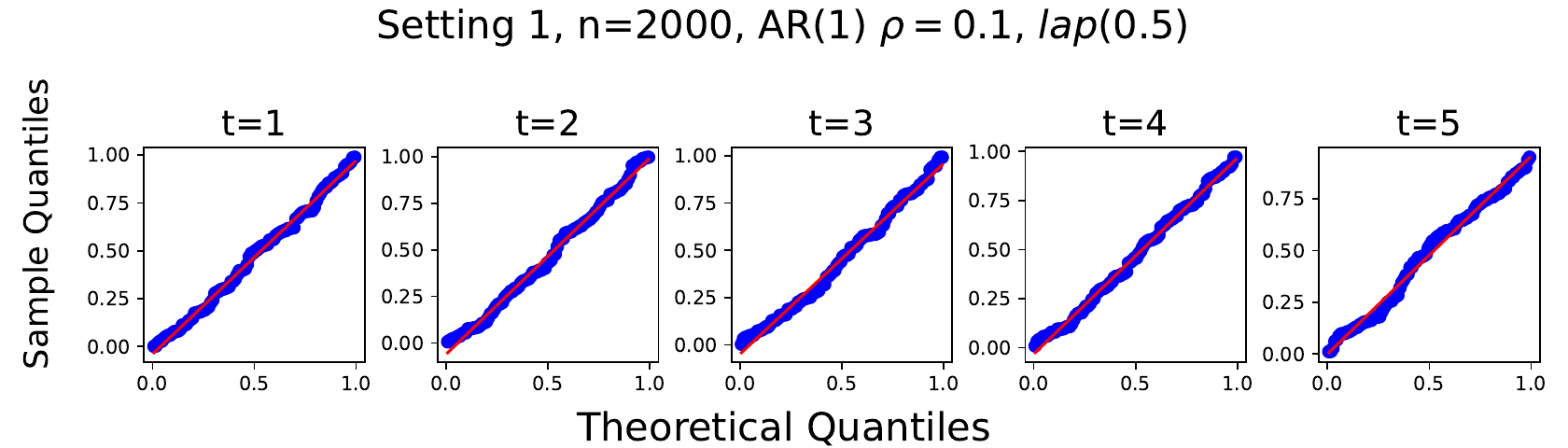}
			\end{subfigure}
				\caption{The Q-Q plot of $W_{n,\id}$ for logistic regression from Setting 1 with AR(1) covariates, $\rho =0.1$,  and the scale of the Laplace noise is 0.5. 
				}
				\label{logisticQQ_setting1_sig0.1_lap0.5}
			\end{figure}

			\begin{figure}[tb]
				\centering
			\begin{subfigure}[b]{0.9\textwidth}
			\includegraphics[width=1\linewidth]{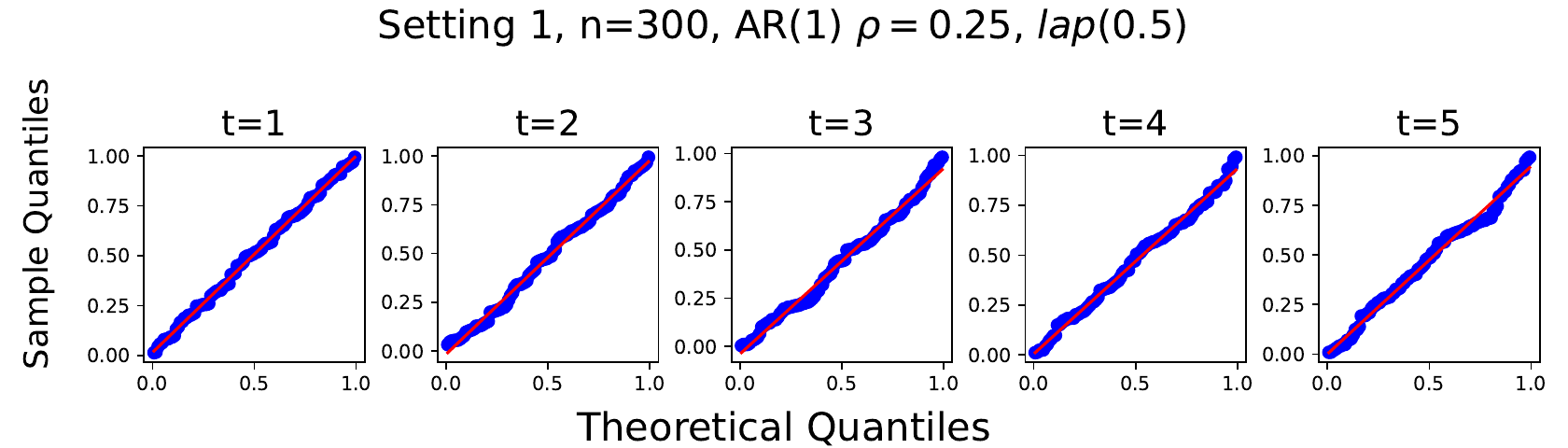}
			\end{subfigure}\vspace{0.8em}
			\begin{subfigure}[b]{0.9\textwidth}
			\includegraphics[width=1\linewidth]{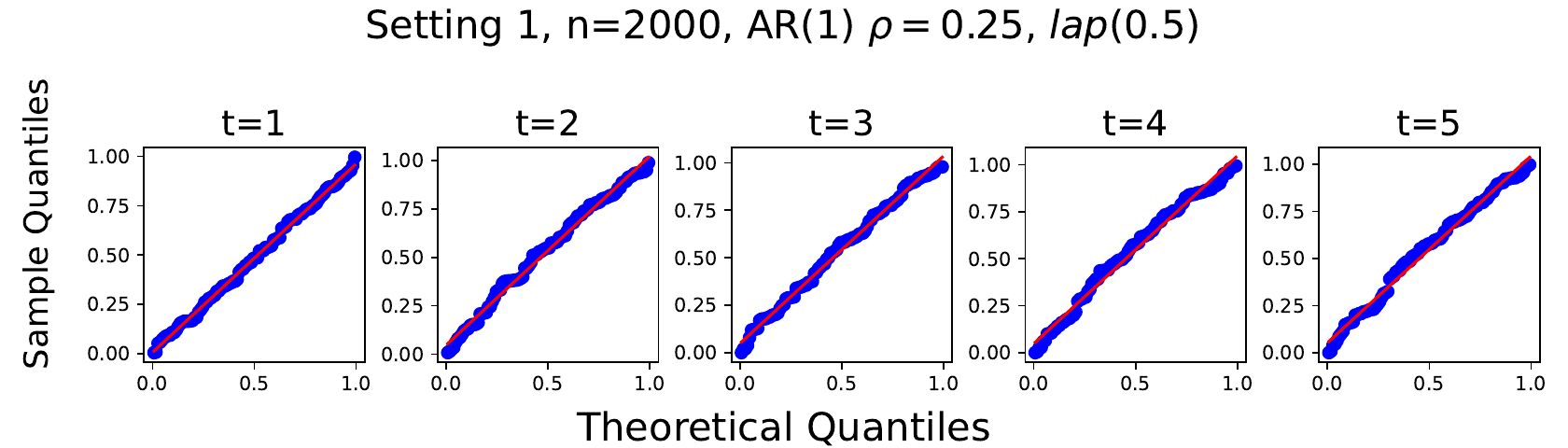}
			\end{subfigure}
				\caption{The Q-Q plot of $W_{n,\id}$ for logistic regression from Setting 1 with AR(1) covariates, $\rho =0.25$,  and the scale of the Laplace noise is 0.5. 
				}
				\label{logisticQQ_setting1_sig0.25_lap0.5}
			\end{figure}

			\begin{figure}[tb]
				\centering
			\begin{subfigure}[b]{0.9\textwidth}
			\includegraphics[width=1\linewidth]{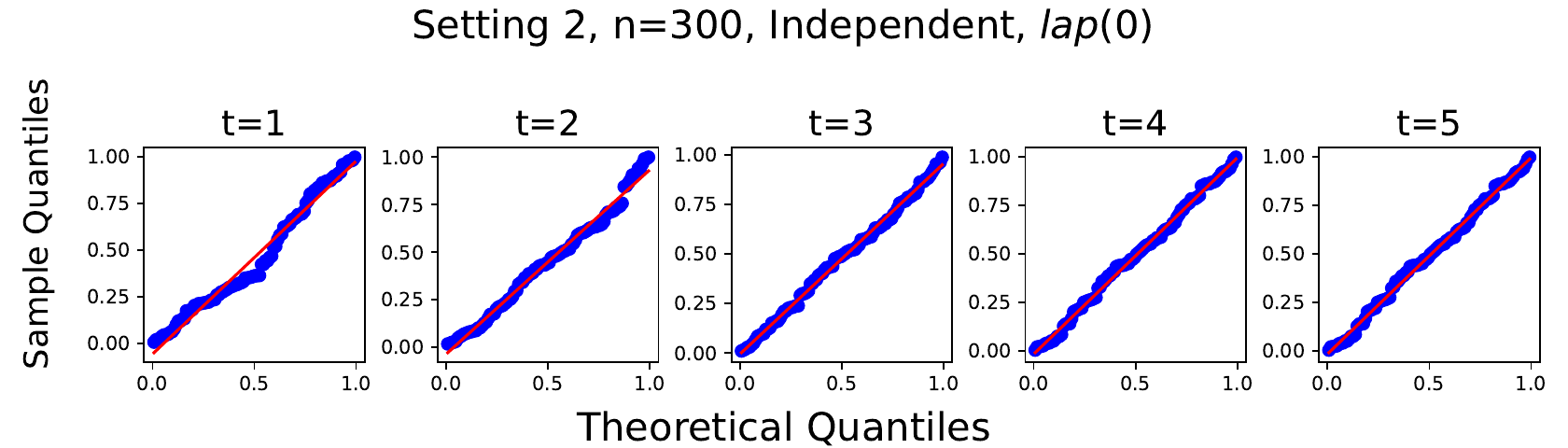}
			\end{subfigure}\vspace{0.8em}
			\begin{subfigure}[b]{0.9\textwidth}
			\includegraphics[width=1\linewidth]{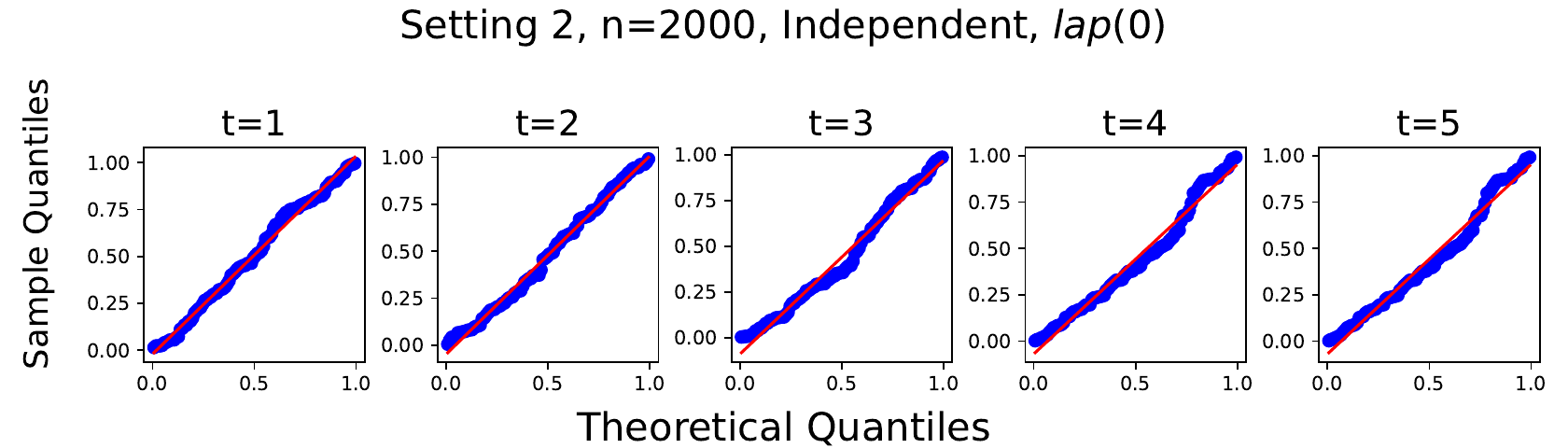}
			\end{subfigure}
				\caption{The Q-Q plot of $W_{n,\id}$ for logistic regression from Setting 2 with Independent covariates,  and without the Laplace noise. 
				}
				\label{logisticQQ_setting2_sig0_lap0}
			\end{figure}

			\begin{figure}[tb]
				\centering
			\begin{subfigure}[b]{0.9\textwidth}
			\includegraphics[width=1\linewidth]{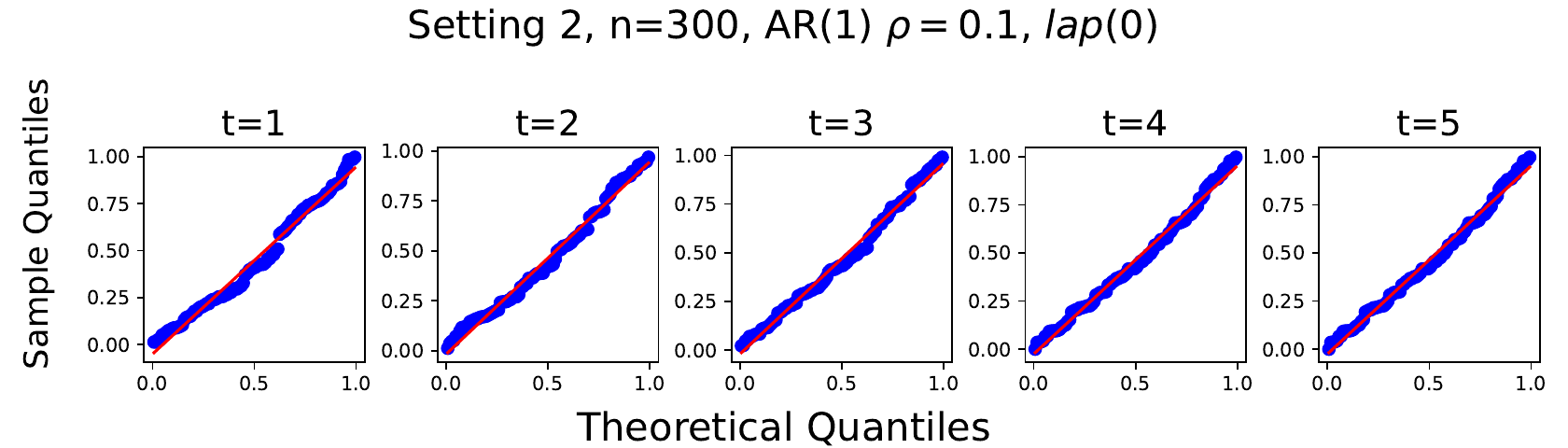}
			\end{subfigure}\vspace{0.8em}
			\begin{subfigure}[b]{0.9\textwidth}
			\includegraphics[width=1\linewidth]{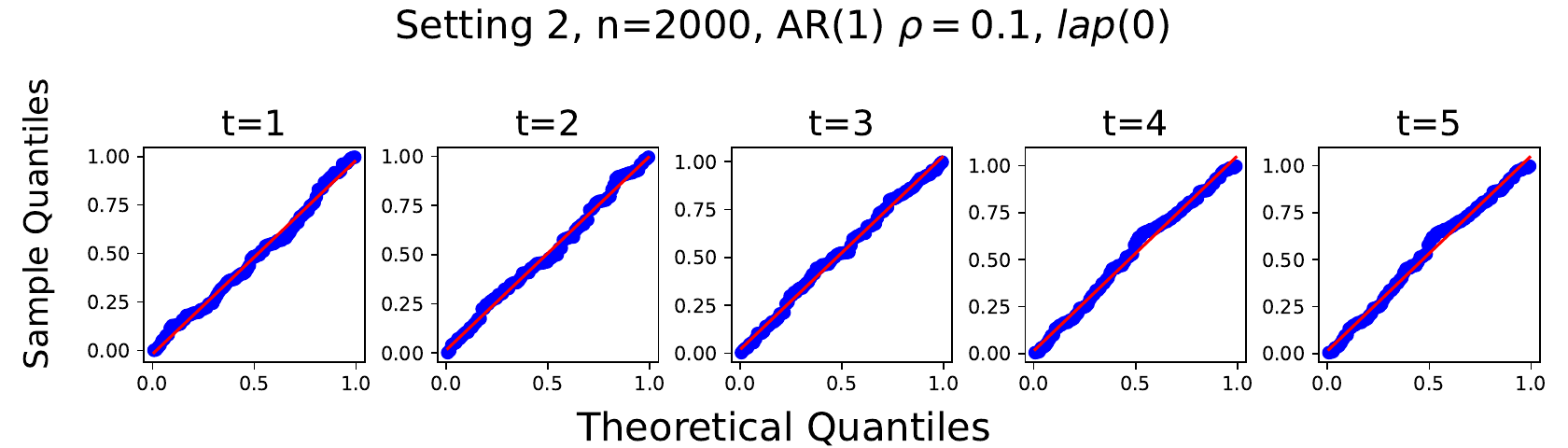}
			\end{subfigure}
				\caption{The Q-Q plot of $W_{n,\id}$ for logistic regression from Setting 2 with AR(1) covariates, $\rho =0.1$,  and without the Laplace noise. 
				}
				\label{logisticQQ_setting2_sig0.1_lap0}
			\end{figure}

			\begin{figure}[tb]
				\centering
			\begin{subfigure}[b]{0.9\textwidth}
			\includegraphics[width=1\linewidth]{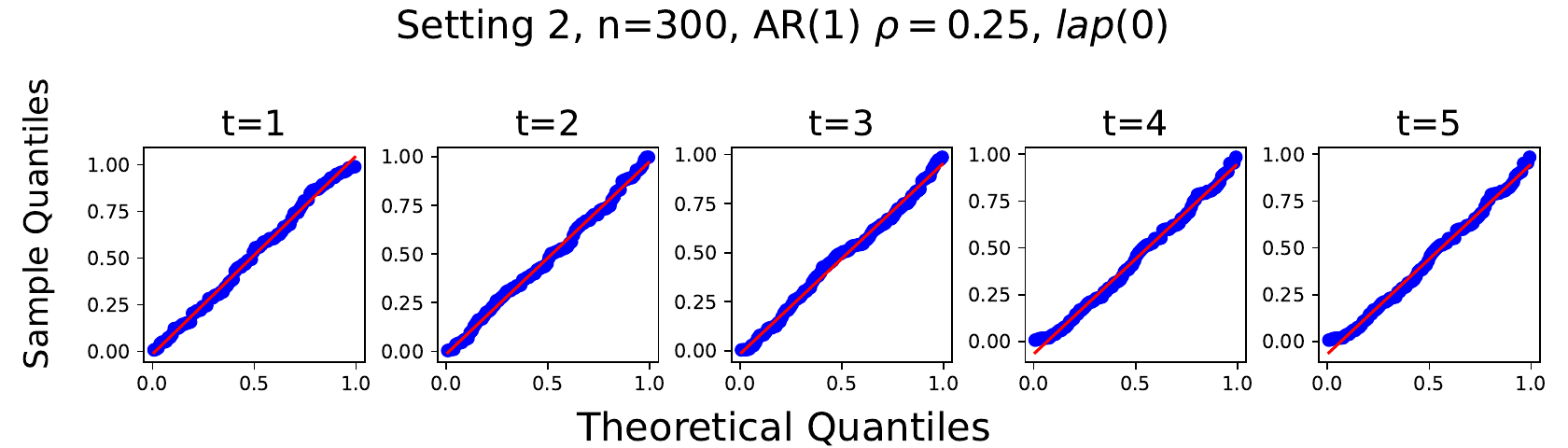}
			\end{subfigure}\vspace{0.8em}
			\begin{subfigure}[b]{0.9\textwidth}
			\includegraphics[width=1\linewidth]{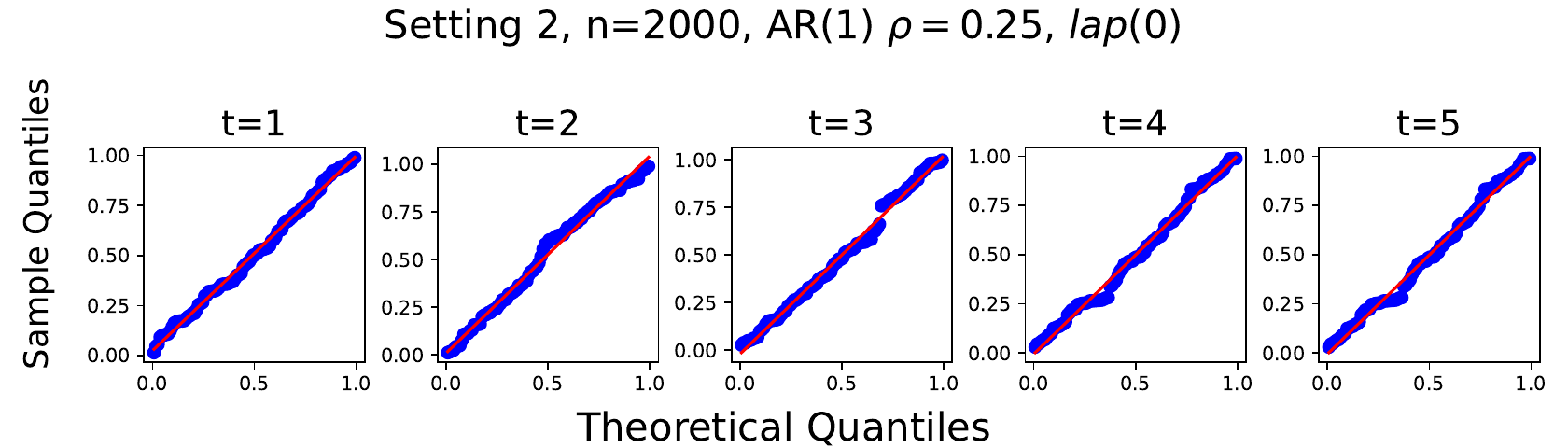}
			\end{subfigure}
				\caption{The Q-Q plot of $W_{n,\id}$ for logistic regression from Setting 2 with AR(1) covariates, $\rho =0.25$,  and without the Laplace noise. 
				}
				\label{logisticQQ_setting2_sig0.25_lap0}
			\end{figure}

			\begin{figure}[tb]
				\centering
			\begin{subfigure}[b]{0.9\textwidth}
			\includegraphics[width=1\linewidth]{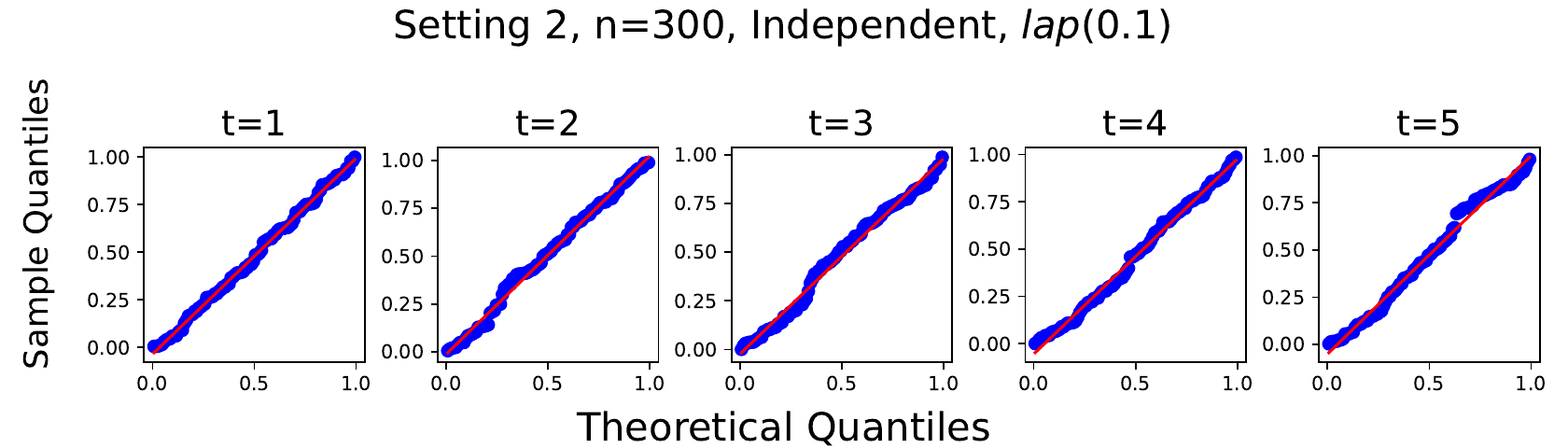}
			\end{subfigure}\vspace{0.8em}
			\begin{subfigure}[b]{0.9\textwidth}
			\includegraphics[width=1\linewidth]{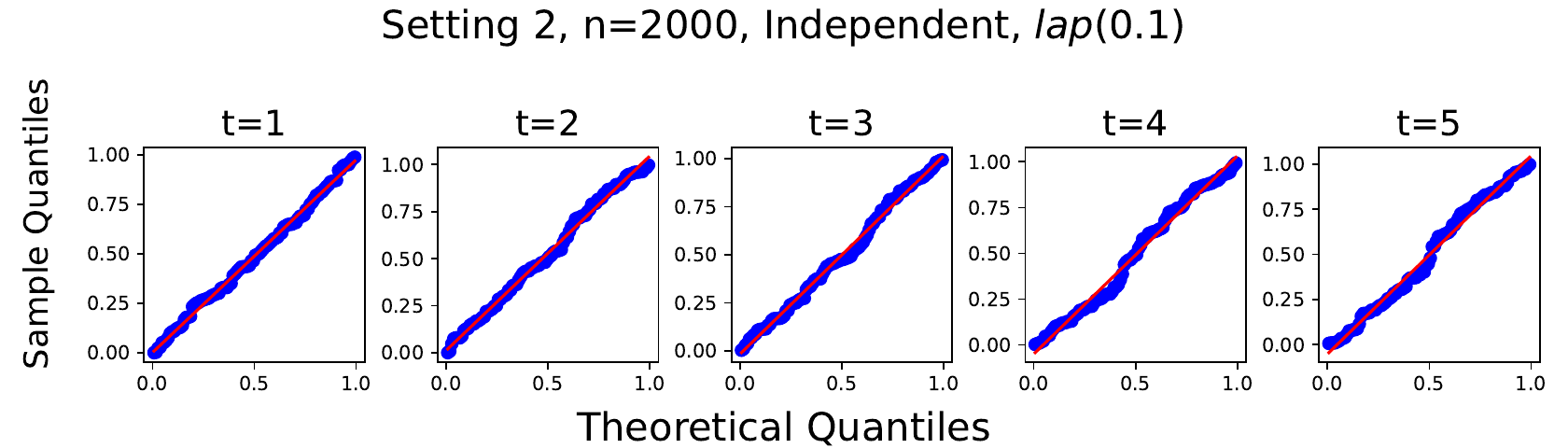}
			\end{subfigure}
				\caption{The Q-Q plot of $W_{n,\id}$ for logistic regression from Setting 2 with Independent covariates,  and the scale of the Laplace noise is 0.1. 
				}
				\label{logisticQQ_setting2_sig0_lap0.1}
			\end{figure}

			\begin{figure}[tb]
				\centering
			\begin{subfigure}[b]{0.9\textwidth}
			\includegraphics[width=1\linewidth]{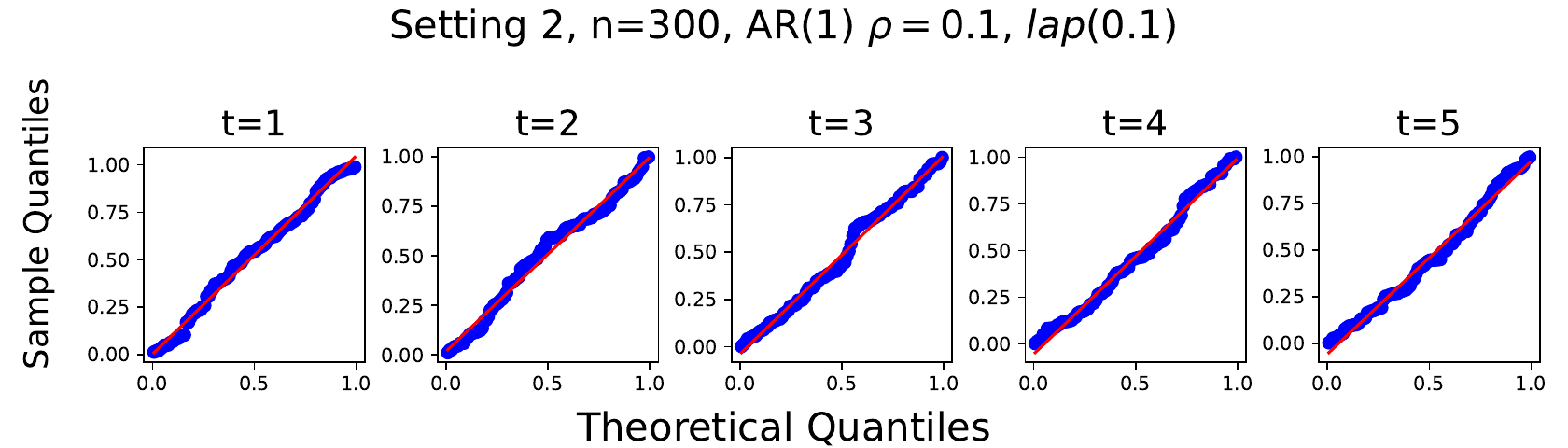}
			\end{subfigure}\vspace{0.8em}
			\begin{subfigure}[b]{0.9\textwidth}
			\includegraphics[width=1\linewidth]{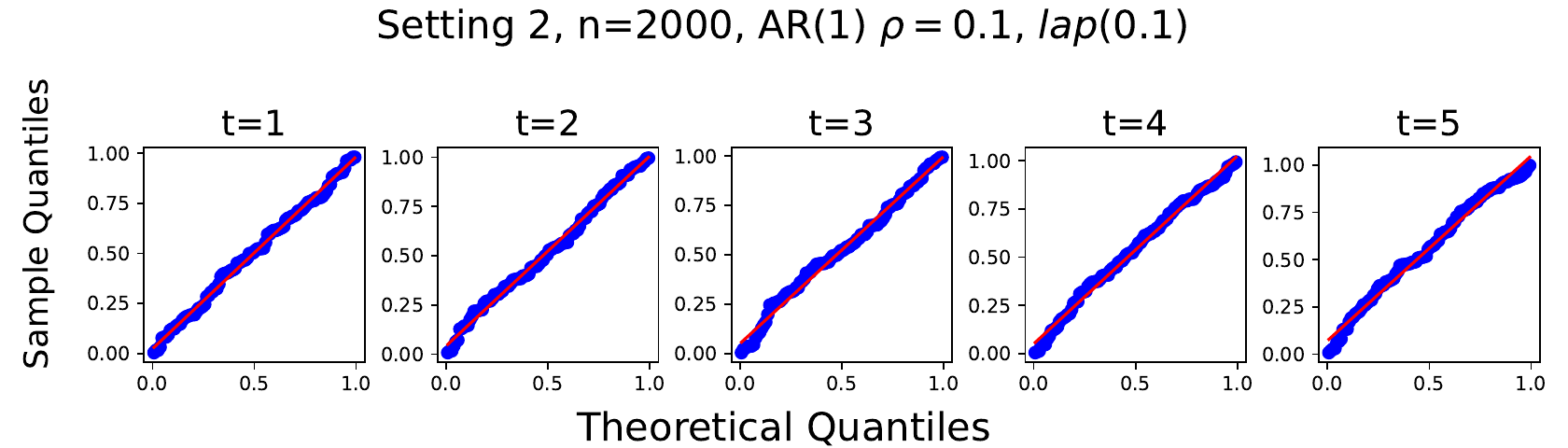}
			\end{subfigure}
				\caption{The Q-Q plot of $W_{n,\id}$ for logistic regression from Setting 2 with AR(1) covariates, $\rho =0.1$,  and the scale of the Laplace noise is 0.1. 
				}
				\label{logisticQQ_setting2_sig0.1_lap0.1}
			\end{figure}

			\begin{figure}[tb]
				\centering
			\begin{subfigure}[b]{0.9\textwidth}
			\includegraphics[width=1\linewidth]{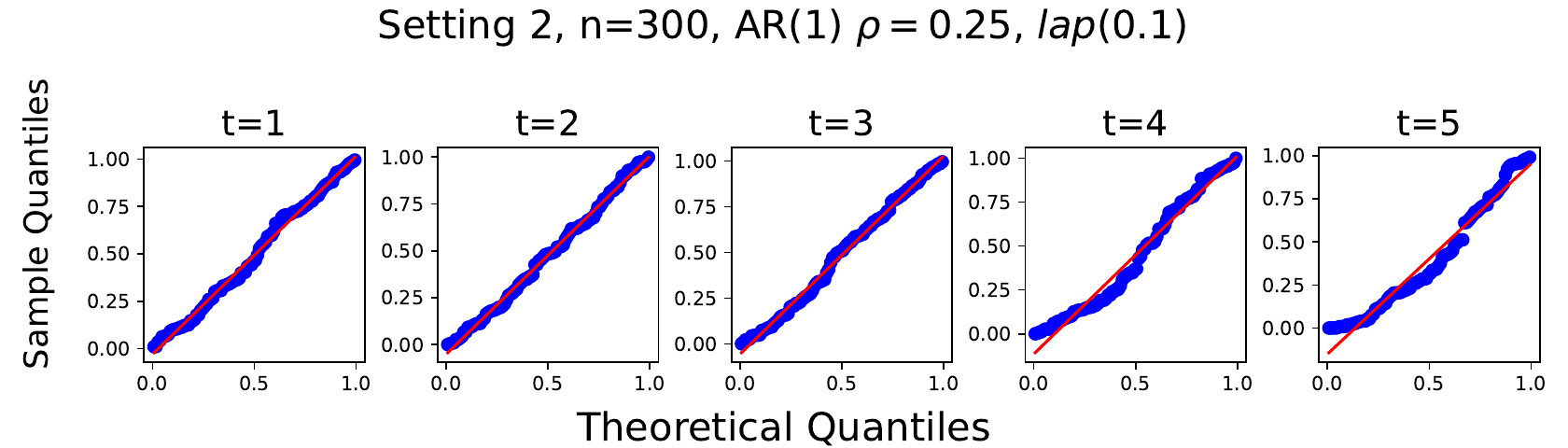}
			\end{subfigure}\vspace{0.8em}
			\begin{subfigure}[b]{0.9\textwidth}
			\includegraphics[width=1\linewidth]{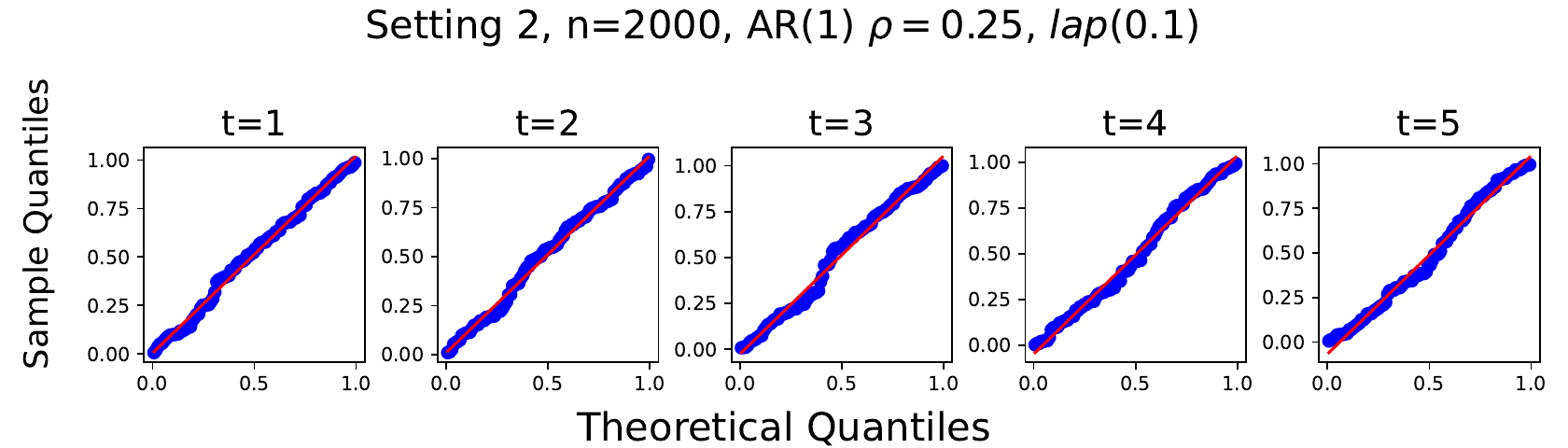}
			\end{subfigure}
				\caption{The Q-Q plot of $W_{n,\id}$ for logistic regression from Setting 2 with AR(1) covariates, $\rho =0.25$,  and the scale of the Laplace noise is 0.1. 
				}
				\label{logisticQQ_setting2_sig0.25_lap0.1}
			\end{figure}

			\begin{figure}[tb]
				\centering
			\begin{subfigure}[b]{0.9\textwidth}
			\includegraphics[width=1\linewidth]{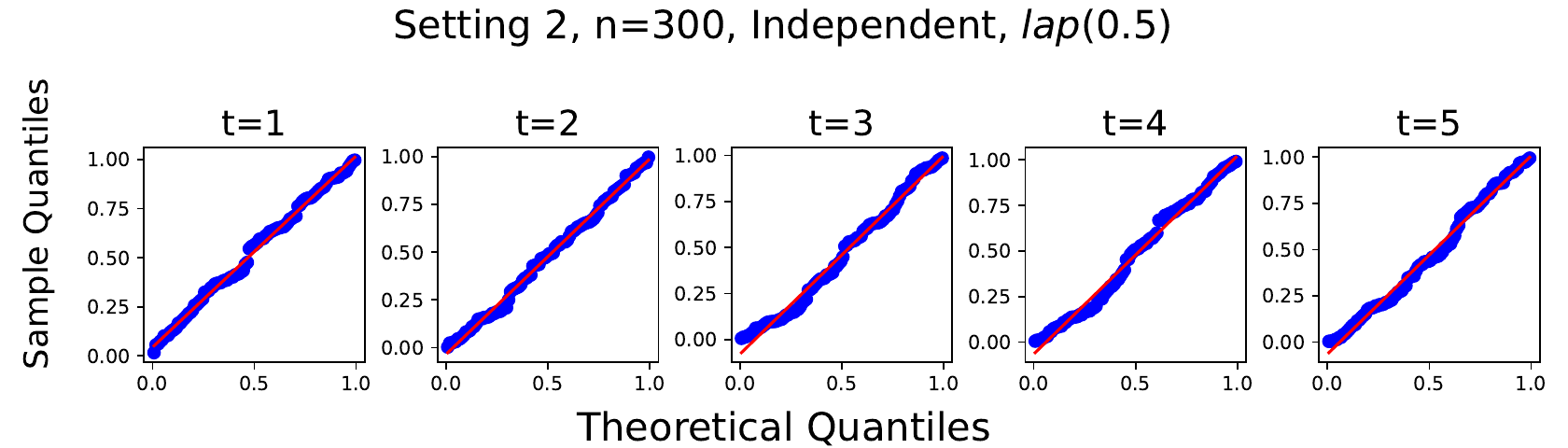}
			\end{subfigure}\vspace{0.8em}
			\begin{subfigure}[b]{0.9\textwidth}
			\includegraphics[width=1\linewidth]{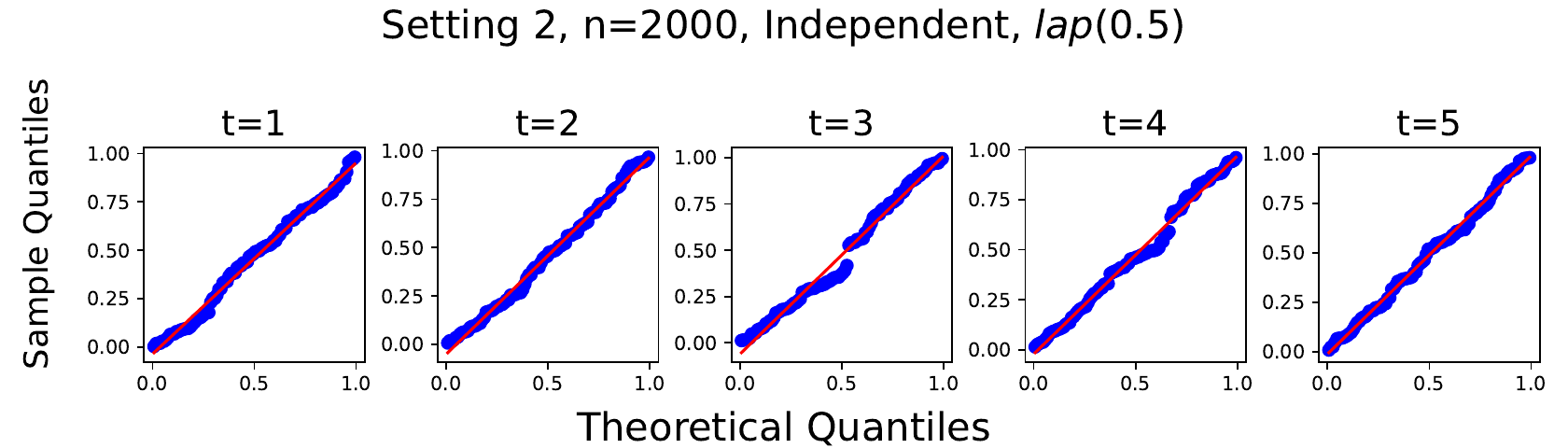}
			\end{subfigure}
				\caption{The Q-Q plot of $W_{n,\id}$ for logistic regression from Setting 2 with Independent covariates,  and the scale of the Laplace noise is 0.5. 
				}
				\label{logisticQQ_setting2_sig0_lap0.5}
			\end{figure}

			\begin{figure}[tb]
				\centering
			\begin{subfigure}[b]{0.9\textwidth}
			\includegraphics[width=1\linewidth]{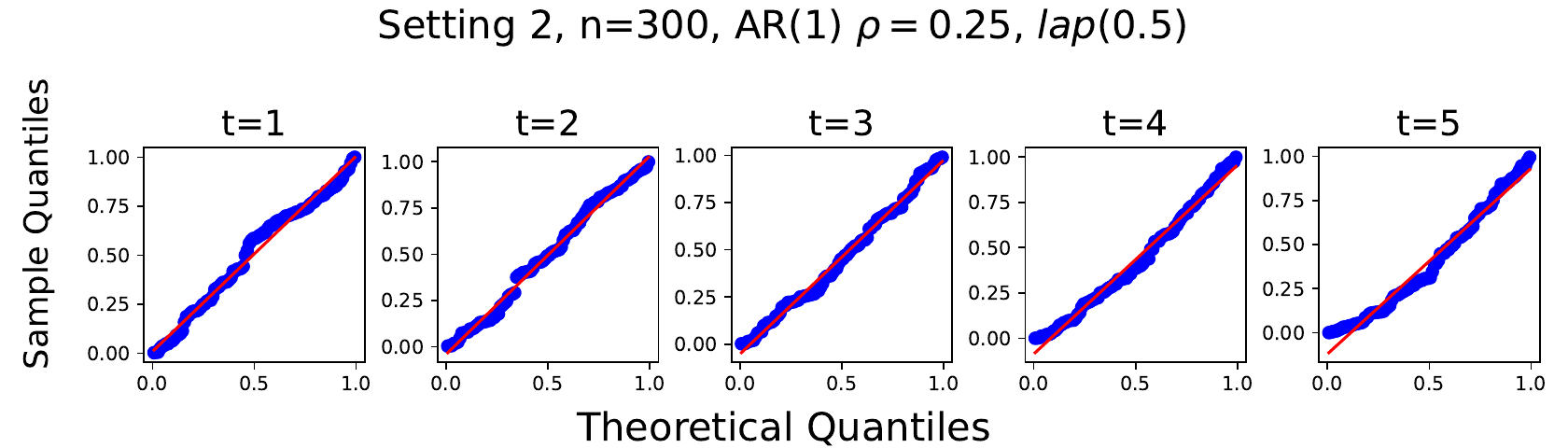}
			\end{subfigure}\vspace{0.8em}
			\begin{subfigure}[b]{0.9\textwidth}
			\includegraphics[width=1\linewidth]{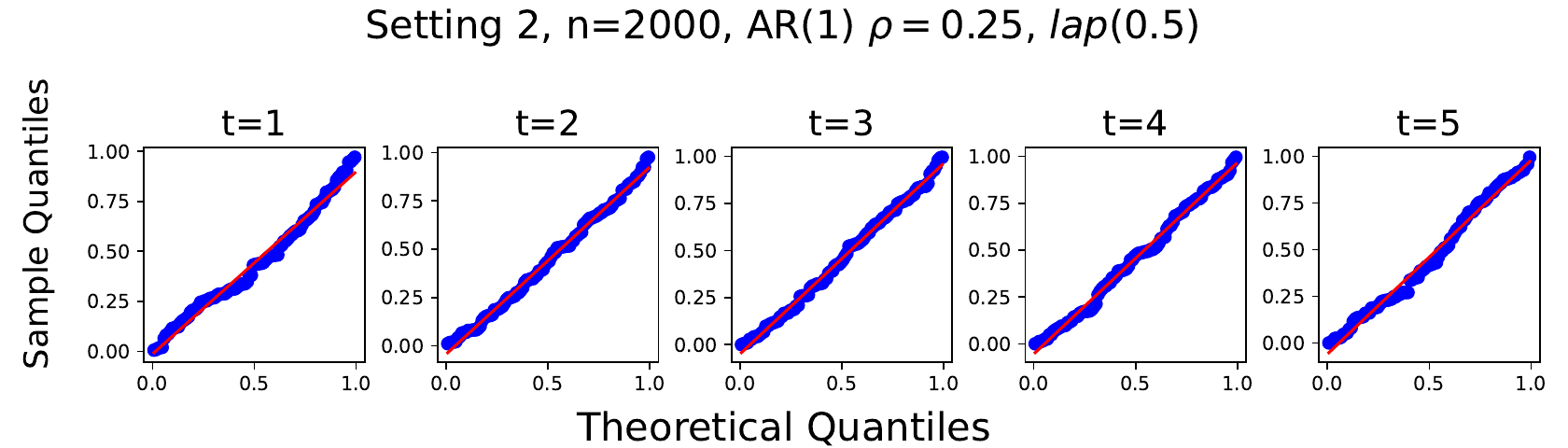}
			\end{subfigure}
				\caption{The Q-Q plot of $W_{n,\id}$ for logistic regression from Setting 2 with AR(1) covariates, $\rho =0.25$,  and the scale of the Laplace noise is 0.5. 
				}
				\label{logisticQQ_setting2_sig0.25_lap0.5}
			\end{figure}

			\begin{figure}[tb]
				\centering
			\begin{subfigure}[b]{0.9\textwidth}
			\includegraphics[width=1\linewidth]{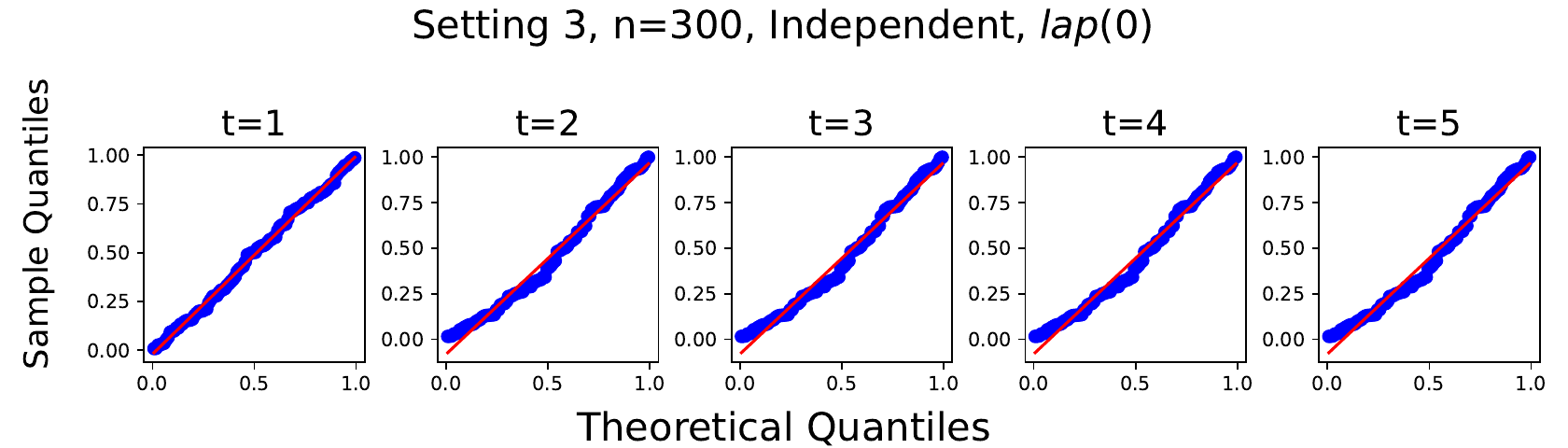}
			\end{subfigure}\vspace{0.8em}
			\begin{subfigure}[b]{0.9\textwidth}
			\includegraphics[width=1\linewidth]{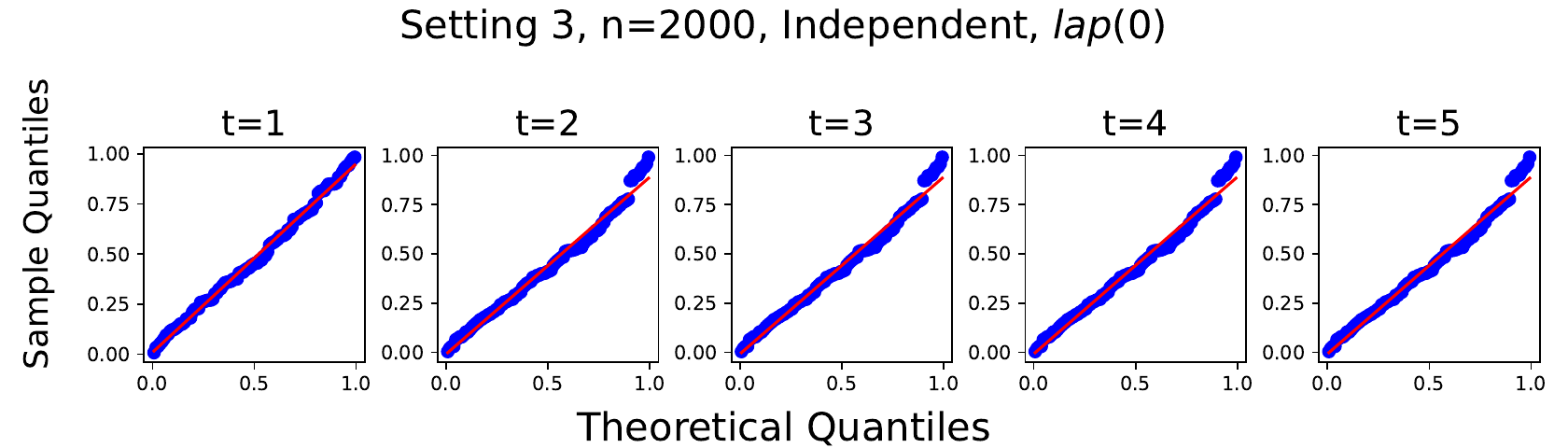}
			\end{subfigure}
				\caption{The Q-Q plot of $W_{n,\id}$ for logistic regression from Setting 3 with Independent covariates,  and without the Laplace noise. 
				}
				\label{logisticQQ_setting3_sig0_lap0}
			\end{figure}

			\begin{figure}[tb]
				\centering
			\begin{subfigure}[b]{0.9\textwidth}
			\includegraphics[width=1\linewidth]{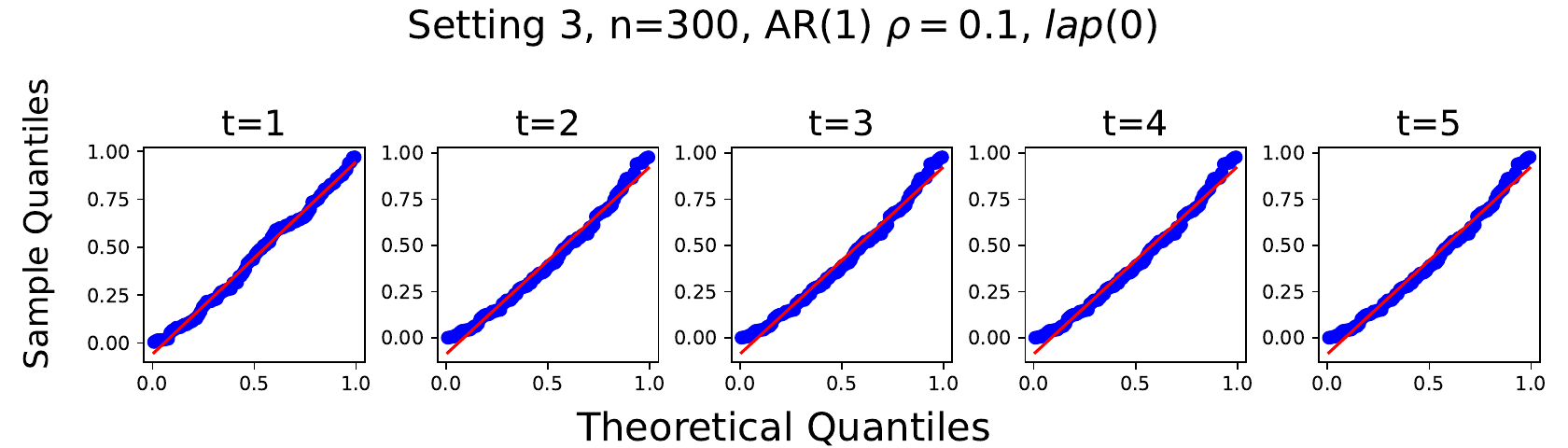}
			\end{subfigure}\vspace{0.8em}
			\begin{subfigure}[b]{0.9\textwidth}
			\includegraphics[width=1\linewidth]{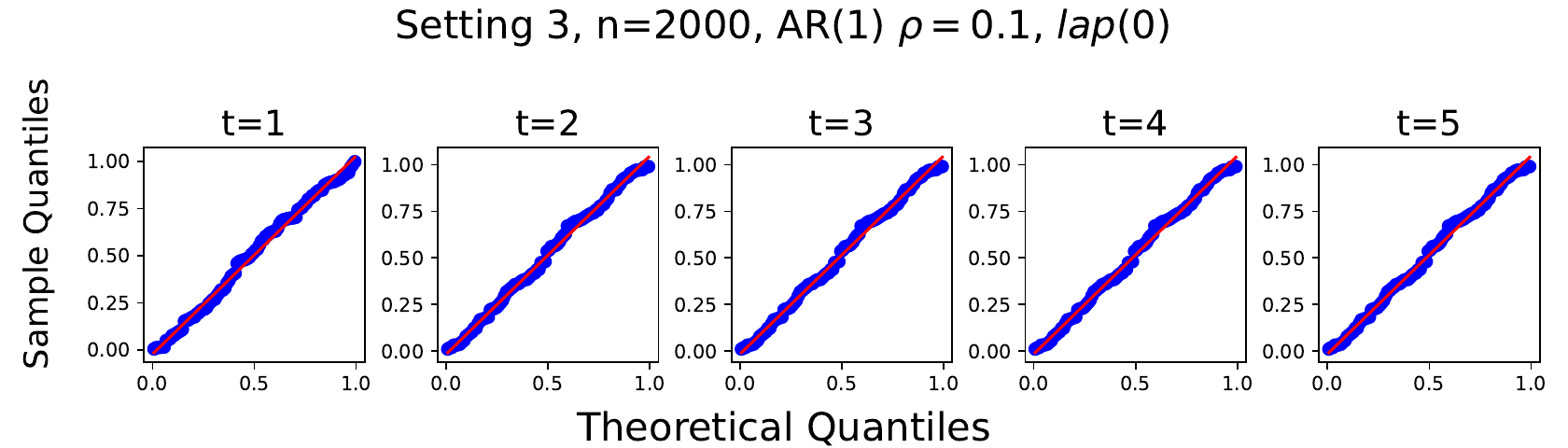}
			\end{subfigure}
				\caption{The Q-Q plot of $W_{n,\id}$ for logistic regression from Setting 3 with AR(1) covariates, $\rho =0.1$,  and without the Laplace noise. 
				}
				\label{logisticQQ_setting3_sig0.1_lap0}
			\end{figure}

			\begin{figure}[tb]
				\centering
			\begin{subfigure}[b]{0.9\textwidth}
			\includegraphics[width=1\linewidth]{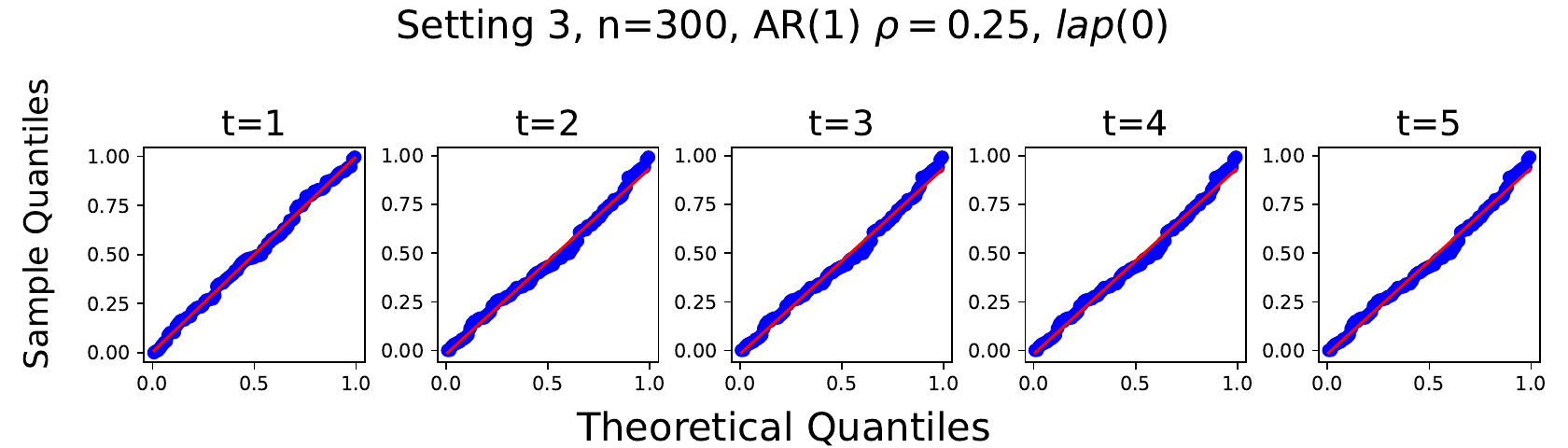}
			\end{subfigure}\vspace{0.8em}
			\begin{subfigure}[b]{0.9\textwidth}
			\includegraphics[width=1\linewidth]{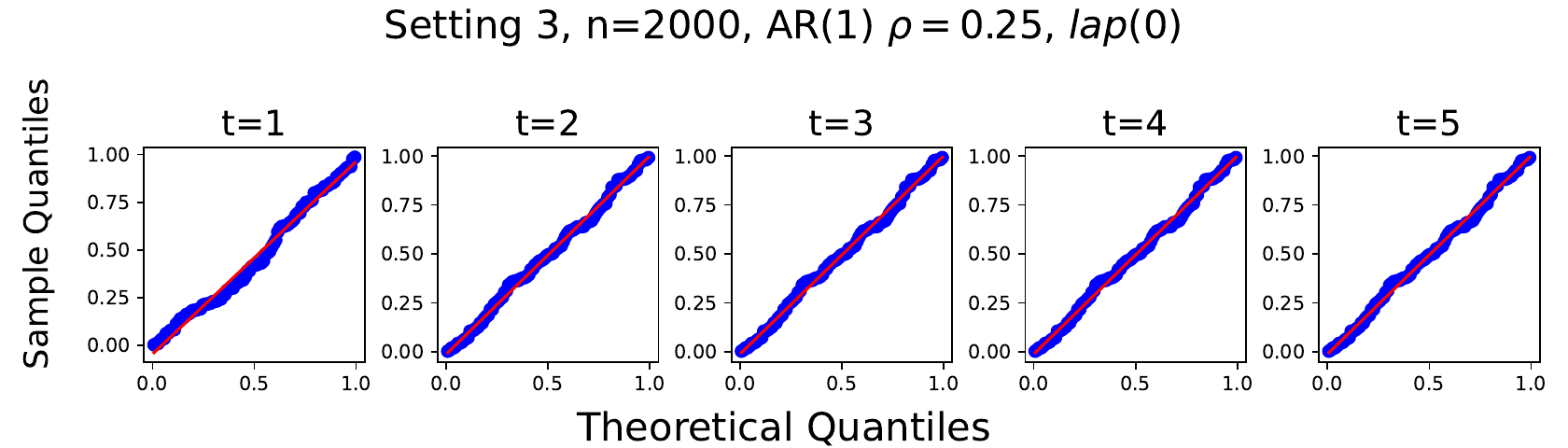}
			\end{subfigure}
				\caption{The Q-Q plot of $W_{n,\id}$ for logistic regression from Setting 3 with AR(1) covariates, $\rho =0.25$,  and without the Laplace noise. 
				}
				\label{logisticQQ_setting3_sig0.25_lap0}
			\end{figure}

			\begin{figure}[tb]
				\centering
			\begin{subfigure}[b]{0.9\textwidth}
			\includegraphics[width=1\linewidth]{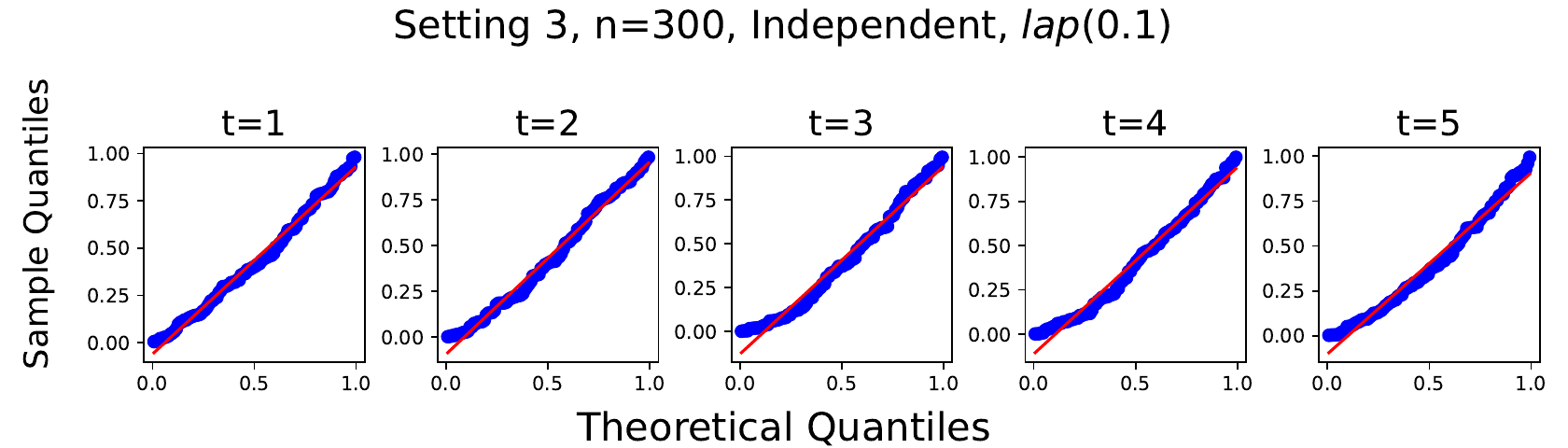}
			\end{subfigure}\vspace{0.8em}
			\begin{subfigure}[b]{0.9\textwidth}
			\includegraphics[width=1\linewidth]{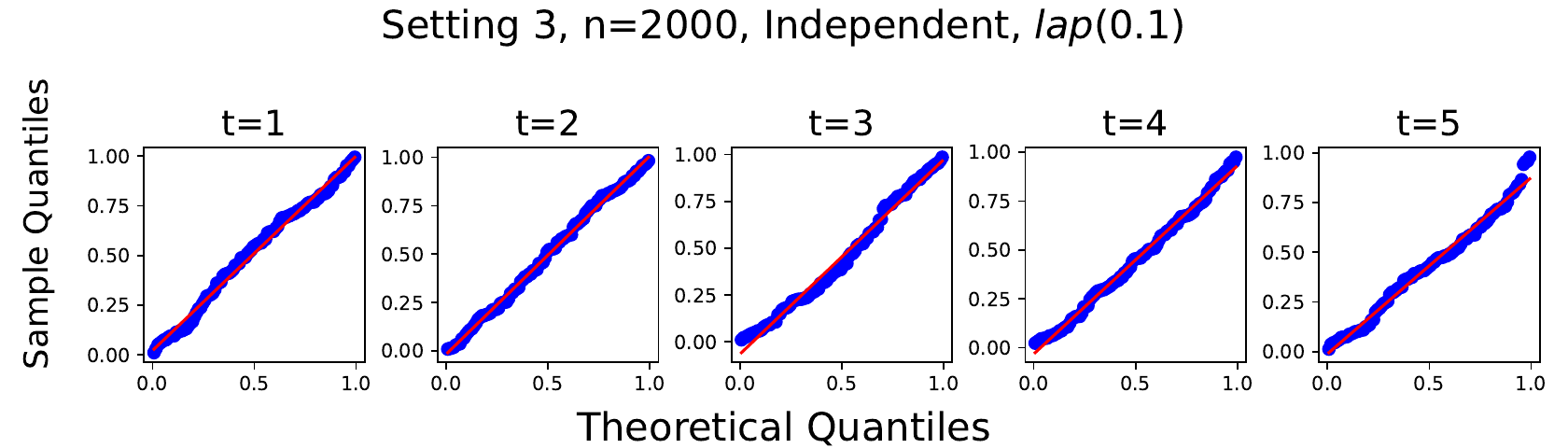}
			\end{subfigure}
				\caption{The Q-Q plot of $W_{n,\id}$ for logistic regression from Setting 3 with Independent covariates,  and the scale of the Laplace noise is 0.1. 
				}
				\label{logisticQQ_setting3_sig0_lap0.1}
			\end{figure}

			\begin{figure}[tb]
				\centering
			\begin{subfigure}[b]{0.9\textwidth}
			\includegraphics[width=1\linewidth]{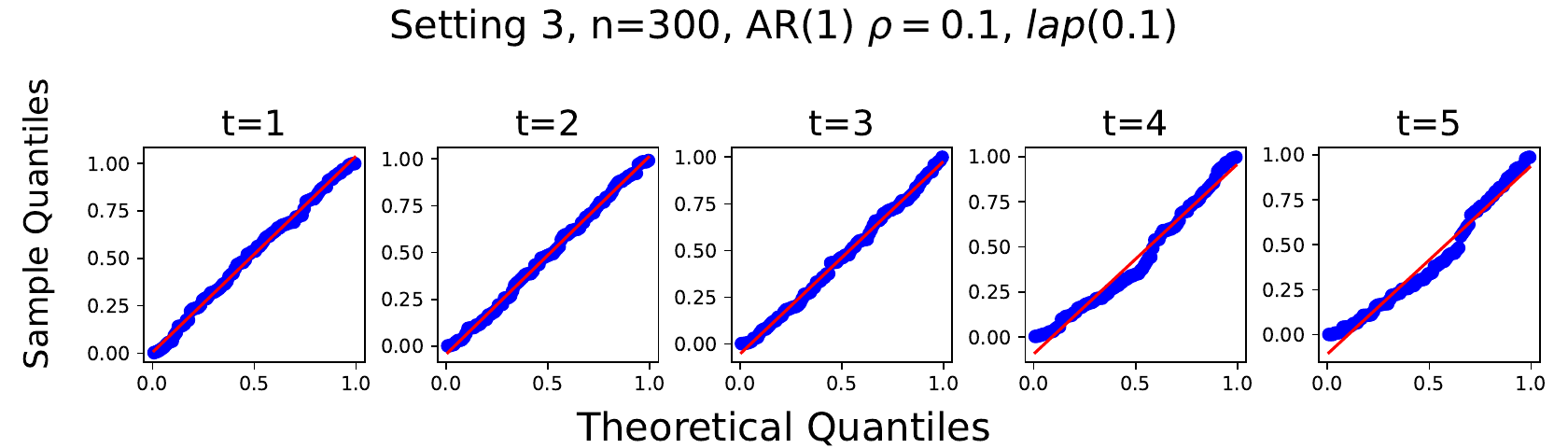}
			\end{subfigure}\vspace{0.8em}
			\begin{subfigure}[b]{0.9\textwidth}
			\includegraphics[width=1\linewidth]{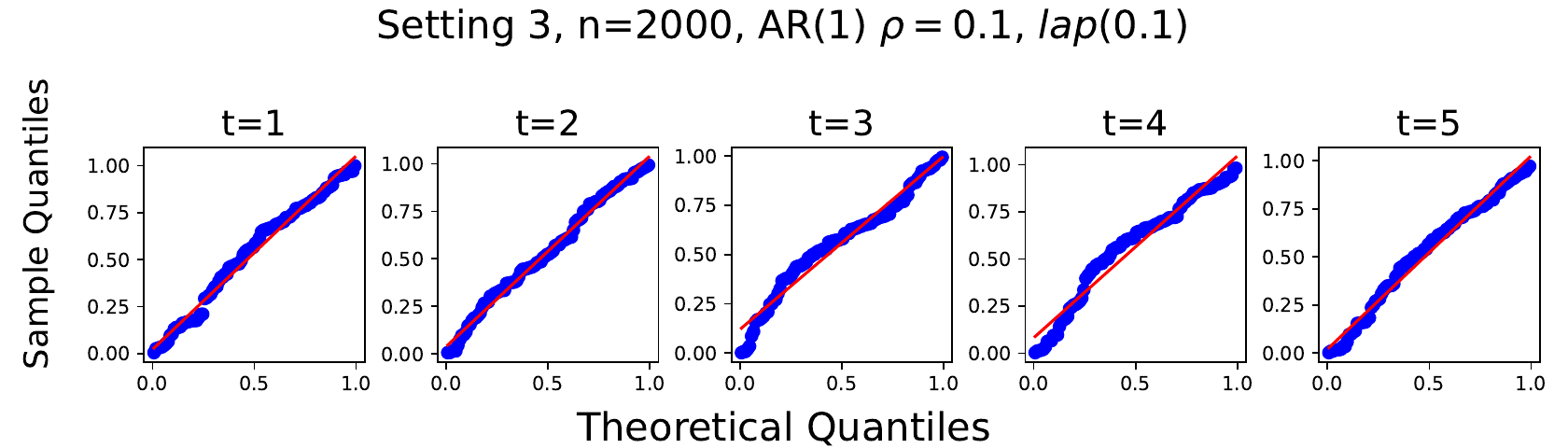}
			\end{subfigure}
				\caption{The Q-Q plot of $W_{n,\id}$ for logistic regression from Setting 3 with AR(1) covariates, $\rho =0.1$,  and the scale of the Laplace noise is 0.1. 
				}
				\label{logisticQQ_setting3_sig0.1_lap0.1}
			\end{figure}

			\begin{figure}[tb]
				\centering
			\begin{subfigure}[b]{0.9\textwidth}
			\includegraphics[width=1\linewidth]{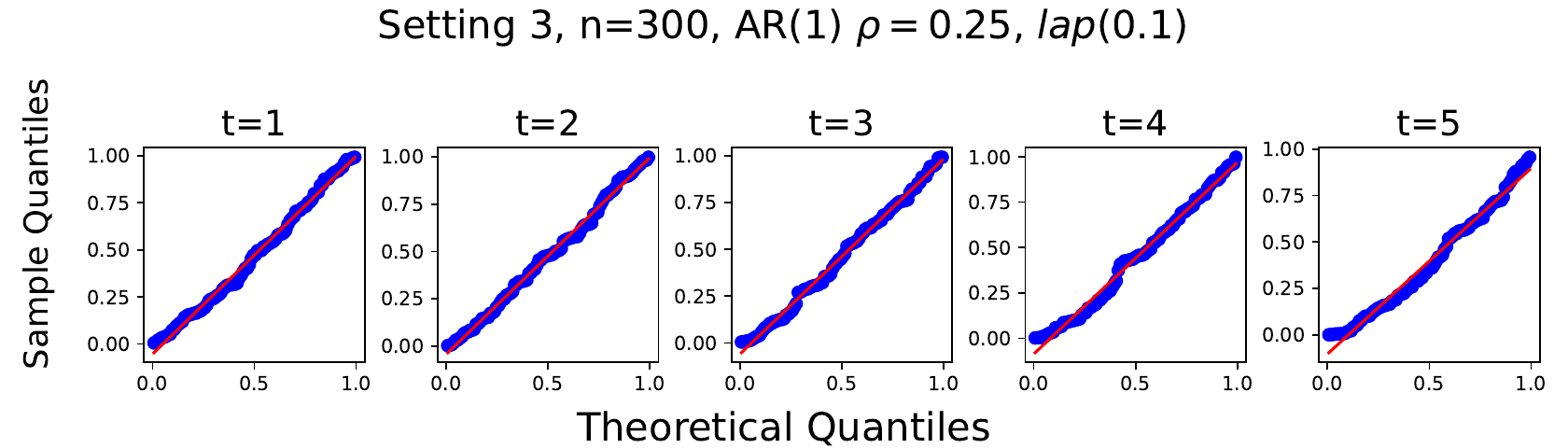}
			\end{subfigure}\vspace{0.8em}
			\begin{subfigure}[b]{0.9\textwidth}
			\includegraphics[width=1\linewidth]{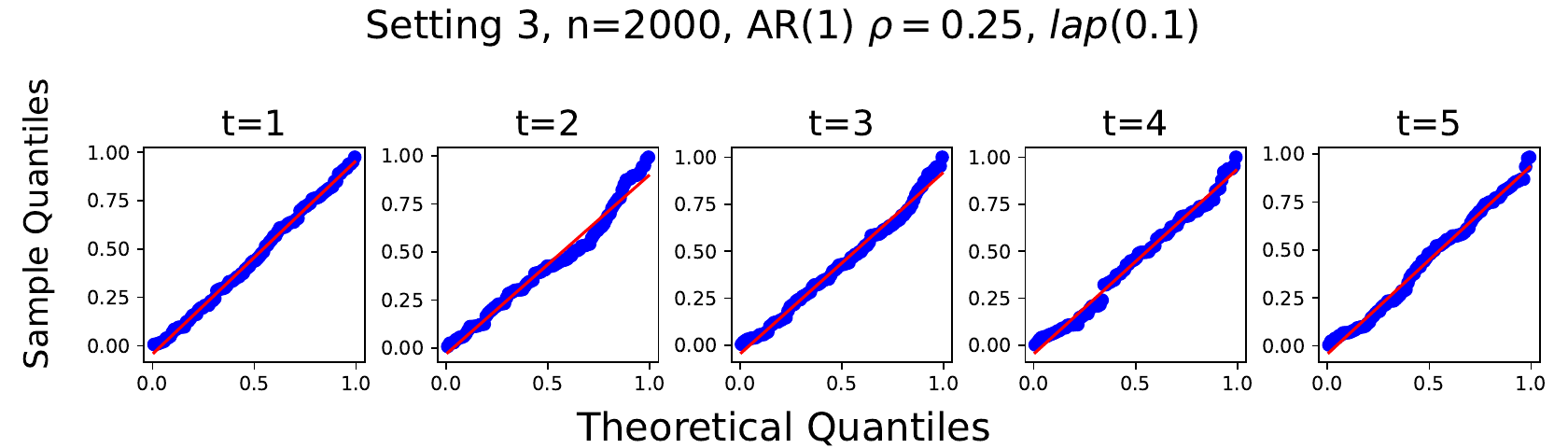}
			\end{subfigure}
				\caption{The Q-Q plot of $W_{n,\id}$ for logistic regression from Setting 3 with AR(1) covariates, $\rho =0.25$,  and the scale of the Laplace noise is 0.1. 
				}
				\label{logisticQQ_setting3_sig0.25_lap0.1}
			\end{figure}

			\begin{figure}[tb]
				\centering
			\begin{subfigure}[b]{0.9\textwidth}
			\includegraphics[width=1\linewidth]{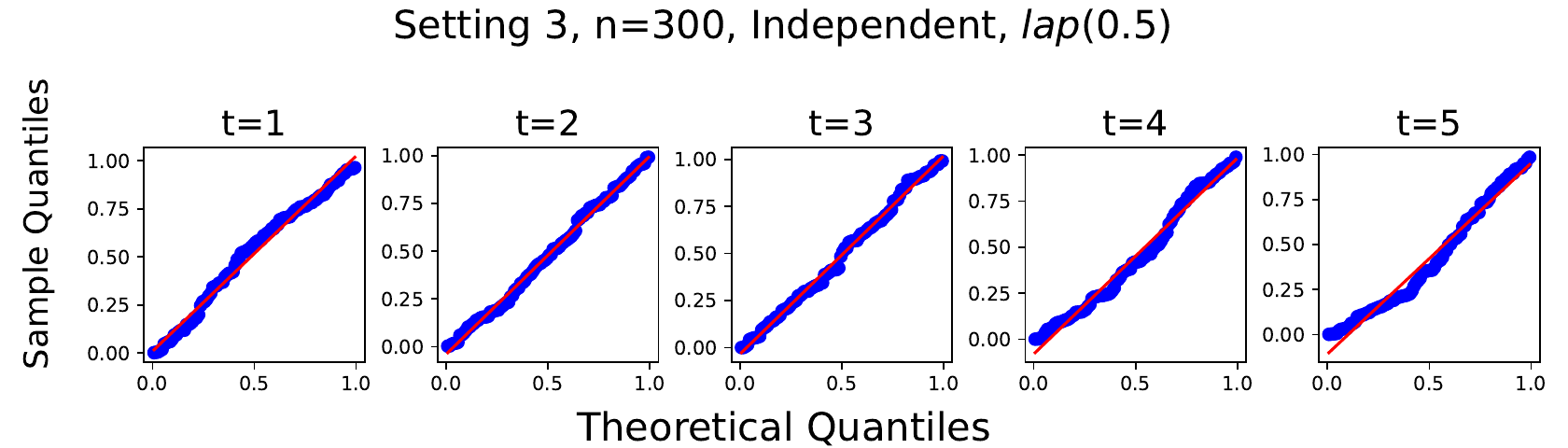}
			\end{subfigure}\vspace{0.8em}
			\begin{subfigure}[b]{0.9\textwidth}
			\includegraphics[width=1\linewidth]{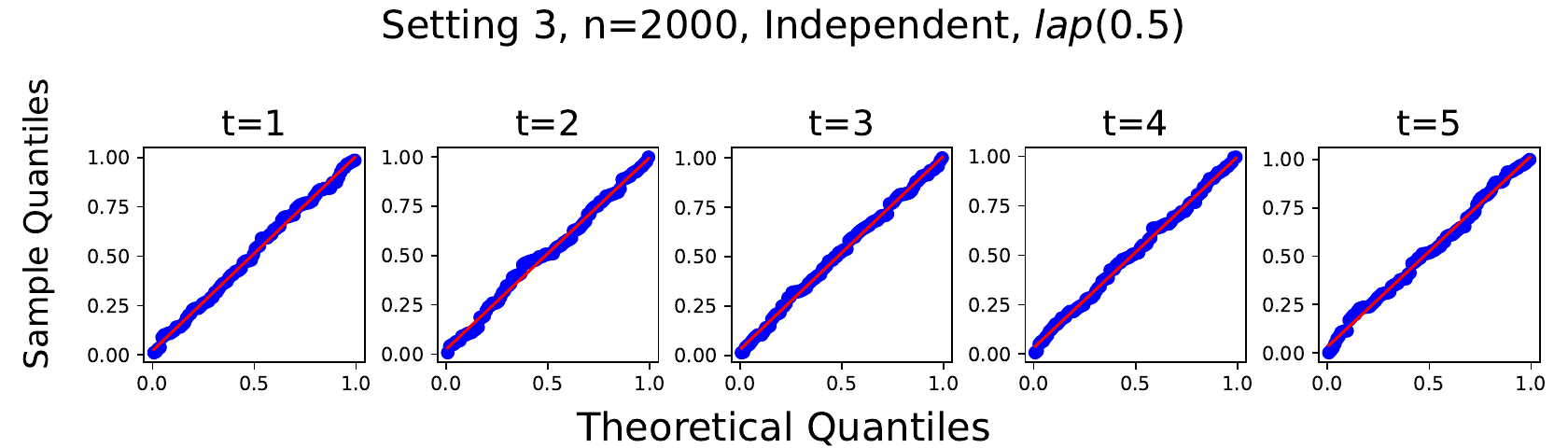}
			\end{subfigure}
				\caption{The Q-Q plot of $W_{n,\id}$ for logistic regression from Setting 3 with Independent covariates,  and the scale of the Laplace noise is 0.5. 
				}
				\label{logisticQQ_setting3_sig0_lap0.5}
			\end{figure}

			\begin{figure}[tb]
				\centering
			\begin{subfigure}[b]{0.9\textwidth}
			\includegraphics[width=1\linewidth]{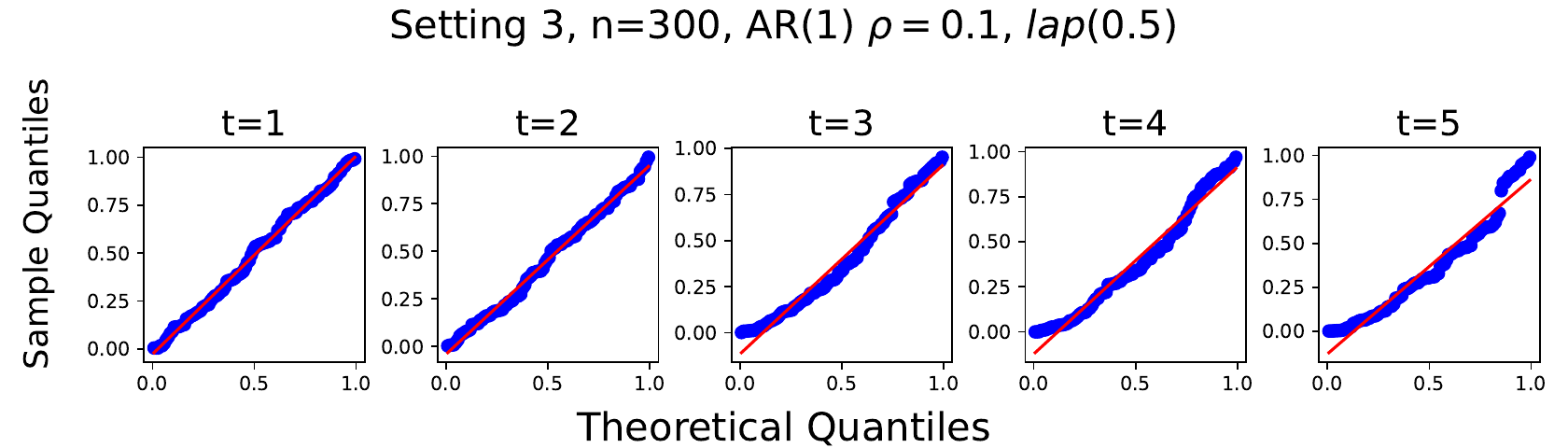}
			\end{subfigure}\vspace{0.8em}
			\begin{subfigure}[b]{0.9\textwidth}
			\includegraphics[width=1\linewidth]{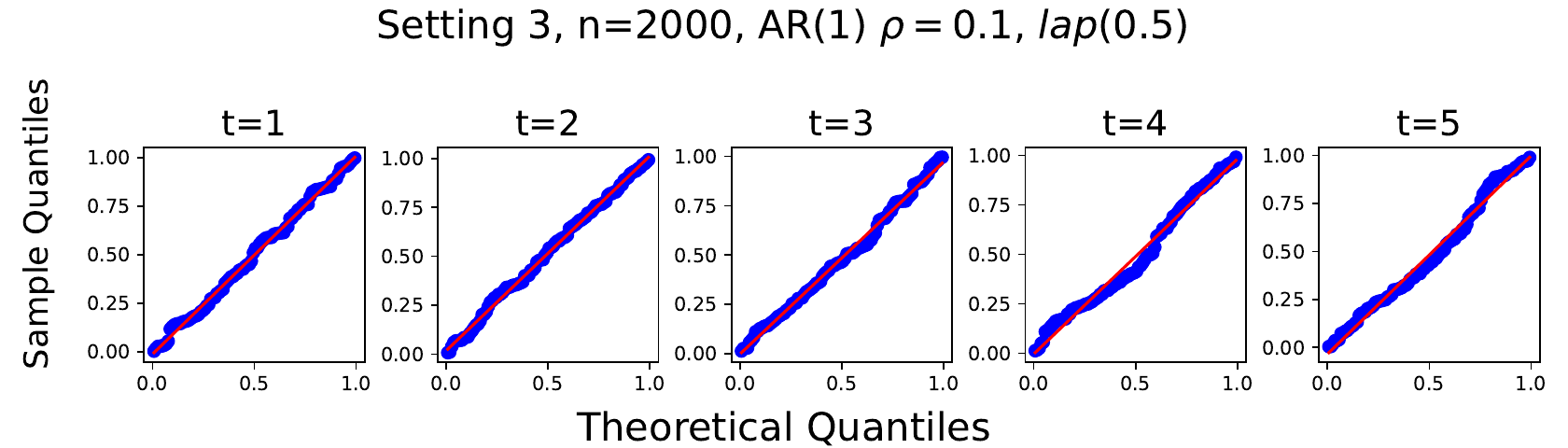}
			\end{subfigure}
				\caption{The Q-Q plot of $W_{n,\id}$ for logistic regression from Setting 3 with AR(1) covariates, $\rho =0.1$,  and the scale of the Laplace noise is 0.5. 
				}
				\label{logisticQQ_setting3_sig0.1_lap0.5}
			\end{figure}

			\begin{figure}[tb]
				\centering
			\begin{subfigure}[b]{0.9\textwidth}
			\includegraphics[width=1\linewidth]{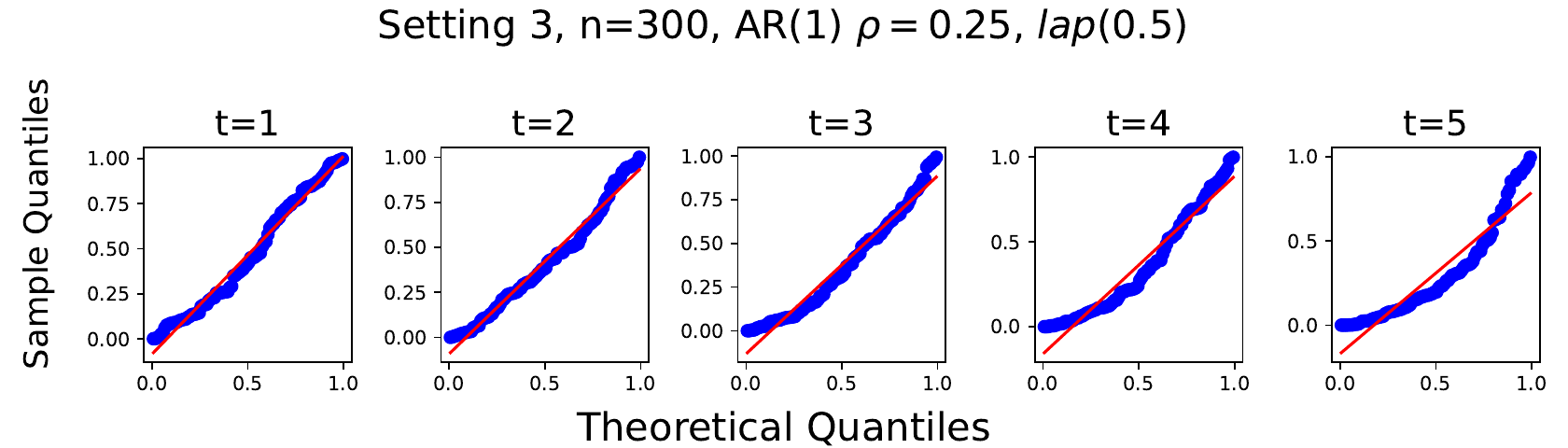}
			\end{subfigure}\vspace{0.8em}
			\begin{subfigure}[b]{0.9\textwidth}
			\includegraphics[width=1\linewidth]{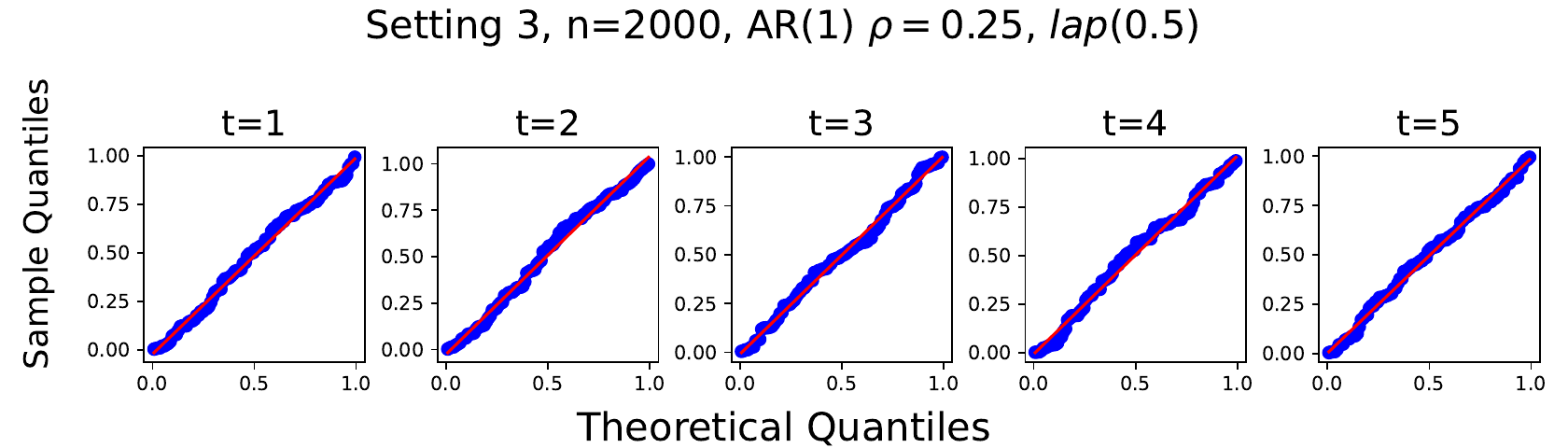}
			\end{subfigure}
				\caption{The Q-Q plot of $W_{n,\id}$ for logistic regression from Setting 3 with AR(1) covariates, $\rho =0.25$,  and the scale of the Laplace noise is 0.5. 
				}
				\label{logisticQQ_setting3_sig0.25_lap0.5}
			\end{figure}

\clearpage

\clearpage
\subsection{Simulation Results of normal regression, Poisson, and robust regression with log-cosh loss}\label{subsec_PoNoH0}
For Normal and Poisson regression, we consider the same parameter and coefficient settings as those for logistic regression, except for Poisson regression,  where we set the nonzero coefficients to 0.1 instead of 0.5 in the experiments for type-I error control under $\textrm{H}_0$. Also, for the experiments on power under $\textrm{H}_1$ and model training, we randomly generate coefficients from a normal distribution with mean 0 and variance 0.01.
In robust regression with log-cosh loss, 
$$
\mest(y,{\bm{x}}^\T \bm{\beta} ) = \frac{1}{\alpha}\log \cosh \bigl(\alpha(y - {\bm{x}}^\T \bm{\beta})\bigr),
$$
where $\alpha>0$ is a tuning parameter that controls the smoothness of the loss function. In the experiments, we set $\alpha$ to be 0.3.
The experiment data are generated in the same way as the Normal regression.
To evaluate the predictive performance of the estimators for Normal regression, Poisson regression, and robust regression with log-cosh loss, we calculate the Euclidean distance between the estimated coefficients to the ones from the oracle estimators.

To keep the presentation concise for Q-Q plots, we only present the results under Setting 2 with AR(1)-correlated covariates, $\rho=0.25$, and Laplace noise with a scale of 0.1. The plots under other settings are similar.
The Q-Q plots for Normal regression, Poisson regression, and robust regression by log-cosh loss under $\textrm{H}_0$ can be found in Figures~\ref{normalQQ_setting2_sig0.25_lap0.1}, \ref{poissonQQ_setting2_sig0.25_lap0.1}, and \ref{logcoshQQ_setting2_sig0.25_lap0.1},respectively. 
% Figures~\ref{normalQQ_setting1_sig0_lap0}-\ref{normalQQ_setting3_sig0.25_lap0.5},
% Figures~\ref{poissonQQ_setting1_sig0_lap0_fixedU_False}-\ref{poissonQQ_setting3_sig0.25_lap0.5_fixedU_False}, and Figures~\ref{logcoshQQ_setting1_sig0_lap0}-\ref{logcoshQQ_setting3_sig0.25_lap0.5}, respectively.
The rejection rates under $\textrm{H}_1$ are shown in Figures~\ref{normalPower}, \ref{poissonPower}, and \ref{logcoshPower}. 
The training performance evaluation is presented in Figures~\ref{fig_normal_loglikeli}, \ref{fig_poisson_loglikeli}, and \ref{fig_logcosh_loglikeli}.

			\begin{figure}[tb]
				\centering
			\begin{subfigure}[b]{0.9\textwidth}
			\includegraphics[width=1\linewidth]{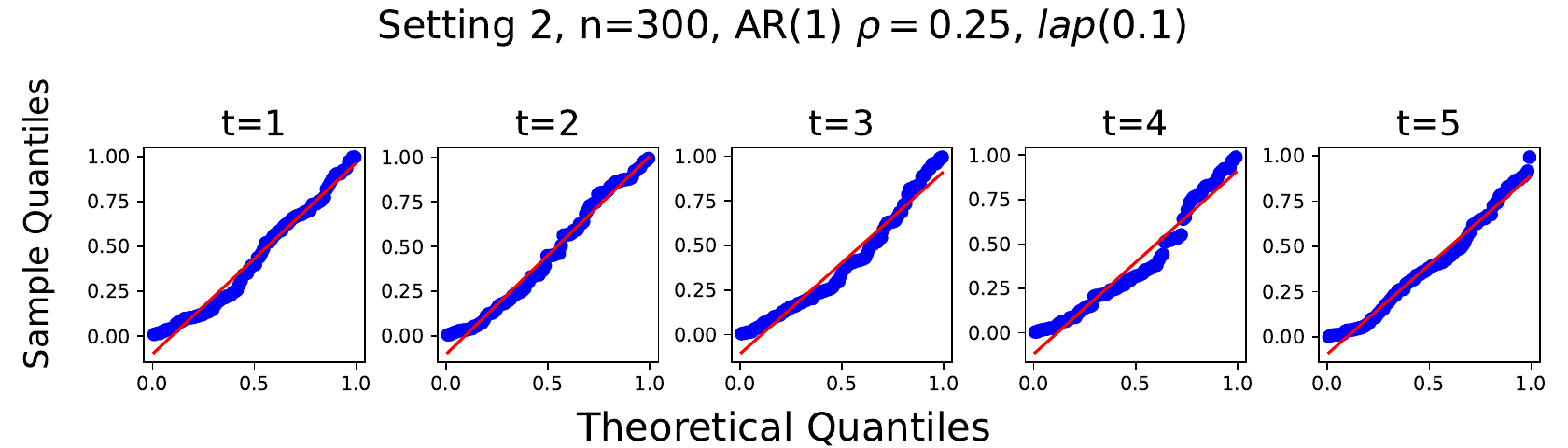}
			\end{subfigure}\vspace{0.8em}
			\begin{subfigure}[b]{0.9\textwidth}
			\includegraphics[width=1\linewidth]{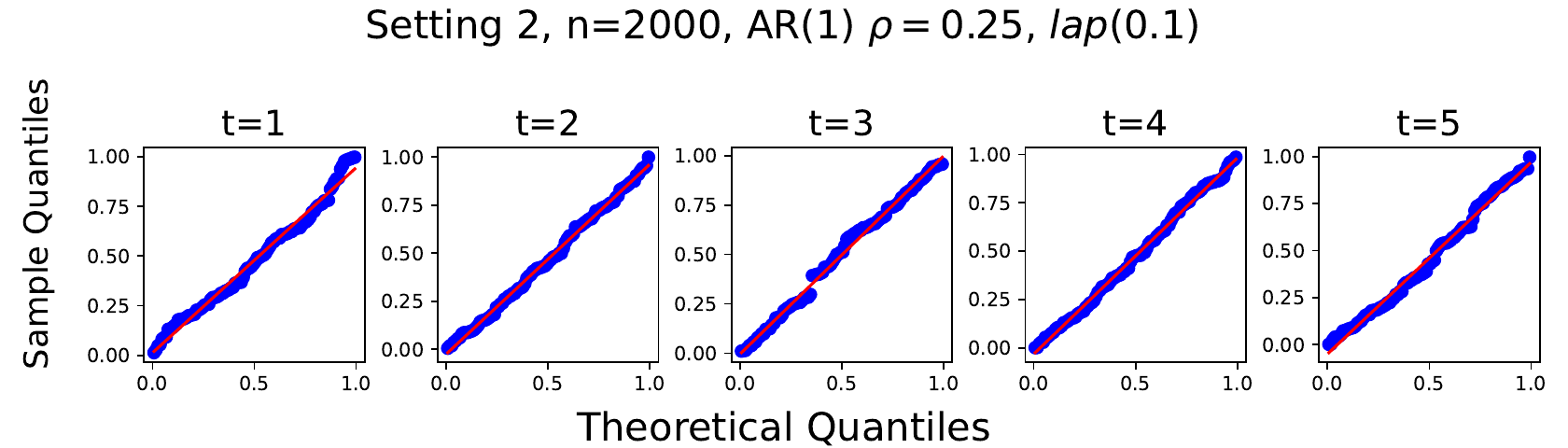}
			\end{subfigure}
				\caption{The Q-Q plot of $W_{n,\id}$ for Normal regression from Setting 2 with AR(1) covariates, $\rho =0.25$,  and the scale of the Laplace noise is 0.1. 
				}
				\label{normalQQ_setting2_sig0.25_lap0.1}
			\end{figure}

			\begin{figure}[tb]
				\centering
			\begin{subfigure}[b]{0.9\textwidth}
			\includegraphics[width=1\linewidth]{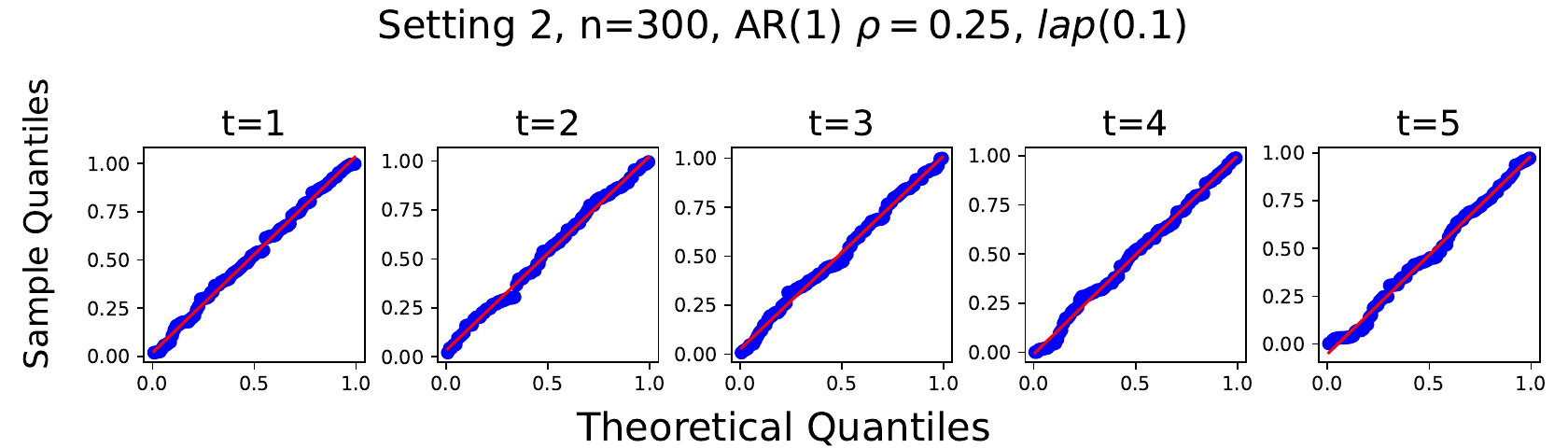}
			\end{subfigure}\vspace{0.8em}
			\begin{subfigure}[b]{0.9\textwidth}
			\includegraphics[width=1\linewidth]{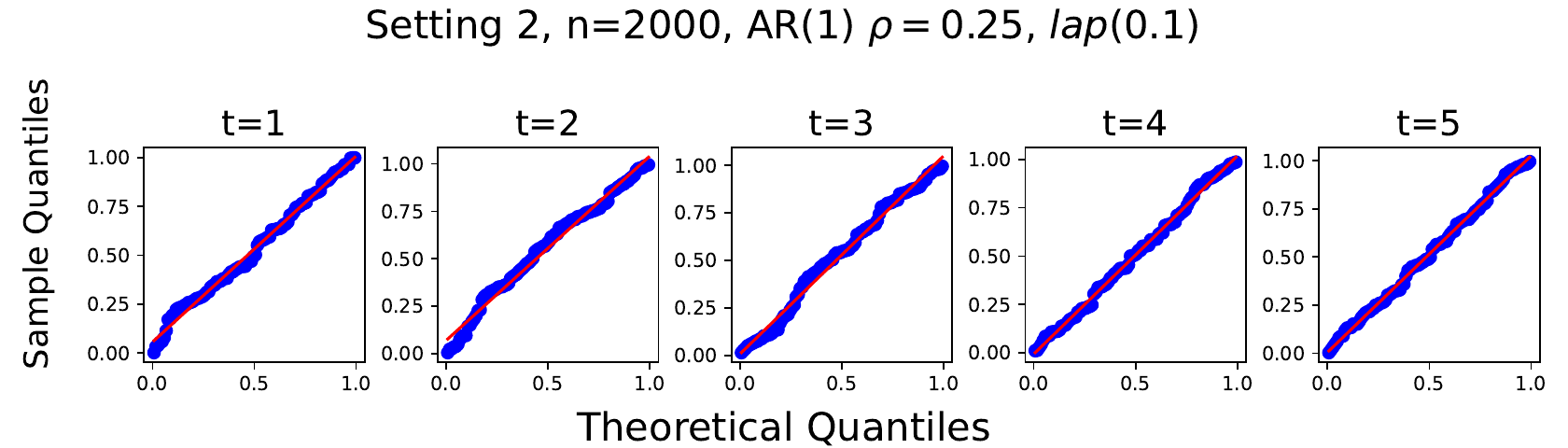}
			\end{subfigure}
				\caption{The Q-Q plot of $W_{n,\id}$ for Poisson regression from Setting 2 with AR(1) covariates, $\rho =0.25$,  and the scale of the Laplace noise is 0.1. We do not fix a set of $U$ for sample sizes 300 and 2000.
				}
				\label{poissonQQ_setting2_sig0.25_lap0.1}
			\end{figure}

   			\begin{figure}[tb]
				\centering
			\begin{subfigure}[b]{0.9\textwidth}
			\includegraphics[width=1\linewidth]{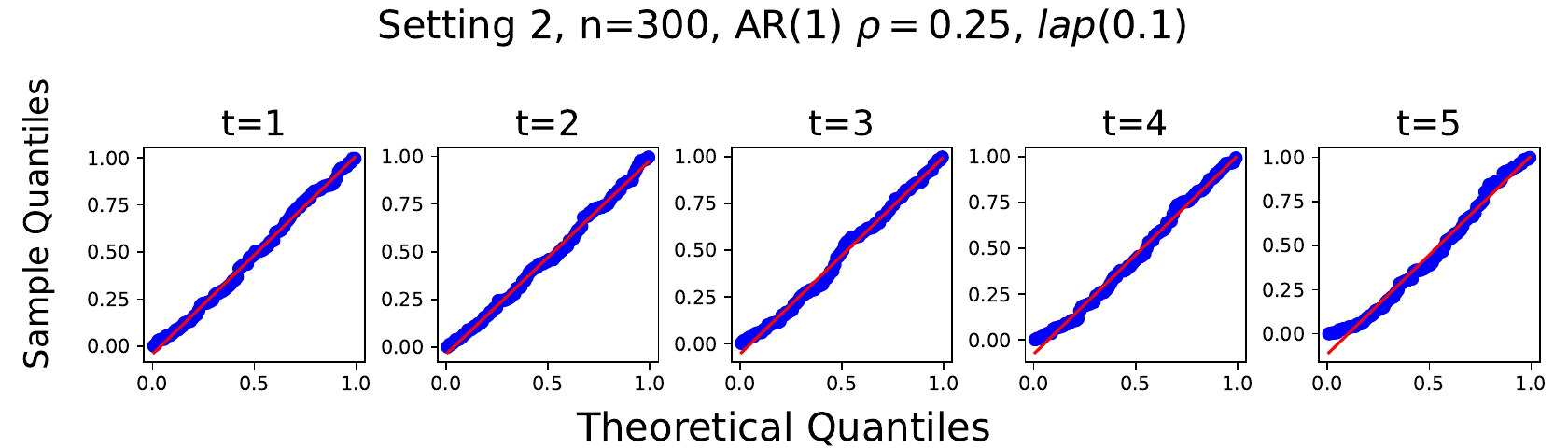}
			\end{subfigure}\vspace{0.8em}
			\begin{subfigure}[b]{0.9\textwidth}
			\includegraphics[width=1\linewidth]{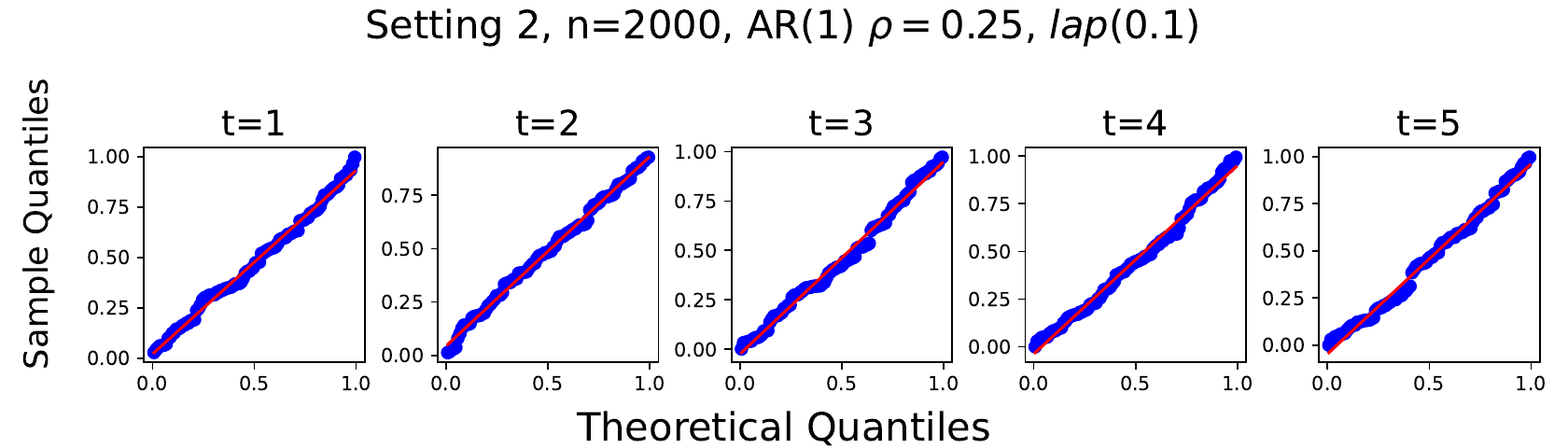}
			\end{subfigure}
				\caption{The Q-Q plot of $W_{n,\id}$ for log-cosh regression from Setting 2 with AR(1) covariates, $\rho =0.25$,  and the scale of the Laplace noise is 0.1. 
				}
				\label{logcoshQQ_setting2_sig0.25_lap0.1}
			\end{figure}

% \input{normalH0.tex}
% \clearpage
% \input{poissonH0.tex}
% \clearpage
% \input{logcoshH0.tex}

\begin{figure}
\centering
\begin{subfigure}[b]{0.9\textwidth}
   \includegraphics[width=1\linewidth]{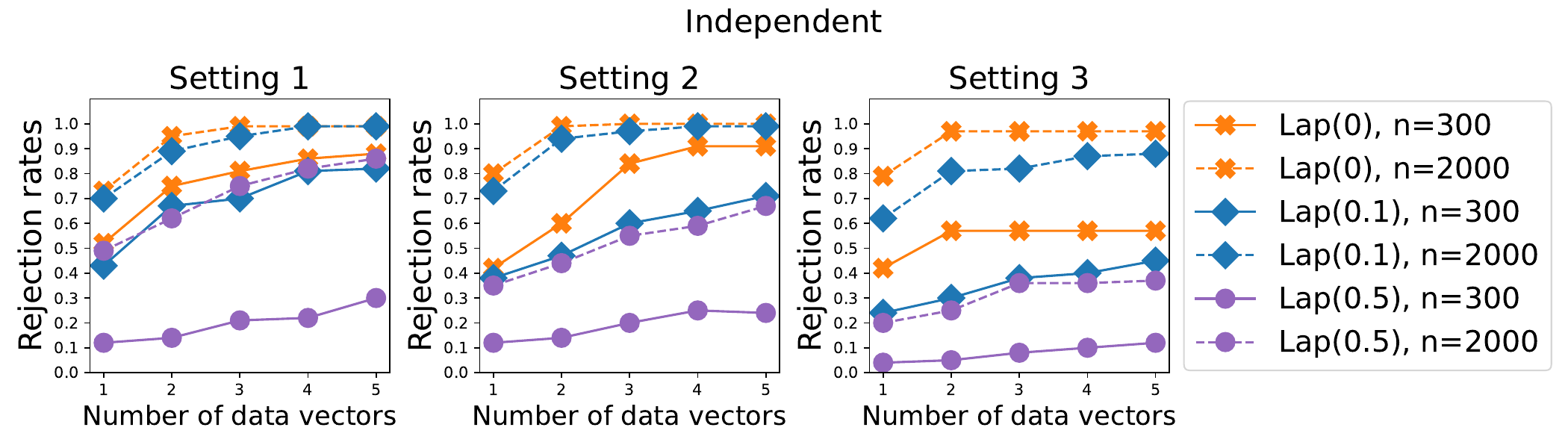}
\end{subfigure}\vspace{0.8em}
\begin{subfigure}[b]{0.9\textwidth}
   \includegraphics[width=1\linewidth]{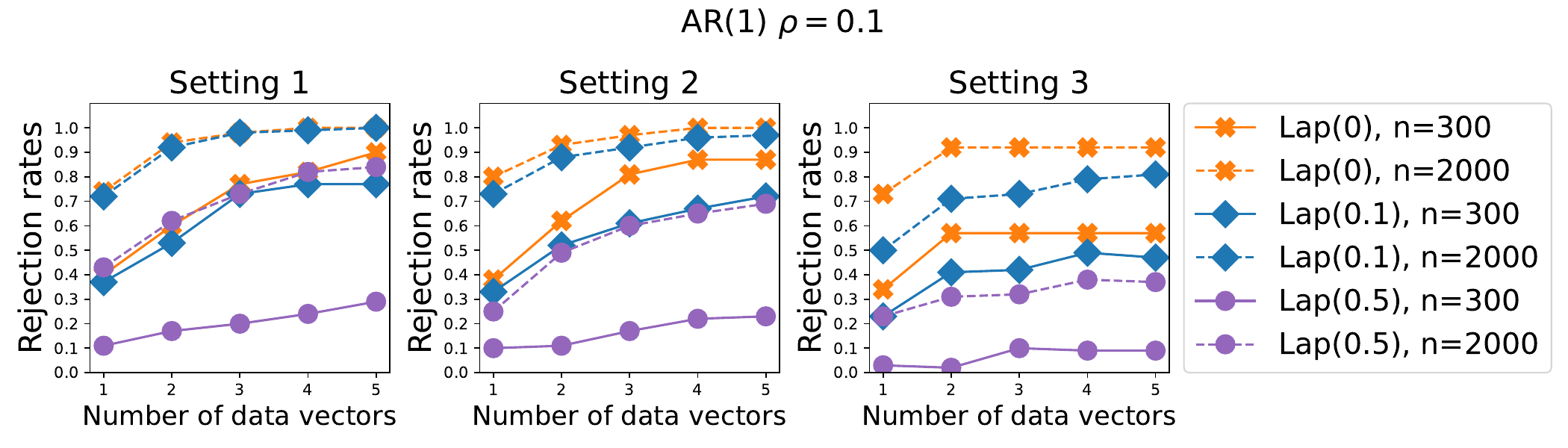}
\end{subfigure}\vspace{0.8em}

\begin{subfigure}[b]{0.9\textwidth}
   \includegraphics[width=1\linewidth]{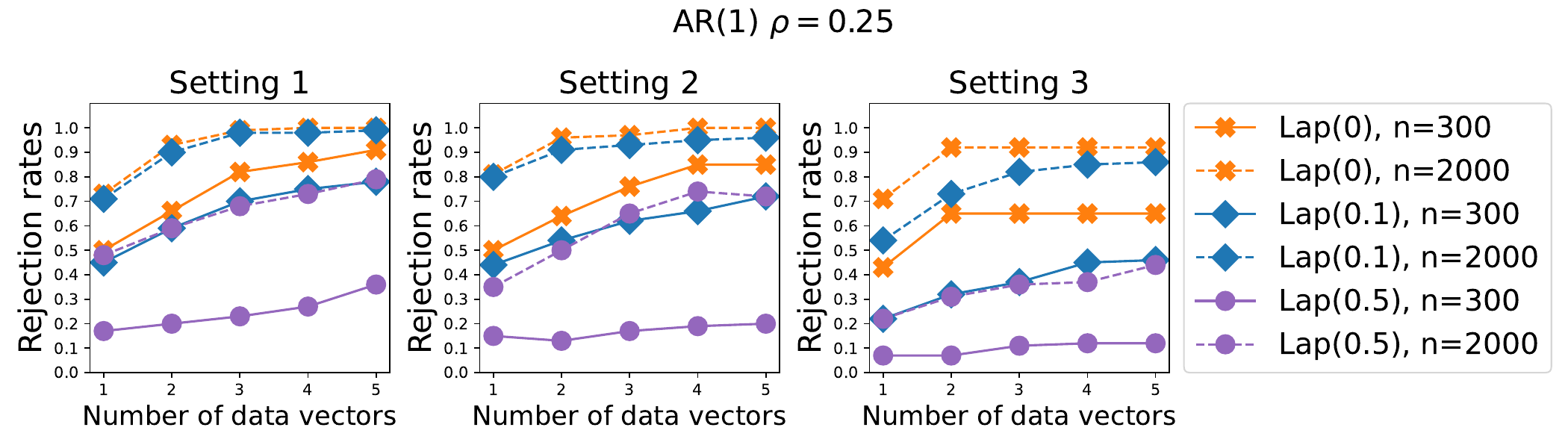}
\end{subfigure}

\caption{The rejection rates of $W_{n,\id}$ for Normal regression versus $\id$ (the number of columns in the sketchy data) from 100 replications, where $\textrm{Lap}(0)$, $\textrm{Lap}(0.1)$, and $\textrm{Lap}(0.5)$ stand for the sketchy data without noise, with Laplace noise of scale 0.1, and with Laplace noise of scale 0.5, respectively.}\label{normalPower}
\end{figure}
   
\begin{figure}
\centering
\begin{subfigure}[b]{0.9\textwidth}
   \includegraphics[width=1\linewidth]{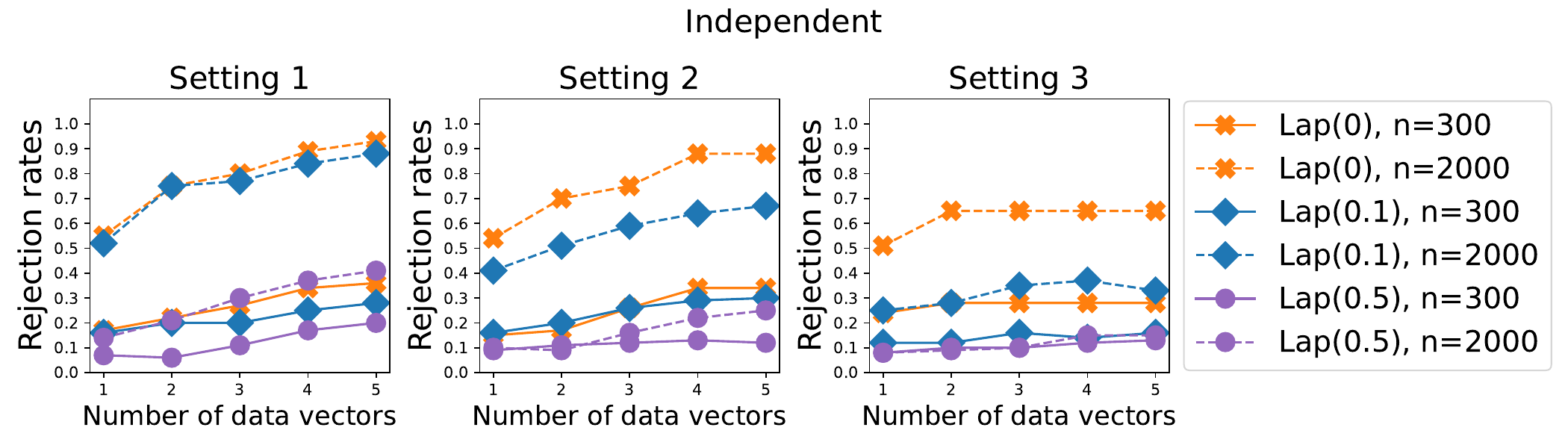}
\end{subfigure}\vspace{0.8em}
\begin{subfigure}[b]{0.9\textwidth}
   \includegraphics[width=1\linewidth]{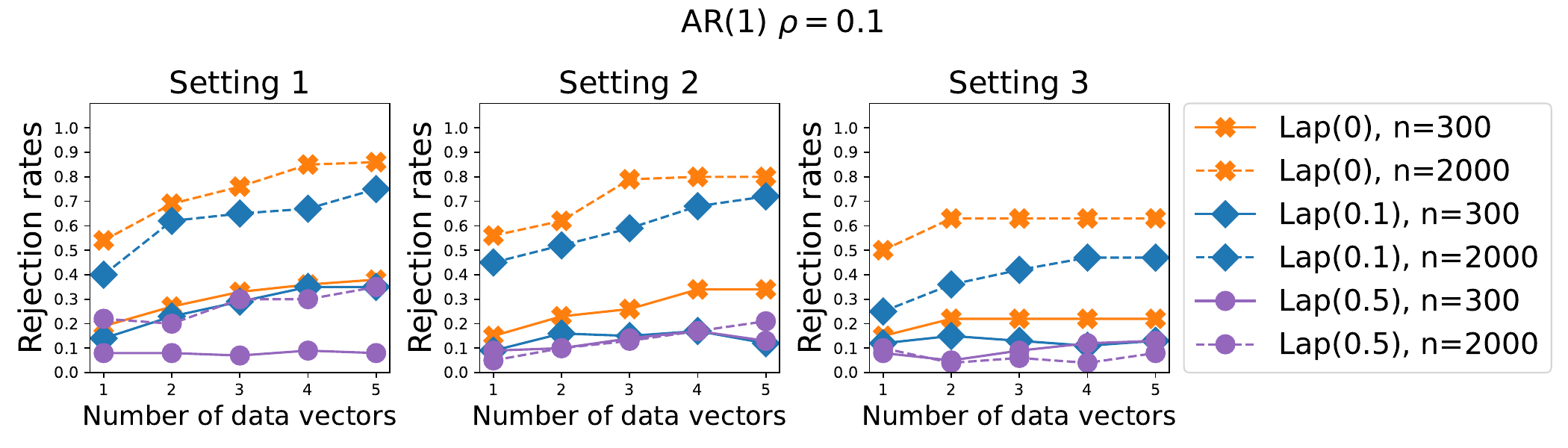}
\end{subfigure}\vspace{0.8em}

\begin{subfigure}[b]{0.9\textwidth}
   \includegraphics[width=1\linewidth]{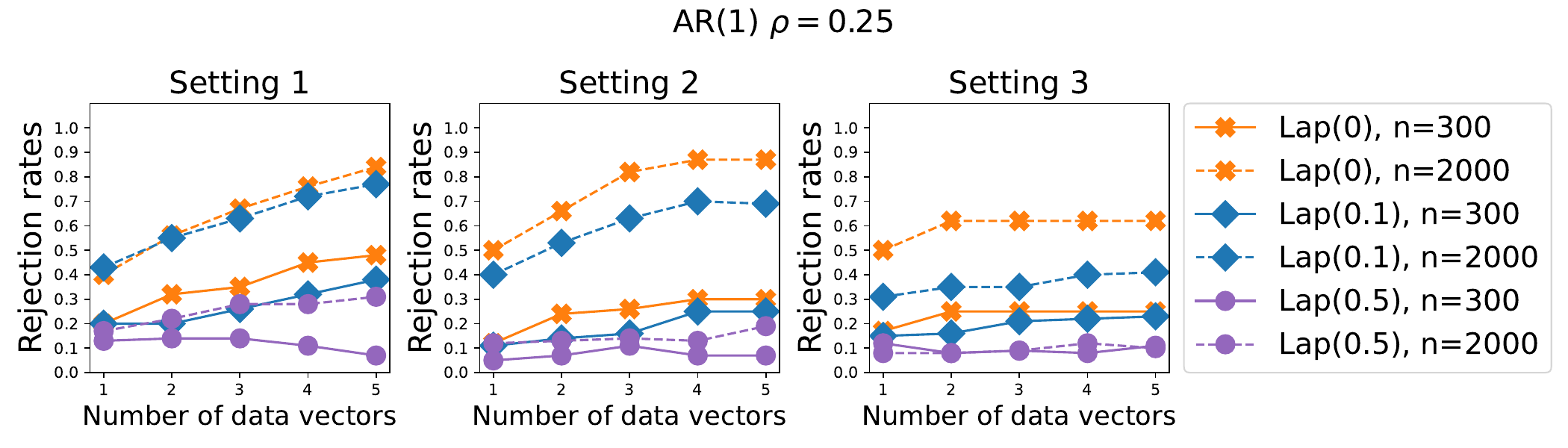}
\end{subfigure}

\caption{The rejection rates of $W_{n,\id}$ for Poisson regression versus $\id$ (the number of columns in the sketchy data) from 100 replications, where $\textrm{Lap}(0)$, $\textrm{Lap}(0.1)$, and $\textrm{Lap}(0.5)$ stand for the sketchy data without noise, with Laplace noise of scale 0.1, and with Laplace noise of scale 0.5, respectively.}\label{poissonPower}
\end{figure}

\begin{figure}
\centering
\begin{subfigure}[b]{0.9\textwidth}
   \includegraphics[width=1\linewidth]{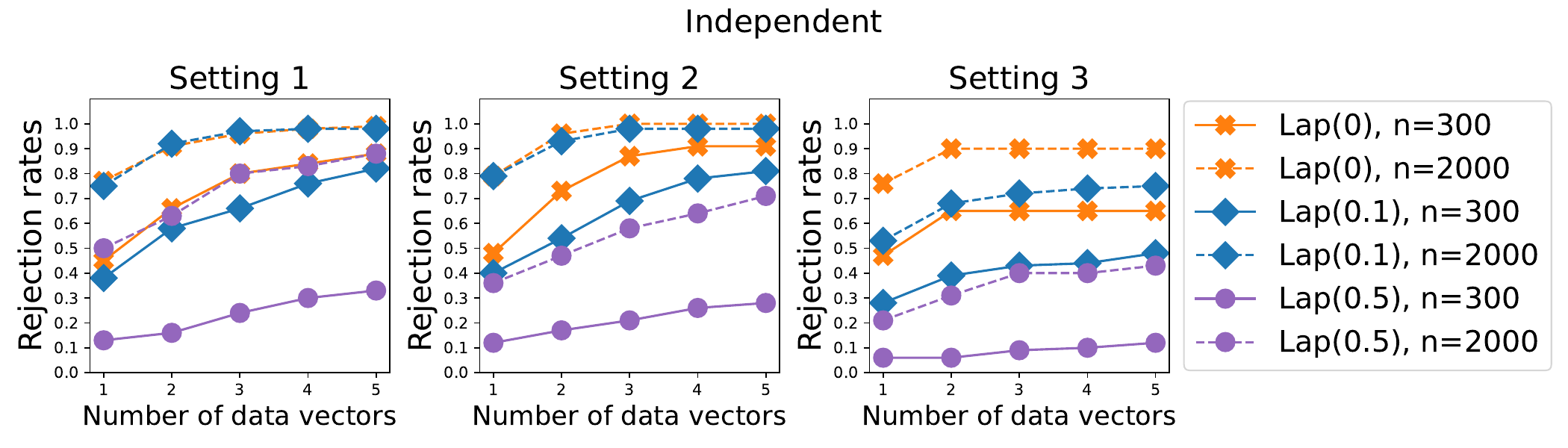}
\end{subfigure}\vspace{0.8em}
\begin{subfigure}[b]{0.9\textwidth}
   \includegraphics[width=1\linewidth]{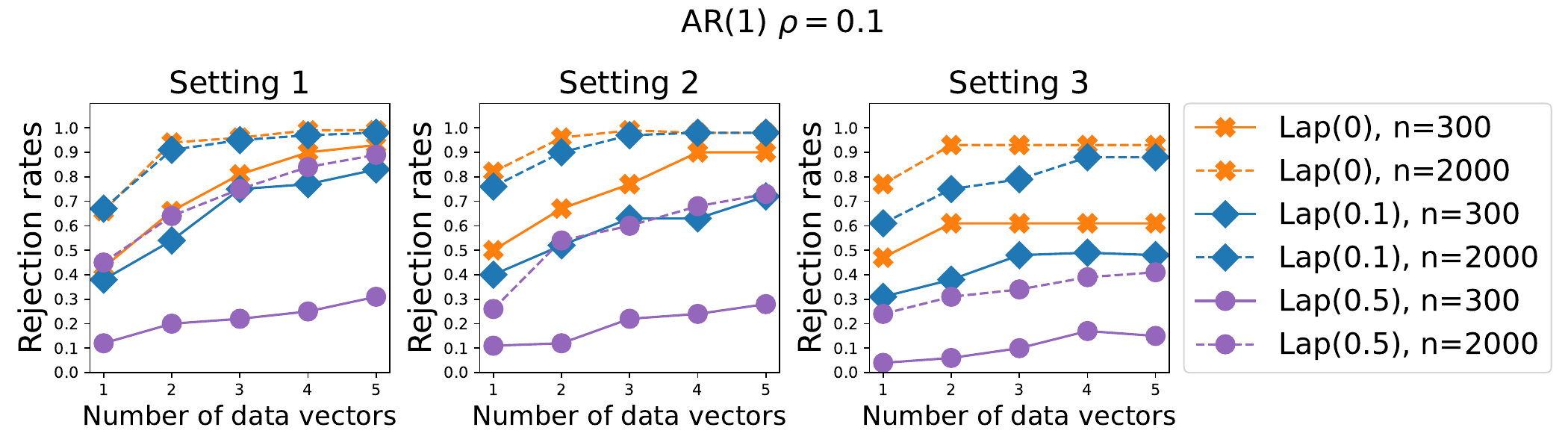}
\end{subfigure}\vspace{0.8em}

\begin{subfigure}[b]{0.9\textwidth}
   \includegraphics[width=1\linewidth]{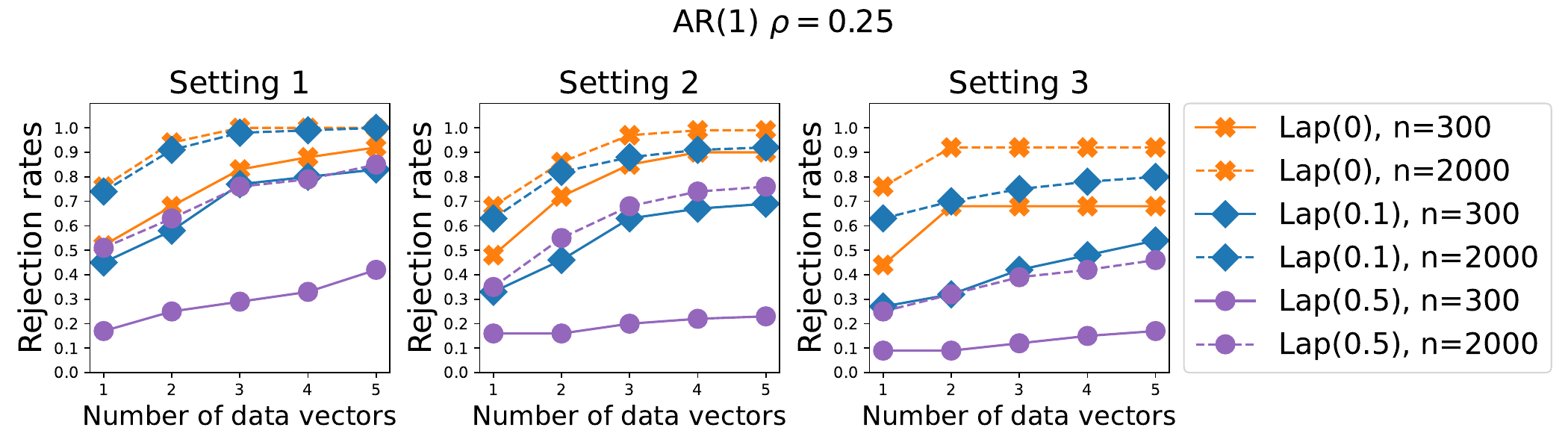}
\end{subfigure}

\caption{The rejection rates of $W_{n,\id}$ for log-cosh regression versus $\id$ (the number of columns in the sketchy data) from 100 replications, where $\textrm{Lap}(0)$, $\textrm{Lap}(0.1)$, and $\textrm{Lap}(0.5)$ stand for the sketchy data without noise, with Laplace noise of scale 0.1, and with Laplace noise of scale 0.5, respectively.}\label{logcoshPower}
\end{figure}

\begin{figure}
\centering
\begin{subfigure}[b]{1\textwidth}
   \includegraphics[width=0.8\linewidth]{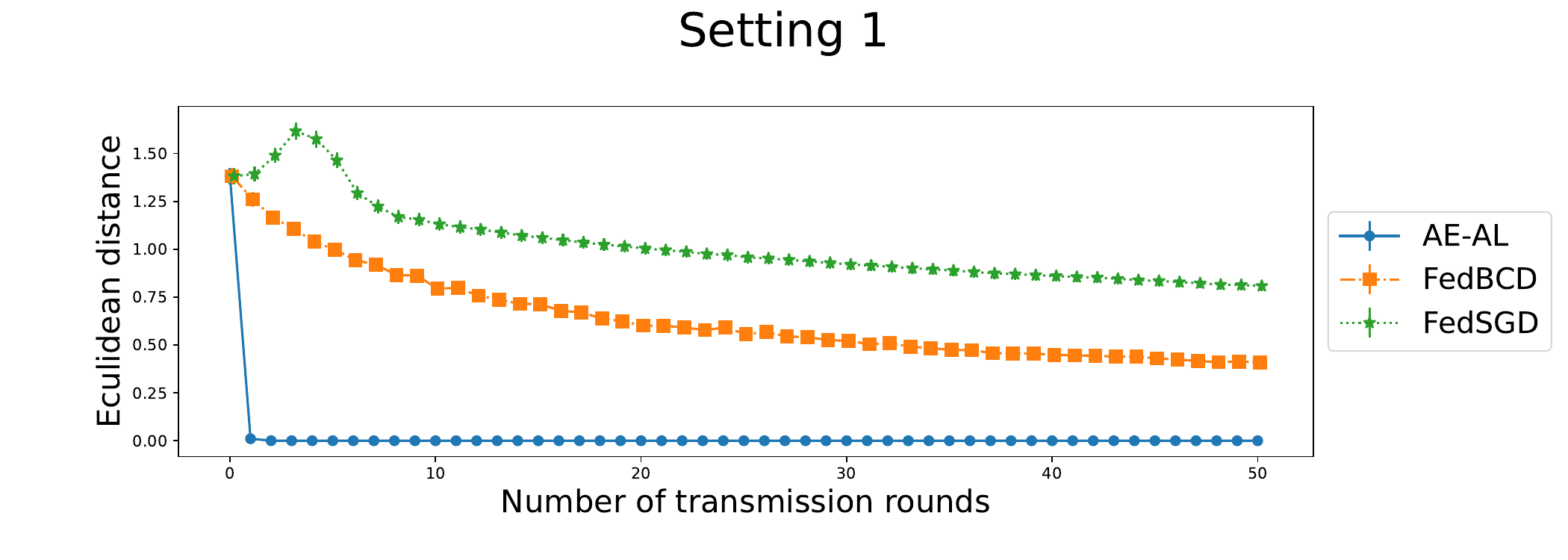}
\end{subfigure}\vspace{0em}
\begin{subfigure}[b]{1\textwidth}
   \includegraphics[width=0.8\linewidth]{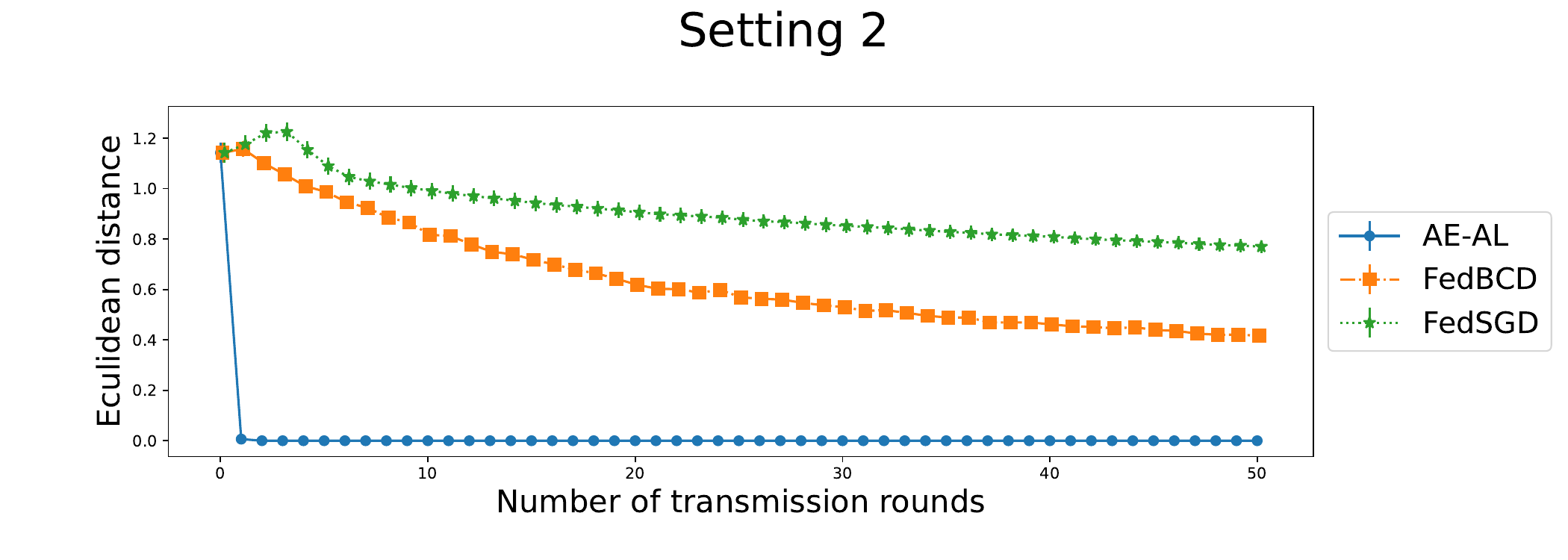}
\end{subfigure}\vspace{0em}

\begin{subfigure}[b]{1\textwidth}
   \includegraphics[width=0.8\linewidth]{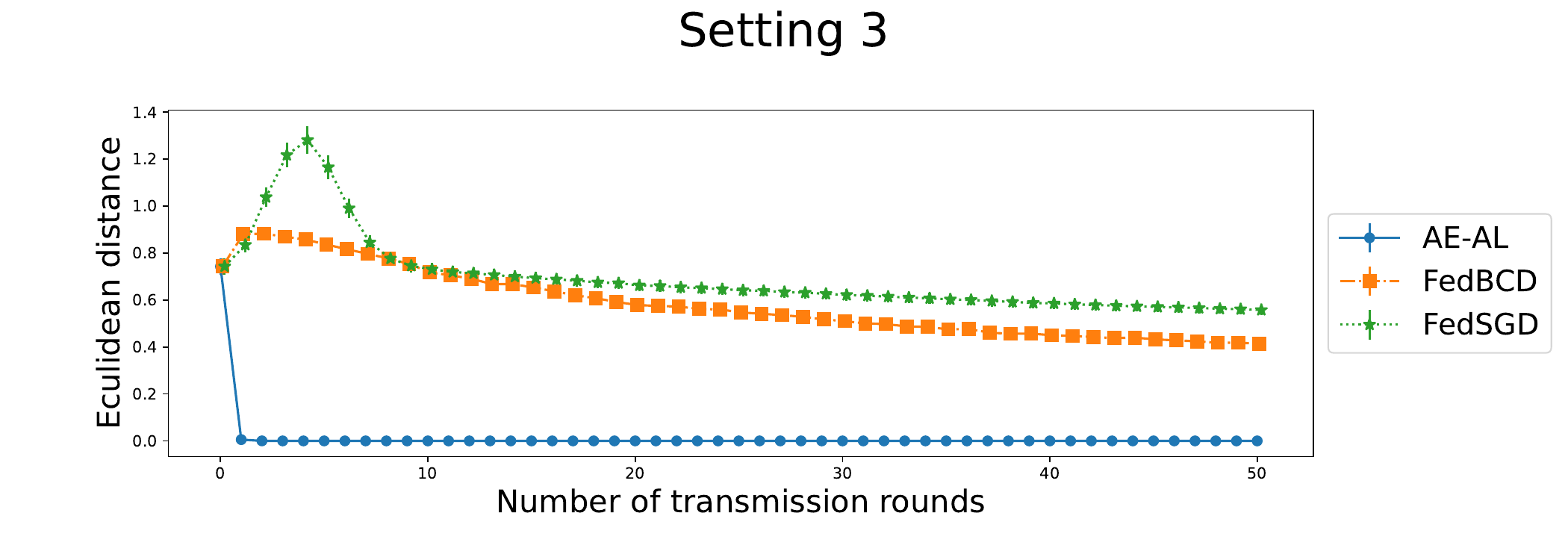}
\end{subfigure}

\caption{
The average Euclidean distance between the estimated coefficients and the ones from the oracle estimator for Normal regression. The vertical bars represent the standard errors based on 100 replications.}\label{fig_normal_loglikeli}
\end{figure}

\begin{figure}
\centering
\begin{subfigure}[b]{1\textwidth}
   \includegraphics[width=0.8\linewidth]{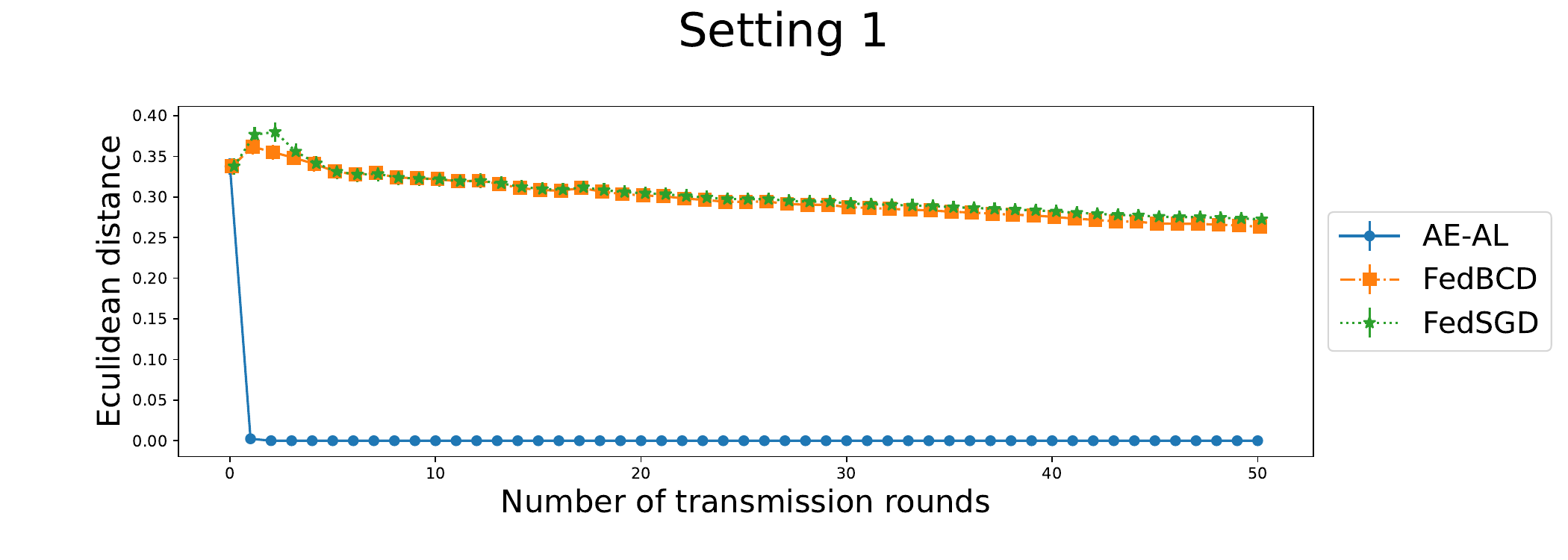}
\end{subfigure}\vspace{0em}
\begin{subfigure}[b]{1\textwidth}
   \includegraphics[width=0.8\linewidth]{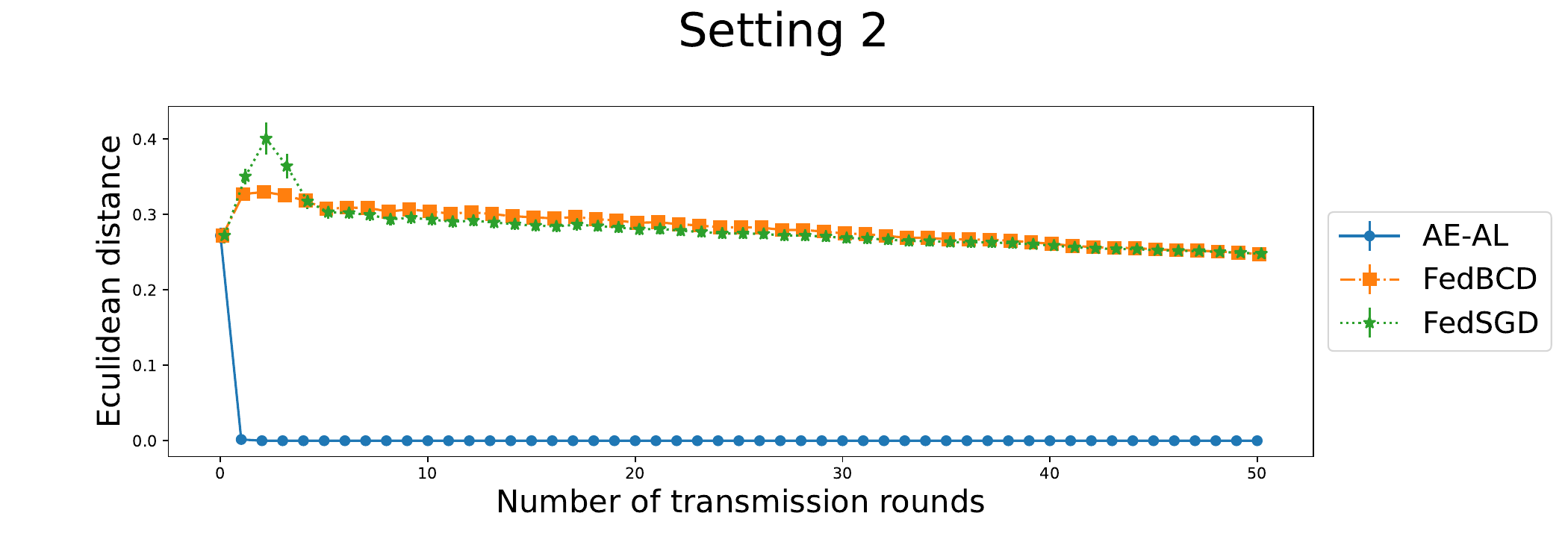}
\end{subfigure}\vspace{0em}

\begin{subfigure}[b]{1\textwidth}
   \includegraphics[width=0.8\linewidth]{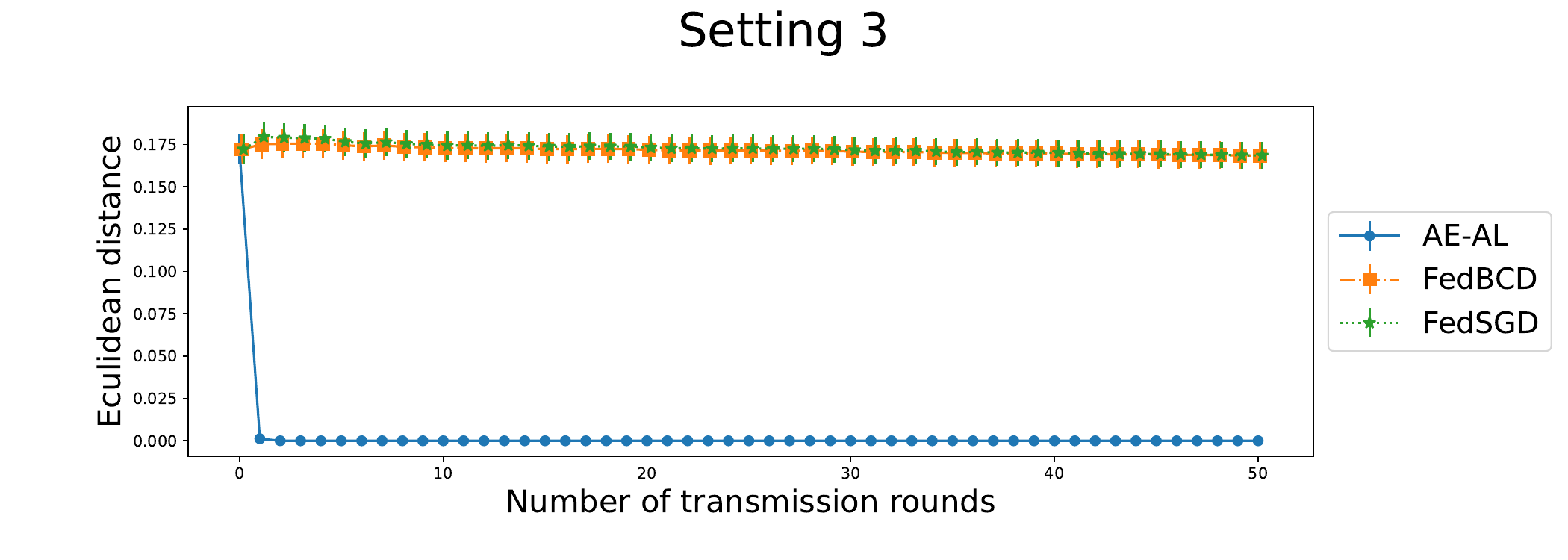}
\end{subfigure}

\caption{
The average Euclidean distance between the estimated coefficients and the ones from the oracle estimator for Poisson regression. The vertical bars represent the standard errors based on 100 replications. }\label{fig_poisson_loglikeli}
\end{figure}

\begin{figure}
\centering
\begin{subfigure}[b]{1\textwidth}
   \includegraphics[width=0.8\linewidth]{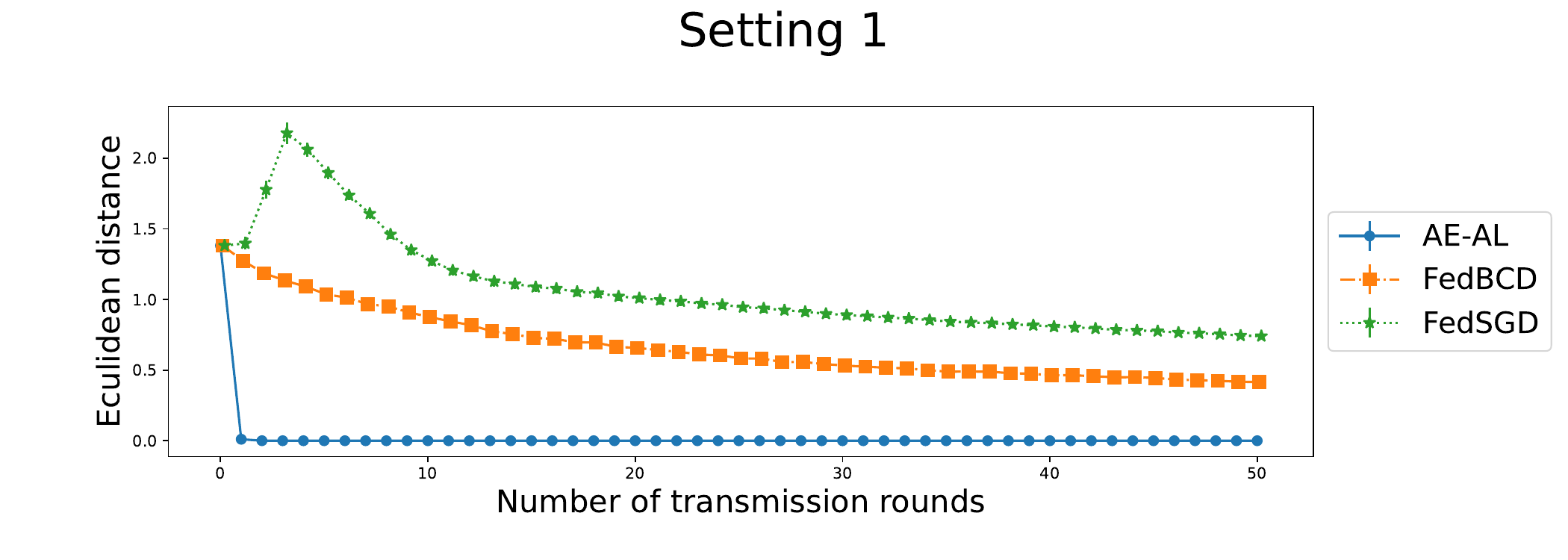}
\end{subfigure}\vspace{0em}
\begin{subfigure}[b]{1\textwidth}
   \includegraphics[width=0.8\linewidth]{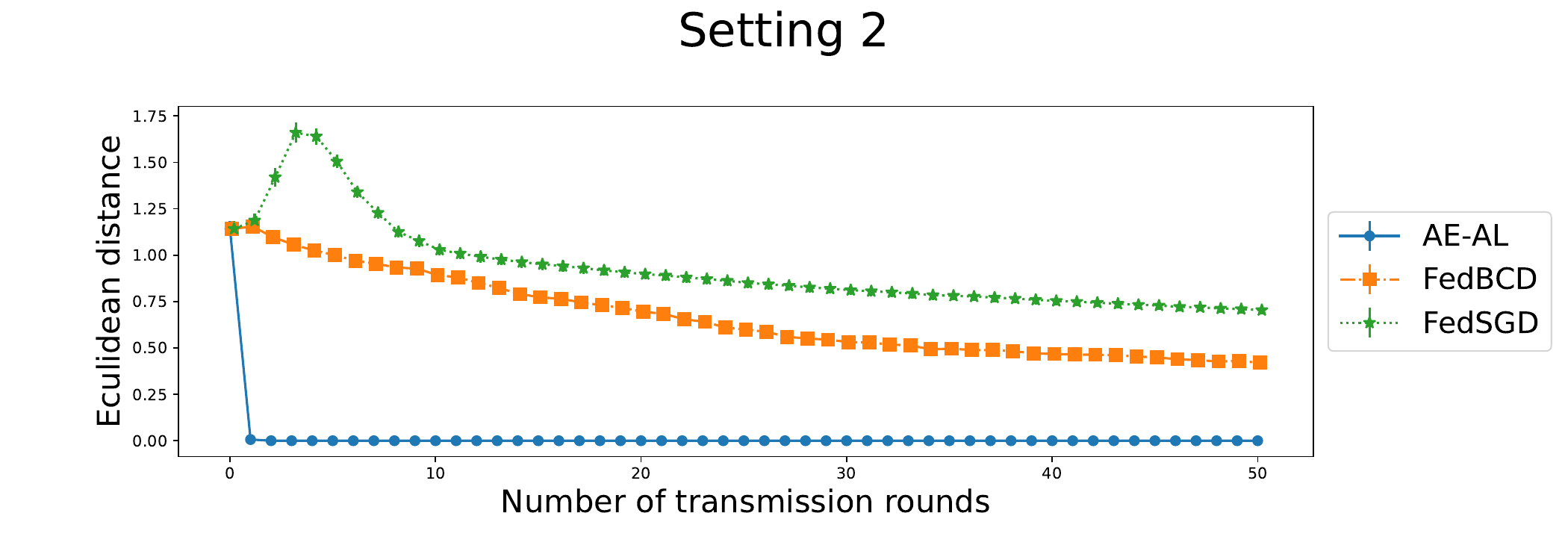}
\end{subfigure}\vspace{0em}

\begin{subfigure}[b]{1\textwidth}
   \includegraphics[width=0.8\linewidth]{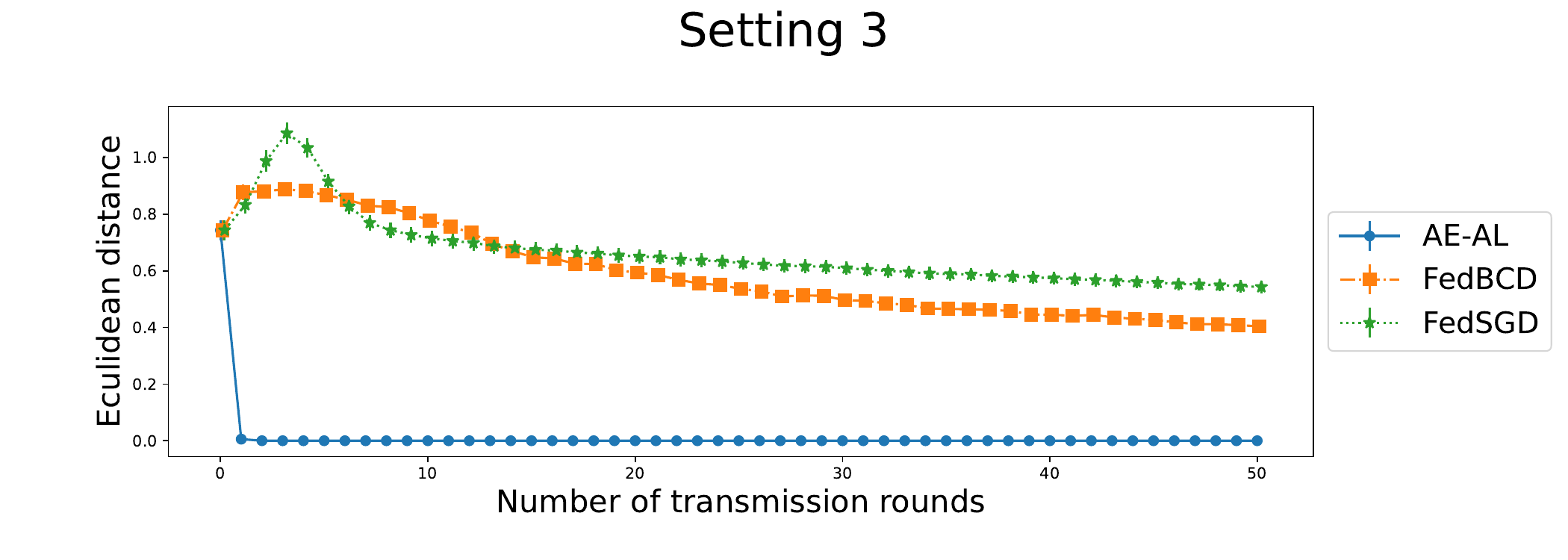}
\end{subfigure}

\caption{
The average Euclidean distance between the estimated coefficients and the ones from the oracle estimator for log-cosh regression. The vertical bars represent the standard errors based on 100 replications. }\label{fig_logcosh_loglikeli}
\end{figure}

\clearpage

\subsection{Robustness of the AE-AL testing with respect to $\bm{U}$}\label{subsec_robustness_U}

In this subsection, we investigate the robustness of the AE-AL testing results with respect to  $\bm{U}$ through two experiments.
In experiment~1, we examine the overall robustness of the AE-AL with respect to the distribution to generate $\bm{U}$. For this experiment, we generate $\bm{U}$ from different distributions and compare their aggregated testing results from multiple replications of experiment datasets.
 In experiment~2, we assess the robustness of the AE-AL with respect to the specific realizations of $\bm{U}$. For this experiment, we compare the testing results obtained on each individual dataset with $\bm{U}$ taken from a set of matrices with different values. For both experiments, we generate data according to the Setting~2 in Section~\ref{subsec_type1} with $n=2000$, $\rho=0.1$.

\subsubsection{Experiment~1: Different distributions to generate $\bm{U}$}\label{subsubsec_robustexp1}
For experiment~1, we replace the standard normal distribution with $t(3)$, $t(5)$, $t(10)$, and $\textrm{Uniform}(-1, 1)$ to generate $\bm{U}$, where $t(\cdot)$ denotes the $t$-distribution. The scale of the Laplace noise is set to be 0.1 and the number of replications is set to be 500. The AE-AL testing results for logistic regression presented in Figures~\ref{Fig_RobustU_dist_H0} and \ref{Fig_RobustU_dist_H1} demonstrate that the choice of distribution for $\bm{U}$ does not have a significant influence on the results.
			\begin{figure}[!ht]
				\centering
			\begin{subfigure}[b]{0.9\textwidth}
			\includegraphics[width=1\linewidth]{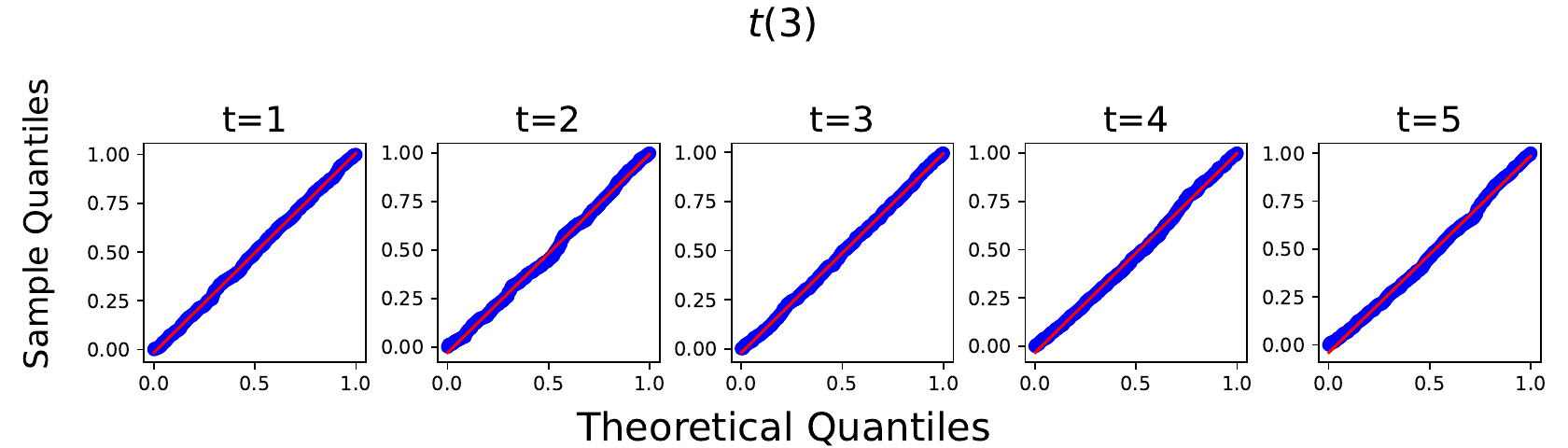}
			\end{subfigure}\vspace{0.8em}
			\begin{subfigure}[b]{0.9\textwidth}
			\includegraphics[width=1\linewidth]{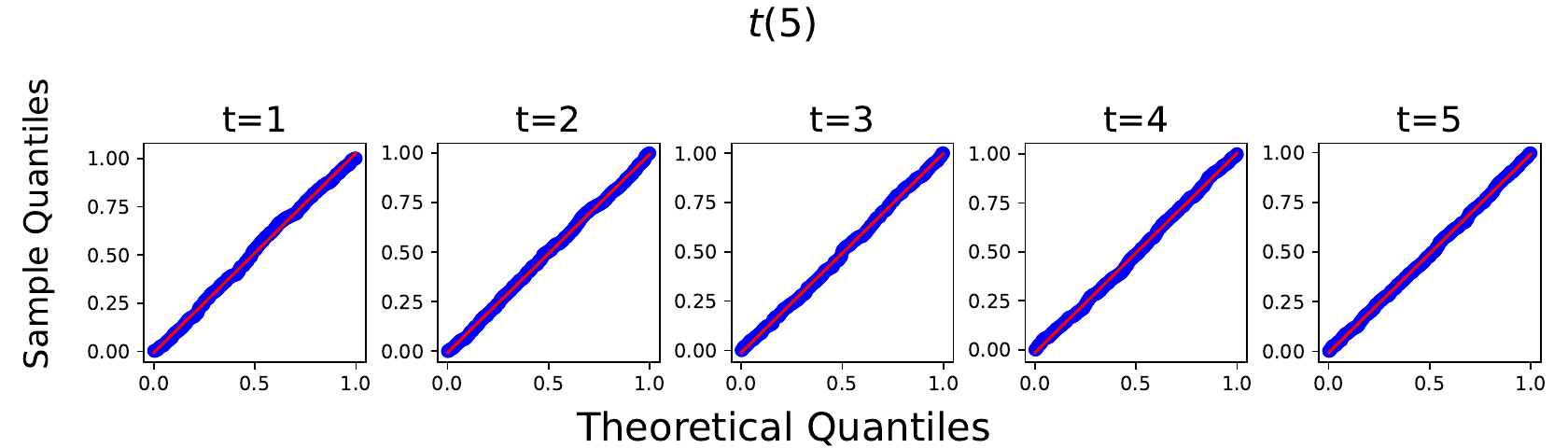}
			\end{subfigure}\vspace{0.8em}
			\begin{subfigure}[b]{0.9\textwidth}
			\includegraphics[width=1\linewidth]{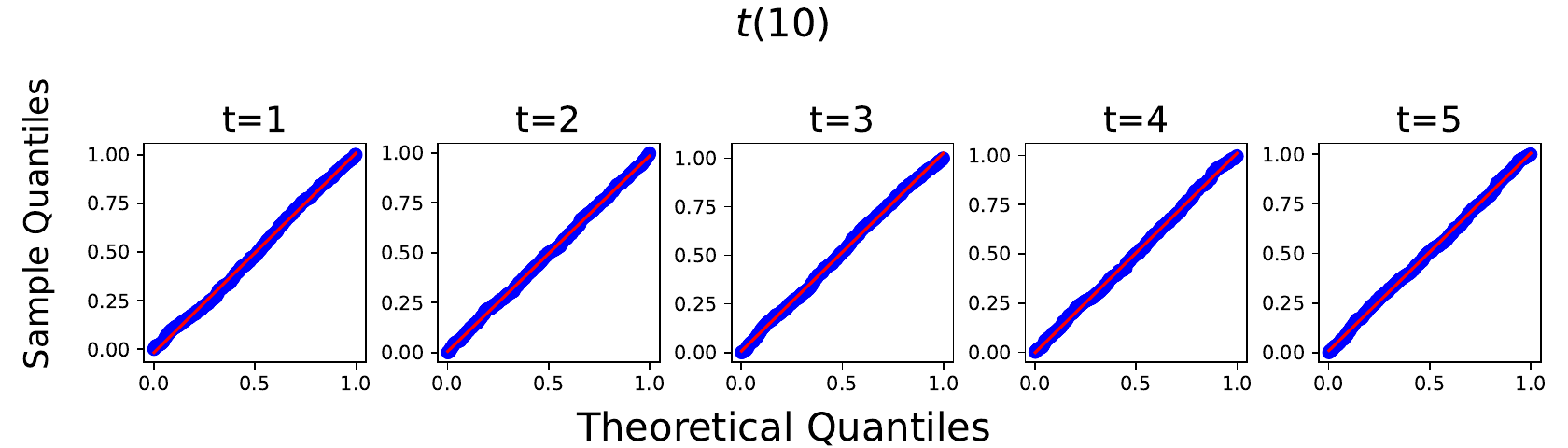}
			\end{subfigure}\vspace{0.8em}
			\begin{subfigure}[b]{0.9\textwidth}
			\includegraphics[width=1\linewidth]{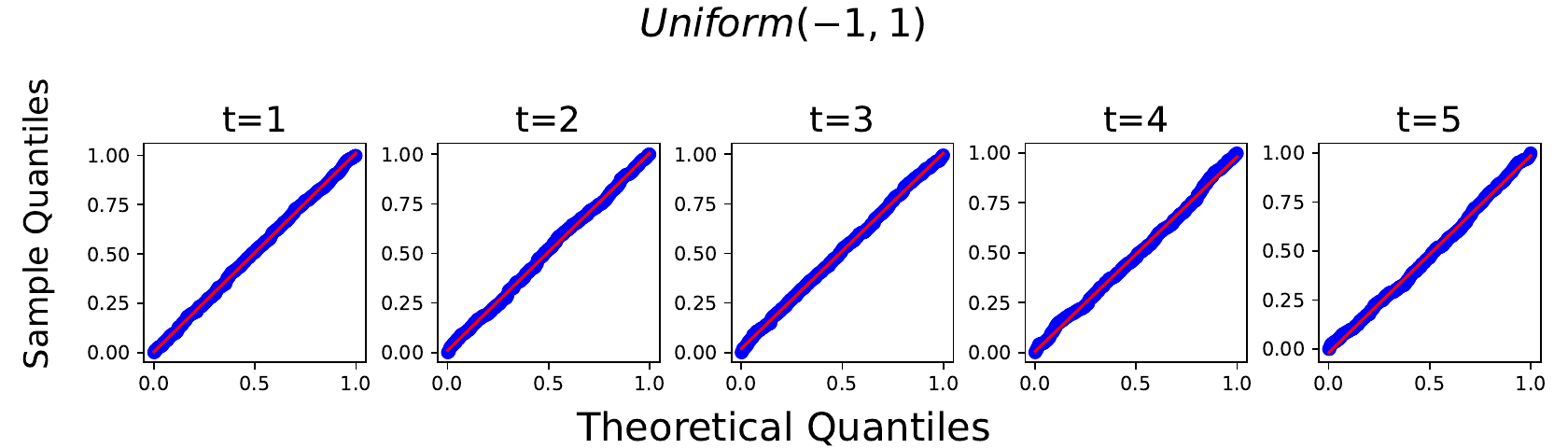}
			\end{subfigure}
				\caption{The Q-Q plot of $W_{n,\id}$ with $\bm{U}$ obtained from different distributions under the setting of $\textrm{H}_0$.  }
		\label{Fig_RobustU_dist_H0}
			\end{figure}

\begin{figure}[!htb]
	\centering
	\vspace{-0.1cm}
	\includegraphics[width=1\linewidth]{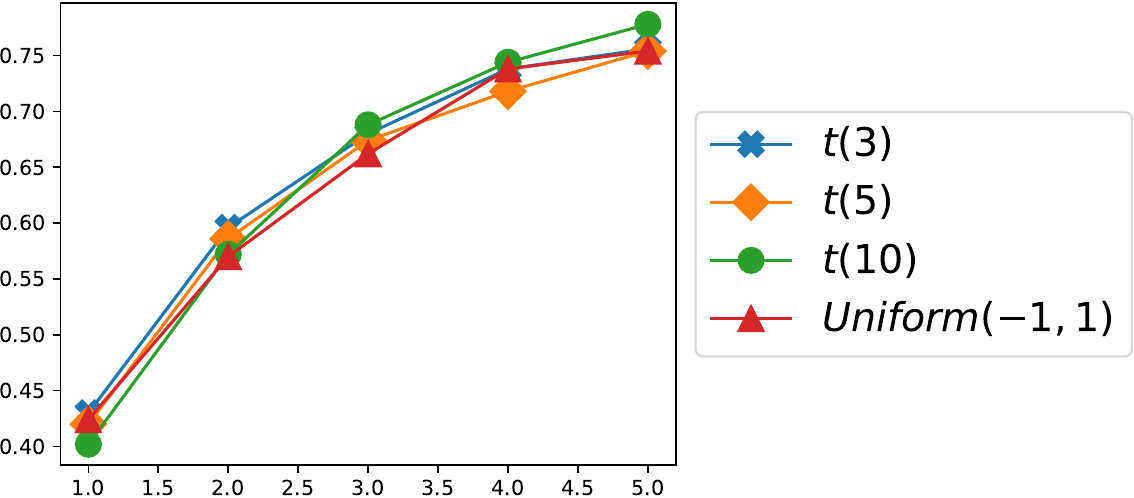}
	\vspace{-0.0cm}
	\caption{The rejection rates of $W_{n,\id}$ with $\bm{U}$ obtained from different distributions under the setting of $\textrm{H}_1$. 
	}
	\label{Fig_RobustU_dist_H1}
\end{figure}

\subsubsection{Experiment~2: Different realizations of $\bm{U}$}\label{subsubsec_robustexp2}
For experiment~2, we first randomly generate six $\bm{U}$ matrices by the standard normal distribution. Then, they are applied to calculate the AE-AL test statistics for 100 independently generated datasets under the settings of $\textrm{H}_0$ and $\textrm{H}_1$, respectively.
We count the number of replications where the results of the six test statistics are in accordance, in the sense that they simultaneously reject or do not reject under the significance level of 0.05.

The results where $\id =1$ (referring to the number of columns of $\bm{U}$) are presented in Table~\ref{tab_Urobust1}. Under $\textrm{H}_0$, the value of $\bm{U}$ does not significantly affect the size control since the number of matches is high. In contrast, under $\textrm{H}_1$, the number of matches is  smaller. 
This is because the improvement from incorporating the data from \B is relatively moderate compared to the sensitivity of the AE-AL test in identifying such improvements in this particular scenario. 
As shown in Section~\ref{subsubsec_power}, AE-AL tests in general have relatively low power under this setting.

We further conduct an experiment to illustrate the relation between the sensitivity of the AE-AL with its robustness with respect to $\bm{U}$.
Recall that the tests tend to have higher power when the number of columns $\id$ of $\bm{U}$ increases. Therefore, we set $\id = 5$ in the new experiment.  The results in Table~\ref{tab_Urobust2} show that the tests without the Laplace noise have the largest increase in the number of matches. This aligns with the results of previous experiments, where the power in this setting has the most significant improvement.
Additionally, the testing results for logistic regression and Poisson regression with Laplace noise of the scale $0.5$ have relatively large number of matches. This is due to the high noise level, which diminishes the power and results in the majority of replications not being rejected.

\begin{table}[!ht]
\centering
\begin{tabular}{llll}
            & $\textrm{Lap}(0)$ & $\textrm{Lap}(0.1)$ & $\textrm{Lap}(0.5)$ \\\hline
logistic $\textrm{H}_0$ & 91     & 87       & 93       \\
logistic $\textrm{H}_1$ & 15     & 16       & 60       \\
normal $\textrm{H}_0$   & 85     & 74       & 94       \\
normal $\textrm{H}_1$   & 27     & 17       & 22       \\
Poisson $\textrm{H}_0$  & 78     & 79       & 92       \\
Poisson $\textrm{H}_1$  & 20     & 18       & 71       \\
log-cosh $\textrm{H}_0$ & 82     & 70       & 90       \\
log-cosh $\textrm{H}_1$ & 27     & 20       & 21      
\end{tabular}
\caption{The counts of replications where the six test statistics have consistent testing results under the setting with $\id=1$. Notations $\textrm{Lap}(0)$, $\textrm{Lap}(0.1)$, $\textrm{Lap}(0.5)$ denote not adding the Laplace noise, adding the Laplace noise with scale 0.1, and adding the Laplace noise with scale 0.5, respectively.
}\label{tab_Urobust1}
\end{table}

\begin{table}[!ht]
\centering
\begin{tabular}{llll}
            & $\textrm{Lap}(0)$ & $\textrm{Lap}(0.1)$ & $\textrm{Lap}(0.5)$ \\\hline
logistic $\textrm{H}_1$ & 100    & 62       & 59       \\
normal $\textrm{H}_1$   & 100    & 94       & 31       \\
Poisson $\textrm{H}_1$  & 100    & 51       & 67       \\
log-cosh $\textrm{H}_1$ & 100    & 96       & 35      
\end{tabular}
\caption{The counts of replications where the six test statistics have consistent testing results under the setting with $\id=5$. Notations are the same with Table~\ref{tab_Urobust1}.
}\label{tab_Urobust2}
\end{table}

%\clearpage
%\bibliographystyle{apalike}
%
%\bibliography{bibliography}
%
%

\end{appendices}

\bibliographystyle{apalike}
\bibliography{bibliography}

\end{document}